\theoremstyle{plain}
\newtheorem{theorem}{Theorem}[section]
\newtheorem{proposition}[theorem]{Proposition}
\newtheorem{lemma}[theorem]{Lemma}
\newtheorem{corollary}[theorem]{Corollary}
\theoremstyle{definition}
\newtheorem{definition}[theorem]{Definition}
\newtheorem{assumption}{Assumption}
\theoremstyle{remark}
\newtheorem{remark}[theorem]{Remark}
\DeclareMathOperator*{\esssup}{ess\,sup}
\DeclareMathOperator*{\essinf}{ess\,inf}
\newcommand{\tmix}{t_{\mathrm{mix}}}
\newcommand{\tmin}{t_{\mathrm{minorize}}}
\newcommand{\linftynorm}[1]{\left\|#1\right\|_{\infty}}
\newcommand{\linftynormess}[2]{\left\|#1\right\|_{L^\infty(#2)}}
\newcommand{\spannorm}[1]{\mathrm{Span}\left(#1\right)}
\newcommand{\bracket}[1]{\left( #1 \right)}
\newcommand{\innerprod}[2]{#1\left[#2\right]}
\newcommand{\pp}{\mathfrak{p}}
\newcommand{\essinfp}{\mathfrak{p}_\wedge}
\newcommand{\maxm}{m_\vee}
\newcommand{\T}{\mathcal{T}}
\newcommand{\hatT}{\widehat{\mathcal{T}}}
\newcommand{\calP}{\mathcal{P}}
\newcommand{\hatP}{\widehat{\mathcal{P}}}
\newcommand{\R}{\mathbb{R}}
\newcommand{\N}{\mathbb{N}}
\newcommand{\Aa}{\mathbf{A}}
\newcommand{\Ss}{\mathbf{S}}
\newcommand{\ra}{\rightarrow}
\newcommand{\cd}{\cdot}
\newcommand{\ds}{\dots}
\newcommand{\mrm}[1]{\mathrm{#1}}
\newcommand{\cC}{\mathcal{C}}
\newcommand{\cN}{\mathcal{N}}
\newcommand{\cP}{\mathcal{P}}
\newcommand{\TV}[1]{\left\|#1\right\|_\mathrm{TV}}
\newcommand{\set}[1]{\left\{{#1}\right\}}
\newcommand{\norm}[1]{\left\|#1\right\|}
\newcommand{\sqbkcond}[2]{\left[ #1 \middle| #2 \right]}
\newcommand{\crbk}[1]{\left( #1 \right)}
\definecolor{azure}{rgb}{0.0, 0.4, 0.9}
\definecolor{darkred}{rgb}{0.6, 0, 0}
\numberwithin{equation}{section}
\newtcolorbox{newcontentbox}{
    colback=white, 
    colframe=green, 
    arc=3pt, 
    boxrule=1pt, 
    left=2pt,right=2pt,top=2pt,bottom=2pt, 
    title=New Content, 
    fonttitle=\bfseries,
    coltitle=blue
}
\newtcolorbox{oldcontentbox}{
    colback=white,
    colframe=red,
    arc=3pt,
    boxrule=1pt,
    left=2pt,right=2pt,top=2pt,bottom=2pt,
    title=Old Content,
    fonttitle=\bfseries,
    coltitle=red
}
\title{Sample Complexity of Distributionally Robust Average-Reward  Reinforcement Learning}
\author{%
  Zijun Chen \\
  Department of Computer Science and Engineering\\
  Hong Kong University of Science and Technology \\
  \texttt{zchendg@connect.ust.hk} \\
  \AND
  Shengbo Wang \\
Daniel J. Epstein Department of \\
Industrial and Systems Engineering \\
  University of Southern California\\
   \texttt{shengbow@usc.edu} \\
  \And
  Nian Si\thanks{Corresponding author} \\
  Department of Industrial Engineering\\ and Decision Analytics\\
  Hong Kong University of Science and Technology \\
  \texttt{niansi@ust.hk} \\
}
\begin{document}

\maketitle

\begin{abstract}

Motivated by practical applications where stable long-term performance is critical—such as robotics, operations research, and healthcare—we study the problem of distributionally robust (DR) average-reward reinforcement learning. We propose two algorithms that achieve near-optimal sample complexity. The first reduces the problem to a DR discounted Markov decision process (MDP), while the second, Anchored DR Average-Reward MDP, introduces an anchoring state to stabilize the controlled transition kernels within the uncertainty set. Assuming the nominal MDP is uniformly ergodic, we prove that both algorithms attain a sample complexity of $\widetilde{O}\left(|\mathbf{S}||\mathbf{A}| t_{\mathrm{mix}}^2\varepsilon^{-2}\right)$ for estimating the optimal policy as well as the robust average reward under KL and $f_k$-divergence-based uncertainty sets, provided the uncertainty radius is sufficiently small. Here, $\varepsilon$ is the target accuracy, $|\mathbf{S}|$ and $|\mathbf{A}|$ denote the sizes of the state and action spaces, and $t_{\mathrm{mix}}$ is the mixing time of the nominal MDP. This represents the first finite-sample convergence guarantee for DR average-reward reinforcement learning. We further validate the convergence rates of our algorithms through numerical experiments.
\end{abstract}

\section{Introduction}
\label{sec:introduction}

Reinforcement learning (RL) \citep{sutton2018reinforcement} is a core machine learning framework in which agents learn to make decisions by interacting with their environments to maximize long-term rewards. RL has been successfully applied across a wide range of domains—from classic applications in robotics and control systems \citep{Kober2013RLinRoboticsSurvey, Sergey2017DeepRLinRobotics} to more recent advances in game playing \citep{li2009americanOption, CHOI2009rlSavingBehavior, Deng2017rlFinanceSignal} and large language model (LLM)-driven reasoning tasks \citep{wei2022chain, guo2025deepseek}.

A central assumption in  RL is that the training environment (e.g. a simulator) faithfully represents the real-world deployment setting. In practice, however, this assumption rarely holds, leading to fragile policies underperform when exposed to mismatches between training and deployment environments. This remains a major obstacle to translating RL’s successes in simulated settings to reliable performance in real-world applications.

To address this challenge, \citet{zhou2021finite} built upon the distributionally robust Markov decision process (DR-MDP)  framework \citep{iyengar2005robust, nilim2005robust, wiesemann2013robust} to propose a distributionally robust reinforcement learning (DR-RL) framework. Subsequent work advanced the field, including both model-free \citep{liu_distributionally_2022,wang_finite_2023,wang_sample_2024} and model-based settings \citep{panaganti_sample_2022,xu2023improved,clavier2024near,shi2024curiouspricedistributionalrobustness}, as well as approaches for offline learning \citep{shi_distributionally_2023} and generative models \citep{wang_finite_2023,wang_sample_2024,yang_towards_2022,clavier2024near}, along with functional approximations \citep{blanchet2023double,ma2022distributionally}. 

However, the aforementioned developments predominantly focus on discounted-reward or finite-horizon settings, while the average-reward case remains largely overlooked. This gap is significant because average-reward reinforcement learning is crucial in many practical applications where long-term performance matters more than short-term gains. For example:
\begin{itemize}
\item Control systems (e.g., robotics, autonomous vehicles) often require optimizing steady-state performance rather than cumulative discounted rewards.
\item Operations research problems (e.g., inventory management, queueing systems) rely on long-run average metrics for stability and efficiency.
\item Healthcare or energy management applications may prioritize sustained optimal performance over finite-time rewards.
\end{itemize}

Average-reward RL is not only important but also more challenging in terms of algorithm design and theoretical analysis. In the standard (non-robust) RL setting, the minimax sample complexity for generative models in discounted-reward cases was resolved as early as 2013 \cite{azar2013}. In contrast, analogous results for average-reward settings were only developed much later, with recent advances in \citet{wang_optimal_2023, zurek2023span,zurek_plug-approach_2024, zurek_span-based_2024, zurek2025span} under granular structural assumptions.

This paper marks the first systematic analysis of the statistical properties of distributional robust average-reward MDPs (DR-AMDPs) in the tabular setting, addressing a critical gap in the literature.

Specifically, we propose two algorithms that achieve near-optimal (in a minmax sense) sample complexity for  learning DR-AMDPs. The first is based on a reduction to the DR discounted-reward MDP (DR-DMDP), where a discount factor $\gamma$ must be carefully chosen to balance the trade-off between finite sample statistical error and the algorithmic bias introduced by the reduction. The second algorithm, anchored DR-AMDP, modifies the entire uncertainty set of transition kernels by introducing an anchoring state with a certain calibration probability.

To demonstrate their statistical efficiency, we consider a tabular setting where the nominal MDP is uniformly ergodic (Definition \ref{def:UE_MDP}) with a uniform mixing time upper bound $\tmix$ for all stationary, Markovian, and deterministic policies. We show that, to learn the optimal robust average reward and policy within $\varepsilon$ accuracy, both algorithms achieve a sample complexity of $\widetilde{O}\left(|\Ss||\Aa| \tmix^2\varepsilon^{-2}\right)$ under KL and $f_k$-divergence uncertainty sets, assuming a sufficiently small uncertainty radius $\delta$ (to be defined).
Here, $|\Ss|$ and $|\Aa|$ denote the cardinality of the state and action spaces, respectively. Compared to standard (non-robust) average-reward RL literature, this rate is optimal in its dependence on $|\Ss||\Aa|$ and $\varepsilon$.

Our analysis establishes three key contributions to the theory of DR-RL under the average-reward criterion. First, we address a fundamental modeling challenge: conventional uncertainty sets can contain MDPs that are not unichain, thereby invalidating the standard Bellman equations. To resolve this issue, we derive structural conditions on the uncertainty set that ensure stability for all MDPs within it. Second, we develop and analyze the reduction yielding the first stability-sensitive sample complexity bound for DR-DMDPs of $\widetilde O(|\Ss||\Aa|\tmix^2(1-\gamma)^{-2}\varepsilon^{-2})$ and hence the aforementioned upper bound for DR-AMDPs. Building on this framework, we introduce the anchored algorithm and show that its output coincides with that of the reduction approach under a suitable choice of the anchoring parameter. Third, both algorithms are designed to function without requiring prior knowledge of model-specific parameters, particularly the mixing time $\tmix$. Collectively, our work offers a unified treatment that connects robustness, stability, algorithm design, and finite-sample guarantees for DR-RL under the average-reward criterion.

The remainder of this paper is organized as follows: Section \ref{sec:literature_review} surveys existing results for both standard and robust RLs. Section \ref{sec:preliminaries} introduces the mathematical preliminaries, including key notations and problem formulation. Our main theoretical contributions, including algorithmic development and sample complexity analysis, are presented in Section \ref{sec:dr_amdp_algorithms_and_sample_complexity_upper_bound}. Finally, Section \ref{sec:numerical_experiments} provides empirical validation of our theoretical findings and Section \ref{sec:conclusion} concludes the paper and discusses future work.

\section{Literature Review}
\begin{table}[!htbp]
\caption{Summary of S.O.T.A. sample complexity results in the literature, where $\tmix$ is defined in ~\ref{def:mixing_time_and_minorization_time}.}
\label{tab:summary_rate}%
  \centering
    \begin{tabular}{llll}
     \toprule
           & Type  & Sample Complexity & \multicolumn{1}{l}{Origin} \\
          \midrule 
    \multirow{3}{*}{\rotatebox{90}{Standard}} & Discounted & $\widetilde{\Theta}(|\Ss||\Aa|{(1-\gamma)^{-3}\epsilon^{-2}})$ & \citet{azar2013,li_breaking_2023}  \\
     & Discounted Mixing & $\widetilde \Theta\bracket{|\Ss||\Aa|t_\mathrm{mix}(1-\gamma)^{-2}\varepsilon^{-2}}$ & \citet{wang_optimal_2023}\\
     & Average Mixing & $\widetilde \Theta\bracket{|\Ss||\Aa|t_\mathrm{mix}\varepsilon^{-2}}$ & \citet{wang_optimal_2024,zurek2023span}\\
     \midrule
    \multirow{3}{*}{\rotatebox{90}{DR-RL}} & Discounted & $\widetilde{O}(|\Ss||\Aa|{(1-\gamma)^{-4}\epsilon^{-2}} )$ &\citet{shi_distributionally_2023,wang_sample_2024}\\
    & Discounted Mixing & $\widetilde O\bracket{|\Ss||\Aa|t_\mathrm{mix}^2(1-\gamma)^{-2}\varepsilon^{-2}}$ &Theorem \ref{thm:sample_complexity_for_dmdp}\\
    & Average Mixing & $\widetilde O(|\Ss||\Aa|t_\mathrm{mix}^2\varepsilon^{-2})$ & Theorem  \ref{thm:sample_complexity_for_reduction_to_dmdp} \& \ref{thm:sample_complexity_for_anchored_amdp}\\ 
    \bottomrule
    \end{tabular}%
\end{table}

\label{sec:literature_review}
\textbf{Sample Complexity of Discounted-Reward Tabular RL:} 
There is an extensive literature on the sample complexity of tabular reinforcement learning. In standard (non-robust) settings, the minimax sample complexity for discounted-reward problems has been well studied. Early works by \citet{azar2013, li_breaking_2023} established the minimax rate of $\widetilde{\Theta}(|\Ss||\Aa|(1-\gamma)^{-3}\varepsilon^{-2})$. More recent research has shifted toward instance-dependent bounds that leverage structural properties of the MDP. For example, \citet{wang_optimal_2024, zurek_span-based_2024} derive tighter instance-dependent bounds of $\widetilde{O}(|\Ss||\Aa|\tmix(1-\gamma)^{-2}\varepsilon^{-2})$ and $\widetilde{O}(|\Ss||\Aa|\mathrm{H}(1-\gamma)^{-2}\varepsilon^{-2})$ under the assumptions that $P$ is uniformly ergodic or weakly communicating, where $\mathrm{H}$ denotes the span of the relative value function.

\textbf{Sample Complexity of Average-Reward Tabular RL:} 
Recently, there has been growing interest in the sample complexity of average-reward reinforcement learning in standard (non-robust) settings. Early work by \citet{jin2020efficientlysolvingmdpsstochastic} established a bound of $\widetilde{O}(|\Ss||\Aa|\tmix^2\varepsilon^{-2})$ using primal-dual stochastic mirror descent. A rate of  $\widetilde{O}(|\Ss||\Aa|\tmix\varepsilon^{-3})$ was established later via a reduction to the discounted MDP setting \citep{jin_towards_2021}. Subsequent analyses achieved tighter bounds: \citet{wang_optimal_2024} obtained a rate of $\widetilde{O}(|\Ss||\Aa|\tmix\varepsilon^{-2})$, while \citet{zurek2023span, zurek_span-based_2024} derived instance-dependent optimal rates of $\widetilde{O}(|\Ss||\Aa|\mathrm{H}\varepsilon^{-2})$ for weakly communicating MDPs. Further refinements \citep{zurek_plug-approach_2024, zurek2025span} achieved similar rates without requiring prior knowledge, using plug-in and span penalization approaches. Beyond the generative model setting, \citet{NEURIPS2021_096ffc29} and \citet{chen2025non} provided finite-sample analyses for synchronous and asynchronous Q-learning, respectively. Asymptotic properties were also studied in \citet{yu2025asynchronousstochasticapproximationaveragereward, wan2024convergenceaveragerewardqlearningweakly} for asynchronous $Q$-learning.

\textbf{DR-DMDP and DR-RL:} Our work builds on the theoretical foundations of robust MDPs \citep{gonzalez2002minimax, iyengar2005robust, nilim2005robust, wiesemann2013robust, xu2010distributionally, shapiro2022distributionally,wang_foundation_2024}, which primarily develop dynamic programming principles under the discounted-reward setting.
Recent advances in distributionally robust reinforcement learning (DR-RL) have investigated the sample complexity of DR-DMDPs under various divergence-based uncertainty sets. For example, \citet{wang_sample_2024, shi_distributionally_2023} establish a model-free upper bound of $\widetilde{O}(|\Ss||\Aa|(1-\gamma)^{-4}\varepsilon^{-2})$ under KL-divergence. Under $\chi^2$-divergence, \citet{shi2024curiouspricedistributionalrobustness} obtain a similar upper bound, while \citet{clavier2024near} show that $l_p$-norm constraints admit a tighter minimax rate of $\widetilde{\Theta}(|\Ss||\Aa|(1-\gamma)^{-3}\varepsilon^{-2})$. In this paper, by incorporating the mixing time parameter, we present a ``instance-dependence" sample complexity bound of $\widetilde{O}\left(|\Ss||\Aa|t_{\mathrm{mix}}^2(1-\gamma)^{-2}\varepsilon^{-2}\right)$, which improves the dependence on the effective horizon from $(1-\gamma)^{-4}$ to $(1-\gamma)^{-2}$.
Several other works contribute to the theoretical and algorithmic landscape of DR-RL in various settings, including \citet{Panaganti2021, yang2021, xu2023improved, blanchet2023double_pessimism_drrl, liu22DRQ, Wang2023MLMCDRQL, yang2023avoiding}.

\textbf{Distributionally Robust Average-Reward MDPs:} While the sample complexity of learning DR-DMDPs has been extensively studied, the average-reward setting remains relatively underexplored. \citet{wang_model-free_2023, wang_robust_2023,wang2024robust} propose robust relative value iteration and TD/Q-learning algorithms and prove their convergence, but without providing sample complexity guarantees. \citet{grand-clement_beyond_2025} show that for $(s,a)$-rectangular uncertainty sets, the optimal policy can be stationary and deterministic; however, this result may not extend to $s$-rectangular uncertainty sets. More recently, \citet{wang2025bellmanoptimalityaveragerewardrobust} study average-reward robust MDPs under $s$-rectangular uncertainty and provide one-sided weak communication conditions that ensure the existence of solutions to the Bellman optimality equation. Although existing work has investigated the existence and structure of optimal policies in average-reward DR-RL, non-asymptotic sample complexity bounds remain an open question.

We provide a summary of state-of-the-art sample complexity results in the literature in Table \ref{tab:summary_rate}. In particular,  we establish the first sample complexity guarantees for the average-reward DR-RL formulation, which achieves optimal dependence for $\varepsilon$ and $|\Ss||\Aa|$.

\section{Preliminaries}
\label{sec:preliminaries}
\subsection{Markov Decision Processes}

\par We briefly review and define some notations for classical tabular MDP models. Let $\Delta(\Ss)$ denotes the probability simplex over $\mathbb R^{\Ss}$.  A finite discounted MDP (DMDP) is defined by the tuple $(\Ss,\Aa,r,P,\gamma)$.  Here, $\Ss,\Aa$ denote the finite state and action spaces respectively; $r:\Ss\times \Aa\ra [0,1]$ is the reward function; $P = \{p_{s,a}\in \Delta(\Ss): (s, a)\in \Ss\times \Aa\}$ is the controlled transition kernel, and $\gamma\in(0,1)$ is the discount factor. An average-reward
 MDP (AMDP) model, on the other hand, is specified by  $(\Ss,\Aa,r,P)$ without the discount factor. 

\par Define the canonical space $\Omega = (\Ss\times\Aa)^{\N}$ equipped with $\mathcal{F}$ the $\sigma$-field generated by cylinder sets. The state-action process $\set{(S_t,A_t),{t\geq 0}}$ is defined by the point evaluation $X_t(\omega) = s_t,A_t(\omega) = a_t$ for all $t\geq 0$ for any $\omega = (s_0,a_0,s_1,a_1,\ds)\in \Omega$.  A general history dependent policy $\pi = (\pi_t)_{t\geq 0}\in\Pi_{\mathrm{HD}}$ is a sequence of the agent's decision rule. Here, the decision rule $\pi_t$ at time $t$ is a mapping $\pi_t:(\Ss\times \Aa)^t\times \Ss\to \Delta(\Aa)$, signifying the conditional distribution of $A_t$ given the history. It is known in the literature \citep{puterman_markov_2009,grand-clement_beyond_2025} that to achieve optimal decision making in the context of infinite horizon AMDPs, DMDPs, or their robust variants (to be introduced), it suffices to consider the policy class $\Pi$ of stationary, Markov, and deterministic policies; i.e. $\pi\in \Pi$ can be seen as a function $\pi:\Ss\ra \Aa$. Thus, in the subsequent development, we restrict our discussion to $\Pi$.

\par As in \citet{wang_optimal_2024} a policy $\pi\in \Pi$ and an initial distribution $\mu\in\Delta(\Ss)$ uniquely defines a probability measure on $(\Omega,\mathcal{F})$. We will always assume that $\mu$ is the uniform distribution over $\Ss$. The expectation under this measure is denoted by $E_P^\pi$.  To simplify notation, we define $P_\pi (s, s') := \sum_{a\in \Aa}\pi(a|s)p_{s,a}(s')$ and $r_\pi(s) := \sum_{a\in \Aa}\pi(a|s)r(s,a)$. 

\textbf{Discounted-reward MDP (DMDP):} Given a DMDP instance $(\Ss,\Aa,r,P,\gamma)$ and $\pi\in\Pi$, the \textit{discounted value function} $V_P^\pi: \Ss\to \mathbb R$ is defined as:
$V_P^\pi(s) = E_P^\pi\sqbkcond{\sum_{t=0}^\infty \gamma^t r(S_t, A_t)}{S_0 = s}$. An optimal policy $\pi^*\in\Pi$ achieves the optimal value $V_P^*(s) := \max_{\pi\in \Pi} V_P^\pi(s)$. 

\textbf{Average-reward MDP (AMDP):} For AMDP model $(\Ss, \Aa, r, P)$ and $\pi\in\Pi$, the \textit{long-run average-reward function} $g_P^\pi: \Ss\to \mathbb R$ is defined as
$g_P^\pi(s) := \limsup_{T\to \infty} T^{-1}E_P^\pi[\sum_{t=0}^{T-1}r(S_t, A_t)|S_0 = s]$. 
\par When $P$ is uniformly ergodic (a.k.a.unichain, to be defined later), the $g_{P}^\pi$ is constant across states \citep{puterman_markov_2009}. In this context, an optimal policy $\pi^*\in\Pi$ achieves the long-run average-reward $\max_{\pi\in\Pi}g_{P}^\pi$.

\subsection{Uniform Ergodicity}
Motivated by engineering applications where policies induce systems that are stable in the long run, we consider a stability property of MDPs known as \textit{uniform ergodicity}, a stronger version of the \textit{unichain} property. In this setting, the controlled Markov chain induced by any reasonable policy converges in distribution to a unique steady state in total variation distance $\norm{\cd}_{\mathrm{TV}}$ defined by $\left\|p - q\right\|_\mathrm{TV} := \sup_{A\subset \Ss}\left|p(A) - q(A)\right|$
for probability vectors $p,q\in\Delta(\Ss)$.

We start with reviewing concepts relevant to uniformly ergodic Markov chains. 
\begin{definition}(Uniform Ergodicity)
\label{def:uniform_ergodicity}
A transition kernel $K\in\R^{\Ss\times \Ss}$ is uniformly ergodic if one of the following holds
    \begin{itemize}
        \item There exists a probability measure $\rho$ for which $\TV{K^n(s,\cd)-\rho}\ra 0$ for all $s\in\Ss$.
        \item $K$ satisfies the $(m, p)$-\textit{Doeblin condition}: For some $m\in \N$ and $p\in (0, 1]$ if there exists a probability measure $\psi$ and a stochastic kernel $R$ s.t. $K^m(s, s') = p\psi(s') + (1-p)R(s,s'). $
    \end{itemize}
\end{definition}
It is well known \citep{meyn2012markov} that $\rho$ must be the unique stationary distribution of $K$ and that two conditions are equivalent. The  $\psi$ and $R$ in the Doeblin condition are known as the minorization measure and the residual kernel respectively.

Next, we introduce the mixing and minorization times associated with a uniformly ergodic kernel $K$. 
\begin{definition}(Mixing Time and Minorization Time)
\label{def:mixing_time_and_minorization_time}
    Define the mixing time of a uniformly ergodic transition kernel $P$ as
    $\tmix(K) := \inf\set{m\geq 1: \max_{s\in \Ss}\|K^m(s, \cdot) - \rho(\cdot)\|_{\mathrm{TV}}\leq \frac{1}{4}},$
    and the minorization time as
    $\tmin(K) := \inf\set{m/p: \min_{s\in \Ss}K^m(s, \cdot)\geq p\psi(\cdot) \text{ for some } \psi\in \Delta(\Ss)}.$
\end{definition}

It is shown in Theorem 1 of \citet{wang_optimal_2023} that for a uniformly ergodic transition kernel $K$, these metrics of stability are equivalent up to constants: $\tmin(K)\leq 22 \tmix(K) \leq 22\log(16)\tmin(K)$.

\par While the MDP sample complexity literature typically uses the mixing time as a complexity parameter, for our purposes, the Doeblin condition and the associated minorization time offer sharper theoretical insights into how adversarial robustness affects the statistical complexity of RL. Given the equivalence between $\tmix(K)$ and $\tmin(K)$, and the latter’s advantage in revealing these insights, we will use $\tmin(K)$ time throughout this work.

\par Having reviewed the uniform ergodicity of a stochastic kernel $K$,  we define uniformly ergodic MDPs. 

\begin{definition}[Uniformly Ergodic MDP]
\label{def:UE_MDP}
An MDP (or its controlled transition kernel $P$) is said to be uniformly ergodic if for all policies $\pi\in\Pi$, $\tmin(P_\pi)<\infty$. Then, define $\tmin := \max_{\pi\in\Pi}\tmin(P_\pi) < \infty$.
 
\end{definition}
To provide sharper sample complexity results, it is useful to define the following upper bound parameter on $m$.
\begin{equation}
\maxm := \max_{\pi\in\Pi}\inf\set{m:  \;P_\pi\;\mathrm{is}\;(m,p)-\mathrm{Doeblin\;and\;}m/p = \tmin(P_\pi) \text{ for some } p}.
\end{equation}
This is well defined: In Appendix \ref{sec:appendix:notations_and_basic_properties}, Lemma \ref{lem:mp_doeblin_condition_is_achievable},  we prove that for any transition kernel $P_\pi$, the equality $m/p=\tmin(P_\pi)$ is always attained by some $m$ and $p$ s.t. $\min_{s\in \Ss}P_\pi^m(s, \cdot)\geq p\psi(\cdot)$.

It is easy to see that \(\maxm \leq \tmin\), and we will demonstrate by the example in Section \ref{sec:numerical_experiments} that it is possible for \(\maxm = 1\) while \(\tmin\) can be arbitrarily large.

\subsection{Distributionally Robust Discounted-Reward and Average-Reward MDPs}
This paper focuses on a robust MDP setting where the stochastic dynamics of the system is influenced by adversarial perturbations on the transition structure. We assume the presence of an adversary that can transition probabilities within KL or $f_k$-divergence uncertainty sets. Specifically, for probability measures $q,p\in\Delta(\Ss)$ where $q$ is absolutely continuous w.r.t. $p$, denoted by $q\ll p$, we define $D_{\mathrm{KL}}(q||p) := \sum_{s\in\Ss}\log({q(s)}/{ p(s)})q(s)$ and $D_{f_k}(q||p) := \sum_{s\in\Ss}f_k({q(s)}/{ p(s)})p(s)$. Here, the function $f_k$ is defined for $k\in(1,\infty)$ by $f_k(t) = (t^k - kt + k-1)/(k(k-1))$. When $k = 2$, $D_{f_k}$ is the $\chi^2$-divergence. 

\par We assume that the underlying MDP has an \textbf{unknown} \textit{nominal controlled transition kernel}
\begin{equation}\label{eqn:nominal_kernel}
	P = \set{p_{s,a}\in \Delta(\Ss): (s,a)\in\Ss\times\Aa}.
\end{equation}
For each $(s,a)\in \Ss\times \Aa$ we define the uncertainty set under divergence $D = D_{\mathrm{KL}}, D_{f_k}$ and parameter $\delta > 0$ centered at $p_{s,a}$ by
$\calP_{s,a}(D,\delta) := \set{p: D(p\| p_{s,a})\leq \delta}$. This set contains all possible adversarial perturbations of the transition out of $(s,a)$. Note that the parameter $\delta$ controls the size of the $\calP_{s,a}(D,\delta)$, quantifying the power of the adversary. The uncertainty set for the entire controlled transition kernel is $\calP(D,\delta) := \bigtimes_{(s,a)\in \Ss\times \Aa}\calP_{s,a}(D, \delta)$. An uncertainty set of this product from is called SA-rectangular \citep{wang_foundation_2024}.

\par We will suppress the dependence of $D$ and $\delta$ when it is clear from the context. Also, for notation simplicity, define the mapping $\Gamma_{\calP_{s,a}}:\R^{\Ss}\ra \R$ for $\calP_{s,a}\subset\Delta(\Ss)$ by
\[
\Gamma_{\calP_{s,a}}(V) :=\inf_{p\in \calP_{s,a}}E_{S\sim p}\left[V(S)\right]. 
\]
Optimal distributionally robust Bellman operators $\T_\gamma^*$ and $\T^*$ are central to our algorithmic design.
\begin{definition}
    The optimal DR Bellman operators $\T_\gamma^*, \T^*:\R^{\Ss}\ra \R^{\Ss}$ are defined by 
    \begin{equation}
    \label{equ:optimal_dr_bellman_operator}
        \begin{aligned}
            \T_\gamma^*(V)(s) :=& \max_{a\in \Aa}\set{r(s,a) + \gamma \Gamma_{\calP_{s,a}}(V)}\\
            \T^*(v)(s) :=& \max_{a\in \Aa}\set{r(s,a) + \Gamma_{\calP_{s,a}}(v)}
        \end{aligned}
    \end{equation}
\end{definition}
\textbf{DR-DMDP:} A DR-DMDP model is given by the tuple $(\Ss,\Aa,\cP,r,\gamma)$. For fixed $\pi\in\Pi$, define the DR value function
\begin{align}
    V_{\calP}^\pi(s) := \inf\limits_{\mathbf P\in (\calP)^{\mathbb N}}E_{\mathbf P}^\pi\sqbkcond{\sum_{t=0}^\infty \gamma^t r(S_t, A_t)}{S_0 = s}.
\end{align}
See  \citet{iyengar2005robust} for a rigorous construction of the expectation $E_{\mathbf P}^\pi$. Then, the optimal value function is $V_{\calP}^*(s) := \max_{\pi\in \Pi}V_{\calP}^\pi(s)$. It is well known (c.f. \citet{iyengar2005robust}) that $V_\calP^*$ is the unique solution of the DR Bellnbman equation: $V_\calP^* = \T_\gamma^*(V_\calP^*)$. 

Note that the expectation $E_{\mathbf P}^\pi$ is under the adversarial perturbation from a Markovian policy class $(\calP)^{\N}$. It is possible to consider other information structures for the adversary while retaining the satisfaction of the Bellman equation \citep{wang_foundation_2024}. 

\textbf{DR-AMDP:} A DR-AMDP model is given by the tuple $(\Ss,\Aa,\cP,r)$. To simplify our presentation, we restrict our consideration to uniformly ergodic DR-AMDPs.

For each $\pi\in\Pi$ we define the DR long-run average-reward function by
\begin{align}
    g_{\calP}^\pi(s)&:= \inf_{\mathbf P\in (\calP)^{\mathbb N}}\limsup_{T\to \infty}E_{\mathbf P}^\pi\sqbkcond{\frac{1}{T}\sum_{t=0}^{T-1}r(S_t, A_t)}{S_0 = s}. 
\end{align}
Natually, the optimal average reward is $g_{\calP}^*(s):= \max_{\pi\in\Pi}g_{\calP}^\pi(s)$. 

This paper focuses on a setting where the DR-AMDP is uniformly ergodic in the following sense. 
\begin{definition}\label{def:UE_DRAMDP}
A DR-AMDP (or $\cP$) is said to be uniformly ergodic if for all controlled kernels $Q\in\cP$, $Q$ is uniformly ergodic as in Definition \ref{def:UE_MDP}.
\end{definition}
We note that $\calP = \cP(D, \delta)$ is compact in the sense that $\cP_{s,a}(D,\delta)$ is a compact subset of $\Delta(\Ss)$ for all $s,a$. With uniform ergodicity and compactness, \citet{wang_robust_2023} shows that $g_{\calP}^*(s)$ is constant for $s\in\Ss$ which uniquely solves the DR Bellman equations.
\begin{proposition}[Theorems 7 an 8 of \citet{wang_robust_2023}]
\label{prop:dr_average_bellman_equation_under_uniformly_ergodic}
    If $\calP$ is uniformly ergodic with a uniformly bounded minorization time, then $g_{\calP}^*(s)\equiv g_{\calP}^*$ is constant in $s\in\Ss$. Moreover, there exists a solution $(g, v)$ of
    $v(s) = \T^*(v)(s) - g^*(s)$ for all $ s\in\Ss$ and any such solution 
    satisfies $g(s) = g_{\calP}^*$ for all $s\in\Ss$. Moreover, the policy $\pi^*(s)\in \arg\max_{a\in\Aa}\set{r(s,a) + \Gamma_{\calP_{s,a}}(v)}$
    achieves the optimal average-reward $g_{\calP}^*$.
\end{proposition}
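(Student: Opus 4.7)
The plan is to prove the proposition via the standard vanishing-discount argument adapted to the DR setting, using uniform ergodicity to obtain a span bound on the DR discounted value function that is independent of the discount factor, and then passing to the limit $\gamma\uparrow 1$. Existence of the optimizer in the definition of $\T^*$ will follow from the compactness of each $\calP_{s,a}$.

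First I would establish a uniform span bound for $v_\gamma^*:=V_{\calP}^*$ (the unique solution of $v_\gamma^*=\T_\gamma^*(v_\gamma^*)$), namely $\spnorm{v_\gamma^*}\le C$ for a constant $C$ depending only on the minorization data of $\calP$. Fix $\pi\in\Pi$. By SA-rectangularity, the adversary's optimal response against a stationary Markov $\pi$ is realized by a stationary kernel $Q^\pi\in\calP$, so that $V_{\calP}^\pi=V_{Q^\pi,\gamma}^\pi$. Since $\calP$ is uniformly ergodic with a uniformly bounded minorization time, $Q^\pi_\pi$ inherits a Doeblin condition with parameters controlled by $\maxm$ and a uniform lower bound on $p$, and the classical discounted span bound (using the coupling induced by the minorization measure) gives $\spnorm{V_{Q^\pi,\gamma}^\pi}\le C$ uniformly in $\gamma$ and $\pi$. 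Taking the max over $\pi$ yields the bound on $\spnorm{v_\gamma^*}$.

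Next I would extract a subsequential limit. Fix a reference state $s_0\in\Ss$, set $u_\gamma:=v_\gamma^*-v_\gamma^*(s_0)\mathbf 1$ and $g_\gamma:=(1-\gamma)v_\gamma^*(s_0)$. The Bellman equation $v_\gamma^*=\T_\gamma^*(v_\gamma^*)$ rearranges (using $\Gamma_{\calP_{s,a}}(v+c\mathbf 1)=\Gamma_{\calP_{s,a}}(v)+c$) to
\begin{equation*}
u_\gamma(s)=\max_{a\in\Aa}\set{r(s,a)+\gamma\,\Gamma_{\calP_{s,a}}(u_\gamma)}-g_\gamma.
\end{equation*}
By the Step~1 bound, $\{u_\gamma\}$ lives in a compact subset of $\{v\in\R^{\Ss}: v(s_0)=0\}$, and $g_\gamma\in[0,1]$ is bounded, so along some $\gamma_n\uparrow 1$ we have $(u_{\gamma_n},g_{\gamma_n})\to (v,g^*)$. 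Using the continuity of $p\mapsto E_{S\sim p}V(S)$, the continuity of $V\mapsto \Gamma_{\calP_{s,a}}(V)$ on $\R^{\Ss}$, and the fact that $\gamma_n\to 1$, I pass to the limit in the displayed equation to obtain $v=\T^*(v)-g^*\mathbf 1$.

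Finally I would identify $g^*$ as $g_\calP^*$ and verify policy optimality. For any solution $(g,v)$ with $g$ constant and any $\pi\in\Pi$, pick $Q\in\calP$ whose SA components realize the infima $\Gamma_{\calP_{s,a}}(v)$. Telescoping the Bellman inequality $v(s)\le r(s,\pi(s))+E_{Q,\pi}v(S_1)-g$ for $T$ steps and dividing by $T$ gives $g\le g_\calP^\pi(s_0)$ for the greedy $\pi^*$, and the reverse direction follows because for any other policy the inequality flips. Hence $g=g_\calP^{\pi^*}=\max_\pi g_\calP^\pi=g_\calP^*$, which also furnishes the stated optimal policy. The hard part of the argument is Step~1: one must transfer the Doeblin parameters of the \emph{nominal} kernels in $\calP$ to the worst-case stationary kernel $Q^\pi$ uniformly in $\pi$ and $\gamma$, because the classical span bound on the discounted value function is what drives the whole vanishing-discount limit.
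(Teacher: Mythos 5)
Your proposal is correct in substance but follows a genuinely different route from the paper. The paper does not run a vanishing-discount argument at all: it verifies a uniform reachability condition (using the Doeblin parameters and the geometric TV convergence from Theorem UE of \citet{wang_optimal_2023}, it exhibits a horizon $J$ and a state $s'$ with $Q_\pi^J(s,s')\ge \tfrac{1}{2|\Ss|}>0$ uniformly over $Q\in\calP$ and $\pi\in\Pi$), and then invokes Theorems 7 and 8 of \citet{wang2024robust} to get convergence of robust relative value iteration to a solution of the average-reward DR Bellman equation, uniqueness of the gain, and optimality of the greedy policy. Your approach instead derives existence from the discounted DR Bellman equation via a uniform span bound $\spannorm{V^\pi_{\calP}}\le C$ (stationary worst-case adversary under SA-rectangularity plus the Doeblin span bound, i.e.\ Proposition \ref{prop:value_function_in_span_seminorm}), subsequential limits of $(u_\gamma,g_\gamma)$ as $\gamma\uparrow 1$, and a telescoping verification argument to identify $g=g_\calP^{\pi^*}=g_\calP^*$. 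What your route buys is self-containedness and a direct link to the reduction-to-DMDP machinery the paper already develops; what the paper's route buys is brevity and an algorithmic byproduct (convergence of the relative value iteration actually used in Algorithm \ref{alg:anchored_amdp}), at the cost of leaning on external theorems. Two small remarks: your closing claim that Step~1 is the hard part because one must ``transfer the Doeblin parameters of the nominal kernels to $Q^\pi$'' misreads the hypothesis --- the proposition already assumes $\calP$ is uniformly ergodic with uniformly bounded minorization time (the transfer from the nominal kernel is done separately in Proposition \ref{prop:uniform_bound_minorization_time}), so the uniform span bound is immediate; and in Step~3 the inequality directions should be stated carefully (the greedy policy and an arbitrary adversary give $g\le g_\calP^{\pi^*}$, while an arbitrary policy against a worst-case stationary $Q$ realizing the infima gives $g\ge g_\calP^{\pi}$), and the argument as written only covers solutions with constant $g$.
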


\section{DR-AMDP: Algorithms and Sample Complexity Upper Bound}
\label{sec:dr_amdp_algorithms_and_sample_complexity_upper_bound}

In this section, we introduce two algorithms for DR-AMDPs and establish their sample complexity upper bounds. Before presenting the algorithms and results, we first specify the assumptions on the data-generating process and MDP models, along with insights into their rationale and relevance.

We assume the availability of a simulator, a.k.a. a \textit{generative model}, which allows us to sample independently from the nominal controlled transition kernel $p_{s,a}$, for any $(s,a)\in \Ss\times \Aa$. Given sample size $n$, we sample i.i.d. $\set{S_{s,a}^{(1)}, \cdots, S_{s,a}^{(n)}}$ from $p_{s,a}$ and construct the empirical transition probability
\begin{equation}\label{eqn:empirical_transition}
    \widehat p_{s,a}(s') := \frac{1}{n}\sum_{i=1}^n\mathbbm 1\set{S_{s,a}^{(i)}=s'}. 
\end{equation}
Unlike  \citet{wang_robust_2023}, which requires a unichain assumption on every element of $\calP$, we only assume that the nominal controlled transition kernel $P$ is uniformly ergodic. We will establish that this weaker condition, coupled with a properly constrained adversarial uncertainty set in Assumption \ref{ass:limited_adversarial_power}, will still guarantee the uniform ergodicity for all $Q\in\calP$.

\begin{assumption}
\label{ass:bounded_minorization_time}
The nominal controlled transition kernel $P$ in \eqref{eqn:nominal_kernel} is uniformly ergodic with minorization time $\tmin$ as in Definition \ref{def:UE_MDP}.
\end{assumption}

To introduce limits on the adversarial power and facilitate our sample complexity analysis, we introduce the following complexity metric parameter:
\begin{definition}
\label{defn:minimal_support_probability}
	Define the minimum support as:
	\begin{equation}
		\essinfp := \min_{(s,a,s')\in\Ss \times \Aa\times \Ss}\set{p_{s,a}(s'): p_{s,a}(s')>0} 
	\end{equation}
\end{definition}
\begin{assumption}\label{ass:limited_adversarial_power}
	Suppose the parameter $\delta$ satisfies $\delta\leq \frac{1}{8\maxm^2}\essinfp$ when $\calP = \calP(D_{\mathrm{KL}}, \delta)$, and $\delta\leq \frac{1}{\max\set{8,4k}\maxm^2}\essinfp$ when $\calP = \calP(D_{f_k}, \delta)$.
\end{assumption}
Here, the constant $1/8$ can potentially be relaxed. As mentioned earlier, this restriction on the adversarial power parameter $\delta$ ensures the minorization times remain uniformly bounded across the uncertainty set by a constant multiple of the nominal controlled kernel's minorization time.

\begin{proposition}
    \label{prop:uniform_bound_minorization_time}
    Suppose Assumptions \ref{ass:bounded_minorization_time} hold, and $\calP = \calP(D_{\mathrm{KL}},\delta)$ or $\calP(D_{f_k},\delta)$ satisfying Assumption \ref{ass:limited_adversarial_power}. Then, for all $Q\in \calP$ and $\pi \in \Pi$, $\tmin(Q_\pi) \leq 2\tmin$, where $\tmin$ is from Assumption \ref{ass:bounded_minorization_time}. 
\end{proposition}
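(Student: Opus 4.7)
The plan is to show that every perturbed kernel $Q_\pi$ with $Q\in\calP$ inherits a Doeblin decomposition from $P_\pi$ whose minorization parameter is degraded by at most a factor of $2$. The engine is a pointwise ratio bound $Q_\pi \geq (1 - 1/(2\maxm))\,P_\pi$ on the support of $P_\pi$, extracted from the divergence constraint together with the support lower bound $\essinfp$.

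\textbf{Step 1 (pointwise ratio bound).} The first task is to show that for every $(s,a)\in\Ss\times\Aa$, every $q\in\calP_{s,a}$, and every $s'$ with $p_{s,a}(s')>0$,
\[
q(s') \;\geq\; \Bigl(1 - \tfrac{1}{2\maxm}\Bigr)\, p_{s,a}(s').
\]
Apply the data-processing inequality to the partition $\{\{s'\},\,\Ss\setminus\{s'\}\}$ so that the divergence bound descends to a Bernoulli inequality on the two-point reductions $\tilde q,\tilde p$. Write $\beta := p_{s,a}(s')\geq\essinfp$ and $r := q(s')/\beta$ (only $r<1$ is nontrivial). For KL, Taylor's formula with remainder and the estimate $\partial_r^2 D_{\mathrm{KL}}(\mathrm{Bern}(r\beta)\,\|\,\mathrm{Bern}(\beta)) = \beta(1/r + \beta/(1-r\beta)) \geq \beta$ on $(0,1]$ yield $D_{\mathrm{KL}}(\mathrm{Bern}(r\beta)\,\|\,\mathrm{Bern}(\beta)) \geq \beta(1-r)^2/2$. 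For $f_k$, first $D_{f_k}(\tilde q\,\|\,\tilde p) \geq \beta f_k(r)$, and then $f_k(r) \geq (1-r)^2/\max(2,k)$ on $[0,1]$: the case $k\in(1,2]$ follows from $f_k''(r)=r^{k-2}\geq 1$, while for $k\geq 2$ one uses the factorization
\[
r^k - kr + (k-1) - (k-1)(1-r)^2 \;=\; r\bigl[\,r^{k-1} + (k-2) - (k-1)r\,\bigr],
\]
and checks that the bracket is nonnegative on $[0,1]$ (vanishes at $r=1$, derivative $(k-1)(r^{k-2}-1)\leq 0$). Either way, the thresholds in Assumption \ref{ass:limited_adversarial_power} are calibrated so that $(1-r)^2 \leq 1/(4\maxm^2)$, proving the claim.

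\textbf{Step 2 (transfer to kernels and close).} By determinism of $\pi\in\Pi$, $Q_\pi(s,s') = q_{s,\pi(s)}(s') \geq (1-1/(2\maxm))P_\pi(s,s')$ whenever $P_\pi(s,s')>0$; paths of $P_\pi$-weight zero contribute zero to both sides, so expanding $m$-step transitions as path sums yields $Q_\pi^m(s,s') \geq (1-1/(2\maxm))^m P_\pi^m(s,s')$ for every $m\geq 1$. Invoking Assumption \ref{ass:bounded_minorization_time} and Lemma \ref{lem:mp_doeblin_condition_is_achievable}, pick $m_\pi\leq \maxm$, $p_\pi\in(0,1]$, and $\psi_\pi\in\Delta(\Ss)$ with $P_\pi^{m_\pi}(s,\cdot)\geq p_\pi\psi_\pi(\cdot)$ and $m_\pi/p_\pi = \tmin(P_\pi)$. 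Setting $m=m_\pi$ gives $Q_\pi^{m_\pi}(s,\cdot)\geq (1-1/(2\maxm))^{m_\pi} p_\pi\psi_\pi(\cdot)$, so $Q_\pi$ is $(m_\pi,(1-1/(2\maxm))^{m_\pi}p_\pi)$-Doeblin, and
\[
\tmin(Q_\pi) \;\leq\; \frac{m_\pi}{(1-1/(2\maxm))^{m_\pi} p_\pi} \;=\; \bigl(1-1/(2\maxm)\bigr)^{-m_\pi}\,\tmin(P_\pi).
\]
Since $m_\pi\leq\maxm$ and $(1-1/(2\maxm))^{-x}$ is increasing in $x$, it remains to verify $(1-1/(2m))^m \geq 1/2$ for every integer $m\geq 1$, which holds with equality at $m=1$ and strict inequality (tending to $e^{-1/2}$) for $m\geq 2$. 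This delivers $\tmin(Q_\pi) \leq 2\tmin(P_\pi) \leq 2\tmin$.

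\textbf{Main obstacle.} The crux is Step~1, especially the $f_k$ case with $k>2$: there $f_k$ is globally sub-quadratic, so a local Taylor expansion near $r=1$ is not tight enough, and the displayed algebraic factorization is what yields the sharp $\max(2,k)$ denominator. The precise constants $8$ and $4k$ inside Assumption \ref{ass:limited_adversarial_power} are dictated by exactly this bound so that the ratio $1-1/(2\maxm)$ materializes without slack.
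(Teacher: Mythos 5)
Your proposal is correct and follows essentially the same route as the paper: reduce the divergence constraint to a two-point (Bernoulli) comparison, lower-bound the divergence by a quadratic in the ratio $r$ to force $q_{s,a}(s')\geq(1-\tfrac{1}{2\maxm})p_{s,a}(s')$ under Assumption \ref{ass:limited_adversarial_power}, propagate this through $m_\pi$-step path sums to get a $(m_\pi,(1-\tfrac{1}{2\maxm})^{m_\pi}p_\pi)$-Doeblin condition, and close with $(1-\tfrac{1}{2m})^{m}\geq\tfrac12$. The only differences are cosmetic: you obtain the quadratic KL lower bound via a second-derivative/Taylor estimate and the $k\in(1,2]$ case via $f_k''\geq 1$, where the paper uses an equivalent expansion of $h(ty,y)$ and a monotonicity argument for $f_k(t)/(t-1)^2$, arriving at the same constants.
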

The proof is deferred to Appendix \ref{sec:app:uniform_ergodic_properties_kl_case}, \ref{sec:app:uniform_ergodic_properties_fk_case}. We further note that without Assumption \ref{ass:limited_adversarial_power}, the Hard MDP instance in Section \ref{sec:numerical_experiments} will have a non-mixing worst-case adversarial kernel and state-dependent optimal average reward even when $\delta = \Theta(\essinfp/m_\vee^2)$. This emphasizes the necessity of limiting the adversarial power to obtain a stable worst-case system and state-independent average reward.

We propose two algorithms: Reduction to DR-DMDP and Anchored DR-AMDP. Notably, these are the first to provide finite-sample guarantees for DR-AMDPs and achieve the canonical $n^{-1/2}$ convergence rate in policy and estimation. Furthermore, both algorithms operate without requiring prior knowledge of $\tmin$. Together, these contributions represent foundational advances in the study of data-driven learning of DR-AMDPs.

\subsection{Reduction to DR-DMDP}
First, we present the algorithmic reduction from DR-AMDP to DR-DMDP. The algorithm design in this section is inspired by prior works \citep{wang_model-free_2023, jin_towards_2021}. Specifically, we apply value iteration to an auxiliary empirical DR-DMDP model to obtain both the value function and optimal policy. Utilizing a calibrated discount $\gamma = 1-n^{-1/2}$ where $n$ is the input sample size in Algorithm \ref{alg:dr_dmdps}, we achieve an $\varepsilon$-approximation of the target DR-AMDP value and policy with the auxiliary DR-DMDP using $\widetilde O(|\Ss||\Aa|{\tmin^2 \essinfp^{-1} \varepsilon^{-2}})$ samples.



\begin{algorithm}[ht]
    \caption{Distributional Robust DMDP: $\mathrm{DR-DMDP}(\gamma, n, D)$}
    \label{alg:dr_dmdps}
    \begin{algorithmic}
    \STATE \textbf{Input:} Discount factor $\gamma\in (0,1)$, sample size $n\geq 1$, $D = D_{\mrm{KL}}$ or $D_{f_k}$.
    \STATE For all $(s,a)\in \Ss\times \Aa$, compute the $n$-sample empirical transition probability $\widehat p_{s,a}$ as in \eqref{eqn:empirical_transition} 
    \STATE Construct the uncertainty set as $\hatP = \bigtimes_{(s,a)\in \Ss\times \Aa}\hatP_{s,a}$ where  $\hatP_{s,a} = \set{p: D(p||\widehat p_{s,a})\leq \delta}$. 
    
    \STATE{Compute the solution $V_{\hatP}^*$ as the solution to the empirical DR Bellman equation; i.e. $\forall s\in \Ss$:
    \[
    V_{\hatP}^*(s) = \max\limits_{a\in \Aa}\set{r(s,a) + \gamma\Gamma_{\hatP_{s,a}}( V_{\hatP}^*)}. 
    \]
    }
    \STATE{Then, extract any optimal policy $\widehat{\pi}^*\in \Pi$ from $\widehat{\pi}^*(s) \in \arg\max_{a\in \Aa}\set{r(s,a) + \gamma \Gamma_{\hatP_{s,a}}(V_{\hatP}^*)}$. }
    \STATE{\textbf{return} $\widehat{\pi}^*, V_{\hatP}^{*}$}
    \end{algorithmic}
\end{algorithm}
With the help of Proposition \ref{prop:uniform_bound_minorization_time}, the Algorithm \ref{alg:dr_dmdps} has the following optimal sample complexity guarantee.
\begin{theorem}
\label{thm:sample_complexity_for_dmdp}
Suppose $\calP = \calP(D_{\mathrm{KL}},\delta)$ or $\calP(D_{f_k},\delta)$ and Assumptions \ref{ass:bounded_minorization_time}, \ref{ass:limited_adversarial_power} are in force. Then, for any $n \geq 32 \essinfp^{-1}\log(2|\Ss|^2|\Aa|/\beta)$,
the policy $\widehat{\pi}^*$ and value function $V_{\widehat{\mathcal{P}}}^*$ returned by Algorithm \ref{alg:dr_dmdps} satisfy 
\begin{equation}
\label{equ:error_bound_unified}
\begin{aligned}
0 \leq V_{\mathcal{P}}^* - V_{\mathcal{P}}^{\widehat{\pi}^*} &\leq \frac{ c\cd\tmin}{(1-\gamma)\sqrt{n\essinfp}}\sqrt{\log(2|\Ss|^2|\Aa|/\beta)} \text{ 
 and}\\
\|V_{\widehat{\mathcal{P}}}^* - V_{\mathcal{P}}^*\|_\infty &\leq \frac{ c'\cd\tmin}{(1-\gamma)\sqrt{n\essinfp}}\sqrt{\log(2|\Ss|^2|\Aa|/\beta)}
\end{aligned}
\end{equation}
with probability at least $1-\beta$, where the constants $c,c'\leq 96\sqrt{2}$ for both the KL and $f_k$ cases. 
\end{theorem}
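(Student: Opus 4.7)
The plan is to propagate the empirical-to-population gap through the $\gamma$-contractivity of the DR Bellman operator and then replace the naive $(1-\gamma)^{-1}$ horizon factor by $\tmin$ via a span bound on the DR value function. First, using the standard perturbation identity together with the $\gamma$-contractivity of $\T_\gamma^*$ and its empirical analogue $\hatT_\gamma^*$, I would obtain
\[
\norminf{V_{\calP}^* - V_{\hatP}^*} \leq \frac{1}{1-\gamma}\max_{(s,a)\in \Ss\times \Aa}\abs{\Gamma_{\calP_{s,a}}(V_{\hatP}^*) - \Gamma_{\hatP_{s,a}}(V_{\hatP}^*)},
\]
reducing the problem to a pointwise perturbation of $\Gamma$ at a fixed value function. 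Both the KL and $f_k$ DR operators are translation invariant (adding a constant to $V$ shifts $\Gamma$ by the same constant), so this perturbation depends on $V_{\hatP}^*$ only through its span.

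Next, I would establish a $\gamma$-independent span bound $\spannorm{V_{\calP}^\pi}\leq c\tmin$ uniformly over $\pi\in\Pi$. Proposition \ref{prop:uniform_bound_minorization_time} guarantees that every $Q_\pi$ with $Q\in\calP$ has minorization time at most $2\tmin$, from which a standard coupling argument applied to the worst-case kernel attaining $V_{\calP}^\pi$ yields the span bound independent of $\gamma$. The same bound transfers to $V_{\hatP}^*$ on a high-probability event where $\hat p_{s,a}$ shares the support of $p_{s,a}$ and agrees with it up to a constant factor entrywise; the hypothesis $n\geq 32\essinfp^{-1}\log(2|\Ss|^2|\Aa|/\beta)$ is exactly the threshold that a multiplicative Chernoff bound requires to produce this event. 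Using the dual representation (for KL, $\Gamma_{\calP_{s,a}}(V) = -\inf_{\alpha>0}\set{\alpha\delta + \alpha\log E_{S\sim p_{s,a}}[\exp(-V(S)/\alpha)]}$; analogously for $f_k$), a Bernstein-type bound on the inner log-MGF combined with a covering argument over the relevant $\alpha$-range yields
\[
\abs{\Gamma_{\calP_{s,a}}(V) - \Gamma_{\hatP_{s,a}}(V)} \leq \frac{c_1\,\spannorm{V}}{\sqrt{n\essinfp}}\sqrt{\log(2|\Ss|^2|\Aa|/\beta)},
\]
where the $\essinfp^{-1}$ arises from bounding the variance proxy of the relevant test function via the minimum-support guarantee. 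Plugging $\spannorm{V_{\hatP}^*}\leq c\tmin$ into the contraction inequality yields the stated bound on $\norminf{V_{\hatP}^* - V_{\calP}^*}$.

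For the sub-optimality gap, I would use $V_{\calP}^* - V_{\calP}^{\widehat{\pi}^*} \leq (V_{\calP}^* - V_{\hatP}^*) + (V_{\hatP}^{\widehat{\pi}^*} - V_{\calP}^{\widehat{\pi}^*})$, noting that $V_{\hatP}^{\widehat{\pi}^*} = V_{\hatP}^*$ since $\widehat{\pi}^*$ is greedy for $\hatT_\gamma^*$, and then apply the same contraction-plus-perturbation argument to the fixed-policy DR Bellman operator (also $\gamma$-contractive). The main obstacle I expect is the $\gamma$-independent span control of $V_{\hatP}^*$: uniform ergodicity is only directly guaranteed for $\calP$, so I would show that on the good event $\calP$ is contained in an enlarged empirical uncertainty set $\widehat{\calP}(D, c\delta)$ (or obtain an analogous entrywise comparison), which transfers the population minorization-time bound to $\hatP$ with at most a constant blow-up. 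A secondary difficulty is pinning down the optimal dual multiplier $\alpha^*$ in terms of $\spannorm{V}$ and $\essinfp$ so that the Bernstein variance bound and the covering argument jointly produce the clean $\sqrt{1/(n\essinfp)}$ rate without introducing spurious $\tmin$ factors in the logarithm.
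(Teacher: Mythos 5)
Your overall architecture --- $\gamma$-contraction of the DR Bellman operator, reduction to a pointwise perturbation of $\Gamma_{\calP_{s,a}}$ at a fixed value function, a span bound of order $\tmin$ replacing the naive $(1-\gamma)^{-1}$ horizon factor, and the decomposition $V_{\calP}^* - V_{\calP}^{\widehat{\pi}^*}\leq (V_{\calP}^*-V_{\hatP}^*)+(V_{\hatP}^{\widehat{\pi}^*}-V_{\calP}^{\widehat{\pi}^*})$ --- matches the paper's. The genuine gap is in the step you yourself flag as the main obstacle: you evaluate the fixed-point perturbation at the \emph{empirical} value function, so you need $\spannorm{V_{\hatP}^*}\lesssim \tmin$, and your proposed transfer (on the good event $\widehat p_{s,a}$ agrees with $p_{s,a}$ entrywise up to a constant factor, hence the minorization bound carries over with constant blow-up) does not deliver this at the stated sample size. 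Entrywise closeness $\widehat p_{s,a}\geq(1-d)p_{s,a}$ only yields $\widehat P_\pi^{m_\pi}\geq(1-d)^{m_\pi}P_\pi^{m_\pi}$, so the Doeblin minorization constant degrades by $(1-d)^{m_\pi}$; this is an $O(1)$ loss only when $d\lesssim 1/\maxm$, which requires $n\gtrsim \maxm^2\essinfp^{-1}\log(2|\Ss|^2|\Aa|/\beta)$ (exactly the threshold in the paper's Lemma \ref{lem:doeblin_condition_empirical_transition_kernel}), whereas the theorem's hypothesis $n\geq 32\essinfp^{-1}\log(2|\Ss|^2|\Aa|/\beta)$ only guarantees $d\leq 1/2$. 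The paper sidesteps this entirely: in Lemma \ref{lem:robust_value_function_bound_by_bellman} the triangle inequality is split so that the operator perturbation is evaluated at the \emph{population} fixed point $V_{\calP}^\pi$ (uniformly over $\pi\in\Pi$, via Lemma \ref{lem:value_function_decomposition}), whose span is at most $6\tmin$ directly from Propositions \ref{prop:uniform_bound_minorization_time} and \ref{prop:value_function_in_span_seminorm}; no ergodicity statement about $\hatP$ is needed anywhere in the proof. Flipping your decomposition the same way repairs the argument at the stated sample size.

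A secondary difference concerns uniformity over the dual multiplier. You propose a Bernstein bound on the inner log-MGF plus a covering argument over the $\alpha$-range, which is where your worry about localizing $\alpha^*$ and about spurious factors in the logarithm would materialize. The paper instead proves a \emph{deterministic} bound, uniform over all $\alpha\geq 0$, of the form $\sup_{\alpha\geq 0}\alpha\,\abs{\innerprod{\widehat p_{s,a}-p_{s,a}}{e^{-V/\alpha}}}/\innerprod{p_{s,a}}{e^{-V/\alpha}}\leq \tfrac{3}{2}\spannorm{V}\,\linftynormess{(\widehat p_{s,a}-p_{s,a})/p_{s,a}}{p_{s,a}}$ (Lemma \ref{lem:empirical_measure_difference_to_reference_measure}), and concentrates all the randomness into the single relative-error event $\Omega_{n,d}$ handled by one Bernstein-plus-union bound over $(s,a,s')$. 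This removes the need for a net over $\alpha$ in the KL case entirely (localization of $\alpha^*$ is only needed in the $f_k$ case, via the envelope theorem), and is why the clean $\sqrt{1/(n\essinfp)}$ rate appears with no extra logarithms.
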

\par The proof of Theorem \ref{thm:sample_complexity_for_dmdp} is deferred to Appendix \ref{sec:sample_complexity_analysis_of_the_algorithms_kl_case}, \ref{sec:sample_complexity_analysis_of_the_algorithms_fk_case}. We note that Theorem \ref{thm:sample_complexity_for_dmdp} implies that to achieve an $\varepsilon$-optimal policy as well as producing a uniform $\varepsilon$-error estimate of $V_{\mathcal{P}}^*$ with high probability using Algorithm \ref{alg:dr_dmdps}, we need $\widetilde O ( |\Ss||\Aa|\tmin^2(1-\gamma)^{-2}\essinfp^{-1}\varepsilon^{-2})$ samples. Compared to state-of-the-art sample complexity results for DR-DMDPs \citep{shi_distributionally_2023,wang_sample_2024, shi2024curiouspricedistributionalrobustness}, Theorem \ref{thm:sample_complexity_for_dmdp} provides a significant refinement: when the nominal controlled kernel is uniformly ergodic, the effective horizon dependence improves to $(1 - \gamma)^{-2}$. Notably, this $(1 - \gamma)^{-2}$ scaling is also known to be optimal in the non-robust setting \citep{wang_optimal_2023}, which corresponds to DR-DMDPs when $\delta = 0$. As we will show, this optimal dependence directly enables the canonical $n^{-1/2}$ convergence rate for policy learning and value estimation in the DR-AMDP setting.
\begin{algorithm}[H]
    \caption{Reduction to DMDP}
    \label{alg:reduction_to_dmdp}
    \begin{algorithmic}
    \STATE  \textbf{Input:} Samples size $n$.
    \STATE {Assign $\gamma = 1-1/\sqrt{n}$ and run Algorithm \ref{alg:dr_dmdps} with input $\mathrm{DR-DMDP}(\gamma, n)$ to obtain $\widehat{\pi}^*, V_{\hatP}^*$}.
    \STATE {\textbf{return} $\hat{\pi}^*, V_{\hatP}^*/\sqrt{n}$}
    \end{algorithmic}
\end{algorithm}
\begin{theorem}
    \label{thm:sample_complexity_for_reduction_to_dmdp}
        Suppose $\calP = \calP(D_{\mathrm{KL}},\delta)$ or $\calP(D_{f_k},\delta)$ and Assumptions \ref{ass:bounded_minorization_time} and \ref{ass:limited_adversarial_power} are in force. Then for any  $n \geq 32 \essinfp^{-1}\log(2|\Ss|^2|\Aa|/\beta)$, the policy $\widehat{\pi}^*$ and value function $V_{\hatP}^*/\sqrt{n}$ returned by Algorithm \ref{alg:reduction_to_dmdp} satisfies
        \begin{equation}\label{eqn:reduction_avg_policy_err_bd}
            \begin{aligned}
                0\leq g_{\calP}^* - g_{\calP}^{\widehat\pi^*}\leq& \frac{ c\cd\tmin}{\sqrt{n\essinfp}}\sqrt{\log(2|\Ss|^2|\Aa|/\beta)}  \text{  and}\\
                \linftynorm{\frac{V_{\hatP}^*}{\sqrt{n}} - g_{\calP}^*}\leq& \frac{ c'\cd\tmin}{\sqrt{n\essinfp}}\sqrt{\log(2|\Ss|^2|\Aa|/\beta)}
            \end{aligned}
        \end{equation}
        with probability $1-\beta$, where the constants $c,c'\leq 120\sqrt{2}$  for both the KL and $f_k$ cases. 
\end{theorem}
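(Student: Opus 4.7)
The plan is to invoke Theorem \ref{thm:sample_complexity_for_dmdp} with the calibrated discount $\gamma = 1 - 1/\sqrt n$ and then bridge the discounted and average-reward objectives via a vanishing-discount argument. With this choice of $\gamma$, Theorem \ref{thm:sample_complexity_for_dmdp} gives
\[
V_\calP^* - V_\calP^{\widehat\pi^*} \;\leq\; \frac{c\cdot\tmin\sqrt n}{\sqrt{n\essinfp}}\sqrt{\log(2|\Ss|^2|\Aa|/\beta)}
\]
together with the analogous bound on $\linftynorm{V_{\hatP}^* - V_\calP^*}$. Rescaling by $(1-\gamma) = 1/\sqrt n$ therefore produces exactly the target $\tmin/\sqrt{n\essinfp}$ rate, so the remaining work is to translate each discounted quantity into its average-reward counterpart.

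The crux is a vanishing-discount bridge: for every $\pi\in\Pi$ (and for the robust-optimal level) I would establish $\linftynorm{(1-\gamma)V_\calP^\pi - g_\calP^\pi\bd{1}} \leq C(1-\gamma)\tmin.$ The strategy is to first extend Proposition \ref{prop:dr_average_bellman_equation_under_uniformly_ergodic} to the policy-evaluation setting to obtain a solution $(g_\calP^\pi, v^\pi)$ of the robust policy-evaluation Bellman equation, then lift it to $W := v^\pi + g_\calP^\pi\bd{1}/(1-\gamma)$. Using the translation-equivariance $\Gamma_{\calP_{s,a}}(v + c\bd{1}) = \Gamma_{\calP_{s,a}}(v) + c$, a direct computation shows that the residual of $W$ under the policy-specific discounted robust Bellman operator equals $-(1-\gamma)\Gamma_{\calP_{s,\pi(s)}}(v^\pi)$, so $\gamma$-contraction in sup norm yields $\linftynorm{V_\calP^\pi - W} \leq \linftynorm{v^\pi}$, and rescaling by $(1-\gamma)$ gives the bridge up to a factor $\spannorm{v^\pi}$ (after the harmless shift that makes $v^\pi \geq 0$). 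The quantitative input $\spannorm{v^\pi} = O(\tmin)$ should follow from the uniform $(m,p)$-Doeblin condition guaranteed by Proposition \ref{prop:uniform_bound_minorization_time}: since every $Q_\pi$ with $Q\in\calP$ contracts to its stationary distribution with minorization time at most $2\tmin$, the bias of the robust policy-evaluation Bellman equation inherits the same order. The optimal-level bridge $\linftynorm{(1-\gamma)V_\calP^* - g_\calP^*\bd{1}} \leq C(1-\gamma)\tmin$ is proved by a fully parallel argument using $\T_\gamma^*$ and $\T^*$ in place of their policy-evaluation analogues.

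With the bridge in hand, both inequalities of \eqref{eqn:reduction_avg_policy_err_bd} follow by triangle inequalities. For the value-estimation bound,
\[
\linftynorm{\frac{V_{\hatP}^*}{\sqrt n} - g_\calP^*\bd{1}} \;\leq\; \frac{1}{\sqrt n}\linftynorm{V_{\hatP}^* - V_\calP^*} + \linftynorm{(1-\gamma)V_\calP^* - g_\calP^*\bd{1}};
\]
the first term is bounded via Theorem \ref{thm:sample_complexity_for_dmdp} (the $\sqrt n$ cancels $1/(1-\gamma)$), and the second is $O(\tmin/\sqrt n)$ from the bridge, which is absorbed into the first since $\essinfp \leq 1$. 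For the policy-suboptimality bound, I would fix a robust average-reward-optimal $\pi_A^*$ and split
\[
g_\calP^* - g_\calP^{\widehat\pi^*} = [g_\calP^* - (1-\gamma)V_\calP^{\pi_A^*}] + (1-\gamma)[V_\calP^{\pi_A^*} - V_\calP^{\widehat\pi^*}] + [(1-\gamma)V_\calP^{\widehat\pi^*} - g_\calP^{\widehat\pi^*}];
\]
the outer brackets are each $O((1-\gamma)\tmin)$ by the bridge, while the middle one is at most $(1-\gamma)(V_\calP^* - V_\calP^{\widehat\pi^*})$ and hence controlled by Theorem \ref{thm:sample_complexity_for_dmdp} after rescaling.

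The main obstacle I anticipate is the span bound $\spannorm{v^\pi} = O(\tmin)$ in the distributionally robust setting: this requires a quantitative bias argument that exploits the uniform Doeblin condition across every kernel in $\calP$, which is exactly what Proposition \ref{prop:uniform_bound_minorization_time} and Assumption \ref{ass:limited_adversarial_power} are designed to supply, but transferring this stability from $Q_\pi$ to the robust bias requires care because the worst-case kernel in the robust policy-evaluation Bellman equation depends on $v^\pi$ itself. Once the span bound is in place, tracking absolute constants to obtain $c, c' \leq 120\sqrt 2$ is a routine bookkeeping exercise that combines the $96\sqrt 2$ from Theorem \ref{thm:sample_complexity_for_dmdp} with the bridge constant.
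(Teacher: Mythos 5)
Your overall architecture matches the paper's: both reduce the average-reward guarantee to the discounted one at $\gamma = 1-1/\sqrt{n}$ plus a vanishing-discount bridge of order $(1-\gamma)\tmin$, and both assemble the final bounds by the same triangle-inequality decompositions (your three-term split around a robust average-optimal $\pi_A^*$ is equivalent to the paper's $2\max_\pi\linftynorm{g_{\calP}^\pi - V_{\calP}^\pi/\sqrt n} + 2\max_\pi n^{-1/2}\linftynorm{V_{\hatP}^\pi - V_{\calP}^\pi}$ bound). Where you genuinely diverge is in how the bridge $\linftynorm{(1-\gamma)V_{\calP}^\pi - g_{\calP}^\pi}\leq C(1-\gamma)\tmin$ is proved. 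You work directly with the robust average-reward policy Bellman equation, lift its bias $v^\pi$ to $W = v^\pi + g_{\calP}^\pi\mathbf{1}/(1-\gamma)$, and use contraction of $\T_\gamma^\pi$; this computation is correct, but it makes the entire bridge hinge on the span bound $\spannorm{v^\pi} = O(\tmin)$ for the \emph{robust} bias, which you flag as the main obstacle and do not prove. That is the one real gap in your plan: the circularity you worry about (the worst-case kernel depends on $v^\pi$) is genuine, and closing it requires an extra ingredient such as the existence of a worst-case \emph{stationary} kernel $Q^*\in\calP$ with $V_{\calP}^\pi = V_{Q^*}^\pi$, so that $\spannorm{V_{\calP}^\pi}\leq 3\tmin(Q^*_\pi)\leq 6\tmin$ by Propositions \ref{prop:value_function_in_span_seminorm} and \ref{prop:uniform_bound_minorization_time}, and then a limiting argument to transfer this to $v^\pi$. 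The paper avoids the robust bias entirely: it shows by an elementary $\varepsilon$-argument (choosing near-optimizers of the inner infima) that $\linftynorm{g_{\calP}^\pi - (1-\gamma)V_{\calP}^\pi}\leq\sup_{Q\in\calP}\linftynorm{g_Q^\pi - (1-\gamma)V_Q^\pi}$, and then applies the \emph{non-robust} bound $\linftynorm{g_Q^\pi - (1-\gamma)V_Q^\pi}\leq 9(1-\gamma)\tmin(Q_\pi)\leq 18(1-\gamma)\tmin$ (Lemma \ref{lem:approximate_average_value_function_via_discounted_value_function} plus Proposition \ref{prop:uniform_bound_minorization_time}) kernel by kernel. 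That exchange-of-infima trick buys a complete proof with no new structural lemma about robust Bellman equations; your route, once the span bound is supplied, would yield a somewhat more self-contained treatment of the robust average-reward structure, but as written it is not yet a proof.
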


%
Again, we remark that Theorem \ref{thm:sample_complexity_for_reduction_to_dmdp} implies that to achieve an $\varepsilon$-optimal policy as well as producing a uniform $\varepsilon$-error estimate of the optimal robust long-run average reward with high probability using Algorithm \ref{alg:reduction_to_dmdp}, we need $\widetilde O ( |\Ss||\Aa|\tmin^2\essinfp^{-1}\varepsilon^{-2})$ samples.

\subsection{Anchored DR-AMDP}
In this section, we develop \textit{anchored DR-AMDP} Algorithm \ref{alg:anchored_amdp} that avoids solving a DR-DMDP subproblem. Inspired by \citet{fruit2018efficient,zurek_plug-approach_2024}'s anchoring approach for classical MDPs, our anchored DR-AMDP approach modifies the entire uncertainty set of controlled transition kernels via a uniform anchoring state $s_0$ and a calibrated strength parameter $\xi$. We show that Algorithm \ref{alg:anchored_amdp} enjoys the same error and sample complexity upper bounds to Algorithm \ref{alg:reduction_to_dmdp}. 
\begin{algorithm}[ht]
    \caption{Anchored DR-AMDP}
        \label{alg:anchored_amdp}
    \begin{algorithmic}
    \STATE \textbf{Input:} Sample size $n\geq 1$ and divergence $D = D_{\mrm{KL}}$ or $D_{f_k}$. 
    \STATE For all $(s,a)\in \Ss\times \Aa$, compute the $n$-sample empirical transition probability $\widehat p_{s,a}$ as in \eqref{eqn:empirical_transition}. 
    
    \STATE {Let $\xi = 1/\sqrt{n}$ and fixed any anchoring point $s_0\in\Ss$. Construct the anchored empirical uncertainty set as  $\underline{\hatP} = \bigtimes_{(s,a)\times \Ss\times \Aa}\underline{\hatP}_{s,a}$, where
    $\underline{\hatP}_{s,a} = \set{(1-\xi)p + \xi\mathbf{1}e_{s_0}^\top: D(p\|\widehat p_{s,a})\leq \delta}$. }
        
    \STATE{Solve the empirical DR average reward Bellman equation
    \[
    v^*_{\underline{\hatP}}(s) = \max\limits_{a\in \Aa}\set{r(s,a) + \Gamma_{\underline{\hatP}_{s,a}}(v_{\underline{\hatP}}^*)} - g_{\underline{\hatP}}^*(s)
    \]
    for a solution pair $(g_{\underline{\hatP}}^*, v_{\underline{\hatP}}^*)$ }. 
    \STATE Extract an optimal policy $\widehat{\pi}^*\in\Pi$ as $\widehat{\pi}^*(s) \in \arg\max_{a\in \Aa}\set{r(s,a) + \Gamma_{\underline{\hatP}_{s,a}}(v_{\underline{\hatP}}^*)}$.
    \STATE{\textbf{return} $\widehat{\pi}^*, g_{\underline\hatP}^*$}
\end{algorithmic}
\end{algorithm}
\begin{theorem}
        \label{thm:sample_complexity_for_anchored_amdp}
            Suppose Assumption \ref{ass:bounded_minorization_time} and \ref{ass:limited_adversarial_power} are in force. Then for any  $n \geq 32 \essinfp^{-1}\log(2|\Ss|^2|\Aa|/\beta)$, the policy $\widehat{\pi}^*$ and value function $g_{\underline{\hatP}}^*$ returned by Algorithm \ref{alg:anchored_amdp} satisfies \eqref{eqn:reduction_avg_policy_err_bd} with
            $V_{\hatP}^*/\sqrt{n}$ replaced by $g_{\underline{\hatP}}^*$ with probability at least $1-\beta$. 
\end{theorem}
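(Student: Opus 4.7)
The plan is to prove Theorem \ref{thm:sample_complexity_for_anchored_amdp} by showing that Algorithm \ref{alg:anchored_amdp} and Algorithm \ref{alg:reduction_to_dmdp} produce essentially the same output up to an affine transformation, so that the guarantees of Theorem \ref{thm:sample_complexity_for_reduction_to_dmdp} transfer directly. First, I would verify that the anchored uncertainty set $\underline{\hatP}$ is uniformly ergodic: every $Q\in \underline{\hatP}$ satisfies $Q_{s,a}(s_0)\geq \xi$, so $Q_\pi$ is $(1,\xi)$-Doeblin for every $\pi\in\Pi$. This enables Proposition \ref{prop:dr_average_bellman_equation_under_uniformly_ergodic} to produce a solution pair $(g^*_{\underline{\hatP}}, v^*_{\underline{\hatP}})$ with $g^*_{\underline{\hatP}}$ constant across states, thereby legitimizing the output of the algorithm.

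The heart of the argument is a change of variables linking the anchored average-reward Bellman equation to the DR-DMDP Bellman equation with discount $\gamma = 1-\xi = 1-1/\sqrt{n}$. Because $\Gamma$ is an infimum of expectations and the anchored mixture factors out, one has $\Gamma_{\underline{\hatP}_{s,a}}(V) = (1-\xi)\,\Gamma_{\hatP_{s,a}}(V) + \xi V(s_0)$ for every $V\in\R^\Ss$. Substituting $V(s) := v^*_{\underline{\hatP}}(s) - v^*_{\underline{\hatP}}(s_0) + g^*_{\underline{\hatP}}/\xi$ into the anchored Bellman equation and cancelling constants, a direct computation shows that $V$ satisfies the DR-DMDP fixed-point equation $V = \T_\gamma^*(V)$. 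Since $\T_\gamma^*$ is a standard $\gamma$-contraction in sup-norm, its fixed point is unique, hence $V = V^*_{\hatP}$. Evaluating at $s = s_0$ yields the key identity $g^*_{\underline{\hatP}} = \xi V^*_{\hatP}(s_0) = V^*_{\hatP}(s_0)/\sqrt{n}$.

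Because $v^*_{\underline{\hatP}}$ and $V^*_{\hatP}$ differ by a constant and $\Gamma$ is constant-shift equivariant, and because the $\xi v^*_{\underline{\hatP}}(s_0)$ contribution in the anchored one-step operator is action-independent, the argmax sets defining the policies in the two algorithms coincide. I would therefore select $\widehat{\pi}^*$ in Algorithm \ref{alg:anchored_amdp} to agree with the one produced by Algorithm \ref{alg:reduction_to_dmdp} under $\gamma = 1 - 1/\sqrt{n}$. The policy suboptimality bound $0 \leq g_\calP^* - g_\calP^{\widehat{\pi}^*}$ then inherits directly from Theorem \ref{thm:sample_complexity_for_reduction_to_dmdp}. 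For the estimation error, the identity above gives
\[
|g^*_{\underline{\hatP}} - g_\calP^*| = \left|V^*_{\hatP}(s_0)/\sqrt{n} - g_\calP^*\right| \leq \linftynorm{V^*_{\hatP}/\sqrt{n} - g_\calP^*},
\]
which is bounded by the second line of \eqref{eqn:reduction_avg_policy_err_bd}.

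The main obstacle I expect is executing the substitution in the second step carefully, ensuring that the constant shifts cancel exactly and that the uniqueness of both Bellman fixed points is correctly invoked under the assumptions in force. Once the identification $V^*_{\hatP}(s) = v^*_{\underline{\hatP}}(s) - v^*_{\underline{\hatP}}(s_0) + g^*_{\underline{\hatP}}/\xi$ is in place, the remaining arguments are purely algebraic and require no additional concentration analysis beyond that already developed for Algorithm \ref{alg:reduction_to_dmdp}.
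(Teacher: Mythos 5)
Your proposal is correct and rests on the same central identification the paper uses: the factorization $\Gamma_{\underline{\hatP}_{s,a}}(V) = (1-\xi)\Gamma_{\hatP_{s,a}}(V) + \xi V(s_0)$, the resulting identity $g^*_{\underline{\hatP}} = \xi V^*_{\hatP}(s_0)$, and the coincidence of the argmax sets, which are exactly the content of the paper's Lemmas \ref{lem:anchored_amdp_bellman_equation} and \ref{lem:anchored_amdp_policy_bellman_equation}. The differences are in execution rather than substance. You run the change of variables from the anchored solution toward the DMDP fixed point and invoke uniqueness of the $\gamma$-contraction fixed point, whereas the paper goes in the opposite direction (plugging $V^*_{\hatP}$ into the anchored equation and invoking uniqueness of $g$ from Proposition \ref{prop:dr_average_bellman_equation_under_uniformly_ergodic}); both are valid. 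Your $(1,\xi)$-Doeblin observation for $\underline{\hatP}$ is a genuine simplification over the paper's argument, which instead propagates an $(m_\pi,(1-\xi)^{m_\pi}p_\pi)$-Doeblin condition from the unanchored set. Finally, for assembling the error bounds you inherit directly from Theorem \ref{thm:sample_complexity_for_reduction_to_dmdp} after establishing output coincidence, while the paper's appendix proof re-runs the decomposition $g^*_{\calP}-g^{\widehat\pi^*}_{\calP}\le 2\max_{\pi\in\Pi}\|g^\pi_{\underline{\hatP}}-g^\pi_{\calP}\|_\infty$ using the per-policy identity $g^\pi_{\underline{\hatP}}=\xi V^\pi_{\hatP}(s_0)$ together with the optimality $g^{\widehat\pi^*}_{\underline{\hatP}}=g^*_{\underline{\hatP}}$; your route is shorter and avoids duplicating the concentration analysis, at the cost of needing the (correct) tie-breaking remark that any policy returned by Algorithm \ref{alg:anchored_amdp} is also a legitimate output of Algorithm \ref{alg:reduction_to_dmdp}.
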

This theorem implies the same $\widetilde O ( |\Ss||\Aa|\tmin^2\essinfp^{-1}\varepsilon^{-2})$ sample complexity to achieve an $\varepsilon$-optimal policy and value estimation.

\noindent\textbf{Sketched Proof of Theorems~\ref{thm:sample_complexity_for_reduction_to_dmdp} and~\ref{thm:sample_complexity_for_anchored_amdp}.} 
Our proof begin with establishing that, under Assumption~\ref{ass:limited_adversarial_power}, each adversarial transition kernel $Q \in \mathcal{P}$ consists of conditional distributions $q_{s,a}$ that are absolutely continuous with respect to the nominal distributions $p_{s,a}$, with a uniform lower bound $1 - \frac{1}{2m_\vee}$ on its Radon-Nikodym derivative. This guarantees that $\tmin(Q_\pi) \leq O(\tmin)$ for all $Q \in \mathcal{P}$ and $\pi \in \Pi$.

Next, combining Theorem~\ref{lem:value_function_decomposition} with Lemma~\ref{lem:robust_value_function_bound_by_bellman}, we establish that the policy error satisfies $$\linftynorm{V_{\calP}^* - V_{\calP}^{\widehat{\pi}^*}} \leq   \frac{2}{1 - \gamma}\max_{\pi\in\Pi} \linftynorm{\hatT_\gamma^\pi(V_{\calP}^\pi) - \T_\gamma^\pi(V_{\calP}^\pi)}.$$ This reduces the analysis of the policy error to bounding the estimation error of the DR Bellman operator evaluated at $V_{\calP}^\pi$. As the rewards cancel out, it remains to show that the DR functional applied to $V_{\calP}^\pi$ satisfy appropriate concentration bound.

To this end, we apply the strong duality for the DR functional, the bound in Lemma~\ref{lem:kl_dual_differenece_error}, and a Bernstein-type inequality to show that for any function $V$, the deviation satisfies $$\linftynorm{\Gamma_{\hatP_{s,a}}(V) - \Gamma_{\calP_{s,a}}(V)}\leq \widetilde{O}\crbk{\frac{\spannorm{V}}{\sqrt{n\essinfp}}}$$ with high probability.

Finally, by selecting the parameters $\gamma = 1 - 1/\sqrt{n}$ and $\eta = 1/\sqrt{n}$, and noting that $\spannorm{V_\calP^\pi} \leq O(\tmin)$, we complete the proof for the KL-divergence case of Theorems~\ref{thm:sample_complexity_for_reduction_to_dmdp} and~\ref{thm:sample_complexity_for_anchored_amdp}. 
The argument under the $f_k$-divergence formulation proceeds in an analogous manner.



\section{Numerical Experiments}
\label{sec:numerical_experiments}

In this section, we present numerical experiments to validate our theoretical results. We employ the Hard MDP family introduced in \citet{wang_optimal_2023}, which confirms a minimax sample complexity lower bound of $\Omega(\tmin \varepsilon^{-2})$ for estimating the average reward to within an $\varepsilon$ absolute error in the non-robust setting, matching the known upper bound. Our experiments show an empirical convergence rate of $n^{-1/2}$ for both algorithms, validating them as the first algorithms that achieve this rate in the DR-AMDP setting.

\begin{definition}[Hard MDP Family in \citet{wang_optimal_2023}]
\label{def:hard_mdp_family}
This family of MDP instances has $\Ss = \{1,2\}$, $\Aa = \{1,2\}$, and reward function $r(1, \cdot) = 1$ and $r(2, \cdot) = 0$. The controlled transition kernel $P$ is parameterized by $\pp$ with transition diagram given in Figure \ref{fig:transition_diagram}. 
\begin{figure}
    \centering
	\begin{tikzpicture}[->, >=stealth', auto, semithick, node distance=2.5cm, scale=0.8,
                    every loop/.style={looseness=6}, 
                    bend angle=30] 
    \node[state] (1) {1};
    \node[state] (2) [right=2.2cm of 1] {2}; 
    
    \path[red, thick] 
          (1) edge[loop above, min distance=8mm] node[red] {$1\!-\!p$} (1) 
          (2) edge[loop above, min distance=8mm] node[red] {$1\!-\!p$} (2)
          (1.15) edge node[red, above, sloped, pos=0.5] {$p$} (2.165) 
          (2.195) edge node[red, below, sloped, pos=0.5] {$p$} (1.-15);
    
    \path[black, thick] 
          (1) edge[loop below, min distance=8mm] node {$1\!-\!p$} (1)
          (1) edge[bend right=30] node[swap, pos=0.5] {$p$} (2)
          (2) edge[loop below, min distance=8mm] node {$1\!-\!p$} (2)
          (2) edge[bend right=30] node[swap, pos=0.5] {$p$} (1);
    
    \node[draw, rounded corners, inner sep=3pt, anchor=east, xshift=-0.7cm] at (current bounding box.west) {
        \begin{tabular}{@{}r@{}l@{}}
                \textcolor{red}{$a_1$: \rule[0.5ex]{0.5cm}{1pt}} &  \\
                \textcolor{black}{$a_2$: \rule[0.5ex]{0.5cm}{1pt}} &\\
            \end{tabular}
    };
\end{tikzpicture}
    \caption{Transition diagram of the hard MDP instance in \citet{wang_optimal_2023}. }
    \label{fig:transition_diagram}
\end{figure}
\end{definition}
 Observe that under this controlled transition kernel, all stationary policies induce the same transition matrix $P_\pi$. Moreover, restricting $\pp\in(0,\frac{1}{2}]$ we have
$P_{\pi}^m = \bracket{1-(1-2\pp)^m} \frac{1}{2}J + (1-2\pp)^m I$, where $J$ is the matrix of all $1$ and $I$ is the identity matrix. Therefore, $P_{\pi_i}$ is $(m, (1-(1-2\pp)^m))$-Doeblin. Thus, the minorization time of $P$ is $\inf_{m\geq 1}m/(1-(1-2\pp)^m) = \frac{1}{2\pp}$. 

This example clarifies our use of $\maxm$ in Definition \ref{def:UE_MDP}: while $m_\pi\equiv 1$ for all $\pp\in(0,1/2]$, the minorization time $\tmin$ is unbounded, approaching infinity as $\pp$ goes to  $0$. 

Next, we evaluate the performance of Algorithm \ref{alg:reduction_to_dmdp} and \ref{alg:anchored_amdp} by analyzing their value approximation errors under both KL and $\chi^2$ uncertainty sets. $\chi^2$ is a special case of $f_k$-divergence with $k=2$.

The sub-figures in Figure \ref{fig:combined_results} presents the error achieved by the algorithms using a total of $n$ transition samples for every state-action pair. Each data point in the plots corresponds to \textit{a single estimate} generated by one independent run of the corresponding algorithm. Then, we compute the $l_\infty$-error between the estimator and the ground-truth average-reward, which is computed via value iteration. 

We then perform regression on data points on each MDP instance with the same parameter $\pp$. The plots demonstrate the error converging with rate $n^{-1/2}$, evidenced by the slope of $-1/2$ in Figure \ref{fig:combined_results} on a log-log scale. We observe a remarkably low variance around the regression line of both algorithms, given that each data point is a single independent run of the corresponding algorithm. 
\begin{figure}[ht]
    \centering
    
    \begin{subfigure}[b]{0.45\textwidth}
        \centering
        \includegraphics[width=\linewidth]{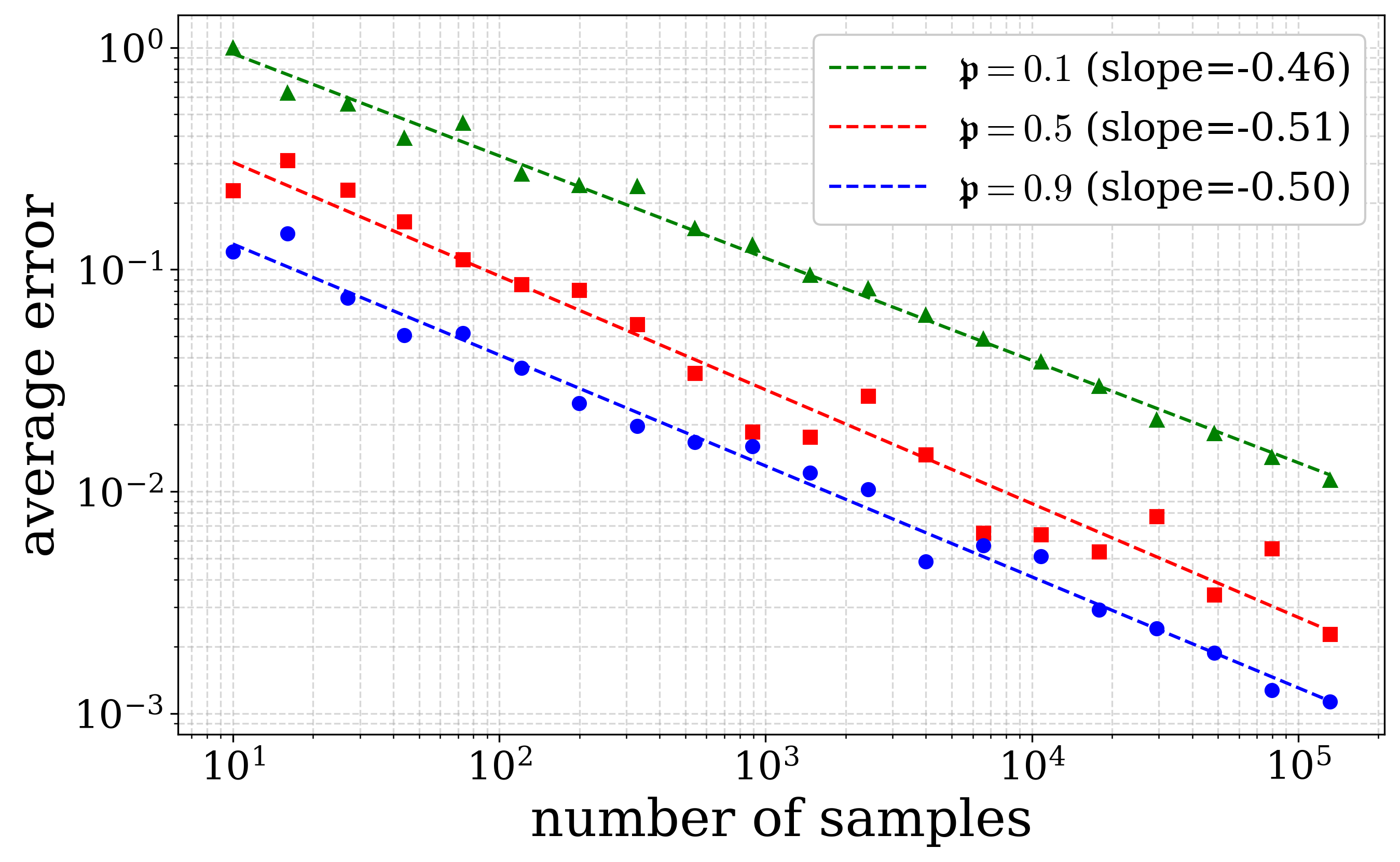}
        \caption{KL-divergence case for Algorithm \ref{alg:reduction_to_dmdp}}
        \label{fig:sub1}
    \end{subfigure}
    \hfill
    \begin{subfigure}[b]{0.45\textwidth}
        \centering
        \includegraphics[width=\linewidth]{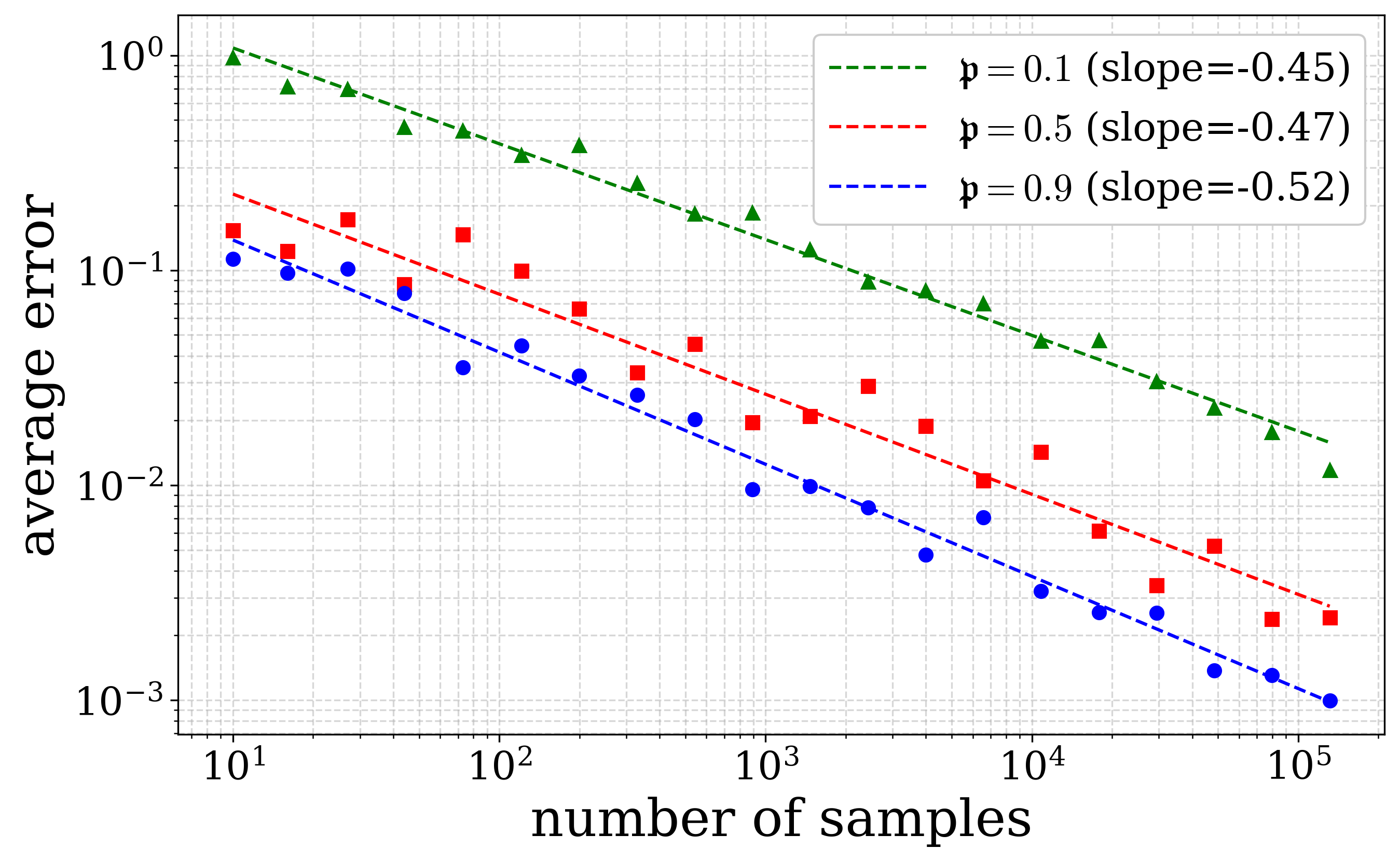}
        \caption{$\chi^2$-divergence case for Algorithm \ref{alg:reduction_to_dmdp}}
        \label{fig:sub2}
    \end{subfigure}

    \begin{subfigure}[b]{0.45\textwidth}
        \centering
        \includegraphics[width=\linewidth]{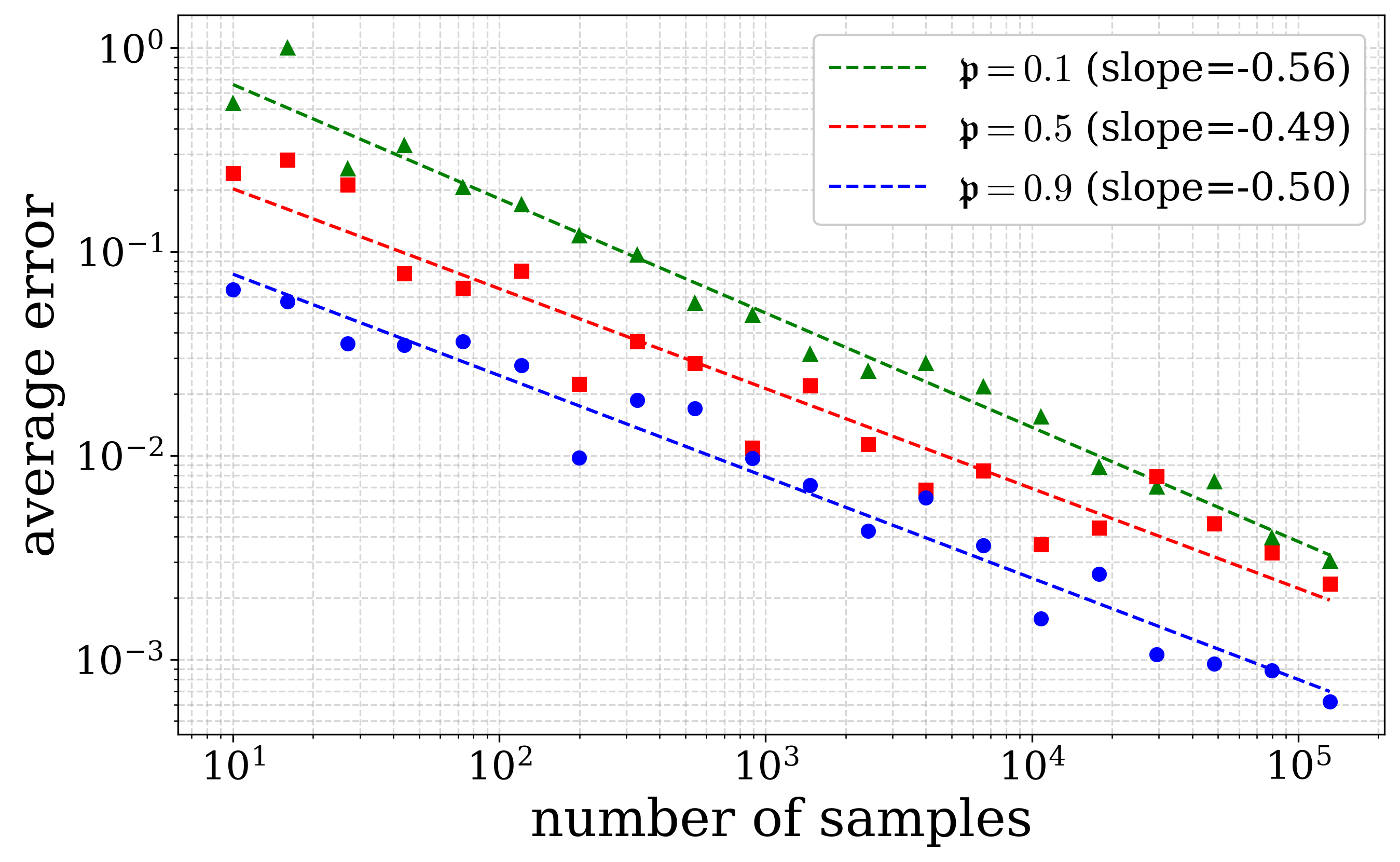}
        \caption{KL-divergence case for Algorithm \ref{alg:anchored_amdp}}
        \label{fig:sub3}
    \end{subfigure}
    \hfill
    \begin{subfigure}[b]{0.45\textwidth}
        \centering
        \includegraphics[width=\linewidth]{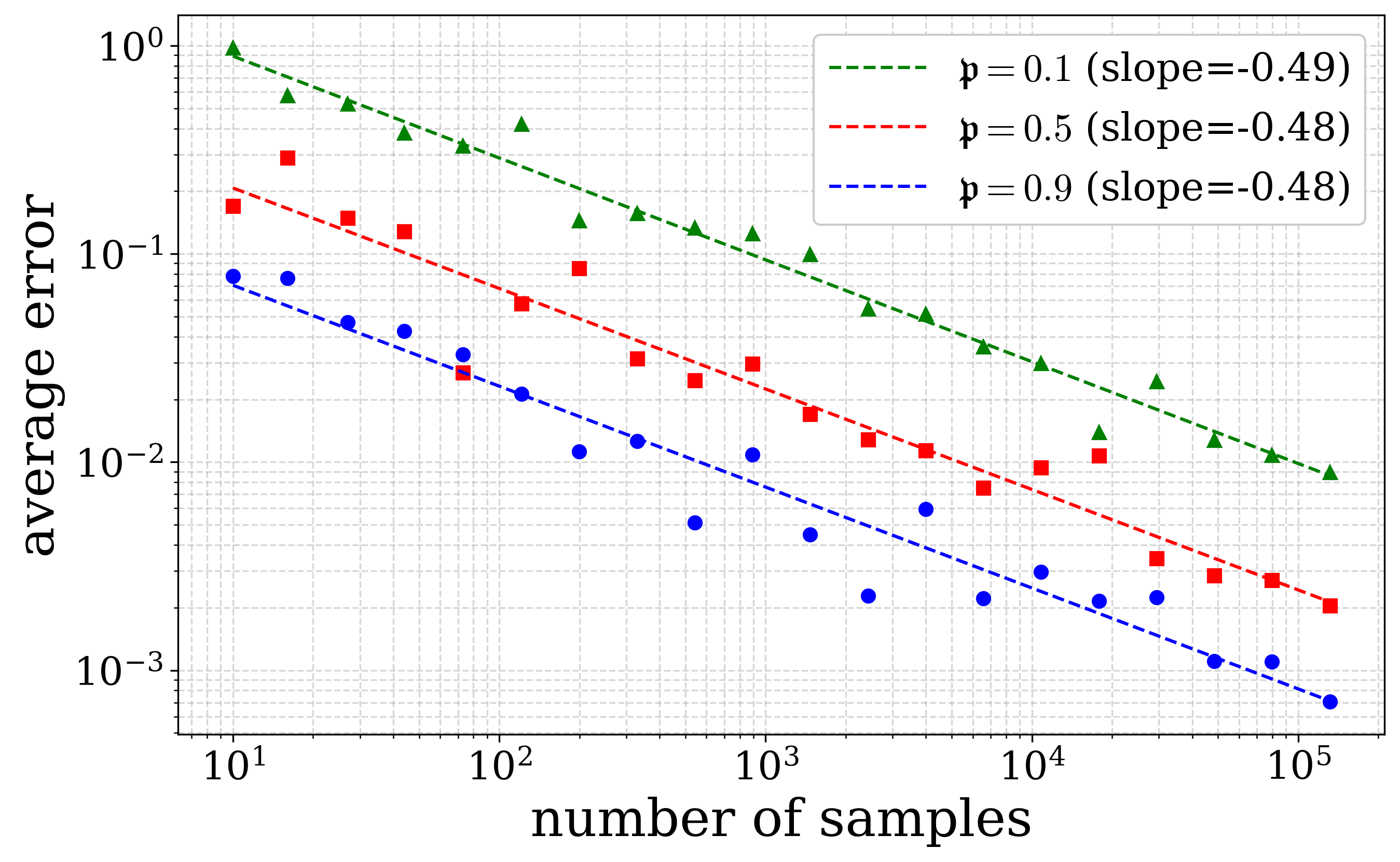}
        \caption{$\chi^2$-divergence case for Algorithm \ref{alg:anchored_amdp}}
        \label{fig:sub4}
    \end{subfigure}
    
    \caption{Comparative numerical experiments on (a-b) Algorithm \ref{alg:reduction_to_dmdp} and (c-d) Algorithm \ref{alg:anchored_amdp} for the hard MDP instance, demonstrating $\varepsilon$-dependence under different divergence measures.}
    \label{fig:combined_results}
    \vskip -0.2in
\end{figure}

In addition to these experiment, we also perform a larger scale experiment to stress test our algorithm. Due to space limitations, the report is provided in Appendix~\ref{sec:app:additional_experiment}.

\section{Conclusion and Future Work}
\label{sec:conclusion}
In this work, we study distributionally robust average-reward reinforcement learning under a generative model. We first establish an instance-dependent bound of $\widetilde O(|\Ss||\Aa|\tmin^2(1-\gamma)^{-2}\varepsilon^{-2})$ for DR-DMDP. Building on this result, we propose two a priori knowledge-free algorithms with finite-sample complexity $\widetilde O(|\Ss||\Aa|\tmin^2\varepsilon^{-2})$. Our work provides novel insights into the relationship between uniform ergodicity and sample complexity under distributional robustness.

While our results rely on the assumptions of uniform ergodicity and constraints on the uncertainty size, we acknowledge these as potential limitations. For future work, we plan to generalize these results to weakly communicating settings and, potentially, multichain MDPs, and investigate broader uncertainty sets (e.g., $l_p$-balls and Wasserstein metrics).

\section*{Acknowledgement}
N. Si gratefully acknowledges the support from  the Hong Kong Research Grants Council [Theme-based Research Scheme T32-615/24-R].

\bibliographystyle{apalike}
\bibliography{DR_MDP.bib}

\clearpage

\section*{NeurIPS Paper Checklist}

\begin{enumerate}

\item {\bf Claims}
    \item[] Question: Do the main claims made in the abstract and introduction accurately reflect the paper's contributions and scope?
    \item[] Answer: \answerYes{} 
    \item[] Justification: All the main claims accurately reflect the paper’s contribution.
    \item[] Guidelines:
    \begin{itemize}
        \item The answer NA means that the abstract and introduction do not include the claims made in the paper.
        \item The abstract and/or introduction should clearly state the claims made, including the contributions made in the paper and important assumptions and limitations. A No or NA answer to this question will not be perceived well by the reviewers. 
        \item The claims made should match theoretical and experimental results, and reflect how much the results can be expected to generalize to other settings. 
        \item It is fine to include aspirational goals as motivation as long as it is clear that these goals are not attained by the paper. 
    \end{itemize}

\item {\bf Limitations}
    \item[] Question: Does the paper discuss the limitations of the work performed by the authors?
    \item[] Answer: \answerYes{} 
    \item[] Justification: The limitations due to assumptions made for the theoretical guarantee are discussed in Section \ref{sec:conclusion}.
    \item[] Guidelines:
    \begin{itemize}
        \item The answer NA means that the paper has no limitation while the answer No means that the paper has limitations, but those are not discussed in the paper. 
        \item The authors are encouraged to create a separate "Limitations" section in their paper.
        \item The paper should point out any strong assumptions and how robust the results are to violations of these assumptions (e.g., independence assumptions, noiseless settings, model well-specification, asymptotic approximations only holding locally). The authors should reflect on how these assumptions might be violated in practice and what the implications would be.
        \item The authors should reflect on the scope of the claims made, e.g., if the approach was only tested on a few datasets or with a few runs. In general, empirical results often depend on implicit assumptions, which should be articulated.
        \item The authors should reflect on the factors that influence the performance of the approach. For example, a facial recognition algorithm may perform poorly when image resolution is low or images are taken in low lighting. Or a speech-to-text system might not be used reliably to provide closed captions for online lectures because it fails to handle technical jargon.
        \item The authors should discuss the computational efficiency of the proposed algorithms and how they scale with dataset size.
        \item If applicable, the authors should discuss possible limitations of their approach to address problems of privacy and fairness.
        \item While the authors might fear that complete honesty about limitations might be used by reviewers as grounds for rejection, a worse outcome might be that reviewers discover limitations that aren't acknowledged in the paper. The authors should use their best judgment and recognize that individual actions in favor of transparency play an important role in developing norms that preserve the integrity of the community. Reviewers will be specifically instructed to not penalize honesty concerning limitations.
    \end{itemize}

\item {\bf Theory assumptions and proofs}
    \item[] Question: For each theoretical result, does the paper provide the full set of assumptions and a complete (and correct) proof?
    \item[] Answer: \answerYes{} 
    \item[] Justification: The full set of assumptions is formally stated in Section \ref{sec:dr_amdp_algorithms_and_sample_complexity_upper_bound}, with rigorous proofs provided in the appendix.
    \item[] Guidelines:
    \begin{itemize}
        \item The answer NA means that the paper does not include theoretical results. 
        \item All the theorems, formulas, and proofs in the paper should be numbered and cross-referenced.
        \item All assumptions should be clearly stated or referenced in the statement of any theorems.
        \item The proofs can either appear in the main paper or the supplemental material, but if they appear in the supplemental material, the authors are encouraged to provide a short proof sketch to provide intuition. 
        \item Inversely, any informal proof provided in the core of the paper should be complemented by formal proofs provided in appendix or supplemental material.
        \item Theorems and Lemmas that the proof relies upon should be properly referenced. 
    \end{itemize}

    \item {\bf Experimental result reproducibility}
    \item[] Question: Does the paper fully disclose all the information needed to reproduce the main experimental results of the paper to the extent that it affects the main claims and/or conclusions of the paper (regardless of whether the code and data are provided or not)?
    \item[] Answer: \answerYes{} 
    \item[] Justification: The complete algorithmic implementation under the generative model is provided in Section \ref{sec:dr_amdp_algorithms_and_sample_complexity_upper_bound}, including detailed pseudo-code. Section \ref{sec:numerical_experiments} specifies all experimental configurations and parameter settings, ensuring full reproducibility of our results.
    \item[] Guidelines:
    \begin{itemize}
        \item The answer NA means that the paper does not include experiments.
        \item If the paper includes experiments, a No answer to this question will not be perceived well by the reviewers: Making the paper reproducible is important, regardless of whether the code and data are provided or not.
        \item If the contribution is a dataset and/or model, the authors should describe the steps taken to make their results reproducible or verifiable. 
        \item Depending on the contribution, reproducibility can be accomplished in various ways. For example, if the contribution is a novel architecture, describing the architecture fully might suffice, or if the contribution is a specific model and empirical evaluation, it may be necessary to either make it possible for others to replicate the model with the same dataset, or provide access to the model. In general. releasing code and data is often one good way to accomplish this, but reproducibility can also be provided via detailed instructions for how to replicate the results, access to a hosted model (e.g., in the case of a large language model), releasing of a model checkpoint, or other means that are appropriate to the research performed.
        \item While NeurIPS does not require releasing code, the conference does require all submissions to provide some reasonable avenue for reproducibility, which may depend on the nature of the contribution. For example
        \begin{enumerate}
            \item If the contribution is primarily a new algorithm, the paper should make it clear how to reproduce that algorithm.
            \item If the contribution is primarily a new model architecture, the paper should describe the architecture clearly and fully.
            \item If the contribution is a new model (e.g., a large language model), then there should either be a way to access this model for reproducing the results or a way to reproduce the model (e.g., with an open-source dataset or instructions for how to construct the dataset).
            \item We recognize that reproducibility may be tricky in some cases, in which case authors are welcome to describe the particular way they provide for reproducibility. In the case of closed-source models, it may be that access to the model is limited in some way (e.g., to registered users), but it should be possible for other researchers to have some path to reproducing or verifying the results.
        \end{enumerate}
    \end{itemize}

\item {\bf Open access to data and code}
    \item[] Question: Does the paper provide open access to the data and code, with sufficient instructions to faithfully reproduce the main experimental results, as described in supplemental material?
    \item[] Answer: \answerYes{}{} 
    \item[] Justification: We have included the complete experimental code in the supplemental materials.
    \item[] Guidelines:
    \begin{itemize}
        \item The answer NA means that paper does not include experiments requiring code.
        \item Please see the NeurIPS code and data submission guidelines (\url{https://nips.cc/public/guides/CodeSubmissionPolicy}) for more details.
        \item While we encourage the release of code and data, we understand that this might not be possible, so “No” is an acceptable answer. Papers cannot be rejected simply for not including code, unless this is central to the contribution (e.g., for a new open-source benchmark).
        \item The instructions should contain the exact command and environment needed to run to reproduce the results. See the NeurIPS code and data submission guidelines (\url{https://nips.cc/public/guides/CodeSubmissionPolicy}) for more details.
        \item The authors should provide instructions on data access and preparation, including how to access the raw data, preprocessed data, intermediate data, and generated data, etc.
        \item The authors should provide scripts to reproduce all experimental results for the new proposed method and baselines. If only a subset of experiments are reproducible, they should state which ones are omitted from the script and why.
        \item At submission time, to preserve anonymity, the authors should release anonymized versions (if applicable).
        \item Providing as much information as possible in supplemental material (appended to the paper) is recommended, but including URLs to data and code is permitted.
    \end{itemize}

\item {\bf Experimental setting/details}
    \item[] Question: Does the paper specify all the training and test details (e.g., data splits, hyperparameters, how they were chosen, type of optimizer, etc.) necessary to understand the results?
    \item[] Answer: \answerYes{} 
    \item[] Section \ref{sec:numerical_experiments} documents all experimental parameters and implementation details necessary for understanding the results.
    \item[] Guidelines:
    \begin{itemize}
        \item The answer NA means that the paper does not include experiments.
        \item The experimental setting should be presented in the core of the paper to a level of detail that is necessary to appreciate the results and make sense of them.
        \item The full details can be provided either with the code, in appendix, or as supplemental material.
    \end{itemize}

\item {\bf Experiment statistical significance}
    \item[] Question: Does the paper report error bars suitably and correctly defined or other appropriate information about the statistical significance of the experiments?
    \item[] Answer: \answerYes{} 
    \item[] Justification: We present statistical significance by fitting the regression line, which is clearly outlined in the figures in Section \ref{sec:numerical_experiments}.
    \item[] Guidelines:
    \begin{itemize}
        \item The answer NA means that the paper does not include experiments.
        \item The authors should answer "Yes" if the results are accompanied by error bars, confidence intervals, or statistical significance tests, at least for the experiments that support the main claims of the paper.
        \item The factors of variability that the error bars are capturing should be clearly stated (for example, train/test split, initialization, random drawing of some parameter, or overall run with given experimental conditions).
        \item The method for calculating the error bars should be explained (closed form formula, call to a library function, bootstrap, etc.)
        \item The assumptions made should be given (e.g., Normally distributed errors).
        \item It should be clear whether the error bar is the standard deviation or the standard error of the mean.
        \item It is OK to report 1-sigma error bars, but one should state it. The authors should preferably report a 2-sigma error bar than state that they have a 96\% CI, if the hypothesis of Normality of errors is not verified.
        \item For asymmetric distributions, the authors should be careful not to show in tables or figures symmetric error bars that would yield results that are out of range (e.g. negative error rates).
        \item If error bars are reported in tables or plots, The authors should explain in the text how they were calculated and reference the corresponding figures or tables in the text.
    \end{itemize}

\item {\bf Experiments compute resources}
    \item[] Question: For each experiment, does the paper provide sufficient information on the computer resources (type of compute workers, memory, time of execution) needed to reproduce the experiments?
    \item[] Answer: \answerNA{} 
    \item[] Justification: Our sample complexity analysis provides statistical guarantees that are independent of computational power considerations, which represent a distinct aspect from our theoretical focus.
    \item[] Guidelines:
    \begin{itemize}
        \item The answer NA means that the paper does not include experiments.
        \item The paper should indicate the type of compute workers CPU or GPU, internal cluster, or cloud provider, including relevant memory and storage.
        \item The paper should provide the amount of compute required for each of the individual experimental runs as well as estimate the total compute. 
        \item The paper should disclose whether the full research project required more compute than the experiments reported in the paper (e.g., preliminary or failed experiments that didn't make it into the paper). 
    \end{itemize}
    
\item {\bf Code of ethics}
    \item[] Question: Does the research conducted in the paper conform, in every respect, with the NeurIPS Code of Ethics \url{https://neurips.cc/public/EthicsGuidelines}?
    \item[] Answer: \answerYes{} 
    \item[] Justification: NeurIPS Code of Ethics has been scrutinised and followed carefully.
    \item[] Guidelines:
    \begin{itemize}
        \item The answer NA means that the authors have not reviewed the NeurIPS Code of Ethics.
        \item If the authors answer No, they should explain the special circumstances that require a deviation from the Code of Ethics.
        \item The authors should make sure to preserve anonymity (e.g., if there is a special consideration due to laws or regulations in their jurisdiction).
    \end{itemize}

\item {\bf Broader impacts}
    \item[] Question: Does the paper discuss both potential positive societal impacts and negative societal impacts of the work performed?
    \item[] Answer: \answerYes{} 
    \item[] Justification: This paper establishes the first sample complexity upper bound for DR-AMDP, representing a significant advance in distributionally robust reinforcement learning theory.
    \item[] Guidelines:
    \begin{itemize}
        \item The answer NA means that there is no societal impact of the work performed.
        \item If the authors answer NA or No, they should explain why their work has no societal impact or why the paper does not address societal impact.
        \item Examples of negative societal impacts include potential malicious or unintended uses (e.g., disinformation, generating fake profiles, surveillance), fairness considerations (e.g., deployment of technologies that could make decisions that unfairly impact specific groups), privacy considerations, and security considerations.
        \item The conference expects that many papers will be foundational research and not tied to particular applications, let alone deployments. However, if there is a direct path to any negative applications, the authors should point it out. For example, it is legitimate to point out that an improvement in the quality of generative models could be used to generate deepfakes for disinformation. On the other hand, it is not needed to point out that a generic algorithm for optimizing neural networks could enable people to train models that generate Deepfakes faster.
        \item The authors should consider possible harms that could arise when the technology is being used as intended and functioning correctly, harms that could arise when the technology is being used as intended but gives incorrect results, and harms following from (intentional or unintentional) misuse of the technology.
        \item If there are negative societal impacts, the authors could also discuss possible mitigation strategies (e.g., gated release of models, providing defenses in addition to attacks, mechanisms for monitoring misuse, mechanisms to monitor how a system learns from feedback over time, improving the efficiency and accessibility of ML).
    \end{itemize}
    
\item {\bf Safeguards}
    \item[] Question: Does the paper describe safeguards that have been put in place for responsible release of data or models that have a high risk for misuse (e.g., pretrained language models, image generators, or scraped datasets)?
    \item[] Answer: \answerNA{} 
    \item[] Justification: There is no such risk.
    \item[] Guidelines:
    \begin{itemize}
        \item The answer NA means that the paper poses no such risks.
        \item Released models that have a high risk for misuse or dual-use should be released with necessary safeguards to allow for controlled use of the model, for example by requiring that users adhere to usage guidelines or restrictions to access the model or implementing safety filters. 
        \item Datasets that have been scraped from the Internet could pose safety risks. The authors should describe how they avoided releasing unsafe images.
        \item We recognize that providing effective safeguards is challenging, and many papers do not require this, but we encourage authors to take this into account and make a best faith effort.
    \end{itemize}

\item {\bf Licenses for existing assets}
    \item[] Question: Are the creators or original owners of assets (e.g., code, data, models), used in the paper, properly credited and are the license and terms of use explicitly mentioned and properly respected?
    \item[] Answer:  \answerNA{}  
    \item[] Justification:  The paper does not use existing assets.
    \item[] Guidelines:
    \begin{itemize}
        \item The answer NA means that the paper does not use existing assets.
        \item The authors should cite the original paper that produced the code package or dataset.
        \item The authors should state which version of the asset is used and, if possible, include a URL.
        \item The name of the license (e.g., CC-BY 4.0) should be included for each asset.
        \item For scraped data from a particular source (e.g., website), the copyright and terms of service of that source should be provided.
        \item If assets are released, the license, copyright information, and terms of use in the package should be provided. For popular datasets, \url{paperswithcode.com/datasets} has curated licenses for some datasets. Their licensing guide can help determine the license of a dataset.
        \item For existing datasets that are re-packaged, both the original license and the license of the derived asset (if it has changed) should be provided.
        \item If this information is not available online, the authors are encouraged to reach out to the asset's creators.
    \end{itemize}

\item {\bf New assets}
    \item[] Question: Are new assets introduced in the paper well documented and is the documentation provided alongside the assets?
    \item[] Answer: \answerNA{} 
    \item[] Justification: No such assets are included.
    \item[] Guidelines:
    \begin{itemize}
        \item The answer NA means that the paper does not release new assets.
        \item Researchers should communicate the details of the dataset/code/model as part of their submissions via structured templates. This includes details about training, license, limitations, etc. 
        \item The paper should discuss whether and how consent was obtained from people whose asset is used.
        \item At submission time, remember to anonymize your assets (if applicable). You can either create an anonymized URL or include an anonymized zip file.
    \end{itemize}

\item {\bf Crowdsourcing and research with human subjects}
    \item[] Question: For crowdsourcing experiments and research with human subjects, does the paper include the full text of instructions given to participants and screenshots, if applicable, as well as details about compensation (if any)? 
    \item[] Answer: \answerNA{} 
    \item[] Justification: The paper does not include such kind of experiment.
    \item[] Guidelines:
    \begin{itemize}
        \item The answer NA means that the paper does not involve crowdsourcing nor research with human subjects.
        \item Including this information in the supplemental material is fine, but if the main contribution of the paper involves human subjects, then as much detail as possible should be included in the main paper. 
        \item According to the NeurIPS Code of Ethics, workers involved in data collection, curation, or other labor should be paid at least the minimum wage in the country of the data collector. 
    \end{itemize}

\item {\bf Institutional review board (IRB) approvals or equivalent for research with human subjects}
    \item[] Question: Does the paper describe potential risks incurred by study participants, whether such risks were disclosed to the subjects, and whether Institutional Review Board (IRB) approvals (or an equivalent approval/review based on the requirements of your country or institution) were obtained?
    \item[] Answer: \answerNA{} 
    \item[] Justification: We don't have human participants in the study.
    \item[] Guidelines:
    \begin{itemize}
        \item The answer NA means that the paper does not involve crowdsourcing nor research with human subjects.
        \item Depending on the country in which research is conducted, IRB approval (or equivalent) may be required for any human subjects research. If you obtained IRB approval, you should clearly state this in the paper. 
        \item We recognize that the procedures for this may vary significantly between institutions and locations, and we expect authors to adhere to the NeurIPS Code of Ethics and the guidelines for their institution. 
        \item For initial submissions, do not include any information that would break anonymity (if applicable), such as the institution conducting the review.
    \end{itemize}

\item {\bf Declaration of LLM usage}
    \item[] Question: Does the paper describe the usage of LLMs if it is an important, original, or non-standard component of the core methods in this research? Note that if the LLM is used only for writing, editing, or formatting purposes and does not impact the core methodology, scientific rigorousness, or originality of the research, declaration is not required.
    \item[] Answer: \answerNA{} 
    \item[] Justification: Our core methodology, including algorithm design and theoretical analysis, was developed independently without employing LLM.
    \item[] Guidelines:
    \begin{itemize}
        \item The answer NA means that the core method development in this research does not involve LLMs as any important, original, or non-standard components.
        \item Please refer to our LLM policy (\url{https://neurips.cc/Conferences/2025/LLM}) for what should or should not be described.
    \end{itemize}

\end{enumerate}

\newpage

\appendix
\appendixpage

\section{Notations and Basic Properties}
\label{sec:appendix:notations_and_basic_properties}

In this section, we present the technical proof for DR-MDPs. Before introducing the theoretical foundations and analyzing related statistical properties, we first define key notations and auxiliary quantities to facilitate subsequent analysis.

For ARMDPs, it is useful to consider the \textit{span semi-norm} \citet{puterman_markov_2009}. For vector $u\in \mathbb R^d$, let $e = \bracket{1, \cdots, 1}^\top$, and define:
\begin{equation}
\begin{aligned}
	\spannorm{u} :=& \max_{1\leq i\leq d}(u_i) - \min_{1\leq i\leq d}(u_i)\\
	=& 2\inf_{c\in \R}\linftynorm{u - ce}.
\end{aligned}
\end{equation}

Note that the span semi-norm satisfies the triangle inequality 
\[\spannorm{v_1 + v_2}\leq \spannorm{v_1} + \spannorm{v_2}.\]

Our analysis relies extensively on two fundamental operators: the DR discounted policy Bellman operator $\T_\gamma^\pi$ and its optimal counterpart $\T_\gamma^*$. These operators are defined as follows:
\begin{align}
	\T^\pi_\gamma(V)(s) :=& \sum_{a\in \Aa}\pi(s|a)\bracket{r(s,a) + \gamma \Gamma_{\calP_{s,a}}(V)}\label{equ:dr_discounted_policy_bellman_operator} \\
    \T^*_\gamma (V)(s) :=& \max_{a\in \Aa}\set{r(s,a) + \gamma \Gamma_{\calP_{s,a}}(V)}. \label{equ:dr_discounted_optimal_bellman_operator}
\end{align}

Similarly, we define the empirical DR discounted policy operator $\hatT_\gamma^\pi$ and its optimal counterpart $\hatT_\gamma^*$ as:
\begin{align}
	\hatT^\pi_\gamma (V)(s) :=& \sum_{a\in \Aa}\pi(s|a)\bracket{r(s,a) + \gamma \Gamma_{\hatP_{s,a}}(V)}\label{equ:empirical_dr_discounted_policy_bellman_operator}\\
    \hatT^*_\gamma (V)(s) :=& \max_{a\in \Aa}\set{r(s,a) + \gamma \Gamma_{\hatP_{s,a}}(V)}. \label{equ:empirical_dr_discounted_optimal_bellman_operator}
\end{align}
It has been shown that the DR value function $V_{\calP}^\pi$ is the unique fixed-point of the DR discounted policy operator \eqref{equ:dr_discounted_policy_bellman_operator}, a.k.a. $V_{\calP}^\pi$ is the solution to the DR discounted Bellman equation: $V_{\calP}^\pi = \T_\gamma^\pi(V_{\calP}^\pi)$ \citet{iyengar2005robust, puterman_markov_2009, nilim2005robust}.

We introduce some technical notations. For function $v:\Ss\to \R$, let
\[
\innerprod{p}{v} := \sum_{s\in\Ss}p(s)v(s)
\]
Notice that with the above notation, we simplify the expectation as $E_{p}[v] = \innerprod{p}{v}$. 

For probability measure $p,q\in\Delta(\Ss)$, we say that $p$ is absolutely continuous w.r.t. $q$, denoted by $p\ll q$, if $q(s) = 0$ implies that $p(s) = 0$. If $p\ll q$, we define the likelihood ratio, a.k.a. Radon-Nikodym derivative, 
$$\frac{p}{q}(s) := \frac{p(s)}{q(s)} \text{ if $q(s) > 0$, else 0}. $$
We say that $p$ and $q$ are mutually absolutely continuous, denoted by $p\sim q$ if $p\ll q$ and $q\ll p$.

For $p\in\Delta(\Ss)$, we also define the $L^\infty(p)$ norm of a function $v:\Ss\ra\R$ by $$\norm{v}_{L^\infty(p)} := \esssup_p |v| = \max_{s:p(s) > 0} |v(s)|.$$

In the DR setting, given uncertainty set $\calP_{s,a}$ and a function $V:\Ss\to \R$, we say $p^*$ is a worst-case measure if 
\[
\innerprod{p^*}{V} = \inf_{p\in\calP_{s,a}}\innerprod{p}{V}
\]

Also, recall Definition \ref{defn:minimal_support_probability}, we define the minimal support probability $\essinfp$ in measuring the samples required.
\[
\essinfp := \min_{(s,a)\in\Ss\times \Aa}\set{\min_{s'\in\Ss} p_{s,a}(s'): p_{s,a}(s')> 0}
\]

The sample complexity's dependence on  $\essinfp$ emerges from two theoretical requirements. First, accurate estimation of the worst-case transition kernel demands that samples capture the distribution's support, necessitating at least $\Omega(1/\essinfp)$ samples to ensure all non-zero probability transitions are observed. Second, the perturbed transition kernel needs to preserve certain mixing characteristics. This is crucial for us to establish a uniform high probability bound on the minorization times of the controlled kernels in the uncertainty set.

Specifically, we consider the "good events" set $\Omega_{n,d}$, as the collection of empirical measures that remain sufficiently close to the nominal transition kernel $P$. Recall from \eqref{eqn:empirical_transition} that
\[
\widehat p_{s,a}(s') := \frac{1}{n}\sum_{i=1}^n \mathbbm{1}\set{S_{s,a}^{(i)} = s'}. 
\]
For any $d>0$ we define, 
\begin{equation}
	\Omega_{n,d}(p_{s,a}):= \set{\omega: \linftynormess{\frac{\widehat{p}_{s,a} - p_{s,a}}{p_{s,a}}}{p_{s,a}}\leq d}
\end{equation}
as the relative difference between $\widehat p_{s,a}$ and $p_{s,a}$ is close up to $d$. Then, define:
\begin{equation}
\label{equ:omega_n_d}
	\Omega_{n,d} := \bigcap\limits_{(s,a)\in\Ss\times \Aa}\Omega_{n,d}(p_{s,a}).
\end{equation}

\begin{theorem}[Bernstein's inequality, Theorem 3 in \citet{Boucheron2004}]
    \label{thm:bernstein_inequality}
    Let $X_1, X_2, \ldots, X_n$ be independent random variables with $E[X_i] = \mu$ and $|X_i - \mu| \leq M$ almost surely. Then we have:
    \begin{equation}
    \label{equ:bernstein_inequality}
    	\left|\frac{1}{n}\sum_{i=1}^nX_i - \mu\right|\leq \frac{M}{3n}\log\frac{2}{\beta} + \sqrt{\frac{2\mathbf{Var}(X)}{n}\log\frac{2}{\beta}}.
    \end{equation}
    with probability at least $1-\beta$.
\end{theorem}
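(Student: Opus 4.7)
The plan is to follow the classical Cram\'er--Chernoff method. Without loss of generality, I center the random variables by setting $Y_i := X_i - \mu$, so that $E[Y_i] = 0$, $|Y_i| \leq M$ almost surely, and $\Var(Y_i) = \Var(X_i)$. I would first derive a one-sided exponential tail bound for $S_n := \sum_{i=1}^n Y_i$, then invert it, then take a union bound to obtain the two-sided statement.

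The first step, which is the technical heart of the argument, is a moment-generating-function estimate for centered bounded random variables: for every $\lambda \in [0, 3/M)$,
\begin{equation*}
E[\exp(\lambda Y_i)] \leq \exp\!\left(\frac{\lambda^2 \Var(X_i)/2}{1 - \lambda M/3}\right).
\end{equation*}
I would prove this by expanding $e^{\lambda Y_i} = 1 + \lambda Y_i + \sum_{k\geq 2}(\lambda Y_i)^k/k!$, using $E[Y_i] = 0$, and bounding the higher moments by the variance via $|Y_i|^k \leq M^{k-2}Y_i^2$, which converts the resulting series into a geometric sum, followed by $1 + u \leq e^u$ in the final step.

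Next, by Markov's inequality applied to $\exp(\lambda S_n)$ and independence,
\begin{equation*}
P(S_n \geq t) \leq \exp\!\left(-\lambda t + \frac{n\lambda^2 \Var(X)/2}{1-\lambda M/3}\right).
\end{equation*}
Minimizing the right-hand side over $\lambda$ with $\lambda^* = t/(n\Var(X) + Mt/3)$ yields the classical Bernstein tail $P(S_n \geq t) \leq \exp\!\left(-t^2/(2(n\Var(X) + Mt/3))\right)$. Applying the identical argument to $\{-Y_i\}$ and taking a union bound produces the same tail for $P(|S_n|\geq t)$ with a leading factor of $2$.

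Finally, I would invert the tail: setting the bound equal to $\beta$ leads to a quadratic inequality in $t$ whose positive root, together with $\sqrt{a+b}\leq \sqrt{a}+\sqrt{b}$, yields $t \leq \sqrt{2n\Var(X)\log(2/\beta)} + \tfrac{M}{3}\log(2/\beta)$ (up to the stated constants). Dividing by $n$ recovers the claimed bound on $|n^{-1}\sum_i X_i - \mu|$. The main obstacle is the MGF bound, since it is where the variance $\Var(X)$ (rather than the crude range $M^2$) enters the exponent and thereby produces the sharper square-root term; the remaining steps are routine one-variable optimization and quadratic inversion.
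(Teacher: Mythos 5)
The paper does not prove this statement at all: it is imported verbatim by citation (Theorem~3 of Boucheron et al.), so there is no internal proof to compare against. Your proposal is the standard Cram\'er--Chernoff derivation of Bernstein's inequality, and its core is sound: the MGF bound $E[e^{\lambda Y_i}]\leq\exp\bigl(\tfrac{\lambda^2\mathrm{Var}(X_i)/2}{1-\lambda M/3}\bigr)$ follows exactly as you describe from $E[|Y_i|^k]\leq M^{k-2}\mathrm{Var}(X_i)$ together with $k!\geq 2\cdot 3^{k-2}$, the Chernoff step and the choice $\lambda^*=t/(n\mathrm{Var}(X)+Mt/3)$ do yield the tail $\exp\bigl(-t^2/(2(n\mathrm{Var}(X)+Mt/3))\bigr)$, and the union bound gives the two-sided factor of $2$.

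The one place your argument, as written, does not deliver the stated inequality is the final inversion. Setting $2\exp\bigl(-t^2/(2(v+ct))\bigr)=\beta$ with $v=n\mathrm{Var}(X)$, $c=M/3$, $L=\log(2/\beta)$ gives the positive root $t_0=cL+\sqrt{c^2L^2+2vL}$, and $\sqrt{a+b}\leq\sqrt{a}+\sqrt{b}$ turns this into $t_0\leq 2cL+\sqrt{2vL}$, i.e.\ a linear term $\tfrac{2M}{3n}\log(2/\beta)$ rather than the claimed $\tfrac{M}{3n}\log(2/\beta)$; moreover the claimed threshold $cL+\sqrt{2vL}$ is strictly smaller than $t_0$, so the weakened tail cannot certify it. To recover the exact constant you should invert the exact Legendre transform of $\lambda\mapsto\tfrac{v\lambda^2}{2(1-c\lambda)}$, namely $h^*(t)=\tfrac{v}{c^2}h_1(ct/v)$ with $h_1(x)=1+x-\sqrt{1+2x}$; since $h_1^{-1}(u)=u+\sqrt{2u}$, this gives directly $P\bigl(S_n\geq \tfrac{M}{3}u+\sqrt{2vu}\bigr)\leq e^{-u}$, which with $u=\log(2/\beta)$ and the two-sided union bound is precisely the stated form. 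This is exactly how the cited reference states the result, so the fix is a substitution of the sharper inversion for your quadratic one; everything else in your outline stands.
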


By Bernstein's inequality, we could bound the probability measure of $\Omega_{n, d}$:
\begin{lemma}
\label{lem:bernstein_inequality_n_and_d}
    When the relative difference $d$ satisfies:
    \[
    d= \frac{1}{3n\essinfp}\log\frac{2|\Ss|^2|\Aa|}{\beta} + \sqrt{\frac{2}{n\essinfp}\log\frac{2|\Ss|^2|\Aa|}{\beta}}
    \]
    then 
    \[P(\Omega_{n, d}^c)\leq \beta
    \]
\end{lemma}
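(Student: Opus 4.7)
The plan is to apply Bernstein's inequality (Theorem \ref{thm:bernstein_inequality}) pointwise to each coordinate of $\widehat{p}_{s,a}$, then rescale by $p_{s,a}(s')$ and union bound over all relevant triples $(s,a,s')$.

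First, fix $(s,a)\in\Ss\times\Aa$ and $s'\in\Ss$ with $p_{s,a}(s') > 0$. Writing $X_i = \mathbbm{1}\{S_{s,a}^{(i)} = s'\}$, the $X_i$ are i.i.d. Bernoulli with mean $p_{s,a}(s')$, satisfying $|X_i - p_{s,a}(s')| \leq 1$ and $\Var(X_i) = p_{s,a}(s')(1-p_{s,a}(s')) \leq p_{s,a}(s')$. Applying Theorem \ref{thm:bernstein_inequality} with failure parameter $\beta' := \beta/(|\Ss|^2|\Aa|)$ yields
\[
|\widehat{p}_{s,a}(s') - p_{s,a}(s')| \leq \frac{1}{3n}\log\frac{2|\Ss|^2|\Aa|}{\beta} + \sqrt{\frac{2p_{s,a}(s')}{n}\log\frac{2|\Ss|^2|\Aa|}{\beta}}
\]
with probability at least $1 - \beta'$.

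Next, I would divide both sides by $p_{s,a}(s')$. Since $p_{s,a}(s')\geq \essinfp$ whenever $p_{s,a}(s') > 0$ (by Definition \ref{defn:minimal_support_probability}), we have $1/p_{s,a}(s') \leq 1/\essinfp$ and $\sqrt{p_{s,a}(s')}/p_{s,a}(s') = 1/\sqrt{p_{s,a}(s')} \leq 1/\sqrt{\essinfp}$. This gives
\[
\frac{|\widehat{p}_{s,a}(s') - p_{s,a}(s')|}{p_{s,a}(s')} \leq \frac{1}{3n\essinfp}\log\frac{2|\Ss|^2|\Aa|}{\beta} + \sqrt{\frac{2}{n\essinfp}\log\frac{2|\Ss|^2|\Aa|}{\beta}} = d.
\]

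Finally, I would union bound over all triples $(s,a,s')$ with $p_{s,a}(s') > 0$. Since there are at most $|\Ss|^2|\Aa|$ such triples and each contributes failure probability at most $\beta'$, the total failure probability is at most $|\Ss|^2|\Aa|\cdot\beta' = \beta$. Because $\norm{\cdot}_{L^\infty(p_{s,a})}$ only considers $s'$ in the support of $p_{s,a}$, the bound above translates directly to $\Omega_{n,d}(p_{s,a})$ holding for each $(s,a)$, hence $\Omega_{n,d}$ holds with probability at least $1-\beta$. There is no genuine obstacle here; the only mild care needed is in matching Bernstein's constants and in restricting attention to the support of $p_{s,a}$ when invoking the essential supremum norm.
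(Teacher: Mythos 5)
Your proposal is correct and follows essentially the same route as the paper's proof: Bernstein's inequality applied to the Bernoulli indicators for each triple $(s,a,s')$ in the support, the variance bound $\Var \leq p_{s,a}(s')$, the lower bound $p_{s,a}(s')\geq \essinfp$ to absorb the rescaling, and a union bound over at most $|\Ss|^2|\Aa|$ triples. The only cosmetic difference is that you divide the Bernstein bound by $p_{s,a}(s')$ whereas the paper multiplies $d$ by $p_{s,a}(s')$ and compares; these are algebraically identical.
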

\begin{proof}
    Let $\mathrm{supp}(p_{s,a}):=\set{s': p_{s,a}(s')>0}$. Given $n$ i.i.d. samples $\set{S_{s,a}^{(1)}, S_{s,a}^{(2)}, \cdots, S_{s,a}^{(n)}}$ drawn from $p_{s,a}$. We define the indicator variables:
    \[
    X^{i}_{s,a}(s') = \mathbbm{1}\set{S_{s,a}^{(i)} = s'}.
    \]
    Note that $X_{s,a}^i(s')\sim \mathrm{Bernoulli}(p_{s,a}(s'))$ for all $1\leq i\leq n$. Then:
    \[
    \widehat{p}_{s,a}(s') = \frac{1}{n}\sum_{i=1}^n X^{i}_{s,a}(s')
    \]
    By Union bound, we have:
    \begin{equation}
        \begin{aligned}
            P(\Omega_{n,d}^c) \leq & \sum_{(s,a)\in \Ss\times \Aa}P\bracket{\linftynormess{\frac{\widehat p_{s,a} - p_{s,a}}{p_{s,a}}}{p_{s,a}}\geq d}\\
            \leq & \sum_{(s,a)\in \Ss\times \Aa}\sum_{s'\in \mathrm{supp}(p_{s,a})}P\bracket{\left|\widehat{p}_{s,a}(s') - p_{s,a}(s')\right|\geq dp_{s,a}(s')}\\
            \leq & \sum_{(s,a)\in \Ss\times \Aa}\sum_{s'\in \mathrm{supp}(p_{s,a})}P\bracket{\left|\frac{1}{n}\sum_{i=1}^nX_{s,a}^{i}(s') - p_{s,a}(s')\right|\geq d p_{s,a}(s')}
        \end{aligned}
    \end{equation}
     Then, by Bernstein's inequality \eqref{equ:bernstein_inequality}, for any action-state pairs $(s,a)\in \Ss\times \Aa$, and next state $s'\in\mathrm{supp}(p_{s,a})$, since $E_{p_{s,a}}[X_{s,a}^{i}(s')]=p_{s,a}(s')$, and $|X_{s,a}^{i}(s') - p_{s,a}|\leq 1$, we have:
    \begin{equation}
    	\left|\frac{1}{n}\sum_{i=1}^n X_{s,a}^{i}(s') - p_{s,a}(s')\right|\leq \frac{1}{3n}\log\frac{2|\Ss|^2|\Aa|}{\beta} + \sqrt{\frac{2\mathbf{Var}(\mathrm{Bernoulli}(p_{s,a}(s')))}{n}\log\frac{2|\Ss|^2|\Aa|}{\beta}}
    \end{equation}
    with probability at least $1-\frac{\beta}{|\Ss|^2|\Aa|}$. Thus, let
    \[
    d = \frac{1}{3n\essinfp}\log\frac{2|\Ss|^2|\Aa|}{\beta} + \sqrt{\frac{2}{n\essinfp}\log\frac{2|\Ss|^2|\Aa|}{\beta}}
    \]
    then, for all $(s,a)\in\Ss\times\Aa$, and $s'\in \mathrm{supp}(p_{s,a})$:
    \begin{equation}
        \begin{aligned}
            dp_{s,a}(s')  =& \frac{p_{s,a}(s')}{3n\essinfp}\log\frac{2|\Ss|^2|\Aa|}{\beta} + \sqrt{\frac{2p_{s,a}(s')}{n\essinfp}\log\frac{2|\Ss|^2|\Aa|}{\beta}}\\
            \geq & \frac{1}{3n}\log\frac{2|\Ss|^2|\Aa|}{\beta} + \sqrt{\frac{2p_{s,a}(s')}{n}\log\frac{2|\Ss|^2|\Aa|}{\beta}}\\
            \stackrel{(i)}{\geq} & \frac{1}{3n}\log\frac{2|\Ss|^2|\Aa|}{\beta} + \sqrt{\frac{2\mathbf{Var}\bracket{\mathrm{Bernoulli}(p_{s,a}(s'))}}{n}\log\frac{2|\Ss|^2|\Aa|}{\beta}}
        \end{aligned}
    \end{equation}
    Where $(i)$ relies on $\mathbf{Var}(\mathrm{Bernoulli}(p_{s,a}(s'))) = p_{s,a}(s')(1-p_{s,a}(s'))$, then we conclude, for all $(s,a)\in\Ss\times \Aa$ and $s'\in\mathrm{supp}(p_{s,a})$, with probability $1-\frac{\beta}{|\Ss|^2|\Aa|}$, we have:
    \begin{equation}
    	\left|\frac{1}{n}\sum_{i=1}^nX_{s,a}^i(s') - p_{s,a}(s')\right|\leq d p_{s,a}(s').
    \end{equation}
    Then we conclude that:
    \[
    P\bracket{\left|\frac{1}{n}\sum_{i=1}^nX_{s,a}^{i}(s') - p_{s,a}(s')\right|\geq d p_{s,a}(s')}\leq\frac{\beta}{|\Ss|^2|\Aa|}
    \]
    And further:
    \begin{equation}
    	\begin{aligned}
    		P(\Omega_{n,d}^c) \leq& \sum_{(s,a)\in \Ss\times \Aa}\sum_{s'\in\mathrm{supp}(p_{s,a})}P\bracket{\left|\frac{1}{n}\sum_{i=1}^nX_{s,a}^{i}(s') - p_{s,a}(s')\right|\geq d p_{s,a}(s')}\\
    		\leq& |\Ss|^2|\Aa|\cdot\frac{\beta}{|\Ss|^2|\Aa|}\\
    		=&\beta.
    	\end{aligned}
    \end{equation}
    Proved.
\end{proof}

\begin{lemma}
\label{lem:mp_doeblin_condition_is_achievable}
	Let the transition kernel $K$ be uniformly ergodic. If $\tmin(K)<\infty$, then there exists an $(m, p)$ pair, such that:
	\[
	\frac{m}{p} = \tmin(K)
	\]
\end{lemma}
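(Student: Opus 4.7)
My plan is to reduce the two-variable infimum defining $\tmin(K)$ to an infimum over $m \in \N$ alone by identifying, for each fixed $m$, the largest admissible $p$, and then to argue that the remaining infimum over $m$ is attained at some finite $m^*$.

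For fixed $m \geq 1$, I would show that the set of $p \in (0,1]$ for which some $\psi \in \Delta(\Ss)$ satisfies $\min_{s \in \Ss} K^m(s, \cdot) \geq p\psi(\cdot)$ is exactly $(0, p^*(m)]$, where
\[
p^*(m) := \sum_{s' \in \Ss}\min_{s \in \Ss}K^m(s, s').
\]
The upper bound $p \leq p^*(m)$ follows by summing the Doeblin inequality over $s'$ and using that $\psi$ sums to one. Conversely, whenever $p^*(m) > 0$ the measure $\psi(s') := p^*(m)^{-1}\min_{s \in \Ss} K^m(s, s')$ lies in $\Delta(\Ss)$ and certifies the Doeblin condition at $p = p^*(m)$; any smaller positive $p$ is then admissible with the same $\psi$. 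Consequently,
\[
\tmin(K) = \inf\set{m/p^*(m): m \geq 1,\, p^*(m) > 0}.
\]

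It remains to show that this infimum over $m$ is attained. Uniform ergodicity of $K$ guarantees that $p^*(m) > 0$ for all sufficiently large $m$, hence the index set is nonempty and $\tmin(K) < \infty$. On the other hand, since $p^*(m) \leq 1$, one has $m/p^*(m) \geq m$, so only finitely many values of $m$ can give $m/p^*(m) \leq \tmin(K) + 1$. The infimum is therefore taken over a finite subset of $\N$ and is attained at some $m^* \in \N$. Setting $p := p^*(m^*)$ and taking $\psi$ as constructed above produces the desired $(m,p)$ pair with $m/p = \tmin(K)$.

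The only non-routine step is the characterization of $p^*(m)$ as the maximal admissible $p$; the remainder is an elementary ``discrete infimum attained'' argument enabled by the bound $m/p^*(m) \geq m$.
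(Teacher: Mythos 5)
Your proof is correct, and it takes a genuinely different route from the paper at the one non-trivial step. The paper fixes $m$, defines $\cC_m$ as the set of admissible constants $p$, and proves that $\cC_m$ is closed by extracting a convergent subsequence of minorization measures $\psi_n$ from the compact simplex $\Delta(\Ss)$ and passing to the limit in the inequality $K^m(s,\cdot)\geq p_n\psi_n(\cdot)$; attainment of $p_{\max}(m)=\sup\cC_m$ then follows from closedness. You instead exhibit the maximal admissible constant in closed form, $p^*(m)=\sum_{s'}\min_s K^m(s,s')$, verifying maximality by summing the Doeblin inequality over $s'$ and achievability by normalizing the pointwise minimum $\min_s K^m(s,\cdot)$ into a probability measure. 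This is more elementary and more informative: it removes the compactness argument entirely and identifies the optimal $\psi$ explicitly (a fact that could be reused elsewhere, e.g.\ when computing $\tmin$ numerically). Both proofs then finish identically, observing that $m/p\geq m$ forces the outer infimum over $m$ onto a finite index set, where it is attained. One small remark: your sentence that uniform ergodicity guarantees $p^*(m)>0$ for large $m$ is true but not needed, since the hypothesis $\tmin(K)<\infty$ already guarantees the index set $\set{m: p^*(m)>0}$ is nonempty, which is all the argument requires.
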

\begin{proof}
	By definition:
	\[
	\tmin(K) := \inf\set{m/p: \min_{s\in \Ss}K^m(s, \cdot)\geq p\psi(\cdot) \text{ for some } \psi\in \Delta(\Ss)}.
	\]
	As $\tmin(K)<\infty$, then, there exists a constant $C>0$, such that
	\[
	\tmin\leq C.
	\]
	As the feasible $(m,p)$-pair such that:
	\[
	(m,p)\in\set{(m,p): \min_{s\in\Ss}K^m\geq p\psi(\cdot)\text{ for some } \psi\in\Delta(\Ss)},
	\]
	$\frac{m}{p}\leq C$, then $m\leq C p \leq C$ because $p\in(0, 1]$, we conclude:
	\[
	m\in \cC:= \set{1, 2,\cdots, \lfloor C \rfloor}\quad\text{and}\quad|\cC|<\infty.
	\]
	Define $\cC_m$ and $p_{\max}(m)$ as:
	\begin{equation}
		\begin{aligned}
			\cC_m =& \set{p: \exists \psi\in \Delta(\Ss), K^m(s, \cdot)\geq p\psi(\cdot)}\\
			p_{\max}(m) =& \sup_{p\in\cC_m} p
		\end{aligned}
	\end{equation}
    With this definition, note that
    $$\tmin(K) = \inf_{m\in \cC,p\in\cC_m}\frac{m}{p}.$$

    We will show that the set $\cC_m$ is closed, hence $p_{\max}(m)\in\cC_m $ is achieved. 
    
	Since $\Ss$ is finite, $\Delta(\Ss)\subset \R^{|\Ss|}$ is compact. Consider any sequence $\set{p_n}\subseteq \cC_m$ such that $p_n \to p$. Then, there exists $\psi_n(\Ss)$, such that
	\[
	K^m(s, \cdot) \geq p_n\psi_n(\cdot), \quad s\in\Ss. 
	\]
	As $\Delta(\Ss)$ is compact, sequence $\set{\psi_n}$ has subsequence $\set{\psi_{n_k}}$, such that:
	\[
	\psi_{n_k}\to \psi\in\Delta(\Ss)\quad\text{pointwise}. 
	\]
	and a corresponding $\set{p_{n_k}}$ such that $p_{n_k}\to p$. Then for any $(s, s')\in \Ss\times \Ss$:
	\[
	K^m(s, s')\geq p_{n_k}\cdot \psi_{n_k}(s'), \quad \forall k
	\]
	We have $p_{n_k}\to p$ and $\psi_{n_k}(s')\to \psi(s')$:
	\[
	p_{n_k}\cdot \psi_{n_k}(s')\to p\cdot \psi(s').
	\]
	Thus:
	\[
	K^m(s, \cdot)\geq p\psi(\cdot),\quad \forall s\in\Ss; 
	\]
	    i.e. $p\in \cC_m$. Hence, $\cC_m$ is a closed and $p_{\max}(m)\in \cC_m$. 

    Therefore,
	\[
	 \tmin(K) = \inf_{m\in \cC,p\in\cC_m}\frac{m}{p} = \min_{m\in \cC}\frac{m}{p_{\max}(m^*)}. 
	\]
	Since $\cC$ is finite, there exists a $m^*\in \set{1, 2,\cdots, \lfloor C\rfloor}$ such that
	\[
	\tmin(K) = \frac{m^*}{p_{\max}(m^*)}.
	\]
\end{proof}

\begin{lemma}
\label{lem:uniformly_ergodic_for_uncertainty_set}
Suppose the controlled transition kernel $P$ is uniformly ergodic. If for all $Q\in\calP$, $p_{s,a}\ll q_{s,a}$ holds for all $(s,a)\in \Ss\times\Aa$, then $\calP$ is uniformly ergodic.
\end{lemma}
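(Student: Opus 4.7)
The plan is to reduce the lemma to a purely combinatorial transfer statement by using the standard fact that, on a finite state space, the Doeblin condition $\tmin(K)<\infty$ from Definition~\ref{def:mixing_time_and_minorization_time} is equivalent to $K$ being an aperiodic unichain; i.e.\ $K$ has a unique closed communicating class $C\subseteq \Ss$ whose internal cycle lengths have greatest common divisor $1$. So I only need to propagate the ``aperiodic unichain'' property from $P_\pi$ to $Q_\pi$ for every $\pi\in\Pi$ and every $Q\in\calP$.

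Fix any $Q\in\calP$ and $\pi\in\Pi$. Since $\pi$ is stationary and deterministic, $p_{s,\pi(s)}\ll q_{s,\pi(s)}$ gives $\supp(P_\pi(s,\cdot))\subseteq \supp(Q_\pi(s,\cdot))$ for every $s$. Introducing the directed transition graphs $G_P^\pi$ (with edge $s\to s'$ iff $P_\pi(s,s')>0$) and similarly $G_Q^\pi$, this absolute-continuity hypothesis is exactly the statement $G_P^\pi\subseteq G_Q^\pi$ as edge sets. All remaining work is graph-theoretic under this containment, invoking Assumption~\ref{ass:bounded_minorization_time} only to know that $G_P^\pi$ itself is already aperiodic unichain.

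Let $C_P$ denote the unique closed communicating class of $P_\pi$ and define $C_Q$ to be the forward-closure of $C_P$ in $G_Q^\pi$. By construction $C_Q$ is forward-closed under $Q_\pi$. I would then verify three items: (i) every $v\in\Ss$ reaches $C_P\subseteq C_Q$ under $G_P^\pi$ (by unichain-ness of $P_\pi$), hence also under $G_Q^\pi$; (ii) for any $v\in C_Q$, applying (i) to $v$ shows $v$ reaches $C_P$ in $G_Q^\pi$, and $C_P$ is strongly connected in $G_Q^\pi$, so $v$ communicates with every state of $C_P$; combining (i) and (ii), $C_Q$ is the unique closed strongly connected component of $G_Q^\pi$, i.e.\ $Q_\pi$ is unichain. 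For aperiodicity: $C_P\subseteq C_Q$ and $G_P^\pi\subseteq G_Q^\pi$, so every cycle internal to $C_P$ is still a cycle internal to $C_Q$; thus the gcd of cycle lengths in $C_Q$ divides the gcd over $C_P$, which equals $1$. Therefore $Q_\pi$ is aperiodic unichain and $\tmin(Q_\pi)<\infty$.

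I expect the main obstacle to be showing that the added edges in $G_Q^\pi$ cannot create a second closed communicating class disjoint from $C_P$. The crux of the argument is that every vertex of $\Ss$ already reaches $C_P$ in the \emph{smaller} graph $G_P^\pi$, which forces every closed class of $G_Q^\pi$ to intersect (hence contain) $C_P$; the subtlety is that $Q_\pi$ may introduce ``escape'' edges out of $C_P$, so the correct essential class of $Q_\pi$ is the enlargement $C_Q$ rather than $C_P$ itself, and the uniqueness/closedness bookkeeping must accommodate this enlargement. Once that point is settled, the reduction of aperiodicity and the conversion back to the Doeblin form of Definition~\ref{def:mixing_time_and_minorization_time} are routine, completing the proof.
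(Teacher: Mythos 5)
Your proposal is correct, but it takes a genuinely different route from the paper. The paper's proof works directly with the Doeblin condition: it fixes the pair $(m_\pi,p_\pi)$ achieving $\tmin(P_\pi)$, expands $Q_\pi^{m_\pi}(s_0,s_{m_\pi})$ as a sum over length-$m_\pi$ paths, restricts to the paths supported by $P_\pi$, and uses $p_{s,a}\ll q_{s,a}$ to lower-bound each such path's $Q$-probability by a positive multiple (a product of Radon--Nikodym ratios) of its $P$-probability, yielding $Q_\pi^{m_\pi}\geq c\,P_\pi^{m_\pi}\geq c\,p_\pi\psi$ and hence the explicit bound $\tmin(Q_\pi)\leq m_\pi/(c\,p_\pi)$. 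You instead pass through the classical equivalence, on a finite state space, between $\tmin(K)<\infty$ and $K$ being an aperiodic unichain, and then propagate that structural property through the edge-set containment $G_P^\pi\subseteq G_Q^\pi$ (which is the correct reading of $p_{s,a}\ll q_{s,a}$); your bookkeeping with the forward-closure $C_Q$ of $C_P$, the uniqueness of the closed class, and the divisibility argument for aperiodicity all check out. What each approach buys: the paper's quantitative argument is the template that, once the likelihood ratios are uniformly lower-bounded under Assumption~\ref{ass:limited_adversarial_power}, sharpens into the uniform bound $\tmin(Q_\pi)\leq 2\tmin$ of Proposition~\ref{prop:uniform_bound_minorization_time}; your argument is more elementary and avoids all computation, but is purely qualitative --- it certifies $\tmin(Q_\pi)<\infty$ with no control on its size --- and it leans on the finite-chain aperiodic-unichain characterization, which is standard but not established in the paper (Definition~\ref{def:uniform_ergodicity} only records the TV-convergence and Doeblin formulations). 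Since Definition~\ref{def:UE_DRAMDP} asks only for per-$Q$ finiteness, your qualitative conclusion does suffice for the lemma as stated.
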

\begin{proof}
	Since $P$ is a uniformly ergodic, then for all $\pi\in\Pi$, $\tmin(P_\pi)$, by Lemma \ref{lem:mp_doeblin_condition_is_achievable} there exists $(m_\pi, p_\pi)$ such that:
	\[
	\frac{m_\pi}{p_\pi} = \tmin(P_\pi).
	\]
	For any $Q\in\calP$ and state pair $(s_0, s_{m_\pi})\in \Ss\times\Ss$, the $m_\pi$-step transition probability of $P_\pi$, and $Q_\pi$ can be expressed as:
	\begin{equation}
	\begin{aligned}
		P^{m_\pi}_\pi(s_0, s_{m_\pi}) =& \sum_{s_1,s_2,\cdots, s_{m_\pi-1}}p_{s_0,\pi(s_0)}(s_1)p_{s_1,\pi(s_1)}(s_2)\cdots p_{s_{m_\pi-1, \pi(s_{m_\pi-1})}}(s_{m_\pi})\\
		Q^{m_\pi}_\pi(s_0, s_{m_\pi}) =& \sum_{s_1,s_2,\cdots, s_{m_\pi-1}}q_{s_0,\pi(s_0)}(s_1)q_{s_1,\pi(s_1)}(s_2)\cdots q_{s_{m_\pi-1, \pi(s_{m_\pi-1})}}(s_{m_\pi}).
	\end{aligned}
	\end{equation}
	Define:
	\[
	\mathcal{U}(s_0, s_{m_\pi}):= \set{(s_1, s_2, \cdots, s_{m_\pi-1})\in \Ss^{m_\pi-1}: p_{s_i, \pi(s_i)}(s_{i+1})>0, 0\leq i\leq m_\pi - 1}.
	\]
	Since $p_{s,a}\ll q_{s,a}$ holds for all $(s,a)$ pairs, we have for any tuple $(s_1, s_2, \cdots, s_{m_\pi-1})\in\mathcal U(s_0, s_{m_\pi})$, 
	\[
	\prod_{i=1}^{m_\pi - 1}p_{s_i, \pi(s_i)}(s_{i+1}) > 0,
	\]
	by absolute continuity, it implies:
	\[
	\prod_{i=1}^{m_\pi - 1}q_{s_i, \pi(s_i)}(s_{i+1}) > 0
	\]
	Then:
	\begin{equation}
		\begin{aligned}
			&Q^{m_\pi}_\pi(s_0, s_{m_\pi})\\
			 =& \sum_{s_1,s_2,\cdots, s_{m_\pi-1}}q_{s_0,\pi(s_0)}(s_1)q_{s_1,\pi(s_1)}(s_2)\cdots q_{s_{m_\pi-1, \pi(s_{m_\pi-1})}}(s_{m_\pi})\\
		\geq& \sum_{(s_1,s_2,\cdots, s_{m_\pi-1})\in \mathcal{U}(s_0, s_{m_\pi})}q_{s_0,\pi(s_0)}(s_1)q_{s_1,\pi(s_1)}(s_2)\cdots q_{s_{m_\pi-1, \pi(s_{m_\pi-1})}}(s_{m_\pi})\\
		=& \sum_{(s_1,s_2,\cdots, s_{m_\pi-1})\in \mathcal{U}(s_0, s_{m_\pi})} \bracket{\frac{q_{s_0, \pi(s_0)}}{p_{s_0, \pi(s_0)}}(s_1)}p_{s_0, \pi(s_0)}(s_1)\bracket{\frac{q_{s_1, \pi(s_1)}}{p_{s_1, \pi(s_1)}}(s_2)}p_{s_1, \pi(s_1)}(s_2)\\
		&\cdots\bracket{\frac{q_{s_{m_\pi - 1}, \pi(s_{m_\pi - 1})}}{p_{s_{m_\pi - 1}, \pi(s_{m_\pi - 1})}}(s_{m_\pi})}p_{s_{m_\pi-1}, \pi(s_{m_\pi-1})}(s_{m_\pi})\\
		\geq& c(s_0, s_{m_\pi})\cdot \sum_{(s_1,s_2,\cdots, s_{m_\pi-1})\in \mathcal{U}(s_0, s_{m_\pi})}p_{s_0,\pi(s_0)}(s_1)p_{s_1,\pi(s_1)}(s_2)\cdots p_{s_{m_\pi-1, \pi(s_{m_\pi-1})}}(s_{m_\pi})\\
		=& c(s_0, s_{m_\pi})\cdot P_\pi^{m_\pi}(s_0, s_{m_\pi})
		\end{aligned}
	\end{equation}
	where:
	\[
	c(s_0, s_{m_\pi}) = \min_{(s_1, s_2,\cdots, s_{m_\pi-1})\in\mathcal{U}}\prod_{i=0}^{m_\pi-1} \frac{q_{s_i, \pi(s_i)}}{p_{s_i, \pi(s_i)}}(s_{i+1})>0
	\]
	Denote $c := \min_{(s, s')\in\Ss\times \Ss}c(s, s')>0$, we conclude:
	\[
	Q_\pi^{m_\pi}(s_0, s_{m_\pi}) \geq cP_\pi^{m_\pi}(s_0, s_{m_\pi})\quad\text{for all}\quad (s_0, s_{m_\pi})\in\Ss\times \Ss.
	\]
	Since $P_\pi$ satisfies $(m_\pi, p_\pi)$-Doeblin condition, for some $\psi\in \Delta(\Ss)$, we have:
	\[
	Q_\pi^{m_\pi}(s_0, s_{m_\pi}) \geq cP_\pi^{m_\pi}(s_0, s_{m_\pi})\geq cp_\pi\psi(s_0)\quad\text{for all}\quad (s_0, s_{m_\pi})\in \Ss\times\Ss
	\]
	$Q_\pi$ satisfies $(m_\pi, q_\pi)$-Doeblin condition where $q_\pi = cp_\pi$. Thus, we concludes for all $Q\in\calP$ and $\pi\in\Pi$, $Q_\pi$ satisfies $(m_\pi, q_\pi)$-Doeblin condition, and:
	\[
	\tmin(Q_\pi)\leq \frac{m_\pi}{cq_\pi} < \infty \quad\text{for all}\quad\pi\in\Pi\Longrightarrow \max_{\pi\in\Pi}\tmin(Q_\pi)< \infty
	\]
	Finally, we concludes, for all $Q\in\calP$:
	\[
	\max_{\pi\in\Pi}\tmin(Q_\pi) < \infty.
	\]
	$\calP$ is uniformly ergodic.
\end{proof}

Having established the uniformly ergodic property preservation under absolute continuity, we now introduce additional technical tools central to our analysis. Our proof strategy fundamentally relies on the span semi-norm framework for value functions in DMDPs, the following proposition from \citet{wang_optimal_2023} formalizes this connection:
\begin{proposition}[Proposition 6.1 in \citet{wang_optimal_2023}]
\label{prop:value_function_in_span_seminorm}
	Suppose $P_\pi$ satisfies $(m,p)$-Doeblin condition, and $V_P^\pi$ is the value function associate with kernel $P$ under policy $\pi$, then $\spannorm{V_P^\pi}\leq 3m/p$
\end{proposition}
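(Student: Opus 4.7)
My plan is to derive a span-contraction property for the $m$-step transition kernel from the $(m,p)$-Doeblin condition and combine it with the $m$-fold iterate of the DMDP policy Bellman equation. The high-level idea is that an $(m,p)$-Doeblin kernel contracts the span semi-norm by a factor $(1-p)$ every $m$ steps, so that the growth of the span contributed by the $m$ rewards up to the next contraction is exactly what controls $\spannorm{V_P^\pi}$.

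For the first ingredient, I would start from $P_\pi^m(s,\cdot) = p\psi(\cdot) + (1-p)R(s,\cdot)$ and observe that for any $V:\Ss\to\R$,
\[
(P_\pi^m V)(s) - (P_\pi^m V)(s') = (1-p)\bracket{(RV)(s) - (RV)(s')}.
\]
Since $R$ is a stochastic kernel, $RV$ takes values in $[\min V,\max V]$, hence $\spannorm{P_\pi^m V}\leq (1-p)\spannorm{V}$. For the second ingredient, iterating the Bellman equation $V_P^\pi = r_\pi + \gamma P_\pi V_P^\pi$ exactly $m$ times yields
\[
V_P^\pi \;=\; \sum_{t=0}^{m-1}\gamma^t P_\pi^t r_\pi \;+\; \gamma^m P_\pi^m V_P^\pi .
\]
Taking $\spannorm{\cdot}$ on both sides, using subadditivity, the fact that each $P_\pi^t$ is stochastic so $\spannorm{P_\pi^t r_\pi}\leq \spannorm{r_\pi}\leq 1$ (because $r\in[0,1]$), and the contraction above, I obtain
\[
\spannorm{V_P^\pi}\;\leq\; \frac{1-\gamma^m}{1-\gamma} + \gamma^m(1-p)\spannorm{V_P^\pi}.
\]

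Solving this scalar inequality for $\spannorm{V_P^\pi}$ using the elementary bound $1-\gamma^m(1-p)\geq p$ (because $\gamma^m(1-p)\leq 1-p$) together with $(1-\gamma^m)/(1-\gamma)\leq m$ yields $\spannorm{V_P^\pi}\leq m/p$, from which the stated $3m/p$ bound follows with room to spare. The main subtlety I anticipate is verifying the rearrangement uniformly over $\gamma\in(0,1)$; this is what the clean lower bound $1-\gamma^m(1-p)\geq p>0$ is meant to handle. As a sanity check and an alternative route that remains meaningful even as $\gamma\to 1$, one can couple two trajectories $(X_t,Y_t)$ started at the arg-max and arg-min states of $V_P^\pi$: the $(m,p)$-Doeblin property dominates the coupling time $\tau$ by $m$ times a $\mathrm{Geom}(p)$ variable, so $E[\tau]\leq m/p$, and then
\[
|V_P^\pi(s)-V_P^\pi(s')|\;\leq\; E\crbk{\sum_{t=0}^{\tau-1}\gamma^t|r_\pi(X_t)-r_\pi(Y_t)|}\;\leq\; E[\tau]\;\leq\; m/p,
\]
which yields the same order bound by a probabilistic route.
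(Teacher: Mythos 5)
Your argument is correct, and it is worth noting that the paper itself offers no proof of this statement: Proposition \ref{prop:value_function_in_span_seminorm} is imported verbatim from \citet{wang_optimal_2023}, so there is no in-paper derivation to compare against. Your first route is a clean, self-contained verification. The key contraction step is right: writing $P_\pi^m(s,\cdot)=p\psi(\cdot)+(1-p)R(s,\cdot)$ makes the $\psi$-term constant in $s$, so $\spannorm{P_\pi^m V}\leq (1-p)\spannorm{V}$; combining with the $m$-fold Bellman iterate, the subadditivity of the span semi-norm, $\spannorm{P_\pi^t r_\pi}\leq \spannorm{r_\pi}\leq 1$, and the uniform-in-$\gamma$ bounds $(1-\gamma^m)/(1-\gamma)\leq m$ and $1-\gamma^m(1-p)\geq p$ gives $\spannorm{V_P^\pi}\leq m/p$, which is in fact sharper than the stated $3m/p$ (the rearrangement is legitimate since $\spannorm{V_P^\pi}\leq (1-\gamma)^{-1}<\infty$ a priori). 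The coupling argument you sketch as a sanity check is the style of proof used in the cited source (total-variation decay $2(1-p)^{\lfloor t/m\rfloor}$ summed against $\gamma^t$, which is where looser constants such as $2m/p$ or $3m/p$ arise); your deterministic span-contraction route avoids that bookkeeping entirely and is perfectly adequate for the purposes this proposition serves in the paper.
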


Our core approach involves approximating the DR-AMDP through its DR-DMDP counterpart. To establish this connection rigorously, we require the following fundamental lemma that bridges discounted and average-reward value functions:

\begin{lemma}[Lemma 1 in \citet{wang_optimal_2024}]
\label{lem:approximate_average_value_function_via_discounted_value_function}
	Suppose $P_\pi$ satisfies $(m_\pi, p_\pi)$-Doeblin condition and $\tmin(P_\pi)<\infty$, then:
	\[
	\linftynorm{(1-\gamma)V_{P}^\pi - g_{P}^\pi}\leq 9(1-\gamma)\tmin(P_\pi)
	\]
\end{lemma}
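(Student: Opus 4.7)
The plan is to express $(1-\gamma)V_P^\pi(s) - g_P^\pi$ as a geometrically weighted sum of mixing errors to which the Doeblin decay can be applied term-by-term. Since $P_\pi$ satisfies the $(m_\pi,p_\pi)$-Doeblin condition, it is uniformly ergodic with a unique stationary distribution $\rho_\pi$, and $g_P^\pi$ is constant in $s$ and equals $\innerprod{\rho_\pi}{r_\pi}$, where $r_\pi(s) := \sum_{a}\pi(a|s)r(s,a)$. Expanding $V_P^\pi(s) = \sum_{t\geq 0}\gamma^t \innerprod{P_\pi^t(s,\cd)}{r_\pi}$ and using the normalization $(1-\gamma)\sum_{t\geq 0}\gamma^t = 1$, I would rewrite
\[
(1-\gamma)V_P^\pi(s) - g_P^\pi \;=\; (1-\gamma)\sum_{t=0}^\infty \gamma^t \innerprod{P_\pi^t(s,\cd)-\rho_\pi}{r_\pi}.
\]
Because $r_\pi$ takes values in $[0,1]$, each term is controlled by $\TV{P_\pi^t(s,\cd)-\rho_\pi}$, reducing the problem to bounding a weighted tail of TV distances.

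Next I would invoke the Doeblin structure to produce a geometric decay on this TV distance. A standard coupling on blocks of length $m_\pi$ gives $\TV{P_\pi^{k m_\pi}(s,\cd)-\rho_\pi}\leq (1-p_\pi)^{k}$, and since $\rho_\pi$ is invariant, applying $P_\pi^{j}$ to both arguments is a TV contraction, so for $t = km_\pi + j$ with $0\leq j < m_\pi$ one obtains
\[
\TV{P_\pi^t(s,\cd)-\rho_\pi}\;\leq\; (1-p_\pi)^{\lfloor t/m_\pi\rfloor}.
\]
I would use the trivial bound of $1$ for the first block $t < m_\pi$.

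With these ingredients in place, I would split the sum at $t=m_\pi$. The initial block contributes at most $(1-\gamma)\sum_{t=0}^{m_\pi-1}\gamma^t\leq (1-\gamma)m_\pi$. For the tail, regrouping by $k=\lfloor t/m_\pi\rfloor\geq 1$ and summing the inner geometric series in $j$ yields
\[
(1-\gamma)\sum_{k=1}^\infty (1-p_\pi)^k \gamma^{k m_\pi}\sum_{j=0}^{m_\pi-1}\gamma^j \;=\; (1-\gamma^{m_\pi})\cdot\frac{(1-p_\pi)\gamma^{m_\pi}}{1-(1-p_\pi)\gamma^{m_\pi}}\;\leq\;\frac{1-\gamma^{m_\pi}}{p_\pi}\;\leq\;\frac{m_\pi(1-\gamma)}{p_\pi},
\]
where the last step uses $1-\gamma^{m_\pi}\leq m_\pi(1-\gamma)$. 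Combining the two blocks and applying $\tmin(P_\pi)=m_\pi/p_\pi\geq m_\pi$ produces a bound of the form $O((1-\gamma)\tmin(P_\pi))$; a marginally more generous handling of the boundary block, together with factor-of-two losses one typically absorbs from $\TV{\cd}$ normalization conventions, yields the stated explicit constant $9$.

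The main obstacle I anticipate is the geometric-sum bookkeeping in the last step: naively bounding the Doeblin factor $(1-p_\pi)^{\lfloor t/m_\pi\rfloor}$ by either $(1-p_\pi)^{t/m_\pi}$ or $1-p_\pi$ wastes a factor of either $m_\pi$ or $1/p_\pi$, and one must couple the split point and the telescoping carefully to recover the single combination $\tmin(P_\pi)=m_\pi/p_\pi$ rather than its components separately. The rest of the argument is essentially a standard geometric ergodicity computation applied to the discounted-to-average reduction.
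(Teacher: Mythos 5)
The paper does not prove this lemma itself: it is imported verbatim as Lemma~1 of \citet{wang_optimal_2024}, so there is no internal proof to compare against. Your argument is a correct and essentially self-contained derivation of the cited fact, and the bookkeeping you worry about at the end actually works out more cleanly than you suggest: writing $t=km_\pi+j$ and summing without splitting off the first block gives
\[
(1-\gamma)\sum_{t\ge 0}\gamma^t(1-p_\pi)^{\lfloor t/m_\pi\rfloor}
=\frac{1-\gamma^{m_\pi}}{1-(1-p_\pi)\gamma^{m_\pi}}
\le\frac{m_\pi(1-\gamma)}{p_\pi}=(1-\gamma)\,\tmin(P_\pi),
\]
so you obtain constant $1$ (or $2$ with your split), comfortably inside the stated $9$; there is no need to invoke vague ``factor-of-two losses'' to reach the constant. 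Two small points deserve one line each in a polished write-up. First, the step $\abs{\innerprod{P_\pi^t(s,\cd)-\rho_\pi}{r_\pi}}\le\TV{P_\pi^t(s,\cd)-\rho_\pi}$ uses that the two measures have equal mass, so you may replace $r_\pi$ by $r_\pi-\tfrac12$ with $\linftynorm{r_\pi-\tfrac12}\le\tfrac12$ and then apply the paper's $\sup_A$ convention for $\norm{\cd}_{\mathrm{TV}}$; stated baldly it looks like it loses a factor of $2$. Second, the block decay $\TV{P_\pi^{km_\pi}(s,\cd)-\rho_\pi}\le(1-p_\pi)^k$ should be justified by the standard regeneration coupling under the $(m_\pi,p_\pi)$-Doeblin condition (the paper elsewhere cites a version with an extra factor $2$, which would still leave you far below $9$), and the extension to general $t$ via writing $P_\pi^t(s,\cd)$ as a mixture of $P_\pi^{km_\pi}(s',\cd)$ is exactly as you describe. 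With those clarifications the proof is complete.
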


\begin{proposition}[Restatement of Proposition \ref{prop:dr_average_bellman_equation_under_uniformly_ergodic}]
	If $\calP$ is uniformly ergodic with a uniformly bounded minorization time, then $g_{\calP}^*(s)\equiv g_{\calP}^*$ is constant in $s\in\Ss$. Moreover, there exists a solution $(g, v)$ of
    $v(s) = \T^*(v)(s) - g^*(s)$ for all $ s\in\Ss$ and any such solution 
    satisfies $g(s) = g_{\calP}^*$ for all $s\in\Ss$. Moreover, the policy $\pi^*(s)\in \arg\max_{a\in\Aa}\set{r(s,a) + \Gamma_{\calP_{s,a}}(v)}$
    achieves the optimal average-reward $g_{\calP}^*$.
\end{proposition}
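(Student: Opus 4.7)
\medskip

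\noindent\textbf{Proof plan.}
The strategy is a vanishing discount argument followed by a verification step. First, I would fix a reference state $s_0\in\Ss$ and, for each $\gamma\in(0,1)$, consider the unique fixed point $V_{\calP,\gamma}^*$ of the optimal DR discounted Bellman operator $\T_\gamma^*$ (existence and uniqueness are standard via the contraction of $\T_\gamma^*$ in $\|\cdot\|_\infty$). Define the relative value function $h_\gamma(s):=V_{\calP,\gamma}^*(s)-V_{\calP,\gamma}^*(s_0)$ and the scalar $g_\gamma:=(1-\gamma)V_{\calP,\gamma}^*(s_0)$. Subtracting $V_{\calP,\gamma}^*(s_0)$ from the discounted Bellman equation and using $V_{\calP,\gamma}^*=h_\gamma+V_{\calP,\gamma}^*(s_0)e$ together with translation invariance of $\Gamma_{\calP_{s,a}}(\cdot)$ on the constant component yields
\begin{equation*}
h_\gamma(s)=\max_{a\in\Aa}\{r(s,a)+\gamma\,\Gamma_{\calP_{s,a}}(h_\gamma)\}-g_\gamma-(1-\gamma)V_{\calP,\gamma}^*(s_0).
\end{equation*}
Since rewards lie in $[0,1]$, $g_\gamma\in[0,1]$ uniformly in $\gamma$, so the last term vanishes as $\gamma\to 1$.

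The key step is a uniform span bound $\spannorm{V_{\calP,\gamma}^*}\le C\,\tmin$ that is independent of $\gamma$. I would obtain this by noting that, for each $\gamma$, there exist an optimal stationary policy $\pi_\gamma^*\in\Pi$ and a stationary worst-case kernel $Q_\gamma^*\in\calP$ (attained by compactness of $\calP$) such that $V_{\calP,\gamma}^*=V_{Q_\gamma^*}^{\pi_\gamma^*,\gamma}$. Proposition~\ref{prop:uniform_bound_minorization_time} ensures $\tmin((Q_\gamma^*)_{\pi_\gamma^*})\le 2\tmin$, so the non-robust span bound in Proposition~\ref{prop:value_function_in_span_seminorm} gives $\spannorm{V_{\calP,\gamma}^*}\le 6\tmin$. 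Therefore $\{h_\gamma\}$ lies in a compact subset of $\R^{\Ss}$ (after the normalization $h_\gamma(s_0)=0$), and $\{g_\gamma\}\subset[0,1]$. Passing to a subsequence $\gamma_k\uparrow 1$ such that $h_{\gamma_k}\to v$ and $g_{\gamma_k}\to g^*$, the continuity of $\max_a\{r(s,a)+\Gamma_{\calP_{s,a}}(\cdot)\}$ (which follows from compactness of $\calP_{s,a}$ and Berge's maximum theorem) yields a limiting solution $(g^*,v)$ of $v=\T^*(v)-g^*$.

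The remaining step is to verify that any solution $(g,v)$ of the average-reward DR Bellman equation must have $g\equiv g_\calP^*$ and that the greedy policy $\pi^*$ is optimal. For any $\pi\in\Pi$ and any adversarial sequence $\mathbf P\in(\calP)^{\N}$, the equation implies $v(s)\ge r(s,\pi(s))+\innerprod{p_{0,s,\pi(s)}}{v}-g(s)$ for every $P_0\in\calP$. I would iterate this inequality, take expectations under $E_{\mathbf P}^\pi$, telescope, divide by $T$, and let $T\to\infty$; the boundedness of $v$ (via the span bound) makes the boundary term vanish, yielding $g\ge g_\calP^\pi(s)$, hence $g\ge g_\calP^*(s)$. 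For the greedy $\pi^*$, every inequality becomes an equality achievable by the worst-case kernel, so the reverse direction $g\le g_\calP^{\pi^*}(s)\le g_\calP^*(s)$ follows, giving $g(s)=g_\calP^*(s)=g_\calP^{\pi^*}(s)$ constant. The main obstacle in this program is the uniform span bound on $V_{\calP,\gamma}^*$ in the robust setting; handling it rigorously requires the reduction to a stationary worst-case kernel (via compactness and the rectangularity of $\calP$) so that the classical Doeblin-based span estimate of Proposition~\ref{prop:value_function_in_span_seminorm} can be invoked with the uniform bound $2\tmin$ from Proposition~\ref{prop:uniform_bound_minorization_time}.
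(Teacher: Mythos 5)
Your route is genuinely different from the paper's. The paper does not construct the solution by a vanishing-discount limit at all: it verifies a uniform reachability condition (using the uniform Doeblin bound to find a horizon $J$ and a state $s'$ with $Q_\pi^J(s,s')\geq \tfrac{1}{2|\Ss|}$ for every $Q\in\calP$ and $\pi\in\Pi$) and then invokes Theorems~7 and~8 of \citet{wang2024robust}, i.e., convergence of a robust relative value iteration for existence and the associated verification theorem for uniqueness of the gain and optimality of the greedy policy. Your proposal instead builds the solution directly via the classical vanishing-discount argument, with the key technical ingredient being the $\gamma$-independent span bound $\spannorm{V_{\calP,\gamma}^*}\leq 6\tmin$ obtained by passing to a stationary optimal policy and stationary worst-case kernel and applying Proposition~\ref{prop:value_function_in_span_seminorm} together with the uniform minorization bound. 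This is a sound and more self-contained strategy (note that the uniform bound on $\tmin(Q_\pi)$ is already part of the proposition's hypothesis, so you do not actually need Proposition~\ref{prop:uniform_bound_minorization_time} here); the paper's approach is shorter but outsources the core dynamic-programming content to external results.

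There are two concrete problems in your writeup. First, a directional error in the verification step: from $v(s)\geq r(s,\pi(s))+\Gamma_{\calP_{s,\pi(s)}}(v)-g$ with $\Gamma_{\calP_{s,a}}(v)=\inf_{p\in\calP_{s,a}}\innerprod{p}{v}$ you \emph{cannot} conclude $v(s)\geq r(s,\pi(s))+\innerprod{p_0}{v}-g$ for every $p_0\in\calP_{s,\pi(s)}$; the infimum only gives this for a minimizing kernel. The roles of ``arbitrary kernel'' and ``worst-case kernel'' must be swapped between the two directions: to prove $g\geq g_{\calP}^{\pi}$ for general $\pi$ you telescope along the \emph{worst-case} stationary kernel (one adversarial strategy suffices because $g_{\calP}^{\pi}$ is an infimum over adversaries), whereas to prove $g\leq g_{\calP}^{\pi^*}$ for the greedy policy you use that $\Gamma_{\calP_{s,a}}(v)\leq \innerprod{p_0}{v}$ for \emph{every} $p_0$, so the upper bound holds against every adversarial sequence and survives the infimum defining $g_{\calP}^{\pi^*}$. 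Second, a minor algebra slip: since $g_\gamma=(1-\gamma)V_{\calP,\gamma}^*(s_0)$, the normalized equation is $h_\gamma(s)=\max_{a}\{r(s,a)+\gamma\Gamma_{\calP_{s,a}}(h_\gamma)\}-g_\gamma$; your displayed equation subtracts $g_\gamma$ twice, and the ``last term'' you claim vanishes is in fact $g_\gamma$ itself, which converges to $g^*$ rather than to zero. Both issues are repairable without changing the architecture of the proof, but as written the verification inequality is false and the limit equation would come out as $v=\T^*(v)-2g^*$ if the displayed identity were taken at face value.
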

\begin{proof}
Since $\mathcal{P}$ is uniformly ergodic with uniformly bounded minorization time, for any stationary policy $\pi$ the kernel $Q_\pi$ satisfies an $(m_{P_\pi},p_{Q_\pi})$-Doeblin condition. Then, by Theorem UE in \citet{wang_optimal_2023},
\[
\sup_{s\in\mathcal{S}}\left\|Q_\pi^n(s, \cdot) - \rho_{Q_\pi}(\cdot)\right\|_{\mathrm{TV}}
\leq 2(1-p_{Q_\pi})^{\lfloor n/m_{Q_\pi}\rfloor},\qquad \forall n\ge 0,
\]
where $\rho_{Q_\pi}$ denotes the unique stationary distribution of $Q_\pi$. Let
$\overline m:=\sup_{Q\in\calP,\pi\in\Pi} m_{Q_\pi} < \infty$ and
$\underline p:=\inf_{Q\in\calP,\pi\in\Pi} p_{Q_\pi} > 0$.
Choose $s'\in\mathcal{S}$ such that $\rho_{Q_\pi}(s') \geq \frac{1}{|\mathcal{S}|}$. Let
\[
J:= \left\lceil \overline m \cdot \frac{\ln\bigl(1/(8|\mathcal{S}|)\bigr)}{\ln(1-\underline p)} \right\rceil.
\]
Then, for any $s\in\mathcal{S}$,
\[
Q_\pi^J(s, s')
\;\ge\; \rho_{Q_\pi}(s') - 4(1-p_{Q_\pi})^{\lfloor J/m_{Q_\pi}\rfloor}
\;\ge\; \frac{1}{|\mathcal{S}|} - \frac{1}{2|\mathcal{S}|}
\;\ge\; \frac{1}{2|\mathcal{S}|} \;>\; 0.
\]
Now, for a fixed $s_0\in\mathcal{S}$, consider the iteration
\[
\begin{cases}
v_{t+1} \leftarrow \mathcal{T}^*(\omega_{t}),\\[4pt]
\omega_{t+1} \leftarrow v_{t+1} - v_{t+1}(s_0)\,e,
\end{cases}
\]
where $e$ is the all-ones vector. Combine the fact there exists a positive integer $J$ such that for all $Q\in\calP$ and any stationary deterministic policy $\pi$, there exists a state $s'\in\Ss$, such that $Q_\pi^J(s,s') > 0$, applying Theorem 8 in \citet{wang2024robust}, $(\omega_t,v_t)$ converges to a solution $(g,v)$ of
\[
v(s) \;=\; \mathcal{T}^*(v)(s) - g(s), \qquad \forall s\in\mathcal{S},
\]
which proves existence. Combining Theorem 7 in \citet{wang2024robust}, we have that $g(s)=g_{\mathcal{P}}^*$ for all $s\in\mathcal{S}$ and
\[
\pi^*(s)\in\arg\max_{a\in\mathcal{A}}\Bigl\{\, r(s,a) + \Gamma_{\mathcal{P}_{s,a}}(v)\,\Bigr\}
\]
achieves the optimal average reward $g_{\mathcal{P}}^*$.
\end{proof}

\section{Uniform Ergodicity of the KL Uncertainty Set}
\label{sec:app:uniform_ergodic_properties_kl_case}
In this section, we prove the uniform ergodic properties over $\calP$ and $\hatP$ under Assumption \ref{ass:bounded_minorization_time} and \ref{ass:limited_adversarial_power}. To achieve this, we establish the uniform Doeblin condition through a careful analysis of the Radon-Nikodym derivatives between perturbed and nominal transition kernels $q_{s,a}$ and $p_{s,a}$.
we propose some concepts in facilitating to bound the Radon-Nikodym derivative between derivative between perturbed kernel $q_{s,a}$ and nominal $p_{s,a}$. 

\begin{proposition}
    \label{prop:bounded_radon_nikydom_derivative_for_kl_uncertainty_set} 
        Suppose $\delta \leq \frac{1}{8\maxm^2}\essinfp$, then for all $q_{s,a}\in \calP_{s,a}$, the Radon-Nikodym derivative satisfies 
        \[
        \linftynormess{\frac{q_{s,a}}{p_{s,a}}}{p_{s,a}}\geq 1-\frac{1}{2\maxm}
        \]
        holds for all $(s,a)\in \Ss\times \Aa$
    \end{proposition}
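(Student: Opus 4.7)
The substantive statement needed for Lemma~\ref{lem:uniformly_ergodic_for_uncertainty_set} and consistent with the ``uniform lower bound $1-1/(2\maxm)$ on the Radon--Nikodym derivative'' described in the sketched proof at the end of Section~\ref{sec:dr_amdp_algorithms_and_sample_complexity_upper_bound} is the pointwise estimate
\begin{equation*}
\frac{q_{s,a}(s')}{p_{s,a}(s')} \;\geq\; 1 - \frac{1}{2\maxm} \qquad \text{for every } s'\in\supp(p_{s,a}),
\end{equation*}
which I take as the actual target (note that the displayed $\|\cdot\|_{L^\infty(p_{s,a})}$ version is automatic, since $\sum_{s'} p_{s,a}(s')\cdot q_{s,a}(s')/p_{s,a}(s') = 1$ already forces the max to be $\geq 1$). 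My plan is a contradiction argument based on the coordinatewise decomposition of the KL divergence.

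Set $\epsilon := 1/(2\maxm)$ and define the convex function $\phi(t) := t\log t - t + 1$ with $\phi(0) = 1$, so that $D_{\mathrm{KL}}(q_{s,a}\|p_{s,a}) = \sum_{s'} p_{s,a}(s')\,\phi(q_{s,a}(s')/p_{s,a}(s'))$ whenever the KL is finite (which forces $q_{s,a}\ll p_{s,a}$ and restricts the sum to $\supp(p_{s,a})$). Suppose for contradiction that some $q_{s,a}\in\calP_{s,a}(D_{\mathrm{KL}},\delta)$ and some $s_0\in\supp(p_{s,a})$ satisfy $q_{s,a}(s_0)/p_{s,a}(s_0) < 1-\epsilon$. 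Since $\phi\geq 0$ and $\phi'(t) = \log t < 0$ on $(0,1)$, $\phi$ is strictly decreasing there, and retaining only the $s_0$ term gives
\begin{equation*}
D_{\mathrm{KL}}(q_{s,a}\|p_{s,a}) \;\geq\; p_{s,a}(s_0)\,\phi\!\bigl(q_{s,a}(s_0)/p_{s,a}(s_0)\bigr) \;>\; \essinfp\,\phi(1-\epsilon).
\end{equation*}

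The quantitative crux is the sharp lower bound $\phi(1-\epsilon)\geq \epsilon^2/2$, which I would derive from the power-series identity
\begin{equation*}
\phi(1-\epsilon) \;=\; \sum_{k=2}^{\infty} \frac{\epsilon^k}{k(k-1)} \;\geq\; \frac{\epsilon^2}{2},
\end{equation*}
obtained by expanding $(1-\epsilon)\log(1-\epsilon) = -\epsilon + \sum_{k\geq 2}\epsilon^k/(k(k-1))$ via the Mercator series, adding $\epsilon$, and then discarding the nonnegative higher-order tail. Chaining the inequalities yields $\delta \geq D_{\mathrm{KL}}(q_{s,a}\|p_{s,a}) > \essinfp\,\epsilon^2/2 = \essinfp/(8\maxm^2)$, contradicting Assumption~\ref{ass:limited_adversarial_power} and thus closing the proof.

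The main obstacle is the tightness of the threshold: since the hypothesis $\delta \leq \essinfp/(8\maxm^2)$ has only the first power of $\essinfp$, a routine argument through Pinsker's inequality (which produces $|q_{s,a}(s_0)-p_{s,a}(s_0)|\leq \sqrt{\delta/2}$ and hence a ratio bound degrading like $1 - \sqrt{\delta/2}/\essinfp$) would demand the strictly stronger threshold $\delta\lesssim \essinfp^2/\maxm^2$. Recovering the announced first-power dependence on $\essinfp$ requires applying $\phi$ directly to the offending coordinate and leveraging the sharp series lower bound for $\phi(1-\epsilon)$, rather than routing through a total-variation intermediate.
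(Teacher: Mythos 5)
Your proof is correct, and it reaches the same quantitative endpoint as the paper by a cleaner route. The paper first invokes the log-sum inequality to reduce $D_{\mathrm{KL}}(q_{s,a}\|p_{s,a})$ to the binary divergence $h(x,y)=x\log(x/y)+(1-x)\log((1-x)/(1-y))$ at the offending coordinate, then argues via convexity of $h$ in $x$, an implicitly defined root $x^*(y)$, and a first-order expansion of $h(ty,y)$ at $y=0$ (justified by $\partial_y^2 h(ty,y)\ge 0$) that $h(ty,y)\ge(1-t+t\log t)\,y$, before finishing with $1-t+t\log t\ge(1-t)^2/2$. You instead drop all but the offending term in the nonnegative decomposition $D_{\mathrm{KL}}(q\|p)=\sum_{s'}p(s')\,\phi\bigl(q(s')/p(s')\bigr)$ with $\phi(t)=t\log t-t+1$, which lands you directly at $p_{s,a}(s_0)\,\phi\bigl(q_{s,a}(s_0)/p_{s,a}(s_0)\bigr)\ge\essinfp\,\phi(1-\epsilon)$; your Mercator-series bound $\phi(1-\epsilon)\ge\epsilon^2/2$ is exactly the paper's $1-t+t\log t\ge(1-t)^2/2$ with $t=1-\epsilon$, so the two arguments coincide in their final scalar inequality. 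What your version buys is the elimination of the binary-KL detour and of the somewhat delicate expansion-at-$y=0$ step: the paper's complement term $(1-ty)\log\frac{1-ty}{1-y}$ is retained only to be bounded below by its linearization, contributing the $1-t$ in $\phi(t)$, which your single-term decomposition produces for free, while the binary reduction extracts nothing extra from the complement mass. You are also right that the displayed $L^\infty(p_{s,a})$ inequality is trivially true as written (the $p_{s,a}$-average of the ratio is $1$, so its essential supremum is at least $1$) and that the substantive claim --- which the paper itself derives and later uses pointwise in Proposition~\ref{prop:uniform_bound_minorization_time_kl} --- is the coordinatewise bound $q_{s,a}(s')\ge\bigl(1-\tfrac{1}{2\maxm}\bigr)p_{s,a}(s')$ for all $s'\in\supp(p_{s,a})$, which is precisely what you prove.
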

    \begin{proof}
    	Consider $q_{s,a}\in \calP_{s,a}$, then with KL-constraint, for any $s'\in \Ss$, where $p_{s,a}(s')>0$, we have:
        \begin{equation}
            \begin{aligned}
                \delta \geq& D_{\mathrm{KL}}\bracket{q_{s,a}\|p_{s,a}}\\
                =& \innerprod{q_{s,a}}{\log\frac{q_{s,a}}{p_{s,a}}}\\
                \geq& q_{s,a}(s')\log\bracket{\frac{q_{s,a}(s')}{p_{s,a}(s')}} + (1-q_{s,a}(s'))\log\bracket{\frac{1-q_{s,a}(s')}{1-p_{s,a}(s')}}
            \end{aligned}
        \end{equation}
        While the last inequality is derived by log-sum inequality. Let:
        \[
        h(x,y) = x\log\bracket{\frac{x}{y}} + (1-x)\log\bracket{\frac{1-x}{1-y}}
        \]
        where $y\in [\essinfp, 1- \essinfp]$, since:
        \[
        \frac{\partial^2h(x, y)}{\partial x^2}  = \frac{1}{x} + \frac{1}{1-x}
        \]
        And $h(y, y)=0$, we know for any fixed $y$, $h(x, y)$ is convex with respect to $x$ on $x\in (0, y)$. Since $h(0, y)=\log\bracket{\frac{1}{1-y}}\geq \log\bracket{\frac{1}{1-\essinfp}}\geq \essinfp > \delta$. By mean value theorem there exists a unique $x^*(y)$ s.t. $h(x^*(y), y) = \delta$. Hence, define $x^*(y):= \min_{x\in (0, y)}\set{x: h(x,y)=\delta}$, and for any fixed $t\in (0,1)$, if $x^*(y) < ty$, then:
        \begin{equation}
            \begin{aligned}
                \delta &\geq  h(x^*(y), y) > h(ty, y)\\
                & = ty\log t + (1-ty)\log\bracket{\frac{1-ty}{1-y}}\\
                &\stackrel{(1)}{\geq}  (1-t + t\log t)y\\
                &\stackrel{(2)}{>} \frac{(1-t)^2}{2}y
            \end{aligned}
        \end{equation}
        Here $(1)$ refers to the $h(ty, y)$ expansion at $y=0$, 
        \[
        \begin{aligned}
        	\lim_{y\to 0}h(ty, y)/y =& 1-t+t\log t\\
        	\frac{\partial^2 h(ty, y)}{\partial y^2}\geq& 0
        \end{aligned}
        \]
        then 
        \[
        h(ty, y)\geq (1- t + t\log t)y
        \]
        And $(2)$ refers to the fact that 
        \[
        1-t + t\log t \geq \frac{(1-t)^2}{2}\quad \text{ on }\quad t\in (0, 1)
        \]
        The above functional dependence is optimal in polynomial. Hence, when $t = 1 - \frac{1}{2\maxm}$, we have:
        \[
        \delta \geq h(x^*(y), y) \geq h\bracket{\bracket{1-\frac{1}{2\maxm}}y , y}> \frac{1}{8\maxm^2}y\geq \frac{1}{8\maxm^2}\essinfp
        \]
        However, under the assumption that $\delta \leq \frac{1}{8\maxm^2}\essinfp$, the preceding inequality leads to a contradiction. As $y\in [\essinfp, 1- \essinfp]$, let $y = p_{s,a}(s')$, we establish the uniform lower bound:
        \[
        q_{s, a}(s')\geq \bracket{1-\frac{1}{2\maxm}}p_{s, a}(s')
        \]
        This inequality holds uniformly across all:
        \begin{itemize}
        	\item State-action pairs $(s,a)\in\Ss\times \Aa$
        	\item Next states $s'\in\mathrm{supp}(p_{s,a})$
        \end{itemize}
        Consequently, the Radon-Nikodym derivative admits a uniform lower bound over the uncertainty set $\calP_{s,a}$: 
        \[
        \inf_{q_{s,a}\in\calP_{s,a}}\linftynormess{\frac{q_{s, a}}{p_{s, a}}}{p_{s,a}}\geq 1 - \frac{1}{2\maxm},
        \]
        for all $(s,a)\in \Ss\times\Aa$.
    \end{proof}
    
    Followed by the boundedness of the Radon-Nikodym derivative, we are able to show the uniform ergodic properties on the uncertainty $\calP$
    \begin{proposition}[Restatement of the KL case in Proposition \ref{prop:uniform_bound_minorization_time}]
        \label{prop:uniform_bound_minorization_time_kl}
        Suppose $P$ is uniformly ergodic, and $\delta \leq \frac{1}{8\maxm^2}\essinfp$, then $\calP=\calP(D_{\mathrm{KL}}, \delta)$ is uniformly ergodic and for all $Q\in\calP$ and $\pi\in\Pi$:
        \begin{equation}
        	\tmin(Q_\pi) \leq 2\tmin,
        \end{equation}
        where $\tmin$ is from Assumption \ref{ass:bounded_minorization_time}.
    \end{proposition}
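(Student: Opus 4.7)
The plan is to exploit Proposition \ref{prop:bounded_radon_nikydom_derivative_for_kl_uncertainty_set}, which already tells us that any perturbed conditional $q_{s,a}$ dominates $(1-\tfrac{1}{2\maxm})\,p_{s,a}$ pointwise on the support of $p_{s,a}$. In particular $p_{s,a}\ll q_{s,a}$, so Lemma \ref{lem:uniformly_ergodic_for_uncertainty_set} immediately gives uniform ergodicity of every $Q\in\calP$. The work that remains is the quantitative bound $\tmin(Q_\pi)\leq 2\tmin$, and the natural route is to transfer an $(m,p)$-Doeblin representation of $P_\pi$ to $Q_\pi$ at the cost of a multiplicative factor coming from the Radon--Nikodym lower bound.

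Concretely, I would first fix an arbitrary $\pi\in\Pi$ and, by the definition of $\maxm$ and Lemma \ref{lem:mp_doeblin_condition_is_achievable}, select a pair $(m_\pi,p_\pi)$ with $m_\pi\le \maxm$, a minorizing measure $\psi$, and the identity $m_\pi/p_\pi=\tmin(P_\pi)$ such that $P_\pi^{m_\pi}(s,\cdot)\ge p_\pi\psi(\cdot)$ for all $s$. Expanding $Q_\pi^{m_\pi}(s_0,s_{m_\pi})$ as a sum over paths and applying the pointwise bound $q_{s_i,\pi(s_i)}(s_{i+1})\ge (1-\tfrac{1}{2\maxm})\,p_{s_i,\pi(s_i)}(s_{i+1})$ on each factor (which is enough, since paths with a zero $p$-factor contribute zero to $P_\pi^{m_\pi}$ anyway), I would obtain
\begin{equation*}
Q_\pi^{m_\pi}(s_0,s_{m_\pi})\ \ge\ \Bigl(1-\tfrac{1}{2\maxm}\Bigr)^{m_\pi}P_\pi^{m_\pi}(s_0,s_{m_\pi})\ \ge\ \Bigl(1-\tfrac{1}{2\maxm}\Bigr)^{m_\pi}p_\pi\psi(s_{m_\pi}).
\end{equation*}
Thus $Q_\pi$ is $(m_\pi,(1-\tfrac{1}{2\maxm})^{m_\pi}p_\pi)$-Doeblin.

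Finally, I would convert this into the advertised minorization time bound using $m_\pi\le \maxm$ together with the elementary Bernoulli inequality $(1-\tfrac{1}{2\maxm})^{\maxm}\ge 1-\maxm\cdot\tfrac{1}{2\maxm}=\tfrac{1}{2}$, which yields
\begin{equation*}
\tmin(Q_\pi)\ \le\ \frac{m_\pi}{(1-\tfrac{1}{2\maxm})^{m_\pi}p_\pi}\ \le\ 2\cdot\frac{m_\pi}{p_\pi}\ =\ 2\,\tmin(P_\pi)\ \le\ 2\tmin.
\end{equation*}
Taking the supremum over $Q\in\calP$ and $\pi\in\Pi$ concludes the proof. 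I do not expect a real obstacle here; the only subtlety is ensuring that the path-sum manipulation only invokes the Radon--Nikodym bound on indices where $p_{s_i,\pi(s_i)}(s_{i+1})>0$, which is exactly the regime covered by Proposition \ref{prop:bounded_radon_nikydom_derivative_for_kl_uncertainty_set}, and that $\maxm$ (rather than the potentially unbounded $\tmin$) is the correct exponent, which is why Assumption \ref{ass:limited_adversarial_power} is scaled by $\maxm^2$ rather than $\tmin^2$.
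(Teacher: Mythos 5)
Your proposal is correct and follows essentially the same route as the paper: invoke Lemma \ref{lem:mp_doeblin_condition_is_achievable} to fix an $(m_\pi,p_\pi)$ pair with $m_\pi\le\maxm$ achieving $\tmin(P_\pi)$, use the Radon--Nikodym lower bound from Proposition \ref{prop:bounded_radon_nikydom_derivative_for_kl_uncertainty_set} on each factor of the path expansion of $Q_\pi^{m_\pi}$, and absorb the factor $(1-\tfrac{1}{2\maxm})^{m_\pi}\ge\tfrac12$ into the minorization constant. The only cosmetic difference is that you justify the last inequality via Bernoulli's inequality while the paper compares to $(1-\tfrac{1}{2m_\pi})^{m_\pi}\ge\tfrac12$; both are valid.
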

    \begin{proof}
    	By Lemma \ref{lem:mp_doeblin_condition_is_achievable}, since $P$ is uniformly ergodic, then there exists an $(m_\pi, p_\pi)$ pair, such that:
    	\[
    	\frac{m_\pi}{p_\pi} = \tmin(P_\pi)\quad\text{and}\quad m_\pi\leq \maxm
    	\] 
        For all $Q\in\calP$, by Proposition \ref{prop:bounded_radon_nikydom_derivative_for_kl_uncertainty_set}, we have for all $(s, a)\in \Ss\times \Aa$, 
        \[
        \linftynormess{\frac{q_{s,a}}{p_{s,a}}}{p_{s,a}}\geq 1-\frac{1}{2\maxm}
        \]
        Then, for all state pairs $(s_0,s_{m_\pi})\in \Ss\times \Ss$, consider $Q_\pi^{m_\pi}$, we have:
        \begin{equation}
        \label{equ:uniform_bound_minorization_time_kl}
            \begin{aligned}
                Q_\pi^{m_\pi}(s_0, s_{m_\pi}) \geq& \sum_{s_1, s_2, \cdots, s_{m_\pi-1}}q_{s_0, \pi(s_0)}(s_1)q_{s_1, \pi(s_1)}(s_2)\cdots q_{s_{\maxm-1}, \pi(s_{m_\pi-1})}(s_{m_\pi})\\
                \geq& \sum_{s_1, s_2, \cdots, s_{m_\pi-1}}p_{s_0, \pi(s_0)}(s_1)(1 - \frac{1}{2\maxm})p_{s_1, \pi(s_1)}(s_2)(1 - \frac{1}{2\maxm})\cdots\\
                & p_{s_{m_\pi-1}, \pi(s_{m_\pi-1})}(s_{m_\pi})(1-\frac{1}{2\maxm})\\
                \geq& \bracket{1 - \frac{1}{2\maxm}}^{m_\pi}P_\pi^{m_\pi}(s_0, s_{m_\pi})\\
                \stackrel{(1)}{\geq}& \frac{1}{2}P_\pi^{m_\pi}(s_0, s_{m_\pi})\\
                \stackrel{(2)}{\geq}& \frac{p_\pi}{2}\psi(s_{m_\pi}).
            \end{aligned}
        \end{equation}
        The inequality $(1)$ follows from:
        \[
        \bracket{1-\frac{1}{2\maxm}}^{m} \geq\bracket{1-\frac{1}{2m_\pi}}^{m_\pi}\geq \frac{1}{2}.
        \]
        $(1)$ holds. The result $(2)$ is derived from the $(m_\pi, p_\pi)$-Doeblin condition satisfied by $P_\pi$. This implies that for every policy $\pi\in\Pi$, the perturbed kernel $Q_\pi$ maintains a $(m_\pi, \frac{p_\pi}{2})$-Doeblin condition. Crucially, this conclusion holds uniformly across all policies in $\Pi$. Furthermore, the minorization time satisfies:
        \[
        \tmin(Q_\pi)\leq \frac{m_\pi}{\frac{p_\pi}{2}}\leq 2\tmin(P_\pi) \leq 2\tmin,
        \]
        where $\tmin=\max_{\pi\in\Pi}\tmin(P_\pi)$ by Assumption \ref{ass:bounded_minorization_time}. Thus $\tmin(Q_\pi)$ is uniformly bounded over $Q\in\calP$ and $\pi\in\Pi$: 
        \[
        \sup_{Q\in\calP}\max_{\pi\in\Pi}\tmin(Q_\pi) < 2\tmin < \infty,
        \]
        $\calP$ is uniformly ergodic.
    \end{proof}
    
    Building upon Proposition \ref{prop:uniform_bound_minorization_time_kl}, we establish that all perturbed transition kernels $Q\in\calP$, $Q_\pi$ satifies the $(m_\pi, \frac{p_\pi}{2})$-Doeblin condition, and $\calP$ preserves uniformly ergodic given $P$ is uniformly ergodic with appropriate adversarial power constraint. We further extends the uniform ergodicity to empirical kernels $\widehat P_\pi$ and empirical uncerainty sets $\widehat P$. Although these results are not essential for proving our main theorems, they provide valuable methodological insights for uniform ergodicity theory:
    \begin{itemize}
    	\item [(i)] Offering a technical blueprint for extending classical ergodic theory to DR settings
    	\item [(ii)] Laying the theoretical foundation for analyzing uncertainty sets in Markov models
    	\item [(iii)] Opening new research directions for perturbation analysis of ergodic processes.
    \end{itemize}
    These findings may prove particularly useful for future studies in DR-MDP and related ares.

\begin{lemma}
    \label{lem:doeblin_condition_empirical_transition_kernel}
        Suppose $P$ is uniformly ergodic. When the sample size:
        \[
        n\geq \frac{32\maxm^2}{\essinfp}\log\frac{2|\Ss|^2|\Aa|}{\beta}
        \]
        the empirical nominal transition kernel $\widehat P$ is also uniformly ergodic with:
        \[
        \max_{\pi\in\Pi}\tmin(\widehat P_\pi) \leq 2 \tmin
        \]
        on the event set $\Omega_{n, \frac{1}{2\maxm}}$, and 
        \[
        P\bracket{\Omega_{n, \frac{1}{2\maxm}}} \geq 1-\beta
        \]
    \end{lemma}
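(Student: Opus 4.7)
The plan is to mirror the proof of Proposition \ref{prop:uniform_bound_minorization_time_kl} almost verbatim, with the adversarial perturbation $q_{s,a}$ replaced by the empirical estimate $\widehat p_{s,a}$. The pointwise bound on the Radon–Nikodym derivative $q_{s,a}/p_{s,a}$ that was obtained there from the KL constraint will now be obtained from the concentration event $\Omega_{n,1/(2\maxm)}$ via Lemma \ref{lem:bernstein_inequality_n_and_d}.

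First I would verify the probability bound. Let $L = \log(2|\Ss|^2|\Aa|/\beta)$. Lemma \ref{lem:bernstein_inequality_n_and_d} gives $P(\Omega_{n,d}^c)\leq \beta$ with $d = L/(3n\essinfp) + \sqrt{2L/(n\essinfp)}$. Plugging in $n \geq 32\maxm^2 L/\essinfp$, the two terms are bounded by $1/(96\maxm^2)$ and $1/(4\maxm)$ respectively, so using $\maxm\geq 1$ we get $d\leq 25/(96\maxm) < 1/(2\maxm)$. Since $\Omega_{n,d'}\supset \Omega_{n,d}$ whenever $d'\geq d$, this yields $P(\Omega_{n,1/(2\maxm)})\geq 1-\beta$.

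Next, on the event $\Omega_{n,1/(2\maxm)}$, the definition of $\Omega_{n,d}$ in \eqref{equ:omega_n_d} gives, for every $(s,a)\in\Ss\times\Aa$ and every $s'$ with $p_{s,a}(s')>0$, the bound $\widehat p_{s,a}(s') \geq (1-\tfrac{1}{2\maxm})p_{s,a}(s')$. This is exactly the uniform lower bound on the Radon–Nikodym derivative that drove the proof of Proposition \ref{prop:uniform_bound_minorization_time_kl}. For any $\pi\in\Pi$, invoke Lemma \ref{lem:mp_doeblin_condition_is_achievable} to pick $(m_\pi,p_\pi)$ with $m_\pi/p_\pi = \tmin(P_\pi)$ and $m_\pi\leq \maxm$, and $P_\pi^{m_\pi}(s,\cdot)\geq p_\pi \psi(\cdot)$. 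Chaining the pointwise bound across $m_\pi$ steps as in \eqref{equ:uniform_bound_minorization_time_kl} produces
\begin{equation}
\widehat P_\pi^{m_\pi}(s_0,s_{m_\pi})\;\geq\;\bracket{1-\tfrac{1}{2\maxm}}^{m_\pi} P_\pi^{m_\pi}(s_0,s_{m_\pi})\;\geq\; \tfrac{1}{2}P_\pi^{m_\pi}(s_0,s_{m_\pi})\;\geq\;\tfrac{p_\pi}{2}\psi(s_{m_\pi}),
\end{equation}
where the middle inequality uses $(1-\tfrac{1}{2\maxm})^{m_\pi}\geq (1-\tfrac{1}{2m_\pi})^{m_\pi}\geq \tfrac{1}{2}$. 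Hence $\widehat P_\pi$ satisfies an $(m_\pi,p_\pi/2)$-Doeblin condition, so $\tmin(\widehat P_\pi)\leq 2m_\pi/p_\pi = 2\tmin(P_\pi)\leq 2\tmin$, uniformly in $\pi$.

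There is no real obstacle here; the only mildly delicate point is checking that the explicit constant $32$ in the sample-size condition is sharp enough to force the Bernstein deviation below the threshold $1/(2\maxm)$, which the computation above confirms. Once that is done, the rest is a direct transcription of the argument for Proposition \ref{prop:uniform_bound_minorization_time_kl}.
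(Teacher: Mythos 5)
Your proposal is correct and follows essentially the same route as the paper's own proof: bound the Bernstein deviation below $\tfrac{1}{2\maxm}$ to control $P(\Omega_{n,1/(2\maxm)})$, use the resulting pointwise lower bound $\widehat p_{s,a}(s')\geq(1-\tfrac{1}{2\maxm})p_{s,a}(s')$, and chain it over $m_\pi$ steps with $(1-\tfrac{1}{2\maxm})^{m_\pi}\geq\tfrac12$ to transfer the $(m_\pi,p_\pi)$-Doeblin condition to $\widehat P_\pi$ with constant $p_\pi/2$. The only differences are cosmetic (your tighter $\tfrac{1}{96\maxm^2}$ versus the paper's $\tfrac{1}{48\maxm^2}$ for the first Bernstein term, and your explicit monotonicity remark on $\Omega_{n,d}$).
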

    \begin{proof}
        When the sample size satisfies: 
        \[
        n\geq \frac{32\maxm^2}{\essinfp}\log\frac{2|\Ss|^2|\Aa|}{\beta}
        \]
        then:
        \begin{equation}
            \begin{aligned}
                &\frac{1}{3n\essinfp}\log\frac{2|\Ss|^2|\Aa|}{\beta} + \sqrt{\frac{2}{n\essinfp}\log\frac{2|\Ss|^2|\Aa|}{\beta}}\\
                \leq & \frac{1}{48\maxm^2} + \frac{1}{4\maxm}\\
                \leq & \frac{1}{2\maxm}
            \end{aligned}
        \end{equation}
        Then by Bernsteins' inequality and \ref{lem:bernstein_inequality_n_and_d}:
        \[
        \frac{1}{n\essinfp}\log\frac{2|\Ss|^2|\Aa|}{\beta} + \sqrt{\frac{2}{n\essinfp}\log\frac{2|\Ss|^2|\Aa|}{\beta}}\leq \frac{1}{2\maxm}
        \]
        the probability of the event set $\Omega_{n, \frac{1}{2\maxm}}$ is bounded by:
        \[
        P\bracket{\Omega_{n, \frac{1}{2\maxm}}} \geq 1-\beta
        \]
        For any fixed $(s,a)\in \Ss\times \Aa$, and $d\in [0,1]$ on the event of $\Omega_{n, d}(p_{s,a})$, we have, for all $s'\in \mathrm{supp}(p_{s,a})$:
        \[
        \widehat{p}_{s,a}(s') \geq p_{s,a}(s') - \frac{1}{2\maxm}p_{s,a}(s') \geq \bracket{1-\frac{1}{2\maxm}}p_{s,a}(s')
        \]
        Then, by Lemma \ref{lem:mp_doeblin_condition_is_achievable}, for any $\pi\in\Pi$, there exists a $(m_\pi, p_\pi)$ pair such that $\frac{m_\pi}{p_\pi}=\tmin(P_\pi)$. Consider the transition matrix $\widehat P_\pi$, we have, for all $(s,s')\in\Ss\times\Ss$:
        \begin{equation}
            \begin{aligned}
                \widehat{P}_\pi^{m_\pi}(s, s') &= \sum_{(s_1, s_2,\cdots, s_{m-1})\in |\Ss|^{m_\pi-1}}\widehat p_{s,\pi(s)}(s_1)\widehat p_{s_1,\pi(s_1)}(s_2)\cdots \widehat p_{s_{m-1},\pi(s_{m_\pi-1})}(s')\\
                &\geq \bracket{1-\frac{1}{2\maxm}}^{m_\pi}\sum_{(s_1, s_2,\cdots, s_{m_\pi-1})\in |\Ss|^{m_\pi-1}} p_{s,\pi(s)}(s_1) p_{s_1,\pi(s_1)}(s_2)\cdots p_{s_{m_\pi-1},\pi(s_{m_\pi-1})}(s')\\
                &\geq \bracket{1-\frac{1}{2\maxm}}^{m_\pi}P_\pi^m(s, s')\\
                &\geq \frac{p_\pi}{2}\psi(s')
            \end{aligned}
        \end{equation}
        which implies $\widehat{P}_\pi$ satisfies the $\left(m_\pi, \frac{p_\pi}{2}\right)$-Doeblin Condition, and:
        \[
        \tmin(\widehat P_\pi) \leq\frac{m_\pi}{\frac{p_\pi}{2}}= 2\tmin(P_\pi) \leq 2\tmin.
        \]
        $\widehat P$ is uniformly ergodic with probability $1-\beta$.
    \end{proof}
    
    Combining Proposition \ref{prop:uniform_bound_minorization_time_kl} with Lemma \ref{lem:doeblin_condition_empirical_transition_kernel}, we establish that for all empirical transition kernels $\widehat Q\in \hatP$, the minorization time satisfies 
    \[
    \tmin(\widehat Q_\pi)\leq 4 \tmin.
    \]
    This bound yields the following immediate Corollary.
    \begin{corollary}
    \label{cor:kl_uniform_doeblin_condition_empirical_uncertainty_set}
        Suppose the nominal transition kernel $P$ is uniformly ergodic, and $\delta \leq \frac{1}{16\maxm^2}\essinfp$, then when the sample complexity 
        \[
        n\geq \frac{32\maxm ^2}{\essinfp}\log\frac{2|\Ss|^2|\Aa|}{\beta}
        \]
        the empirical uncertainty set $\widehat P$ is uniformly ergodic and satisfies: following holds with probability $1-\beta$
        \[
        \sup_{Q\in\calP}\max_{\pi\in\Pi}\tmin(\widehat Q_\pi)\leq 4\tmin
        \]
        with probability $1-\beta$.
    \end{corollary}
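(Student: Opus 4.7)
\textbf{Proof plan for Corollary~\ref{cor:kl_uniform_doeblin_condition_empirical_uncertainty_set}.}
The plan is to view the empirical center $\widehat{P}$ as a new ``nominal'' kernel and then reapply Proposition~\ref{prop:uniform_bound_minorization_time_kl} to the empirical uncertainty set $\widehat{\mathcal{P}}$ built around it. The argument has three moving pieces that must line up: the minorization parameters $(m_\pi,p_\pi)$ of $\widehat{P}_\pi$, the minimum support probability of $\widehat{P}$, and the adversarial radius $\delta$. The sharper constant $\tfrac{1}{16 m_\vee^2}$ in the hypothesis (versus $\tfrac{1}{8 m_\vee^2}$ in Proposition~\ref{prop:uniform_bound_minorization_time_kl}) will exactly compensate for the degradation of all three when we pass from $P$ to $\widehat{P}$.

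First, I would choose $d = \tfrac{1}{2m_\vee}$ and invoke Lemma~\ref{lem:bernstein_inequality_n_and_d} to show that the sample size assumption $n \geq \tfrac{32 m_\vee^2}{\essinfp}\log(2|\Ss|^2|\Aa|/\beta)$ implies $P(\Omega_{n,\,1/(2m_\vee)}) \geq 1-\beta$, exactly as in the opening computation of Lemma~\ref{lem:doeblin_condition_empirical_transition_kernel}. All subsequent statements will be made on this good event.

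Second, on $\Omega_{n,\,1/(2m_\vee)}$ I would verify three empirical stability facts. (i) Since $p_{s,a}(s')=0$ implies $\widehat{p}_{s,a}(s')=0$ almost surely and $\widehat{p}_{s,a}(s') \geq (1-\tfrac{1}{2m_\vee})p_{s,a}(s')$ on the support of $p_{s,a}$, the supports of $\widehat{p}_{s,a}$ and $p_{s,a}$ coincide, and the empirical minimum support probability satisfies
\[
\widehat{\essinfp} \;:=\; \min_{(s,a,s'):\,\widehat{p}_{s,a}(s')>0}\widehat{p}_{s,a}(s') \;\geq\; \bigl(1-\tfrac{1}{2m_\vee}\bigr)\essinfp \;\geq\; \tfrac{1}{2}\essinfp.
\]
(ii) Repeating the Doeblin computation from Lemma~\ref{lem:doeblin_condition_empirical_transition_kernel} shows that whenever $P_\pi$ satisfies an $(m_\pi,p_\pi)$-Doeblin condition with $m_\pi/p_\pi = \tmin(P_\pi)$ and $m_\pi \leq m_\vee$, the empirical kernel $\widehat{P}_\pi$ satisfies an $(m_\pi, p_\pi/2)$-Doeblin condition with the same $m_\pi$. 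Hence $\widehat{P}$ is uniformly ergodic with $\max_{\pi}\tmin(\widehat{P}_\pi)\leq 2\tmin$, and moreover the corresponding parameter for $\widehat{P}$, denoted $\widehat{m}_\vee$, satisfies $\widehat{m}_\vee \leq m_\vee$. (iii) Combining (i) and (ii), the hypothesis $\delta \leq \tfrac{1}{16 m_\vee^2}\essinfp$ yields
\[
\delta \;\leq\; \tfrac{1}{16 m_\vee^2}\essinfp \;\leq\; \tfrac{1}{8 m_\vee^2}\cdot \tfrac{1}{2}\essinfp \;\leq\; \tfrac{1}{8\widehat{m}_\vee^{\,2}}\,\widehat{\essinfp},
\]
which is precisely the adversarial power restriction of Assumption~\ref{ass:limited_adversarial_power} relative to the empirical nominal $\widehat{P}$.

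Finally, I would apply Proposition~\ref{prop:uniform_bound_minorization_time_kl} with $\widehat{P}$ in the role of the nominal kernel and $\widehat{\mathcal{P}} = \bigtimes_{(s,a)}\{p: D_{\mathrm{KL}}(p\|\widehat{p}_{s,a})\leq \delta\}$ in the role of $\mathcal{P}$. This yields, for every $\widehat{Q}\in\widehat{\mathcal{P}}$ and every $\pi\in\Pi$,
\[
\tmin(\widehat{Q}_\pi) \;\leq\; 2\,\max_{\pi\in\Pi}\tmin(\widehat{P}_\pi) \;\leq\; 2\cdot 2\tmin \;=\; 4\tmin,
\]
completing the proof on the good event of probability at least $1-\beta$. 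The only delicate point — and the main obstacle in the plan — is the simultaneous bookkeeping in the third paragraph: one must track that $\widehat{m}_\vee \leq m_\vee$ (so the denominator does not worsen) while $\widehat{\essinfp} \geq \tfrac{1}{2}\essinfp$ (so the degradation in the numerator is at most a factor of two), which is exactly what the factor of $16 = 8\cdot 2$ in the hypothesis is designed to absorb.
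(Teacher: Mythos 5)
Your overall strategy matches the paper's: establish the good event $\Omega_{n,1/(2\maxm)}$ via Lemma~\ref{lem:bernstein_inequality_n_and_d}, deduce that $\widehat p_{s,a}\geq(1-\tfrac{1}{2\maxm})p_{s,a}$ so that the empirical minimum support probability is at least $\tfrac12\essinfp$ and $\widehat P_\pi$ inherits an $(m_\pi,p_\pi/2)$-Doeblin condition, and then run the KL-ball minorization bound a second time around the empirical center. However, there is a genuine gap in step (ii)--(iii): you assert $\widehat m_\vee\leq m_\vee$ and use it to verify the hypothesis $\delta\leq\tfrac{1}{8\widehat m_\vee^{\,2}}\widehat{\essinfp}$ of Proposition~\ref{prop:uniform_bound_minorization_time_kl} applied with $\widehat P$ as the nominal kernel. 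This does not follow from the $(m_\pi,p_\pi/2)$-Doeblin certificate. The parameter $\maxm$ is defined through the pair $(m,p)$ that \emph{achieves} $m/p=\tmin(\cdot)$, and the achieving pair for $\tmin(\widehat P_\pi)$ need not reuse $m_\pi$: the certificate only gives $\tmin(\widehat P_\pi)\leq 2m_\pi/p_\pi$, and if $\widehat P_\pi$ happens to minorize more efficiently at some larger step count, the optimal $\widehat m$ can exceed $m_\pi$ (it is only constrained by $\widehat m\leq\tmin(\widehat P_\pi)\leq 2\tmin$). So the inequality $\widehat m_\vee\leq m_\vee$, which you yourself flag as the delicate point, is unjustified as stated, and with it the black-box application of the proposition.

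The paper avoids this by never re-instantiating the definition of $\maxm$ for $\widehat P$. It checks $\delta\leq\tfrac{1}{16\maxm^2}\essinfp\leq\tfrac{1}{8\maxm^2}\widehat{\mathfrak p}_\wedge$ with the \emph{original} $\maxm$, notes that the proof of Proposition~\ref{prop:bounded_radon_nikydom_derivative_for_kl_uncertainty_set} uses $\maxm$ only as a free numerical parameter (yielding $\widehat q_{s,a}\geq(1-\tfrac{1}{2\maxm})\widehat p_{s,a}$ for every $\widehat q_{s,a}$ in the empirical KL ball), and then multiplies $m_\pi\leq\maxm$ such factors against the explicit $(m_\pi,p_\pi/2)$ certificate to get $(1-\tfrac{1}{2\maxm})^{m_\pi}\geq\tfrac12$ and hence an $(m_\pi,p_\pi/4)$-Doeblin condition for $\widehat Q_\pi$, i.e.\ $\tmin(\widehat Q_\pi)\leq 4m_\pi/p_\pi\leq 4\tmin$. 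Your argument is repaired by the same substitution: carry the original $\maxm$ and the explicit Doeblin pair through the second application rather than appealing to the proposition's statement with empirical parameters.
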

    \begin{proof}
        First, by Lemma \ref{lem:mp_doeblin_condition_is_achievable}, for any $\pi\in\Pi$, there exists a $(m_\pi, p_\pi)$ pair such that:
        \[
        \frac{m_\pi}{p_\pi}=\tmin(P_\pi).
        \]
        Then, consider 
        \[\widehat{\mathfrak{p}}_\wedge:= \min_{(s,a)\in \Ss\times \Aa}\set{ \min_{s'\in \mathrm{supp}(p_{s,a})}\widehat p_{s,a}(s')}\]
        by Lemma \ref{lem:doeblin_condition_empirical_transition_kernel}, when
        \[
        n \geq \frac{32\maxm^2}{\essinfp}\log\frac{2|\Ss|^2|\Aa|}{\beta}
        \]
        $\widehat{\mathfrak{p}}_\wedge\geq \bracket{1-\frac{1}{2\maxm}}\essinfp\geq \frac{1}{2}\essinfp$, and $\widehat{P}_\pi$ satisfies $(m_\pi, \frac{p_\pi}{2})$-Doeblin condition with probability $1-\beta$. Then, by Proposition \ref{prop:uniform_bound_minorization_time_kl}, as 
        \[
        \delta \leq\frac{1}{16\maxm^2}\essinfp\leq \frac{1}{8\maxm^2}\widehat{\mathfrak{p}}_\wedge.
        \]
        It implies for all $\widehat{Q}\in \hatP$, $\widehat Q_\pi$ satisfies $(m_\pi, \frac{p_\pi}{4})$-Doeblin condition, and:
        \[
        \tmin(\widehat Q_\pi)\leq \frac{4m_\pi}{p_\pi} \leq 4\tmin(P_\pi) \leq 4\tmin,
        \]
        $\hatP$ is uniformly ergodic with probability $1-\beta$.
    \end{proof}
    
    \begin{theorem}
    \label{thm:uniform_doeblin_condition_over_all_sets}
    	Under Assumptions \ref{ass:bounded_minorization_time} and when $\delta\leq \frac{1}{16\maxm^2}\essinfp$, when sample size satisfies:
    	\[
    	n\geq \frac{32\maxm^2}{\essinfp}\log\frac{2|\Ss|^2|\Aa|}{\beta},
    	\]
    	then,
    	\begin{itemize}
    		\item $\calP$ is uniformly ergodic, and for any $Q\in\calP$ and $\pi\in\Pi$, the minorization time of $Q_\pi$:
    			\[
    			\tmin(Q_\pi)\leq 2\tmin
    			\]
    			with probability $1$.
    		\item $\widehat P$ is uniformly ergodic and for any $\pi\in\Pi$, the minorization time of $\widehat P_\pi$:
    			\[
    			\tmin(\widehat P_\pi)\leq 2\tmin.
    			\]
    			with probability $1-\beta$.
    		\item $\hatP$ is uniformly ergodic, and for any $\widehat Q\in\hatP$, the minorization time of $\widehat Q_\pi$:
    			\[
    			\tmin(\widehat Q_\pi)\leq 4\tmin.
    			\]
    			with probability $1-\beta$.
    	\end{itemize}
    \end{theorem}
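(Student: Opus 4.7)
The plan is to derive the three conclusions as immediate corollaries of the preceding results in the appendix, which have essentially already done the hard work. The only genuine task is to unify the probabilistic events so that parts (2) and (3) hold simultaneously with probability at least $1-\beta$, rather than on separate events requiring a union bound.

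First, I would handle part (1), which is deterministic. Since $\delta \leq \frac{1}{16\maxm^2}\essinfp$ implies the stronger-looking hypothesis $\delta \leq \frac{1}{8\maxm^2}\essinfp$ of Proposition \ref{prop:uniform_bound_minorization_time_kl}, that proposition directly yields $\tmin(Q_\pi)\leq 2\tmin$ for every $Q\in\calP$ and $\pi\in\Pi$, with probability one (this statement does not involve the samples at all).

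Next, I would establish the single probabilistic event that supports both (2) and (3). Invoking Lemma \ref{lem:bernstein_inequality_n_and_d} with $d = \frac{1}{2\maxm}$, the sample-size condition $n \geq \frac{32\maxm^2}{\essinfp}\log\frac{2|\Ss|^2|\Aa|}{\beta}$ is precisely what is needed (as verified in the proof of Lemma \ref{lem:doeblin_condition_empirical_transition_kernel}) to ensure $P(\Omega_{n,1/(2\maxm)})\geq 1-\beta$. I would then work on this common event throughout. On $\Omega_{n,1/(2\maxm)}$, Lemma \ref{lem:doeblin_condition_empirical_transition_kernel} gives $\tmin(\widehat P_\pi)\leq 2\tmin$ for all $\pi\in\Pi$, proving (2).

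Finally, for part (3), I would apply Corollary \ref{cor:kl_uniform_doeblin_condition_empirical_uncertainty_set}, which is proven on exactly the same good event $\Omega_{n,1/(2\maxm)}$. There, one has $\widehat{\essinfp} \geq (1 - 1/(2\maxm))\essinfp \geq \essinfp/2$, and the hypothesis $\delta \leq \frac{1}{16\maxm^2}\essinfp \leq \frac{1}{8\maxm^2}\widehat{\essinfp}$ allows Proposition \ref{prop:uniform_bound_minorization_time_kl} to be re-applied with $\widehat P$ as the nominal kernel, relative to which $\hatP$ is the KL-uncertainty set. This doubles the minorization time bound from $2\tmin$ (for $\widehat P_\pi$) to at most $4\tmin$ (for $\widehat Q_\pi$), giving (3).

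The main (minor) obstacle is bookkeeping: verifying that parts (2) and (3) genuinely hold on the same $1-\beta$ event, so that the combined conclusion does not suffer an extra union bound. This is true because both results depend only on the concentration event $\Omega_{n,1/(2\maxm)}$, which controls the Radon--Nikodym derivative $\widehat p_{s,a}/p_{s,a}$ uniformly in $(s,a)$, and both the Doeblin inheritance for $\widehat P$ and the further Doeblin inheritance for $\hatP$ from $\widehat P$ use only this bound, never any independent randomness. Hence the three bullets hold together with the stated probabilities.
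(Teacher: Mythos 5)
Your proposal is correct and follows essentially the same route as the paper: the paper's proof also simply synthesizes Proposition \ref{prop:uniform_bound_minorization_time_kl}, Lemma \ref{lem:doeblin_condition_empirical_transition_kernel}, and Corollary \ref{cor:kl_uniform_doeblin_condition_empirical_uncertainty_set}, exactly the three ingredients you invoke. Your extra care in noting that parts (2) and (3) are supported by the single event $\Omega_{n,1/(2\maxm)}$ (so no additional union bound is needed) is consistent with, and slightly more explicit than, the paper's one-line synthesis.
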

    \begin{proof}
    	The result follows by synthesizing three key components: 
    	\begin{enumerate}
    	\item The uniform Doeblin condition for KL-constrained uncertainty set (Proposition~\ref{prop:uniform_bound_minorization_time_kl})
    	\item The Doeblin condition for empirical transition kernel (Lemma~\ref{lem:doeblin_condition_empirical_transition_kernel})
    	\item The uniform Doeblin condition for empirical KL-constrained uncertainty set  (Corollary~\ref{cor:kl_uniform_doeblin_condition_empirical_uncertainty_set})
   		\end{enumerate}
The combination of 1-3 yields the claimed uniform bounds through careful propagation of the minorization parameters across different uncertainty sets.
    \end{proof}

\section{Properties of the Bellman Operator: KL-Case}
\label{sec:properties_of_the_bellman_operator_kl_case}

In this section, we aim to bound the error between DR discounted Bellman operator \eqref{equ:dr_discounted_policy_bellman_operator} and the empirical DR discounted Bellman operator \eqref{equ:empirical_dr_discounted_policy_bellman_operator}. In the DR setting, it is challenging to work with the primal formulation in the operators:
\[
\Gamma_{\calP_{s,a}}(V) = \inf_{p\in\calP_{s,a}}\innerprod{p}{V}.
\]
To overcome this difficulty, we instead work with the dual formula by using the strong duality.
\begin{lemma}[Theorem 1 of \citet{hu2013kullback}]
\label{lem:strong_duality_of_kl_divergence}
    For any $(s,a)\in \Ss\times \Aa$, let $\calP_{s,a}$ be the uncertainty set centered at the nominal transition kernel $p_{s,a}$. Then, for any $\delta > 0$:
    \begin{equation}
        \Gamma_{\calP_{s,a}} (V) = \sup_{\alpha\geq 0} \set{-\alpha\delta - \alpha \log \innerprod{p_{s,a}}{e^{-V/\alpha}}},
    \end{equation}
    for any $V:\Ss \to \R$.
\end{lemma}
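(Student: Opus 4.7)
The plan is to establish this via convex Lagrangian duality applied to the primal problem
\[
\Gamma_{\calP_{s,a}}(V) = \inf_{p \in \Delta(\Ss)} \innerprod{p}{V} \quad \text{subject to} \quad D_{\mathrm{KL}}(p \| p_{s,a}) \leq \delta.
\]
First, I would form the Lagrangian $L(p,\alpha) = \innerprod{p}{V} + \alpha\bracket{D_{\mathrm{KL}}(p\|p_{s,a}) - \delta}$ over $p\in\Delta(\Ss)$ and $\alpha\geq 0$, and observe that the primal value is $\inf_{p\in\Delta(\Ss)} \sup_{\alpha\geq 0} L(p,\alpha)$. The goal is to interchange $\inf$ and $\sup$.

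Next, for fixed $\alpha > 0$, I would compute $\inf_{p\in\Delta(\Ss)} L(p,\alpha)$ in closed form. The Donsker--Varadhan variational representation of KL divergence gives the identity
\[
-\alpha \log \innerprod{p_{s,a}}{e^{-V/\alpha}} = \inf_{p\in\Delta(\Ss)} \set{ \innerprod{p}{V} + \alpha D_{\mathrm{KL}}(p\|p_{s,a}) },
\]
with the unique minimizer being the exponentially tilted distribution $p^*_\alpha(s) \propto p_{s,a}(s) e^{-V(s)/\alpha}$. This can be verified either by direct substitution and applying Jensen's inequality to $\log$, or by solving the KKT first-order conditions with a normalization multiplier. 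Hence the dual function equals $-\alpha\delta - \alpha\log \innerprod{p_{s,a}}{e^{-V/\alpha}}$, and weak duality yields one inequality.

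For the reverse inequality, I would invoke strong duality. The feasible set is a nonempty compact convex subset of $\Delta(\Ss)$, the objective is linear, and Slater's condition holds since the nominal $p_{s,a}$ satisfies $D_{\mathrm{KL}}(p_{s,a}\|p_{s,a}) = 0 < \delta$. Standard convex duality (e.g., Rockafellar) then gives strong duality.

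The main subtlety will be handling the boundary $\alpha = 0$: this corresponds formally to the constraint being inactive or the tilted distribution degenerating onto $\arg\min_s V(s)$ restricted to $\mathrm{supp}(p_{s,a})$, and the map $\alpha \mapsto -\alpha\log\innerprod{p_{s,a}}{e^{-V/\alpha}}$ must be extended by continuity at $\alpha = 0^+$ to $\essinf_{p_{s,a}} V$. A careful limit argument, using that $V$ is bounded on the finite state space $\Ss$, shows the supremum over $\alpha\geq 0$ is attained (or approached in the limit) and equals the primal value.
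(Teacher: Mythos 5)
Your proposal is correct, but note that the paper does not prove this lemma at all: it is imported verbatim as Theorem 1 of \citet{hu2013kullback}, so there is no in-paper argument to compare against. Your route — writing the Lagrangian, evaluating the inner minimization in closed form via the Gibbs/Donsker--Varadhan identity $\inf_{p}\set{\innerprod{p}{V} + \alpha D_{\mathrm{KL}}(p\|p_{s,a})} = -\alpha\log\innerprod{p_{s,a}}{e^{-V/\alpha}}$, and then invoking Slater's condition (the center $p_{s,a}$ is strictly feasible since $0<\delta$) to upgrade weak to strong duality — is exactly the standard derivation underlying the cited result, and all the ingredients check out in the finite-state setting: the feasible set $\set{p\ll p_{s,a}: D_{\mathrm{KL}}(p\|p_{s,a})\le\delta}$ is compact and convex, the objective is linear, and the tilted minimizer $p^*_\alpha\propto p_{s,a}e^{-V/\alpha}$ is the right certificate. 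Your flagged subtlety at $\alpha=0$ is also the genuine one — the dual objective must be read as its continuity extension $\lim_{\alpha\downarrow 0}\bracket{-\alpha\delta-\alpha\log\innerprod{p_{s,a}}{e^{-V/\alpha}}}=\essinf_{p_{s,a}}V$, which is consistent with how the paper later uses the formula (e.g., in its treatment of the $\spannorm{V}=0$ case). No gaps.
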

Since the reward and value function are bounded, directly apply Lemma \ref{lem:strong_duality_of_kl_divergence} to the r.h.s of Equation \eqref{equ:optimal_dr_bellman_operator}, $V_{\calP}^*$, and $\bracket{g_{\calP}^*, v_{\calP}^*}$ satisfied the following \textit{dual form} of the optimal DR Bellman equation:
\begin{equation}
    \label{equ:dual_form_of_optimal_dr_bellman_equation}
    \begin{aligned}
        V_{\calP}^* &= \max_{a\in \Aa}\set{r(s,a) + \gamma \sup_{\alpha\geq 0}\set{-\alpha\delta - \alpha\log \innerprod{p_{s,a}}{e^{-V_\calP^*/\alpha}}}}\\
        v_{\calP}^* &= \max_{a\in \Aa}\set{r(s,a) - g_{\calP}^* + \sup_{\alpha\geq 0}\set{-\alpha\delta - \alpha\log \innerprod{p_{s,a}}{e^{-v_{\calP}^*/\alpha}}}}
    \end{aligned}
\end{equation}

\par Our analyses are inspired by the approach in \citet{wang_sample_2024}. To carry out our analysis, we first introduce some notation. As in \citet{wang_sample_2024}, we denote the KL-dual functional under the nominal transition kernel $p_{s,a}$:
\begin{equation}
\label{equ:kl_dual_functional}
	f(p_{s,a}, V, \alpha) :=  - \alpha\delta - \alpha\log\innerprod{p_{s,a}}{e^{-V/\alpha}},
\end{equation}
then
\[
\Gamma_{\calP_{s,a}}(V) = \sup_{\alpha\geq 0}f(p_{s,a}, V, \alpha).
\]
At the same time, define the help measures as:
\begin{equation}
    \begin{aligned}
    p_{s,a}(t) & =t\widehat{p}_{s,a} + (1-t) p\\
    \Delta p_{s,a} & =\widehat{p}_{s,a} - p_{s,a} \\
    g_{s,a}(t, \alpha) & =f\left(p_{s,a}(t), V, \alpha\right).
    \end{aligned}
\end{equation}
By Definition \eqref{equ:omega_n_d}, it is clear when $d<1$, $p_{s,a}(t)\sim p_{s,a}$ holds for all $(s,a)\in\Ss\times\Aa$ and $t\in [0,1]$ on $\Omega_{n,d}$.

We first introduce the auxiliary lemma that will be used useful in facilitating the later proof:

\begin{lemma}
    \label{lem:uncertainty_difference_bound}
        For any $(s,a)\in \Ss\times \Aa$, and value function $V_1, V_2\in \mathbb R^{S}$, we have:
        \[
        |\Gamma_{\mathcal{P}_{s,a}}(V_1) - \Gamma_{\mathcal{P}_{s,a}}(V_2)| \leq \linftynorm{V_1 - V_2}
        \]
    \end{lemma}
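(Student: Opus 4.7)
The plan is to prove this as a direct consequence of the fact that taking an infimum of inner products with probability measures is 1-Lipschitz with respect to the sup-norm on the argument. Since $\Gamma_{\calP_{s,a}}(V) = \inf_{p\in \calP_{s,a}} \innerprod{p}{V}$ and every $p \in \calP_{s,a}$ is a probability measure, the key elementary observation is that for any fixed $p$,
\[
\abs{\innerprod{p}{V_1} - \innerprod{p}{V_2}} = \abs{\innerprod{p}{V_1 - V_2}} \leq \sum_{s'\in\Ss} p(s')\,\abs{V_1(s') - V_2(s')} \leq \linftynorm{V_1 - V_2}.
\]

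The main step is then an $\varepsilon$-near-optimizer argument. Fix any $\varepsilon>0$ and choose $p^*_\varepsilon \in \calP_{s,a}$ with $\innerprod{p^*_\varepsilon}{V_2} \leq \Gamma_{\calP_{s,a}}(V_2) + \varepsilon$ (such a choice exists by definition of infimum; one does not even need compactness of $\calP_{s,a}$ or a minimizer). Then
\[
\Gamma_{\calP_{s,a}}(V_1) \leq \innerprod{p^*_\varepsilon}{V_1} \leq \innerprod{p^*_\varepsilon}{V_2} + \linftynorm{V_1 - V_2} \leq \Gamma_{\calP_{s,a}}(V_2) + \varepsilon + \linftynorm{V_1 - V_2}.
\]
Sending $\varepsilon\downarrow 0$ gives $\Gamma_{\calP_{s,a}}(V_1) - \Gamma_{\calP_{s,a}}(V_2) \leq \linftynorm{V_1 - V_2}$. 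Swapping the roles of $V_1$ and $V_2$ yields the reverse inequality, and combining the two produces the desired absolute value bound.

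There is no substantive obstacle here: the proof does not require any property of $\calP_{s,a}$ beyond its elements being probability vectors, and in particular makes no use of the KL or $f_k$ structure, the duality formula of Lemma \ref{lem:strong_duality_of_kl_divergence}, or Assumption \ref{ass:limited_adversarial_power}. The only care needed is that, because we only know $\calP_{s,a}$ is a set of probability measures (not necessarily that the infimum is attained), we pass through an $\varepsilon$-near-optimizer rather than an explicit minimizer. The argument goes through identically for either $D=D_{\mathrm{KL}}$ or $D=D_{f_k}$ uncertainty sets, which is why the lemma is stated uniformly.
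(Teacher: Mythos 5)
Your proof is correct and follows essentially the same route as the paper's: both rest on the observation that $\abs{\innerprod{p}{V_1-V_2}}\leq \linftynorm{V_1-V_2}$ for any probability vector $p$, then transfer this bound through the infimum and conclude by swapping $V_1$ and $V_2$. The only cosmetic difference is that you pass through an $\varepsilon$-near-optimizer while the paper notes the inequality $\innerprod{q}{V_1}\leq \innerprod{q}{V_2}+\linftynorm{V_1-V_2}$ holds for every $q\in\calP_{s,a}$ and takes infima on both sides directly; these are the same argument.
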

    \begin{proof}
    For any $q\in\calP_{s,a}$, we have:
    \[
    \innerprod{q}{V_1}\leq \innerprod{q}{V_2} + \innerprod{q}{V_1 - V_2}
    \]
    Since $q$ is a probability measure, by H\"{o}lder's inequality with $|q|$ for all $q\in\calP_{s,a}$
    \[
    \innerprod{q}{V_1 - V_2}\leq \linftynorm{V_1 - V_2}.
    \]
    Thus:
    \begin{equation}
    \begin{aligned}
    	\innerprod{q}{V_1} \leq& \innerprod{q}{V_2} + \linftynorm{V_1 - V_2}\quad \text{for all}\quad q\in\calP_{s,a}\\
    	\inf_{p\in\calP_{s,a}}\innerprod{p}{V_1}\leq&\innerprod{q}{V_2} + \linftynorm{V_1 - V_2}\quad \text{for all}\quad q\in\calP_{s,a}\\
    	\inf_{p\in\calP_{s,a}}\innerprod{p}{V_1}\leq&\inf_{p\in\calP_{s,a}}\innerprod{p}{V_2} + \linftynorm{V_1 - V_2}\\
    	\Gamma_{\calP_{s,a}}(V_1)\leq&\Gamma_{\calP_{s,a}}(V_2) + \linftynorm{V_1 - V_2}\\
    	\Gamma_{\calP_{s,a}}(V_1) - \Gamma_{\calP_{s,a}}(V_2)\leq& + \linftynorm{V_1 - V_2}
    \end{aligned}
    \end{equation}
    Switch $V_1$ and $V_2$, we have:
    \[
    \Gamma_{\calP_{s,a}}(V_2) - \Gamma_{\calP_{s,a}}(V_1)\leq \linftynorm{V_1 - V_2}
    \]
    Then we conclude:
    \[
    \left|\Gamma_{\calP_{s,a}}(V_1) - \Gamma_{\calP_{s,a}}(V_2)\right|\leq \linftynorm{V_1 - V_2}.
    \]
    Proved
\end{proof}

Then, we bound the error of empirical value function $V_{\hatP}^\pi$ and the true value function $V_{\calP}^\pi$ with respect to the the Bellman operators \eqref{equ:dr_discounted_optimal_bellman_operator} and \eqref{equ:empirical_dr_average_optimal_bellman_equation} by the following lemma:

\begin{lemma}
\label{lem:robust_value_function_bound_by_bellman}
    Let $\pi$ be any policy, and $V_\calP^\pi$ and $V_{\hatP}^\pi$ are the fixed points to the DR Bellman Operators \eqref{equ:dr_discounted_policy_bellman_operator}, and \eqref{equ:empirical_dr_discounted_policy_bellman_operator}, where $V_\calP^\pi = \T^\pi_\gamma (V_{\calP}^\pi)$ and $V_{\hatP}^\pi = \hatT^\pi_\gamma (V_{\hatP}^\pi)$, then we have:
    \[
    \linftynorm{V^\pi_{\hatP} - V^\pi_{\calP}} \leq\frac{1}{1 - \gamma}\linftynorm{\hatT^\pi_\gamma (V^\pi_{\calP}) - \T^\pi_\gamma (V^\pi_{\calP})} .
    \]
\end{lemma}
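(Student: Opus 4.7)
The plan is to invoke the classical contraction-perturbation argument: both Bellman operators are $\gamma$-contractions in $\|\cdot\|_\infty$, so the distance between their fixed points is controlled by the sup-norm gap between the two operators evaluated at any common point (here, the exact fixed point $V_{\calP}^\pi$).

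First I would verify that $\hatT^\pi_\gamma$ is a $\gamma$-contraction in $\|\cdot\|_\infty$. For any $V_1, V_2 \in \R^{\Ss}$ and any $(s,a)$, Lemma \ref{lem:uncertainty_difference_bound} applied to the empirical uncertainty set $\hatP_{s,a}$ (whose proof only uses that $\hatP_{s,a}\subset\Delta(\Ss)$) yields $|\Gamma_{\hatP_{s,a}}(V_1)-\Gamma_{\hatP_{s,a}}(V_2)|\leq \linftynorm{V_1-V_2}$. Since $\hatT^\pi_\gamma V(s)$ is a convex combination over $a$ of $r(s,a)+\gamma\Gamma_{\hatP_{s,a}}(V)$, taking the supremum over $s$ gives $\linftynorm{\hatT^\pi_\gamma V_1 - \hatT^\pi_\gamma V_2}\leq \gamma \linftynorm{V_1 - V_2}$.

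Next, using that $V^\pi_\calP = \T^\pi_\gamma(V^\pi_\calP)$ and $V^\pi_{\hatP} = \hatT^\pi_\gamma(V^\pi_{\hatP})$, I would write
\begin{align*}
\linftynorm{V^\pi_{\hatP} - V^\pi_\calP}
&= \linftynorm{\hatT^\pi_\gamma(V^\pi_{\hatP}) - \T^\pi_\gamma(V^\pi_\calP)} \\
&\leq \linftynorm{\hatT^\pi_\gamma(V^\pi_{\hatP}) - \hatT^\pi_\gamma(V^\pi_\calP)} + \linftynorm{\hatT^\pi_\gamma(V^\pi_\calP) - \T^\pi_\gamma(V^\pi_\calP)} \\
&\leq \gamma\, \linftynorm{V^\pi_{\hatP} - V^\pi_\calP} + \linftynorm{\hatT^\pi_\gamma(V^\pi_\calP) - \T^\pi_\gamma(V^\pi_\calP)},
\end{align*}
and then rearrange to obtain the claimed bound after dividing both sides by $1-\gamma$.

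No step here is a real obstacle: the only thing to be slightly careful about is that the contraction property must be established for $\hatT^\pi_\gamma$ (not just the already-discussed $\T^\pi_\gamma$), but this follows verbatim from Lemma \ref{lem:uncertainty_difference_bound} because that lemma's proof relies solely on $q$ being a probability measure and H\"older's inequality, which apply identically to $\hatP_{s,a}$. The argument is entirely analogous to the classical Bellman-operator perturbation bound for non-robust MDPs, adapted to the DR setting via the uniform Lipschitz-in-$V$ property of the robust expectation functional.
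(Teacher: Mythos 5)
Your proposal is correct and follows essentially the same argument as the paper: establish that $\hatT^\pi_\gamma$ is a $\gamma$-contraction via Lemma \ref{lem:uncertainty_difference_bound}, insert the cross term $\hatT^\pi_\gamma(V^\pi_\calP)$, apply the triangle inequality, and rearrange. The decomposition and the supporting lemma are identical to those used in the paper's proof.
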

\begin{proof}
    DR Bellman operators $\T_\gamma^\pi$ and $\hatT_\gamma^\pi$ are $\gamma$-contractions, i.e., for any two value functions $V_1, V_2\in \mathbb{R}^S$, we have:
    \begin{equation}
        \begin{aligned}
            \linftynorm{\T_\gamma^\pi (V_1) - \T_\gamma^\pi (V_2)} =& \max_{s\in\Ss}\left|\sum_{a\in \Aa} \pi (a|s)(r(s,a) + \gamma \Gamma_{\calP_{s,a}}(V_1) - r(s,a) - \gamma \Gamma_{\calP_{s,a}}(V_2))\right|\\
            =& \max_{s \in \Ss} \left| \gamma \sum_{a \in \Aa} \pi(a|s) \left( \Gamma_{\mathcal{P}_{s,a}}(V_1) - \Gamma_{\mathcal{P}_{s,a}}(V_2) \right) \right|\\
            \stackrel{(i)}{\leq}& \gamma \linftynorm{V_1 - V_2}\\
            \linftynorm{\hatT_\gamma^\pi(V_1) - \hatT_\gamma^\pi(V_2)} \stackrel{(ii)}{\leq}& \gamma\linftynorm{V_1 - V_2}
        \end{aligned}
    \end{equation}
    where the inequalities $(i)$, $(ii)$ are concluded by Lemma \ref{lem:uncertainty_difference_bound}. Thus, we have:
    \begin{equation}
        \begin{aligned}
            \linftynorm{V_{\hatP}^\pi - V_{\calP}^\pi} =& \linftynorm{\hatT^\pi_\gamma (V_{\hatP}^\pi) - \T^\pi_\gamma (V_{\calP}^\pi)}\\
            =& \linftynorm{\hatT_\gamma^\pi (V_{\hatP}^\pi) - \hatT_\gamma^\pi (V_\calP^\pi) + \hatT_\gamma^\pi (V_\calP^\pi) - \T_\gamma^\pi(V_\calP^\pi)}\\
            \leq& \linftynorm{\hatT^\pi_\gamma (V_{\hatP}^\pi) - \hatT^\pi_\gamma (V_{\calP}^\pi)} + \linftynorm{\hatT^\pi_\gamma (V_{\calP}^\pi) - \T^\pi_\gamma (V_{\calP}^\pi)}\\
            \leq& \gamma\linftynorm{V_{\hatP}^\pi - V_{\calP}^\pi} + \linftynorm{\hatT^\pi_\gamma (V_{\calP}^\pi) - \T^\pi_\gamma (V_{\calP}^\pi)}
        \end{aligned}
        \end{equation}
        and
        \begin{equation}
            \linftynorm{V_{\hatP}^\pi - V_{\calP}^\pi}\leq \frac{1}{1-\gamma}\linftynorm{\hatT^\pi_\gamma (V_{\calP}^\pi) - \T^\pi_\gamma (V_{\calP}^\pi)}.
        \end{equation}
        Proved.
\end{proof}

Next, we aim to bound the approximation error $\linftynorm{\hatT^\pi_\gamma (V_{\calP}^\pi) - \T^\pi_\gamma (V_{\calP}^\pi)}$. Previous approaches relies on estimating via KL-dual functionals with optimal multipliers $\alpha^*\in [0, \delta^{-1}(1-\gamma)^{-1}]$. While this yields an bound of $\widetilde O(\delta^{-1}(1-\gamma)^{-1})$, it ultimately leads to suboptimal $O(1/\varepsilon^4)$ sample complexity.\\
Building on \citet{wang_sample_2024}'s breakthrough in achieving $\delta$-independent bounds through KL-dual analysis, we make two key advances:
\begin{itemize}
	\item [1.] \textbf{Targeted Value Function Analysis:} Instead of considering the entire value function space $[0, (1-\gamma)^{-1}]^{\Ss}$, we restrict analysis to $V_{\calP}^\pi$ specifically. This allows us to replace the $(1-\gamma)^{-1}$ dependence with the span semi-norm of $V_P^\pi$.
	\item [2.] \textbf{Error Rate Improvement:} Combining the $\spannorm{V_P^\pi}$ dependent error bound with Proposition \ref{prop:value_function_in_span_seminorm}, we improve the bound from $\widetilde O\bracket{\delta^{-1}(1-\gamma)^{-1}}$ to:
		\[
		\linftynorm{\hatT_\gamma^\pi(V_P^\pi) - \T_\gamma^\pi(V_P^\pi)} \leq \widetilde O\bracket{\tmin(P_\pi)}
		\]
\end{itemize}
As shown in Section \ref{sec:sample_complexity_analysis_of_the_algorithms_kl_case}, these refinements ultimately yield the improved smaple complexity of $\widetilde O(|\Ss||\Aa|\tmin^2\essinfp^{-1}\varepsilon^{-2})$.

\begin{lemma}
    \label{lem:empirical_measure_difference_to_reference_measure}
        Let $p_{1},p_2,p\in\Delta(\Ss)$ s.t. $p_1, p_2 \ll p$. Define $\Delta := p_{1} - p_{2}$. Then, for any $V:\Ss\ra\R$ and $j\in (0,1]$, 
        \begin{equation}
            \sup_{\alpha\geq 0}\alpha^j\frac{\innerprod{\Delta}{e^{-V/\alpha}}}{\innerprod{p}{e^{-V/\alpha}}} \leq \bracket{\frac{3}{2}}^j\spannorm{V}^j\linftynormess{\frac{\Delta }{p}}{p}. 
        \end{equation}
\end{lemma}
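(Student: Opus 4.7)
The key structural features I would exploit are (i) shift-invariance of the ratio in $V$, and (ii) the identity $\sum_s \Delta(s)=0$, which follows from $p_1,p_2\in\Delta(\Ss)$. Set $Z(\alpha):=\innerprod{p}{e^{-V/\alpha}}$. Feature (i) holds because replacing $V$ by $V+c_0$ multiplies both numerator and denominator by the same scalar $e^{-c_0/\alpha}$, so the ratio is unchanged. Feature (ii) lets me rewrite $\innerprod{\Delta}{e^{-V/\alpha}}=\innerprod{\Delta}{e^{-V/\alpha}-c}$ for any constant $c$, which I can then combine with the weighted H\"older bound $\innerprod{\Delta}{f}\leq \linftynormess{\Delta/p}{p}\innerprod{p}{|f|}$ (valid since $p_1,p_2\ll p$). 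I would first use (i) to assume WLOG $V\in[0,S]$ with $S:=\spannorm{V}$; then $e^{-V/\alpha}\in[e^{-S/\alpha},1]$ and in particular $Z(\alpha)\in[e^{-S/\alpha},1]$.

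Next I would derive two complementary estimates on $R(\alpha):=\innerprod{\Delta}{e^{-V/\alpha}}/Z(\alpha)$. Taking no shift ($c=0$) gives the trivial pointwise bound $|R(\alpha)|\leq \linftynormess{\Delta/p}{p}$, so $\alpha^j|R(\alpha)|\leq \alpha^j\linftynormess{\Delta/p}{p}$; this controls the small-$\alpha$ regime. Taking the shift $c=1$ and using $e^{-V/\alpha}\leq 1$ together with $1-e^{-x}\leq x$ and $V\leq S$ gives $|R(\alpha)|\leq \linftynormess{\Delta/p}{p}\bracket{1-Z(\alpha)}/Z(\alpha)\leq \linftynormess{\Delta/p}{p}\bracket{S/\alpha}/Z(\alpha)$; combined with the Jensen lower bound $Z(\alpha)\geq e^{-S/\alpha}$ this yields $\alpha^j|R(\alpha)|\leq S\alpha^{j-1}e^{S/\alpha}\linftynormess{\Delta/p}{p}$, which controls the large-$\alpha$ regime.

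The final step is to combine the two estimates. The substitution $t=S/\alpha$ reduces the supremum to $S^j\linftynormess{\Delta/p}{p}\cdot\sup_{t>0}\min\bracket{t^{-j},\;t^{1-j}e^{t}}$, and a cutoff at the crossing point together with routine monotonicity arguments (using $j\in(0,1]$) bounds this supremum by an absolute constant times $S^j$. The main obstacle is attaining the sharp prefactor $(3/2)^j$ promised in the statement: the naive cutoff gives only a slightly worse Lambert-$W$-type constant. Overcoming this likely requires a more refined second H\"older step, for instance using the centered shift $c=Z(\alpha)$ together with the mean-absolute-deviation inequality $\innerprod{p}{|f-\innerprod{p}{f}|}\leq (\max f-\min f)/2$ applied to $f=e^{-V/\alpha}$; this directly yields $|R(\alpha)|\leq \linftynormess{\Delta/p}{p}\bracket{1-e^{-S/\alpha}}/\bracket{2Z(\alpha)}$, and a careful one-dimensional optimization in $t=S/\alpha$ then delivers the stated $(3/2)^j$ factor.
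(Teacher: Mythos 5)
Your proposal is correct and shares the paper's skeleton---normalize $V$ by shift-invariance, exploit $\innerprod{\Delta}{1}=0$ to subtract a constant before applying the weighted H\"older bound with $\linftynormess{\frac{\Delta}{p}}{p}$, and patch a small-$\alpha$ estimate (the trivial $|R(\alpha)|\le\linftynormess{\frac{\Delta}{p}}{p}$) against a large-$\alpha$ estimate---but the large-$\alpha$ step is handled differently. The paper subtracts the constant $1$ (after translating to $V+\linftynorm{V}\ge 0$), bounds the resulting integrand ratio pointwise by $\alpha^j\bracket{e^{2\linftynorm{V}/\alpha}-1}$, shows this is decreasing in $\alpha$ for $j\in(0,1]$, and tunes the split point $c\linftynorm{V}$ with $c=2/\log 2$ so that both regimes yield $(c\linftynorm{V})^j\le 3^j\linftynorm{V}^j$, hence $(3/2)^j\spannorm{V}^j$ after centering $V$ in $[-\spannorm{V}/2,\spannorm{V}/2]$. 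You correctly diagnose that your first large-$\alpha$ estimate (subtract $1$, then $1-e^{-x}\le x$) only yields the Lambert-$W$ constant $W(1)^{-j}=e^{jW(1)}\approx e^{0.567j}$, which exceeds $(3/2)^j=e^{j\log(3/2)}$ for every $j>0$, so that version does not close. Your proposed fix---center by $c=\innerprod{p}{e^{-V/\alpha}}$ and invoke the mean-absolute-deviation bound $\innerprod{p}{|f-\innerprod{p}{f}|}\le(\max f-\min f)/2$---is valid and in fact overshoots the target: with $t=\spannorm{V}/\alpha$ it gives $\alpha^j|R(\alpha)|\le \spannorm{V}^j\,t^{-j}(e^t-1)/2\cdot\linftynormess{\frac{\Delta}{p}}{p}$, and combined with the trivial bound one gets $\sup_{t>0}t^{-j}\min\{1,(e^t-1)/2\}=(\log 3)^{-j}\le 1\le(3/2)^j$, since $t^{-j}(e^t-1)$ is increasing for $j\le 1$ and the crossing sits at $t=\log 3$. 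So your route trades the paper's monotonicity-plus-tuned-threshold argument for a sharper per-$\alpha$ inequality; both are correct, and yours delivers a slightly better constant with no threshold to tune.
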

\begin{proof}
    First we note that for any $k\in \mathbb R$, we have:
    \[
    \frac{\innerprod{\Delta}{e^{-V/\alpha}}}{\innerprod{p}{e^{-V/\alpha}}} = \frac{\innerprod{\Delta}{e^{-(V - k)/\alpha}}}{\innerprod{p}{e^{-(V - k)/\alpha}}}
    \]
    Hence, if we shift the value function $V' = V - \inf_{s\in\Ss} V(s) - \frac{\spannorm{V}}{2}$, where $\linftynorm{V'} = \frac{1}{2}\spannorm{V}$, then, it is equivalent to show:
    \[
    \sup_{\alpha\geq 0 }\alpha^j\frac{\innerprod{\Delta}{e^{-V/\alpha}}}{\innerprod{p}{e^{-V/\alpha}}} = \sup_{\alpha\geq 0}\alpha^j\frac{\innerprod{\Delta}{e^{-V'/\alpha}}}{\innerprod{p}{e^{-V'/\alpha}}}\leq 3^j\linftynorm{V'}^j\linftynormess{\frac{\Delta}{p}}{p}
    \]
    Thus, we only need to show
    \[
        \sup_{\alpha\geq 0}\alpha^j\frac{\innerprod{\Delta}{e^{-V/\alpha}}}{\innerprod{p}{e^{-V/\alpha}}} \leq 3^j\linftynorm{V}^j\linftynormess{\frac{\Delta}{p}}{p}
    \]
    i.e., the bound with respect to $l_\infty$ of $V$ and replace $\linftynorm{\cdot}$ by $\frac{1}{2}\spannorm{\cdot}$. WLOG, we assume $\inf_{s\in S}{V(s)} = 0$. Then for a fixed $c > 0$, decompose the domain of $\alpha \in [0, c\linftynorm{V}] \cup (c\linftynorm{V}, \infty) = K_1 \cup K_2$, we have:
    \begin{equation}
        \begin{aligned}
            \sup_{\alpha\geq 0}\alpha^{j}\frac{\innerprod{\Delta}{e^{-V/\alpha}}}{\innerprod{p}{e^{-V/\alpha}}} &= \max\set{\sup_{\alpha\in K_1}\alpha^{j}\frac{\innerprod{\Delta}{e^{-V/\alpha}}}{\innerprod{p}{e^{-V/\alpha}}}, \sup_{\alpha\in K_2}\alpha^{j}\frac{\innerprod{\Delta}{e^{-V/\alpha}}}{\innerprod{p}{e^{-V/\alpha}}}}\\
            & = \max\set{K_1(c), K_2(c)}
        \end{aligned}
    \end{equation}
    For $K_1(c)$, we have
    \[
    J_1(c) \leq \sup_{\alpha\in K_1}\linftynorm{\frac{\Delta}{p}} \leq c\linftynormess{V}{p}\linftynormess{\frac{\Delta}{p}}{p}
    \]
    For $K_2(c)$, the condition is more complicated
    \[
    K_2(c) = \sup_{\alpha\in K_2}\alpha^j \frac{\innerprod{\Delta}{e^{-(V + \linftynorm{V})/\alpha}}}{\innerprod{p}{e^{-(V + \linftynorm{V})/\alpha}}}
    \]
    As $\innerprod{\Delta}{1} = 0$:
    \[
    \alpha^j\innerprod{\Delta}{e^{-(V + \linftynorm{V})/\alpha}} = \innerprod{\Delta}{\alpha^j\bracket{e^{-(V + \linftynorm{V})/\alpha} - 1}} 
    \]
   Then:
   \begin{equation}
    \begin{aligned}
        \alpha^j\frac{\innerprod{\Delta}{e^{-(V+ \linftynorm{V})/\alpha}}}{\innerprod{p}{e^{-(v+ \linftynorm{V})/\alpha}}} &= \frac{\innerprod{\Delta}{\alpha^j\bracket{e^{-(V + \linftynorm{V})/\alpha} - e}}}{\innerprod{p}{e^{-(V + \linftynorm{V})/\alpha}}}\\
        & = \frac{1}{\innerprod{p}{e^{-(V + \linftynorm{V})/\alpha}}}\innerprod{p}{\frac{d\Delta}{dp}\alpha \bracket{e^{-(V + \linftynorm{V})/\alpha} - 1}}\\
        & \leq \frac{\innerprod{p}{\alpha^j\bracket{e^{-(V+\linftynorm{V})/\alpha}-1}}}{\innerprod{p}{e^{-\bracket{V+\linftynorm{V}}/\alpha}}} \linftynormess{\frac{\Delta}{p}}{p}\\
        &\leq \linftynorm{\frac{\alpha^j\bracket{e^{-(V+\linftynorm{V})/\alpha} - 1}}{e^{-\bracket{V+\linftynorm{V}}/\alpha}}}\linftynormess{\frac{\Delta}{p}}{p}
    \end{aligned}
   \end{equation}
   Consider the first term $\linftynorm{\frac{\alpha^j\bracket{e^{-(V+\linftynorm{V})/\alpha} - 1}}{e^{-\bracket{V+\linftynorm{V}}/\alpha}}}$, denote 
   \begin{equation}
   \begin{aligned}
	   	\linftynorm{\frac{\alpha^j\bracket{e^{-(V+\linftynorm{V}1)/\alpha} - 1}}{e^{-\bracket{V+\linftynorm{V}}/\alpha}}} \leq& \alpha^j \bracket{e^{2\linftynorm{V}/\alpha} - 1}\\
	   	:=& f(\alpha)
   \end{aligned}
   \end{equation}
   Taking the derivative of $f(\alpha)$, we have:
   \begin{equation}
    \begin{aligned}
    \frac{\partial f(\alpha)}{\partial \alpha} =& j\alpha^{j-1}\bracket{e^{2\linftynorm{V}/\alpha} - 1} + \alpha^j \bracket{-\frac{2\linftynorm{V}}{\alpha^2}e^{2\linftynorm{V}/\alpha}}\\
        =& j\alpha^{j-1}\bracket{\bracket{1-\frac{2\linftynorm{V}}{j\alpha}}e^{2\linftynorm{V}/\alpha} - 1}
    \end{aligned}
   \end{equation}
   replace $t = \frac{2\linftynorm{V}}{\alpha}$, and $c = j(2\linftynorm{V})^{j-1}$
   \begin{equation}
    \begin{aligned}
        \frac{\partial f(\alpha)}{\partial \alpha} =  \frac{c}{t^{j-1}}\bracket{\bracket{1-\frac{t}{j}}e^t - 1}
    \end{aligned}
   \end{equation}
   and when $j\in [0, 1]$
   \begin{equation}
   \frac{d}{dt}\frac{c}{t^{j-1}}\bracket{\bracket{1-\frac{t}{j}}e^t - 1} = \bracket{1-\frac{t+1}{j}}e^t\leq 0
   \end{equation}
   combine the fact: $lim_{t\to 0}\partial_\alpha f(\alpha) = 0$. It implies $\partial_\alpha f(\alpha)\leq 0$, and further $f(\alpha)$ is decreasing with respect to $\alpha$. Therefore, over $K_2$:
   \[
   f(\alpha)\leq f(c\linftynorm{V}) \leq \bracket{c\linftynorm{V}}^j\bracket{e^{2/c} - 1}\text{ for all }\alpha\in K_2(c)
   \]
   Combine the previous result, we get the result where:
   \[
   K_2(c) \leq \bracket{c\linftynorm{V}}^j\bracket{e^{2/c} - 1}\linftynormess{\frac{\Delta}{p}}{p}
   \]
   Then:
   \[
   \sup_{\alpha}\alpha^j\frac{\innerprod{\Delta}{e^{-V/\alpha}}}{\innerprod{p}{e^{-V/\alpha}}} = \max\set{\bracket{c\linftynorm{V}}^j\linftynormess{\frac{\Delta}{p}}{p}, \bracket{c\linftynorm{V}}^j\bracket{e^{2/c} - 1}\linftynormess{\frac{\Delta}{p}}{p}}
   \]
   In select $c = \frac{2}{\log 2}$, the minimax optimality is achieved, we have:
   \[
   \sup_{\alpha}\alpha^j\frac{\innerprod{\Delta}{e^{-V/\alpha}}}{\innerprod{p}{e^{-V/\alpha}}}\leq 3^j\linftynorm{V}^j \linftynormess{\frac{\Delta}{p}}{p}
   \]
   As the the above result is invariant under the constant shift of $V$, let $V' = V - \inf_{s\in\Ss} V(s) - \frac{\spannorm{V}}{2}$, we have:
   \[
   \sup_{\alpha\geq 0}\alpha^j\frac{\innerprod{\Delta}{e^{-V/\alpha}}}{\innerprod{p}{e^{-V/\alpha}}} = \sup_{\alpha\geq 0}\alpha^j\frac{\innerprod{\Delta}{e^{-V'/\alpha}}}{\innerprod{p}{e^{-V'/\alpha}}}\leq \bracket{\frac{3}{2}}^j\spannorm{V}^j\linftynormess{\frac{\Delta}{p}}{p}
   \]
	Proved
\end{proof}

\begin{lemma}
	\label{lem:kl_worst_case_measure}
	For any value function $V$ with span semi-norm $\spannorm{V}$:
	\begin{itemize}
		\item If $\spannorm{V} = 0$, the optimal Lagrange multiplier $\alpha^* = 0$, and for all $q_{s,a}\in\calP_{s,a}$, $q_{s,a}$ is a worst-case measure.
		\item If $\spannorm{V} \neq 0$, the optimal Lagrange multiplier $\alpha^* > 0$, and:
        \begin{equation}
        	\label{equ:kl_worst_case_measure}
        	p_{s,a}^*(\cdot) = \frac{\innerprod{p_{s,a}}{e^{-V/\alpha^*}\mathbbm 1\set{\cdot}}}{\innerprod{p_{s,a}}{e^{-V/\alpha^*}}}
        \end{equation}
        is a worst-case measure
	\end{itemize}
\end{lemma}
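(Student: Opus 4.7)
The plan is to apply the strong duality in Lemma~\ref{lem:strong_duality_of_kl_divergence} and identify the dual maximizer with an exponentially tilted measure that attains the primal infimum.

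First I would dispose of the degenerate case $\spannorm{V}=0$. Then $V$ is constant on $\Ss$, say $V\equiv c$, so $\innerprod{q}{V}=c$ for every $q\in\Delta(\Ss)$; in particular every $q_{s,a}\in\calP_{s,a}$ is a worst-case measure. The dual objective reduces to $-\alpha\delta+c$, whose supremum over $\alpha\geq 0$ is attained at $\alpha^*=0$ with value $c=\Gamma_{\calP_{s,a}}(V)$, matching the claim.

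For the main case $\spannorm{V}>0$, set $Z(\alpha):=\innerprod{p_{s,a}}{e^{-V/\alpha}}$ and $h(\alpha):=-\alpha\delta-\alpha\log Z(\alpha)$. I would establish three properties of $h$ on $(0,\infty)$: (i) concavity and smoothness, which follow from standard log-sum-exp perspective arguments; (ii) as $\alpha\downarrow 0$, the dominant contribution to $Z(\alpha)$ comes from states $s_\star\in\arg\min_{s\in\mathrm{supp}(p_{s,a})}V(s)$, yielding $-\alpha\log Z(\alpha)\to V(s_\star)$ and, more delicately, $h'(\alpha)\to+\infty$ since the exponential damping concentrates the tilted measure onto $s_\star$; (iii) as $\alpha\to\infty$, a first-order Taylor expansion of $e^{-V/\alpha}$ gives $-\alpha\log Z(\alpha)\to\innerprod{p_{s,a}}{V}$, while $-\alpha\delta\to-\infty$, so $h(\alpha)\to-\infty$. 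Together these ensure that $h$ attains its supremum at some $\alpha^*\in(0,\infty)$ satisfying the first-order condition $h'(\alpha^*)=0$.

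With $\alpha^*>0$ in hand, I would define $p^*$ by the exponential tilt \eqref{equ:kl_worst_case_measure} and verify it is a worst-case measure through two direct calculations:
\[
\innerprod{p^*}{V}=\frac{\innerprod{p_{s,a}}{Ve^{-V/\alpha^*}}}{Z(\alpha^*)},\qquad D_{\mathrm{KL}}(p^*\|p_{s,a})=-\frac{\innerprod{p^*}{V}}{\alpha^*}-\log Z(\alpha^*).
\]
Rewriting the stationarity condition $h'(\alpha^*)=0$ as $\innerprod{p^*}{V}=-\alpha^*\delta-\alpha^*\log Z(\alpha^*)$ then yields, on the one hand, $D_{\mathrm{KL}}(p^*\|p_{s,a})=\delta$, so $p^*\in\calP_{s,a}$, and on the other hand $\innerprod{p^*}{V}=h(\alpha^*)=\Gamma_{\calP_{s,a}}(V)$ by strong duality. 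Hence $p^*$ attains the infimum.

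The main obstacle I anticipate is the boundary analysis at $\alpha=0^+$: rigorously showing $h'(\alpha)\to+\infty$ so that the supremum is not taken at the boundary. This requires isolating the leading exponential in $Z(\alpha)$ and controlling the tilted first-moment term $\innerprod{p_{s,a}}{(V/\alpha)e^{-V/\alpha}}/Z(\alpha)$, which concentrates on $\arg\min_{s\in\mathrm{supp}(p_{s,a})}V(s)$ when $\spannorm{V}>0$. Once this limit is pinned down, the remainder of the argument is a mechanical bookkeeping of the first-order optimality condition.
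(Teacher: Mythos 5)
Your overall route coincides with the paper's: dispose of the constant case, use concavity of the dual objective in $\alpha$ together with the first-order condition to show that the tilted measure $p^*_{s,a}$ satisfies $D_{\mathrm{KL}}(p^*_{s,a}\|p_{s,a})=\delta$ and $\innerprod{p^*_{s,a}}{V}=\Gamma_{\calP_{s,a}}(V)$. That final bookkeeping is correct and is exactly what the paper does (it obtains concavity by computing $\partial^2_\alpha f=-\Var_{p^*_{s,a}}(V)/\alpha^3\le 0$).

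The gap is in your boundary analysis at $\alpha=0^+$, which you rightly flag as the delicate step but resolve with a false limit. Writing $p_\alpha\propto p_{s,a}e^{-V/\alpha}$ for the tilted measure, one has $h'(\alpha)=-\delta+D_{\mathrm{KL}}(p_\alpha\|p_{s,a})$, and as $\alpha\downarrow 0$ the measure $p_\alpha$ converges to the conditional law of $p_{s,a}$ on $U:=\arg\min_{s\in\mathrm{supp}(p_{s,a})}V(s)$, so $h'(\alpha)\to-\delta+\log\bigl(1/p_{s,a}(U)\bigr)$ — a \emph{finite} quantity, not $+\infty$. Hence the maximizer is interior only when $\delta<\log(1/p_{s,a}(U))$; otherwise $\alpha^*=0$ and the worst case is the restriction of $p_{s,a}$ to $U$, precisely the boundary case the paper must treat separately in the $f_k$ analysis (Lemma \ref{lem:fk_divergence_worst_case_measure}). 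To close your argument you must invoke the smallness of $\delta$: when $V$ is non-constant on $\mathrm{supp}(p_{s,a})$ one has $p_{s,a}(U)\le 1-\essinfp$, so $\log(1/p_{s,a}(U))\ge\essinfp>\delta$ under Assumption \ref{ass:limited_adversarial_power}, which is what makes $h'(0^+)>0$. (The paper's own proof also asserts $\lim_{\alpha\to 0}\partial_\alpha f>0$ without spelling this out, so the soft spot is shared; but your proposed fix, $h'(\alpha)\to+\infty$, would make the second bullet hold for every $\delta$, which is false.)
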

\begin{proof}
	From \citet{si_distributionally_2020}, for optimal Lagrange multiplier $\alpha^*$, it sufficient to consider $\alpha\in[0,\delta^{-1}\linftynormess{V}{p_{s,a}}]$.\\
	When $\spannorm{V}=0$, it implies $V$ is a constant over $\mathrm{supp}(p_{s,a})$, and:
	\begin{equation}
		\begin{aligned}
			f(p_{s,a}, V, \alpha) =& -\alpha \delta - \alpha\log\innerprod{p_{s,a}}{e^{-V/\alpha}}\\
			=& -\alpha\delta + \linftynormess{V}{p_{s,a}}.
		\end{aligned}
	\end{equation} 
	Thus, as $f(p_{s,a}, V, \alpha^*) = \sup_{\alpha\geq 0}f(p_{s,a}, V, \alpha)$, $\alpha^* = 0$, and for all $q_{s,a}\in\calP_{s,a}$, $q_{s,a}$ is a worst-case measure since $V$ is a constant function on $\mathrm{supp}(p_{s,a})$. \\
	When $\spannorm{V}\neq 0$, $\alpha^*$ satisfies:
    	\[
    	f(p_{s,a}, V, \alpha^*) = \sup_{[0,\delta^{-1}\linftynormess{V}{p_{s,a}}]} f(p_{s,a}, V, \alpha)
    	\]
    	As $\alpha^*$ is the optimal Lagrange multiplier and $f$ is differentiable, consider the first-order partial derivative with respect to $\alpha$:
    	\[
    	\frac{\partial f(p_{s,a}, V, \alpha)}{\partial\alpha} = -\delta - \frac{\innerprod{p_{s,a}}{\frac{V}{\alpha}e^{-V/\alpha}}}{\innerprod{p_{s,a}}{e^{-V/\alpha}}} - \log\innerprod{p_{s,a}}{e^{-V/\alpha}},
    	\]
    	and $\lim_{\alpha\to 0}\partial_\alpha f >0$. As $\alpha^*$ is the optimal multiplier, $\partial_\alpha f(p_{s,a}, V, \alpha)|_{\alpha = \alpha^*} = 0$, and
    	\[
    	- \frac{\innerprod{p_{s,a}}{\frac{V}{\alpha^*}e^{-V/\alpha^*}}}{\innerprod{p_{s,a}}{e^{-V/\alpha^*}}} - \log\innerprod{p_{s,a}}{e^{-V/\alpha^*}} = \delta
    	\]
    	Further:
    	\begin{equation}
    	\begin{aligned}
    		\frac{\partial^2 f(p_{s,a}, V, \alpha)}{\partial\alpha^2} =& -\frac{\innerprod{p_{s,a}}{V^2e^{-V/\alpha}}}{\alpha^3\innerprod{p_{s,a}}{e^{-V/\alpha}}} + \frac{\innerprod{p_{s,a}}{Ve^{-V/\alpha}}^2}{\alpha^3\innerprod{p_{s,a}}{e^{-V/\alpha}}^2}\\
    		=& -\frac{1}{\alpha^3}\bracket{\frac{\innerprod{p_{s,a}}{V^2e^{-V/\alpha}} \cdot \innerprod{p_{s,a}}{e^{-V/\alpha}}}{\innerprod{p_{s,a}}{e^{-V/\alpha}}^2}  - \frac{\innerprod{p_{s,a}}{Ve^{-V/\alpha}}^2}{\innerprod{p_{s,a}}{e^{-V/\alpha}}^2}}
    	\end{aligned} 
    	\end{equation}
    	Define the measure:
    	\[
    	p_{s,a}^*(\cdot) = \frac{\innerprod{p_{s,a}}{e^{-V/\alpha^*}\mathbbm 1\set{\cdot}}}{\innerprod{p_{s,a}}{e^{-V/\alpha^*}}}
    	\]
    	Thus:
    	\begin{equation}
    		\frac{\partial^2 f(p_{s,a}, V, \alpha)}{\partial\alpha^2} = -\frac{\mathbf{Var}_{p_{s,a}^*}(V)}{\alpha^3}\leq 0
    	\end{equation}
    	Thus $f(p_{s,a}, V, \alpha)$ is concave for $\alpha > 0$. $\alpha^*$ is the unique optimal multiplier where $\alpha^* > 0$ and $p_{s,a}^*$ satisfies:
    	\begin{equation}
    		\begin{aligned}
    			D_\mathrm{KL}(p_{s,a}^*|| p_{s,a}) =& \innerprod{p_{s,a}^*}{\log \frac{p_{s,a}^*}{p_{s,a}}}\\
    			=& \innerprod{p_{s,a}^*}{-\frac{V}{\alpha^*} - \log\innerprod{p_{s,a}}{e^{-V/\alpha^*}}}\\
    			=& -\frac{\innerprod{p_{s,a}}{\frac{V}{\alpha^*}e^{-V/\alpha^*}}}{\innerprod{p_{s,a}}{e^{-V/\alpha^*}}} - \log \innerprod{p_{s,a}}{e^{-V/\alpha}}\\
    			=& \delta.
    		\end{aligned}
    	\end{equation}
    	Therefore, we show $p^*_{s,a}$ is a worst-case measure.
\end{proof}

\begin{lemma}
	\label{lem:kl_dual_differenece_error}
	Let $p_{s,a}$ be the nominal transition kernel, and $\widehat p_{s,a}$ be the empirical transition kernel, then the below inequality holds:
	\begin{equation}
		\left|\sup_{\alpha\geq 0}f(\widehat p_{s,a}, V, \alpha) - \sup_{\alpha\geq 0}f(p_{s,a}, V, \alpha)\right| \leq 3d\cdot\spannorm{V},
	\end{equation}
	on $\Omega_{n,d}$, when $d\leq \frac{1}{2}$
\end{lemma}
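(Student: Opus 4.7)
My plan is to interpolate linearly between the nominal and empirical measures and apply the envelope theorem to the supremum defining $f$. Set $p(t) = (1-t)p_{s,a} + t\widehat p_{s,a}$ for $t\in[0,1]$ and define $g(t,\alpha) = f(p(t),V,\alpha)$ and $h(t) = \sup_{\alpha\geq 0} g(t,\alpha)$. The quantity to bound is $|h(1)-h(0)|$. If $\spannorm{V}=0$ the function $V$ is constant $p_{s,a}$-a.s., $f$ reduces to $-\alpha\delta + V_{\text{const}}$, and both suprema equal $V_{\text{const}}$, so the inequality is trivial. Assume henceforth $\spannorm{V}>0$; Lemma~\ref{lem:kl_worst_case_measure} then gives a unique interior maximizer $\alpha^*(t)>0$.

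The key computation is
\[
\partial_t g(t,\alpha) = -\alpha\,\frac{\langle \Delta p, e^{-V/\alpha}\rangle}{\langle p(t), e^{-V/\alpha}\rangle},\qquad \Delta p := \widehat p_{s,a}-p_{s,a}.
\]
Since $\alpha\mapsto g(t,\alpha)$ is concave with a unique positive maximizer, Danskin's theorem (envelope theorem) yields that $h$ is absolutely continuous with $h'(t) = \partial_t g(t,\alpha^*(t))$ almost everywhere. Applying Lemma~\ref{lem:empirical_measure_difference_to_reference_measure} with exponent $j=1$, signed measure $\Delta p$, and reference measure $p(t)$ (to which $\widehat p_{s,a}$ and $p_{s,a}$ are both absolutely continuous on $\Omega_{n,d}$ since $d<1$) gives, uniformly in $\alpha\geq 0$,
\[
\left|\alpha\,\frac{\langle \Delta p, e^{-V/\alpha}\rangle}{\langle p(t), e^{-V/\alpha}\rangle}\right| \;\leq\; \tfrac{3}{2}\,\spannorm{V}\,\left\|\tfrac{\Delta p}{p(t)}\right\|_{L^\infty(p(t))}.
\]

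It remains to control the essential supremum of the Radon--Nikodym derivative $\Delta p/p(t)$. On $\Omega_{n,d}$ we have $|\Delta p(s')|\leq d\,p_{s,a}(s')$ and $\widehat p_{s,a}(s')\geq (1-d)p_{s,a}(s')$ on $\supp(p_{s,a})$, so $p(t)(s')\geq (1-d)p_{s,a}(s')$, giving
\[
\left\|\tfrac{\Delta p}{p(t)}\right\|_{L^\infty(p(t))} \leq \frac{d}{1-d} \leq 2d
\]
for $d\leq \tfrac12$. Combining these ingredients, $|h'(t)|\leq 3d\,\spannorm{V}$ for almost every $t\in[0,1]$, and integrating yields $|h(1)-h(0)|\leq 3d\,\spannorm{V}$, which is the claim. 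The main subtlety is the justification of the envelope-theorem step and ensuring the Radon--Nikodym derivative with respect to the interpolated measure $p(t)$ is well defined; the uniform lower bound $p(t)\geq (1-d)p_{s,a}$ handles both concerns.
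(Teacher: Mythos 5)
Your proposal is correct and its core is identical to the paper's proof: both interpolate $p(t)=(1-t)p_{s,a}+t\widehat p_{s,a}$, compute $\partial_t g(t,\alpha) = -\alpha\langle \Delta p, e^{-V/\alpha}\rangle/\langle p(t), e^{-V/\alpha}\rangle$, invoke Lemma~\ref{lem:empirical_measure_difference_to_reference_measure} with $j=1$ to get the factor $\tfrac32\spannorm{V}$, and use the lower bound $p(t)\geq \tfrac12 p_{s,a}$ on $\Omega_{n,d}$ to convert $\|\Delta p/p(t)\|$ into $2d$. The only place you diverge is the outer step: the paper uses the elementary inequality $|\sup_\alpha A(\alpha)-\sup_\alpha B(\alpha)|\leq\sup_\alpha|A(\alpha)-B(\alpha)|$ followed by the mean value theorem in $t$ for each fixed $\alpha$, whereas you differentiate the value function $h(t)=\sup_\alpha g(t,\alpha)$ via Danskin's theorem and integrate. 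Your route works, but it carries extra justification burden that you only partially discharge: the claim that $h$ is absolutely continuous with $h'(t)=\partial_t g(t,\alpha^*(t))$ a.e.\ needs either joint regularity hypotheses or the observation that the family $\{g(\cdot,\alpha)\}_\alpha$ is equi-Lipschitz in $t$ (which your own uniform bound supplies). More to the point, once you have the bound $\sup_{\alpha\geq 0}|\partial_t g(t,\alpha)|\leq 3d\,\spannorm{V}$ uniformly in $t$, the envelope theorem is superfluous: each $g(\cdot,\alpha)$ is $3d\,\spannorm{V}$-Lipschitz on $[0,1]$, so their pointwise supremum $h$ is too, and $|h(1)-h(0)|\leq 3d\,\spannorm{V}$ follows immediately. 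That is essentially the paper's (cleaner) route, and I would recommend replacing the Danskin step with it; the handling of the $\spannorm{V}=0$ case and the absolute-continuity bookkeeping for $p(t)$ in your write-up are otherwise fine.
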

\begin{proof}
	Recall the general KL-dual functional under the probability measure $p_{s,a}$, value function $V$, and parameter $\alpha$ is:
	\[
    f(p_{s,a}, V, \alpha) = -\alpha \log\innerprod{p_{s,a}}{e^{-V/\alpha}} - \alpha \delta
    \]
    Then:
    \begin{equation}
    		\left|\sup_{\alpha\geq 0}f(\widehat p_{s,a}, V, \alpha) - \sup_{\alpha\geq 0}f(p_{s,a}, V, \alpha)\right|\leq \sup_{\alpha\geq 0}\left|f(\widehat p_{s,a}, V, \alpha) - f(p_{s,a}, V, \alpha)\right|
    \end{equation}
	Further, consider the difference of $|f_{\widehat p_{s,a}, V, \alpha} - f_{ p_{s,a}, V, \alpha}|$:
    \begin{equation}
        \begin{aligned}
        |f(\widehat{p}_{s,a}, V, \alpha) - f(p_{s,a}, V, \alpha)| &= |g_{s,a}(1, \alpha) - g_{s,a}(0, \alpha)|\\
        =& \left|\frac{\partial g_{s,a}(t, \alpha)}{\partial t} \bigg|_{t = \tau}\right|\quad\text{for some}\quad\tau\in [0,1]\\
        =& \alpha\left|\frac{\innerprod{\bracket{\widehat p_{s,a} - p_{s,a}}}{e^{-V/\alpha}}}{\innerprod{\bracket{p_{s,a}(\tau)}}{e^{-V/\alpha}}}\right|
        \end{aligned}
      \end{equation}
      On $\Omega_{n,d}$ where $d\leq \frac{1}{2}$, we have for all $\tau\in [0,1]$, and $s'\in\mathrm{supp}(p_{s,a})$:
      \begin{equation}
      \label{equ:mean_value_measure_lowerbound}
      p_{s,a}(\tau)(s') = \tau\widehat p_{s,a}(s') + (1-\tau)p_{s,a}(s')\geq \frac{1}{2}p_{s,a}(s')
      \end{equation}
      Apply Lemma \ref{lem:empirical_measure_difference_to_reference_measure}, we have:
      \begin{equation}
            \begin{aligned}
                \sup_{\alpha \geq 0}|f(\widehat p_{s,a}, V, \alpha) - f(p_{s,a}, V, \alpha)| &= \sup_{\alpha\geq 0}\left|\alpha\frac{\innerprod{\bracket{\widehat p_{s,a} - p_{s,a}}}{e^{-V/\alpha}}}{\innerprod{\bracket{p_{s,a}(\tau)}}{e^{-V/\alpha}}}\right|\\
                &\stackrel{(i)}{\leq} \frac{3}{2}\spannorm{V}\linftynormess{\frac{\widehat p_{s,a} - p_{s,a}}{p_{s,a}(\tau)}}{p_{s,a}}\\
                &\stackrel{(ii)}{\leq} 3 \spannorm{V}\linftynormess{\frac{\widehat p_{s,a} - p_{s,a}}{p_{s,a}}}{p_{s,a}}\\
                &\leq 3d\cdot\spannorm{V}
            \end{aligned}
        \end{equation}
        Where inequality $(i)$ is by Lemma \ref{lem:empirical_measure_difference_to_reference_measure}, and the inequality $(ii)$ is by Equation \eqref{equ:mean_value_measure_lowerbound}. The difference of KL-dual functional is bounded by:
        \[
        \sup_{\alpha \geq 0} |f(\widehat p_{s,a}, V, \alpha) - f(p_{s,a}, V, \alpha)| \leq 3d\cdot\spannorm{V}
        \]
        on $\Omega_{n,d}$ when $d\leq \frac{1}{2}$..
\end{proof}

\begin{lemma}
    \label{lem:empirical_operator_error}
        When $n\geq 32\essinfp^{-1}\log(2|\Ss|^2|\Aa|/\beta|)$, then for any $\pi\in\Pi$, the $l_\infty$-error of the emprical DR Bellman operator $\hatT_\gamma^\pi$ and the DR Bellman operator $\T_\gamma^\pi$ can be bounded by:
        \[
        \linftynorm{\hatT_\gamma^\pi (V_{P}^{\pi}) - \T_\gamma^\pi (V_{P}^{\pi})} \leq 9\tmin(P_{\pi})\sqrt{\frac{8}{n\essinfp}\log\frac{2|\Ss|^2|\Aa|}{\beta}}
        \]
        with probability $1-\beta$, where $P_\pi$ is the transition kernel induced by controlled transition kernel $P$ and policy $\pi$.
    \end{lemma}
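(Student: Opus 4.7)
The plan is to reduce the operator difference to a uniform-in-$(s,a)$ concentration bound on the DR functionals applied to the fixed value function $V_P^\pi$, then exploit the span semi-norm control of $V_P^\pi$ afforded by the mixing of $P_\pi$. Since the reward terms cancel in $\hatT_\gamma^\pi - \T_\gamma^\pi$ and $\pi(\cdot|s)$ is a probability distribution over actions, the first reduction is immediate:
$$\linftynorm{\hatT_\gamma^\pi(V_P^\pi) - \T_\gamma^\pi(V_P^\pi)} \leq \gamma \max_{(s,a)\in\Ss\times\Aa} \bigl|\Gamma_{\hatP_{s,a}}(V_P^\pi) - \Gamma_{\calP_{s,a}}(V_P^\pi)\bigr|,$$
so it suffices to control the individual adversarial-functional differences uniformly in $(s,a)$.

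Next, I would apply strong duality (Lemma \ref{lem:strong_duality_of_kl_divergence}) to rewrite each $\Gamma_{\calP_{s,a}}(V_P^\pi)$ and $\Gamma_{\hatP_{s,a}}(V_P^\pi)$ as $\sup_{\alpha\geq 0}f(p_{s,a},V_P^\pi,\alpha)$ and $\sup_{\alpha\geq 0}f(\widehat p_{s,a},V_P^\pi,\alpha)$ respectively, where $f$ is the KL-dual functional defined in \eqref{equ:kl_dual_functional}. On the good event $\Omega_{n,d}$ with $d \leq 1/2$, Lemma \ref{lem:kl_dual_differenece_error} then yields the pivotal inequality
$$\bigl|\Gamma_{\hatP_{s,a}}(V_P^\pi) - \Gamma_{\calP_{s,a}}(V_P^\pi)\bigr| \leq 3 d \cdot \spannorm{V_P^\pi}.$$
The crux of the argument — and the main obstacle, already resolved upstream in Lemma \ref{lem:kl_dual_differenece_error} via the delicate supremum bound of Lemma \ref{lem:empirical_measure_difference_to_reference_measure} — is that the right-hand side depends on the \emph{span} of $V_P^\pi$ rather than its $\linftynorm{\cdot}$ norm. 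This is precisely what avoids a trivial $(1-\gamma)^{-1}$ dependence and enables the mixing-time-based rate.

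Finally, I would apply Proposition \ref{prop:value_function_in_span_seminorm} to bound $\spannorm{V_P^\pi} \leq 3\tmin(P_\pi)$, and invoke Lemma \ref{lem:bernstein_inequality_n_and_d} with
$$d = \frac{1}{3n\essinfp}\log\frac{2|\Ss|^2|\Aa|}{\beta} + \sqrt{\frac{2}{n\essinfp}\log\frac{2|\Ss|^2|\Aa|}{\beta}}$$
to guarantee $P(\Omega_{n,d}) \geq 1-\beta$. Under the hypothesis $n \geq 32\essinfp^{-1}\log(2|\Ss|^2|\Aa|/\beta)$, a direct calculation shows that the linear Bernstein term is absorbed into the square-root term, yielding both $d \leq 1/2$ (as required by Lemma \ref{lem:kl_dual_differenece_error}) and
$$d \leq \sqrt{\frac{8}{n\essinfp}\log\frac{2|\Ss|^2|\Aa|}{\beta}}.$$
Combining the chain of inequalities and using $\gamma \leq 1$ gives the claimed
$$\linftynorm{\hatT_\gamma^\pi(V_P^\pi) - \T_\gamma^\pi(V_P^\pi)} \leq 3 \cdot 3\tmin(P_\pi) \cdot \sqrt{\frac{8}{n\essinfp}\log\frac{2|\Ss|^2|\Aa|}{\beta}} = 9\tmin(P_\pi)\sqrt{\frac{8}{n\essinfp}\log\frac{2|\Ss|^2|\Aa|}{\beta}},$$
with probability at least $1-\beta$.
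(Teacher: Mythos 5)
Your proposal is correct and follows essentially the same route as the paper's proof: reduce to the per-$(s,a)$ difference of DR functionals, invoke strong duality and Lemma \ref{lem:kl_dual_differenece_error} to get the $3d\,\spannorm{V_P^\pi}$ bound on $\Omega_{n,d}$, control the span via Proposition \ref{prop:value_function_in_span_seminorm}, and choose $d$ via the Bernstein-based Lemma \ref{lem:bernstein_inequality_n_and_d} so that $d\leq 1/2$ and $d\leq\sqrt{8(n\essinfp)^{-1}\log(2|\Ss|^2|\Aa|/\beta)}$. All steps and constants match the paper.
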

    \begin{proof}
    First, by Bernstein's inequality, when:
        \[
        n\geq \frac{32}{\essinfp}\log\frac{2|\Ss|^2|\Aa|}{\beta}
        \]
        Then:
        \begin{equation}
            \begin{aligned}
                \left|\frac{\widehat p_{s,a}(s') - p_{s,a}(s')}{p_{s,a}(s')}\right|&\leq \frac{1}{3n\essinfp}\log\frac{2|\Ss|^2|\Aa|}{\beta} + \sqrt{\frac{2}{n\essinfp}\log\frac{2|\Ss|^2|\Aa|}{\beta}}\\
                &\leq \sqrt{\frac{8}{n\essinfp}\log\frac{2|\Ss|^2|\Aa|}{\beta}}\\
                &\leq \frac{1}{2}
            \end{aligned}
        \end{equation}
        with probability at least $1-\beta$ for all $(s,a)\in\Ss\times\Aa$ and $s'\in\mathrm{supp}(p_{s,a})$, thus, let 
        \[
        d = \sqrt{\frac{8}{n\essinfp}\log\frac{2|\Ss|^2|\Aa|}{\beta}}
        \]
        then $d\leq \frac{1}{2}$, and: 
        \[
        P(\Omega_{n, d})\geq 1- \beta.
        \]
        Then we consider the difference of the Bellman operators for particular value function $V_P^\pi$, on $\Omega_{n, d}$
        \begin{equation}
            \begin{aligned}
                \linftynorm{\hatT^\pi_\gamma (V_P^\pi) - \T^\pi_\gamma (V_P^\pi)} =& \max_{s\in \Ss}\left|\sum_{a\in \Aa}\pi(a|s)\bracket{r(s,a) + \gamma \Gamma_{\hatP_{s,a}}(V_P^\pi)} - \sum_{a\in \Aa}\pi(a|s)\bracket{r(s,a) + \gamma \Gamma_{\calP_{s,a}}(V_P^\pi)}\right|\\
                \leq& \gamma \max_{(s,a)\in \Ss\times \Aa}\left|\Gamma_{\hatP_{s,a}}(V_P^\pi) - \Gamma_{\calP_{s,a}}(V_P^\pi)\right|\\
                =& \max_{(s,a)\in \Ss\times \Aa}\left|\sup_{\alpha\geq 0}f(\widehat p_{s,a}, V_P^\pi, \alpha) - \sup_{\alpha\geq 0}f(p_{s,a}, V_P^\pi, \alpha)\right|\\
                \stackrel{(i)}{\leq}& 3 d\cdot\spannorm{V_P^\pi}
            \end{aligned}
        \end{equation}
        on $\Omega_{n, d}$, where $(i)$ is derived by \ref{lem:kl_dual_differenece_error}. Combine Proposition \ref{prop:value_function_in_span_seminorm}, and Lemma \ref{lem:mp_doeblin_condition_is_achievable}, when $P_\pi$ is $(m_\pi, p_\pi)$-Doeblin with $m_\pi/p_\pi = \tmin(P_\pi)$, and $\spannorm{V_P^\pi}\leq 3m_\pi/p_\pi$, thus, we have:
        \begin{equation}
        	\linftynorm{\hatT_\gamma^\pi (V_P^\pi) - \T_\gamma^\pi (V_P^\pi)}\leq \frac{9m_\pi}{p_\pi}\sqrt{\frac{8}{n\essinfp}\log\frac{2|\Ss|^2|\Aa|}{\beta}}
        \end{equation}
        withe probability $1-\beta$. Let:
        \[
        n \geq \frac{32}{\essinfp}\log\frac{2|\Ss|^2|\Aa|}{\beta}.
        \]
		Then with probability at least $1-\beta$, for any $\pi\in\Pi$, the following bound holds:
        \begin{equation}
        	\linftynorm{\hatT^\pi_\gamma (V_P^\pi) - \T^\pi_\gamma (V_P^\pi)} \leq 9\tmin(P_\pi)\sqrt{\frac{8}{n\essinfp}\log\frac{2|\Ss|^2|\Aa|}{\beta}}.
        \end{equation}
        Proved.
    \end{proof}

\section{Sample Complexity Analysis: KL Uncertainty Set}
\label{sec:sample_complexity_analysis_of_the_algorithms_kl_case}

In this section, we prove the sample complexity bound as shown in Theorem \ref{thm:sample_complexity_for_reduction_to_dmdp}, and Theorem \ref{thm:sample_complexity_for_anchored_amdp}

\subsection{DR-DMDP under KL Uncertainty Set}
   	\begin{lemma}
        \label{lem:value_function_decomposition}
        Let $\widehat{\pi}^* = \arg\max_{\pi\in\Pi} V_{\hatP}^{\pi}$, then the following inequality holds:
        \[
        0\leq V_{\calP}^* - V_{\calP}^{\widehat{\pi}^*} \leq 2 \max_{\pi \in \Pi}\linftynorm{V_{\calP}^\pi - V_{\hatP}^\pi}
        \]
    \end{lemma}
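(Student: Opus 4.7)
The plan is to use the standard policy-suboptimality decomposition that routinely appears in approximate value/policy iteration analyses, adapted here to the distributionally robust setting. The left inequality $0 \leq V_{\calP}^* - V_{\calP}^{\widehat{\pi}^*}$ is immediate from the definition $V_{\calP}^*(s) = \max_{\pi\in\Pi} V_{\calP}^\pi(s)$, since $\widehat{\pi}^* \in \Pi$. So the only real work is the right inequality.

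For the upper bound, first fix any state $s$ and let $\pi^* \in \arg\max_{\pi\in\Pi} V_{\calP}^\pi(s)$ denote an optimal policy for the true DR-DMDP. I would insert and subtract $V_{\hatP}^{\pi^*}$ and $V_{\hatP}^{\widehat{\pi}^*}$ to write
\[
V_{\calP}^*(s) - V_{\calP}^{\widehat{\pi}^*}(s)
= \bigl(V_{\calP}^{\pi^*}(s) - V_{\hatP}^{\pi^*}(s)\bigr) + \bigl(V_{\hatP}^{\pi^*}(s) - V_{\hatP}^{\widehat{\pi}^*}(s)\bigr) + \bigl(V_{\hatP}^{\widehat{\pi}^*}(s) - V_{\calP}^{\widehat{\pi}^*}(s)\bigr).
\]
The middle term is nonpositive for every $s$, because $\widehat{\pi}^* \in \arg\max_{\pi\in\Pi} V_{\hatP}^\pi$ implies $V_{\hatP}^{\pi^*}(s) \leq V_{\hatP}^{\widehat{\pi}^*}(s)$. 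Bounding the two remaining differences by their $l_\infty$ norms gives
\[
V_{\calP}^*(s) - V_{\calP}^{\widehat{\pi}^*}(s)
\leq \linftynorm{V_{\calP}^{\pi^*} - V_{\hatP}^{\pi^*}} + \linftynorm{V_{\calP}^{\widehat{\pi}^*} - V_{\hatP}^{\widehat{\pi}^*}}
\leq 2 \max_{\pi\in\Pi}\linftynorm{V_{\calP}^\pi - V_{\hatP}^\pi}.
\]
Taking the maximum over $s$ on the left then yields the claim.

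There is essentially no hard step here; the only subtlety is that $\widehat{\pi}^*$ is defined as an $\arg\max$ of a vector-valued function, so I should briefly note that this maximizer exists and is a single policy achieving the optimum at every state simultaneously. This follows from the standard fact (used already in Proposition \ref{prop:dr_average_bellman_equation_under_uniformly_ergodic}-style arguments and reviewed in Section \ref{sec:preliminaries}) that for DR-DMDPs with SA-rectangular uncertainty sets there exists a stationary deterministic policy that is optimal uniformly in the initial state, so one may take $\widehat{\pi}^*$ to be such a uniformly optimal policy for the empirical model $\hatP$. With this in place the decomposition above goes through verbatim.
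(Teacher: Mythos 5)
Your proposal is correct and is essentially the same argument as the paper's: the paper writes $V_{\calP}^* - V_{\calP}^{\widehat\pi^*} = (\max_\pi V_{\calP}^\pi - \max_\pi V_{\hatP}^\pi) + (V_{\hatP}^{\widehat\pi^*} - V_{\calP}^{\widehat\pi^*})$ and bounds the difference of maxima by the maximum of differences, which is exactly your three-term decomposition with the nonpositive middle term absorbed. Your extra remark about the existence of a uniformly (state-independent) optimal stationary deterministic policy for the SA-rectangular empirical model is a reasonable clarification that the paper leaves implicit.
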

    \begin{proof}
        The left direction of the inequality is trivial. For the right one inequality, we have:
            \begin{equation}
                \begin{aligned}
                    V_{\calP}^* - V_{\calP}^{\widehat{\pi}^*} = & \max_{\pi\in \Pi}V_{\calP}^\pi - V_{\calP}^{\widehat{\pi}^*} \\
                    = & \max_{\pi\in \Pi}V_{\calP}^\pi - \max_{\pi\in \Pi}V_{\hatP}^\pi + V_{\hatP}^{\hat{\pi}^*} - V_{\calP}^{\widehat{\pi}^*}\\
                    \leq&  \linftynorm{\max_{\pi\in\Pi}V_{\calP}^\pi  - \max_{\pi\in\Pi}V_{\hatP}^\pi} + \linftynorm{V_{\hatP}^{\hat{\pi}^*} - V_{\calP}^{\widehat{\pi}^*}}\\
                    \leq & 2\max_{\pi\in \Pi}\linftynorm{V_{\hatP}^\pi - V_{\calP}^\pi}.
                \end{aligned}
            \end{equation}
            Proved.
    \end{proof}

\begin{theorem}[Restatement of Theorem \ref{thm:sample_complexity_for_dmdp}]
\label{thm:sample_complexity_for_dmdp_kl}
        Suppose Assumptions  \ref{ass:bounded_minorization_time}, and \ref{ass:limited_adversarial_power} are in force. Then, for any $n\geq 32\essinfp^{-1}\log(2|\Ss|^2|\Aa|/\beta)$, the policy $\widehat{\pi}^*$ and value function $V_{\hatP}^*$ returned by Algorithm \ref{alg:dr_dmdps} satisfies:
        \begin{equation}
        \label{equ:error_for_dmdp_kl}
        \begin{aligned}
        	0\leq V_{\calP}^* - V_{\calP}^{\widehat\pi^*} \leq& \frac{72\tmin}{1-\gamma}\sqrt{\frac{2}{n\essinfp}\log\frac{2|\Ss|^2|\Aa|}{\beta}}\\
        	\linftynorm{V_{\hatP}^* - V_{\calP}^*} \leq& \frac{36\tmin}{1-\gamma}\sqrt{\frac{2}{n\essinfp}\log\frac{2|\Ss|^2|\Aa|}{\beta}}
        \end{aligned}
        \end{equation}
        with probability $1-\beta$. Consequently, the sample complexity to achieve $\varepsilon$-optimal policy and value function with probability $1-\beta$ is:
        \[
        n\leq \frac{c\cdot |\Ss||\Aa|\tmin^2}{(1-\gamma)^2 \varepsilon^2}\log\frac{2|\Ss|^2|\Aa|}{\beta}
        \]
        where $c = 2\cdot 72^2$, and $c=2\cdot 36^2$ respectively.
\end{theorem}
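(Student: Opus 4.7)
The plan is to decompose both target errors into single-policy quantities and then control each through a concentration bound on the empirical DR Bellman operator applied to $V_{\calP}^\pi$. For the policy suboptimality I would invoke Lemma~\ref{lem:value_function_decomposition} to get $0 \leq V_{\calP}^* - V_{\calP}^{\widehat\pi^*} \leq 2\max_{\pi\in\Pi}\linftynorm{V_{\calP}^\pi - V_{\hatP}^\pi}$, while for the value estimate I would use $V_{\hatP}^* = \max_\pi V_{\hatP}^\pi$ and the triangle-like inequality on the maximum to get $\linftynorm{V_{\hatP}^* - V_{\calP}^*} \le \max_\pi \linftynorm{V_{\hatP}^\pi - V_{\calP}^\pi}$. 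Lemma~\ref{lem:robust_value_function_bound_by_bellman} then converts each of these into $\frac{1}{1-\gamma}\max_\pi \linftynorm{\hatT_\gamma^\pi(V_{\calP}^\pi) - \T_\gamma^\pi(V_{\calP}^\pi)}$, reducing the entire analysis to bounding the Bellman-operator deviation evaluated at $V_\calP^\pi$.

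For that deviation, the reward terms cancel and it suffices to control $|\Gamma_{\hatP_{s,a}}(V_\calP^\pi) - \Gamma_{\calP_{s,a}}(V_\calP^\pi)|$ uniformly in $(s,a)$. By the KL strong duality of Lemma~\ref{lem:strong_duality_of_kl_divergence}, this equals the difference of two suprema of the dual functional $f(\cdot,V_\calP^\pi,\alpha)$, which by Lemma~\ref{lem:kl_dual_differenece_error} is at most $3d \cdot \spannorm{V_\calP^\pi}$ on the event $\Omega_{n,d}$. Taking $d = \sqrt{8/(n\essinfp)\log(2|\Ss|^2|\Aa|/\beta)}$, which satisfies $d \le 1/2$ under the stated sample-size hypothesis, Bernstein's inequality (Lemma~\ref{lem:bernstein_inequality_n_and_d}) ensures $P(\Omega_{n,d}) \ge 1-\beta$.

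The main obstacle is supplying the bound $\spannorm{V_\calP^\pi} = O(\tmin)$, since $V_\calP^\pi$ is not, a priori, a standard MDP value function. My plan is to exploit the existence of a stationary Markovian worst-case adversarial kernel $Q^\star \in \calP$ achieving $V_\calP^\pi = V_{Q^\star}^\pi$ under the SA-rectangular structure. Proposition~\ref{prop:uniform_bound_minorization_time_kl} then yields $\tmin(Q^\star_\pi) \le 2\tmin$, and Proposition~\ref{prop:value_function_in_span_seminorm} gives $\spannorm{V_{Q^\star}^\pi} \le 3 \tmin(Q^\star_\pi) \le 6\tmin$. Substituting back produces the uniform-over-$\pi$ operator bound $18\tmin\sqrt{8/(n\essinfp)\log(2|\Ss|^2|\Aa|/\beta)} = 36\tmin\sqrt{2/(n\essinfp)\log(2|\Ss|^2|\Aa|/\beta)}$.

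Dividing by $1-\gamma$ delivers the value-error bound of \eqref{equ:error_for_dmdp_kl}, and the extra factor $2$ from Lemma~\ref{lem:value_function_decomposition} yields the policy-error bound. The sample-complexity statement follows by setting the right-hand side equal to $\varepsilon$ and solving, noting the standard union-bound-over-state-action-pairs cost already absorbed in the $\log(2|\Ss|^2|\Aa|/\beta)$ term. The only genuinely nontrivial piece is step~(iii); once the span control for $V_\calP^\pi$ is in hand, everything else is a routine assembly of the duality, concentration, and decomposition lemmas established earlier.
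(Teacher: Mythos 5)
Your proposal is correct and follows essentially the same route as the paper: the decomposition via Lemma~\ref{lem:value_function_decomposition} and Lemma~\ref{lem:robust_value_function_bound_by_bellman}, the KL dual-functional concentration of Lemma~\ref{lem:kl_dual_differenece_error} on $\Omega_{n,d}$, and the span control $\spannorm{V_\calP^\pi}\le 6\tmin$ via Propositions~\ref{prop:uniform_bound_minorization_time_kl} and~\ref{prop:value_function_in_span_seminorm}, arriving at the same constants. Your only departure is cosmetic: you make explicit the existence of a stationary worst-case kernel $Q^\star$ with $V_\calP^\pi = V_{Q^\star}^\pi$, which the paper handles implicitly by passing to $\sup_{Q\in\calP}\linftynorm{\hatT_\gamma^\pi(V_Q^\pi)-\T_\gamma^\pi(V_Q^\pi)}$ — your version is, if anything, the more careful justification of that step.
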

    \begin{proof}
        For any $\pi\in \Pi$, as $V_{\calP}^\pi$ is the fixed point to the DR Bellman operator \eqref{equ:dr_discounted_policy_bellman_operator}, where $V_{\calP}^\pi = \T_\gamma^\pi (V_\calP^\pi)$, by Lemma \ref{lem:empirical_operator_error}, when:
        \[
        n\geq \frac{32}{\essinfp}\log\frac{2|\Ss|^2|\Aa|}{\beta},
        \]
        then, with probability $1-\beta$, for all $\pi\in\Pi$:
        \begin{equation}
            \begin{aligned}
                \linftynorm{V_{\hatP}^\pi - V_{\calP}^\pi}\leq& \frac{1}{1-\gamma}\linftynorm{\hatT_{\gamma}^\pi (V_{\calP}^{\pi}) - \T_{\gamma}^\pi (V_{\calP}^\pi)}\\
                \leq& \sup\limits_{Q\in \calP} \frac{1}{1-\gamma}\linftynorm{\hatT_{\gamma}^\pi (V_{Q}^{\pi}) - \T_{\gamma}^\pi (V_{Q}^\pi)}\\
                \stackrel{(i)}{\leq}& \sup_{Q\in \calP}\frac{9\tmin(Q_\pi)}{1-\gamma}\sqrt{\frac{8}{n\essinfp}\log\frac{2|\Ss|^2|\Aa|}{\beta}}\\
                \stackrel{(ii)}{\leq}& \frac{36\tmin}{1-\gamma}\sqrt{\frac{2}{n\essinfp}\log\frac{2|\Ss|^2|\Aa|}{\beta}},
            \end{aligned}
        \end{equation}
        where $(i)$ is derived by Lemma \ref{lem:robust_value_function_bound_by_bellman}, and $(ii)$ relies on Proposition \ref{prop:uniform_bound_minorization_time_kl} where $\tmin(Q_\pi)$ is uniformly bounded for all $Q\in\calP$ and $\pi\in\Pi$. We conclude, when 
        \[
        n\geq \frac{32}{\essinfp}\log\frac{2|\Ss|^2|\Aa|}{\beta},
        \]
        we have
        \begin{equation}
            \begin{aligned}
                0\leq V_{\calP}^* - V_{\calP}^{\widehat\pi^*} \leq& 2\max_{\pi\in \Pi}\linftynorm{V_{\hatP}^\pi - V_{\calP}^\pi}\\
                \leq& \frac{2}{1 - \gamma}\max_{\pi\in \Pi}\linftynorm{\hatT_\gamma^{\pi} (V_{\calP}^{\pi}) - \T_\gamma^{\pi} (V_{\calP}^{\pi})}\\
                \leq& \frac{72\tmin}{1-\gamma}\sqrt{\frac{2}{n\essinfp}\log\frac{2|\Ss|^2|\Aa|}{\beta}}.
            \end{aligned}
        \end{equation}
        with probability $1-\beta$.\\
        Since for value function evaluation:
        \begin{equation}
        	\begin{aligned}
        		\linftynorm{V_{\hatP}^* - V_{\calP}^*} \leq& \max_{\pi\in\Pi}\linftynorm{V_{\hatP}^\pi - V_{\calP}^\pi}\\
        		\leq&\frac{36\tmin}{1-\gamma}\sqrt{\frac{2}{n\essinfp}\log\frac{2|\Ss|^2|\Aa|}{\beta}},
        	\end{aligned}
        \end{equation}
        holds for the same sample complexity condition and high probability guarantee, we prove the Theorem.
   	\end{proof}
   	\begin{remark}
   	In the proof of Theorem \ref{thm:sample_complexity_for_dmdp_kl}, we establish a high probability guarantee for the uniform value function approximation error: $\max_{\pi\in\Pi}\linftynorm{V_{\hatP}^\pi - V_\calP^\pi}\leq \widetilde(O{\tmin(1-\gamma)^{-1}n^{-1/2}})$. Crucially, this uniform bound simultaneously controls both:
   	\begin{itemize}
   		\item The policy gap: $V_{\calP}^* - V_{\calP}^{\widehat \pi^*}$
   		\item The value function approximation error: $\linftynorm{V_{\hatP}^* - V_{\calP}^*}$
   	\end{itemize}
   	via the relationship:
   	\[
   	\text{Both errors} \leq O\bracket{\max_{\pi\in\Pi}\linftynorm{V_{\hatP}^\pi - V_{\calP}^\pi}}
   	\]
   	This simultaneous control ensures the final error guaranee in \eqref{equ:error_for_dmdp_kl} holds without requiring additional division of the confidence parameter $\beta$. The key observation is that our uniform concentration bound on value functions automatically propagates to both policy selection and value estimation errors.
   	\end{remark}
\subsection{Reduction to DR-DMDP Approach under KL Uncertainty Set}
%
\begin{theorem}[Restatement of Theorem \ref{thm:sample_complexity_for_reduction_to_dmdp}]
\label{thm:sample_complexity_for_reduction_to_dmdp_kl}
	Suppose Assumption \ref{ass:bounded_minorization_time}, and \ref{ass:limited_adversarial_power} are in force, then for any
	\[
	n\geq \frac{32}{\essinfp}\log\frac{2|\Ss|^2|\Aa|}{\beta}
	\]
	the policy $\widehat\pi^*$ and value function $\frac{V_{\hatP}^*}{\sqrt{n}}$ returned by Algorithm \ref{alg:reduction_to_dmdp} satisfies:
	\begin{equation}
		\begin{aligned}
			0\leq g_{\calP}^* - g_{\calP}^{\widehat\pi^*}\leq& 96\tmin\sqrt{\frac{2}{n\essinfp}\log\frac{2|\Ss|^2|\Aa|}{\beta}}\\
			\linftynorm{\frac{V_{\hatP}^*}{\sqrt{n}} - g_{\calP}^*}\leq& 48\tmin\sqrt{\frac{2}{n\essinfp}\log\frac{2|\Ss|^2|\Aa|}{\beta}}
		\end{aligned}
	\end{equation}
	with probability $1-\beta$. Hence, the sample complexity of achieving an $\varepsilon$-error in either optimal policy or value estimation is
	\[
	n = \frac{c\cdot \tmin^2}{\essinfp\varepsilon^2}\log\frac{2|\Ss|^2|\Aa|}{\beta}
	\]
	where $c = 2\cdot 96^2$, and $c = 2\cdot 48^2$ repectively.
\end{theorem}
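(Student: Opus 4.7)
The plan is to leverage the DR-DMDP sample complexity guarantee of Theorem~\ref{thm:sample_complexity_for_dmdp_kl} through the reduction built into Algorithm~\ref{alg:reduction_to_dmdp}, combined with a bias bound relating the scaled discounted value $(1-\gamma)V_\calP^\pi$ to the DR average reward $g_\calP^\pi$. Choosing $\gamma = 1 - 1/\sqrt{n}$ balances the two error sources—bias of order $(1-\gamma)\tmin$ and statistical error of order $\tmin\cdot(1-\gamma)^{-1}\sqrt{1/(n\essinfp)}$—at the common rate $\widetilde O(\tmin/\sqrt{n})$.

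The crucial intermediate step is a distributionally robust counterpart to Lemma~\ref{lem:approximate_average_value_function_via_discounted_value_function}: for every $\pi\in\Pi$,
\begin{equation*}
\linftynorm{(1-\gamma)V_\calP^\pi - g_\calP^\pi} \leq 18(1-\gamma)\tmin.
\end{equation*}
To establish this, I would use compactness of $\calP$ under the KL constraint to select two stationary worst-case kernels for the fixed policy $\pi$: $Q_\gamma^*\in \arg\min_{Q\in\calP} V_Q^\pi$ attaining the DR discounted value and $Q_g^*\in \arg\min_{Q\in\calP} g_Q^\pi$ attaining the DR average reward. Proposition~\ref{prop:uniform_bound_minorization_time} guarantees $\tmin((Q_\gamma^*)_\pi),\tmin((Q_g^*)_\pi)\leq 2\tmin$. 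Applying Lemma~\ref{lem:approximate_average_value_function_via_discounted_value_function} to both kernels, together with $g_\calP^\pi\leq g_{Q_\gamma^*}^\pi$ and $V_\calP^\pi\leq V_{Q_g^*}^\pi$, sandwiches $(1-\gamma)V_\calP^\pi - g_\calP^\pi$ between $\pm 18(1-\gamma)\tmin$.

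Given the bias bound, I would decompose the policy error via the triangle inequality as
\begin{equation*}
g_\calP^* - g_\calP^{\widehat\pi^*} \leq \linftynorm{g_\calP^* - (1-\gamma)V_\calP^*} + (1-\gamma)\linftynorm{V_\calP^* - V_\calP^{\widehat\pi^*}} + \linftynorm{(1-\gamma)V_\calP^{\widehat\pi^*} - g_\calP^{\widehat\pi^*}}.
\end{equation*}
The DR bias bound controls the first and third terms by $36(1-\gamma)\tmin = 36\tmin/\sqrt{n}$ under $\gamma = 1 - 1/\sqrt{n}$, while Theorem~\ref{thm:sample_complexity_for_dmdp_kl} bounds $(1-\gamma)\linftynorm{V_\calP^* - V_\calP^{\widehat\pi^*}}$ by $72\tmin\sqrt{(2/(n\essinfp))\log(2|\Ss|^2|\Aa|/\beta)}$. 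Because $\essinfp\leq 1$ and $\log(2|\Ss|^2|\Aa|/\beta)\geq \log 2$, the $36/\sqrt{n}$ bias is dominated by the statistical rate, and the constants consolidate into $96$. The value-function error follows the parallel decomposition
\begin{equation*}
\linftynorm{V_{\hatP}^*/\sqrt{n} - g_\calP^*}\leq (1-\gamma)\linftynorm{V_{\hatP}^* - V_\calP^*} + \linftynorm{(1-\gamma)V_\calP^* - g_\calP^*},
\end{equation*}
yielding the constant $48$ via the $36\tmin/(1-\gamma)$ bound of Theorem~\ref{thm:sample_complexity_for_dmdp_kl}.

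The main obstacle is the DR discount-to-average bias bound, since Lemma~\ref{lem:approximate_average_value_function_via_discounted_value_function} is stated only for fixed transition kernels. The key difficulties are (i) arguing that worst-case kernels can be taken stationary via compactness of $\calP$ under $D_{\mathrm{KL}}$, and (ii) propagating a \emph{uniform} bound on minorization times across $\calP$, which is exactly what Assumption~\ref{ass:limited_adversarial_power} together with Proposition~\ref{prop:uniform_bound_minorization_time} supplies. Once this DR bias lemma is in place, the remainder is a mechanical assembly of triangle inequalities, the DR-DMDP bound, and the choice $\gamma = 1 - 1/\sqrt{n}$.
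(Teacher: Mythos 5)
Your proposal is correct and follows essentially the same route as the paper: the same choice $\gamma = 1-1/\sqrt{n}$, the same key intermediate bound $\linftynorm{(1-\gamma)V_\calP^\pi - g_\calP^\pi}\leq 18(1-\gamma)\tmin$ obtained by pushing Lemma~\ref{lem:approximate_average_value_function_via_discounted_value_function} through (near-)worst-case stationary kernels and the uniform minorization bound of Proposition~\ref{prop:uniform_bound_minorization_time}, and the same consolidation of the $O(\tmin/\sqrt{n})$ bias into the statistical term. The only cosmetic difference is that you invoke Theorem~\ref{thm:sample_complexity_for_dmdp_kl} as a black box for the middle term, whereas the paper unfolds the identical bound inline via $2\max_{\pi\in\Pi}\linftynorm{V_{\hatP}^\pi - V_\calP^\pi}$.
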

\begin{proof}
	Initially, let 
	\begin{align}
	V_{\calP}^* =& \T_{1-\frac{1}{\sqrt{n}}}^*(V_\calP^*) & V_{\hatP}^* =& \T_{1-\frac{1}{\sqrt{n}}}^*(V_{\hatP}^*)\\
	V_{\calP}^\pi =& \T^\pi_{1-\frac{1}{\sqrt{n}}}(V_{\calP}^\pi) & V_{\hatP}^\pi =& \T_{1-\frac{1}{\sqrt{n}}}^\pi(V_{\hatP}^\pi)
	\end{align}
	For the $\widehat \pi^*$ policy evaluation, we have:
        \begin{equation}
        	\begin{aligned}
        		0\leq& g_{\calP}^* - g_{\calP}^{\widehat\pi^*}\\
        		=& g_{\calP}^* - \frac{V_{\calP}^*}{\sqrt{n}}  + \frac{V_{\calP}^*}{\sqrt{n}} - \frac{V_{\hatP}^*}{\sqrt{n}}  + \frac{V_{\hatP}^*}{\sqrt{n}}- \frac{V_{\calP}^{\widehat\pi^*}}{\sqrt{n}}  + \frac{V_{\calP}^{\widehat\pi^*}}{\sqrt{n}} - g_{\calP}^{\widehat\pi^*}\\
        		\leq& \linftynorm{g_{\calP}^* - \frac{V_{\calP}^*}{\sqrt{n}}} + \frac{1}{\sqrt{n}}\linftynorm{V_{\hatP}^* - V_{\calP}^*} + \frac{1}{\sqrt{n}}\linftynorm{V_{\hatP}^{\widehat\pi^*} - V_{\calP}^{\widehat\pi^*}} + \linftynorm{g_{\calP}^{\widehat\pi^*} - \frac{V_{\calP}^{\widehat\pi^*}}{\sqrt{n}}}
        	\end{aligned}
        \end{equation}
        Then, by definition:
        \begin{equation}
        \label{equ:intermediate_result}
        	\begin{aligned}
        		\linftynorm{g_{\calP}^* - \frac{V_{\calP}^*}{\sqrt{n}}} =& \linftynorm{\max_{\pi\in \Pi}g_{\calP}^{\pi} - \max_{\pi\in\Pi}\frac{V_{\calP}^\pi}{\sqrt{n}}} \leq \max_{\pi\in\Pi}\linftynorm{g_{\calP}^\pi - \frac{V_{\calP}^\pi}{\sqrt{n}}}\\
        		\linftynorm{V_{\hatP}^* - V_{\calP}^*} =& \linftynorm{\max_{\pi\in\Pi}V_{\hatP}^\pi - \max_{\pi\in\Pi}V_{\calP}^\pi}\leq \max_{\pi\in\Pi}\linftynorm{V_{\hatP}^\pi - V_{\calP}^\pi}
        	\end{aligned}
        \end{equation}
        Then, we have:
        \begin{equation}
        		0\leq g_{\calP}^* - g_{\calP}^{\widehat\pi^*} \leq 2 \max_{\pi\in\Pi} \linftynorm{g_{\calP}^\pi - \frac{V_{\calP}^\pi}{\sqrt{n}}} + 2\max_{\pi\in\Pi}\frac{1}{\sqrt{n}}\linftynorm{V_{\hatP}^\pi - V_{\calP}^\pi}
        \end{equation}
        For $\frac{V_{\hatP}^*}{\sqrt{n}}$ value evaluation, we have:
        \begin{equation}
        \begin{aligned}
            \linftynorm{\frac{V_{\hatP}^*}{\sqrt{n}} - g_{\calP}^*} =& \linftynorm{\frac{V_{\hatP}^*}{\sqrt{n}} - \frac{V_{\calP}^*}{\sqrt{n}} + \frac{V_{\calP}^*}{\sqrt{n}} - g_{\calP}^*}\\
            \leq & \linftynorm{\frac{V_{\hatP}^*}{\sqrt{n}} - \frac{V_{\calP}^*}{\sqrt{n}}} + \linftynorm{\frac{V_{\calP}^*}{\sqrt{n}} - g_{\calP}^*} \\
            \leq & \frac{1}{\sqrt{n}}\linftynorm{V_{\hatP}^* - V_{\calP}^*} + \linftynorm{\frac{V_{\calP}^*}{\sqrt{n}} - g_{\calP}^*}
        \end{aligned}
        \end{equation}
        Combine \eqref{equ:intermediate_result}, we have:
        \[
        \linftynorm{\frac{V_{\hatP}^*}{\sqrt{n}} - g_{\calP}^*} \leq \max_{\pi\in\Pi} \linftynorm{g_{\calP}^\pi - \frac{V_{\calP}^\pi}{\sqrt{n}}} + \max_{\pi\in\Pi}\frac{1}{\sqrt{n}}\linftynorm{V_{\hatP}^\pi - V_{\calP}^\pi}
        \]
        Therefore, we have:
        \begin{equation}
        	\begin{aligned}
        		0\leq g_{\calP}^* - g_{\calP}^{\widehat\pi^*} \leq& 2 \max_{\pi\in\Pi} \linftynorm{g_{\calP}^\pi - \frac{V_{\calP}^\pi}{\sqrt{n}}} + 2\max_{\pi\in\Pi}\frac{1}{\sqrt{n}}\linftynorm{V_{\hatP}^\pi - V_{\calP}^\pi}\\
        		\linftynorm{\frac{V_{\hatP}^*}{\sqrt{n}} - g_{\calP}^*} \leq& \max_{\pi\in\Pi} \linftynorm{g_{\calP}^\pi - \frac{V_{\calP}^\pi}{\sqrt{n}}} + \max_{\pi\in\Pi}\frac{1}{\sqrt{n}}\linftynorm{V_{\hatP}^\pi - V_{\calP}^\pi}
        	\end{aligned}
        \end{equation}
        We consider the error bound term by term, for simplicity, we denote $\gamma = 1-\gamma$.\\
        \textbf{Step 1: bounding $\linftynorm{g_{\mathcal{P}}^{\pi} - (1-\gamma)V_{\calP}^\pi}$}.\\
        When $0\leq g_{\calP}^{\pi}(s) - (1-\gamma) V_{\calP}^\pi(s)$, then, for any $\varepsilon > 0$, there exist an $P_\varepsilon\in\calP$ such that
        \[
        (1-\gamma)V_{\calP}^\pi(s) + \varepsilon \geq (1-\gamma)V_{P_\varepsilon}^\pi(s)
        \]
        and hence:
        \begin{equation}
            \begin{aligned}
                0&\leq g_{\calP}^\pi(s) - (1-\gamma)V_{\calP}^\pi(s) \\
                &\leq \inf\limits_{Q\in\calP} g_{Q}^\pi(s) -  (1-\gamma)V_{P_\varepsilon}^\pi(s) + \varepsilon \\
                &\leq g_{P_\varepsilon}^\pi(s) - (1-\gamma)V_{P_\varepsilon}^\pi(s) + \varepsilon\\
                &\leq \sup_{Q\in \calP} \linftynorm{g_{Q}^\pi - (1-\gamma) V_{Q}^\pi} + \varepsilon
            \end{aligned}
        \end{equation}
        Taking limit as $\varepsilon\to 0$, we conclude when $0\leq g_{\calP}^\pi(s) - (1-\gamma)V_{\calP}^\pi(s)$:
        \[
        0\leq g_{\calP}^\pi(s) - (1-\gamma)V_{\calP}^\pi(s) \leq \sup_{Q\in \calP}\linftynorm{g_{Q}^\pi - (1-\gamma)V_{Q}^\pi}
        \]
        Similarly, when $0\geq g_{\calP}^\pi (s) - (1-\gamma)V_\calP^\pi(s)$, let consider $P_\varepsilon$ such that:
        \[
        g_{\calP}^\pi(s) + \varepsilon \geq g_{P_\varepsilon}^\pi(s)
        \]
        then we have:
        \begin{equation}
            \begin{aligned}
                0 &\geq g_{\calP}^\pi(s) - (1-\gamma)V_{\calP}^\pi(s)\\
                &\geq g_{P_\varepsilon}^\pi(s) - \varepsilon - \inf\limits_{Q\in\calP}(1-\gamma)V_{Q}^\pi(s)\\
                &\geq g_{P_\varepsilon}^\pi(s) - (1-\gamma)V_{P_\varepsilon}^\pi(s) - \varepsilon\\
                &\geq - \sup_{Q\in \calP}\linftynorm{(1-\gamma)V_{Q}^\pi  - g_{Q}^\pi} -\varepsilon
            \end{aligned}
        \end{equation}
        Taking limit as $\varepsilon\to 0$, we conclude when $0\geq g_\calP^\pi(s) - (1-\gamma)V_\calP^\pi(s)$:
        \[
        0\leq (1-\gamma)V_\calP^\pi(s) - g_{\calP}^\pi(s)\leq \sup\limits_{Q\in\calP}\linftynorm{g_Q^\pi - (1-\gamma)V_Q^\pi}
        \]
        And thus:
        \[
        \left|g_{\calP}^\pi(s) - (1-\gamma)V_\calP^\pi(s)\right|\leq \sup_{Q\in\calP}\linftynorm{g_{Q}^\pi - (1-\gamma)V_{Q}^\pi}\quad \text{for all}\quad s\in \Ss
        \]
        And further:
        \begin{equation}
            \linftynorm{g_{\calP}^\pi - (1-\gamma)V_{\calP}^\pi} \leq \sup_{Q\in \calP}\linftynorm{g_{Q}^\pi - (1-\gamma)V_{Q}^\pi}
        \end{equation}
        Then, by Lemma \ref{lem:approximate_average_value_function_via_discounted_value_function}, over the nominal uncertainty set $\calP$:
        \begin{equation}
            \linftynorm{g_{\calP}^\pi - (1-\gamma)V_{\calP}^\pi} \leq \sup_{Q\in \calP}\linftynorm{g_{Q}^\pi - (1-\gamma)V_{Q}^\pi} \leq 9(1-\gamma)\sup_{Q\in \calP}\tmin(Q_\pi)
        \end{equation}
        By Proposition \ref{prop:uniform_bound_minorization_time_kl}, $\tmin(Q_\pi)$ is uniformly bounded on $\calP\times \Pi$ by $2\tmin$, we have:
        \[
        \linftynorm{g_{\calP}^\pi - (1-\gamma)V_{\calP}^\pi}\leq \sup_{Q\in\calP}9(1-\gamma)\tmin(Q_\pi)\leq 18(1-\gamma)\tmin
        \]
        As the above inequality holds for all $\pi\in\Pi$, further, plug back $\gamma = 1 - \frac{1}{\sqrt{n}}$, we conclude that:
        \begin{equation}
        \max_{\pi\in\Pi}\linftynorm{g_{\calP}^\pi - \frac{V_{\calP}^\pi}{\sqrt{n}}}\leq \frac{18\tmin}{\sqrt{n}}
        \end{equation}
        with probability $1$.\\
        \textbf{Step 2: bounding $\linftynorm{V_{\hatP}^\pi - V_{\calP}^\pi}$}.\\
        By the definition of $V_{\hatP}^\pi$ and $V_{\calP}^\pi$, we know $V_\calP^\pi$ and $V_{\hatP}^\pi$ are the solutions to the DR Bellman equations $V_\calP^\pi = \T_\gamma^\pi(V_\calP^\pi)$, $V_{\hatP}^\pi = \T_\gamma^\pi(V_{\hatP}^\pi)$ thus, by Lemma \ref{lem:robust_value_function_bound_by_bellman}: 
        \begin{equation}
            \linftynorm{V_{\hatP}^\pi - V_{\calP}^\pi}\leq \frac{1}{1-\gamma}\linftynorm{\hatT^\pi_\gamma (V_{\calP}^\pi) - \T^\pi_\gamma (V_{\calP}^\pi)}
        \end{equation}
        Combine Lemma \ref{lem:empirical_operator_error}, and Proposition \ref{prop:uniform_bound_minorization_time} we have when: 
        \[
        n\geq\frac{32}{\essinfp}\log\frac{2|\Ss|^2|\Aa|}{\beta},
        \]
        then, with probability at least $1-\beta$, for all $\pi\in\Pi$.
        \begin{equation}
            \begin{aligned}
                \linftynorm{V_{\hatP}^\pi - V_{\calP}^\pi} &\leq \frac{1}{1-\gamma}\linftynorm{\hatT_\gamma^\pi (V_{\calP}^\pi) - \T_\gamma^\pi (V_{\calP}^\pi)}\\
                & \leq \frac{18\tmin}{1-\gamma}\sqrt{\frac{8}{n\essinfp}\log\frac{2|\Ss|^2|\Aa|}{\beta}}.
            \end{aligned}
        \end{equation}
        Further, with the choice of $\gamma=1-\frac{1}{\sqrt{n}}$, we conclude when $n\geq \frac{32}{\essinfp}\log\frac{2|\Ss|^2|\Aa|}{\beta}$, then:
        \[
        \max_{\pi\in\Pi}\frac{1}{\sqrt{n}}\linftynorm{V_{\hatP}^\pi - V_{\calP}^\pi}\leq18\tmin\sqrt{\frac{8}{n\essinfp}\log\frac{2|\Ss|^2|\Aa|}{\beta}}.
        \]
        with probability $1-\beta$.\\
        \textbf{Step 3: combining the previous results.}
        For $\widehat \pi^*$ policy evaluation:
        \begin{equation}
            \begin{aligned}
                g_{\calP}^* - g_{\calP}^{\widehat{\pi}} \leq& 2 \max_{\pi\in\Pi} \linftynorm{g_{\calP}^\pi - \frac{V_{\calP}^\pi}{\sqrt{n}}} + 2\max_{\pi\in\Pi}\frac{1}{\sqrt{n}}\linftynorm{V_{\hatP}^\pi - V_{\calP}^\pi}\\
                \leq& \frac{36\tmin}{\sqrt{n}} + 36\tmin\sqrt{\frac{8}{n\essinfp}\log\frac{2|\Ss|^2|\Aa|}{\beta}}\\
                =&\frac{36\tmin}{\sqrt{n}}\bracket{1+ \sqrt{\frac{8}{\essinfp}\log\frac{2|\Ss|^2|\Aa|}{\beta}}}\\
                \leq & 96\tmin\sqrt{\frac{2}{n\essinfp}\log\frac{2|\Ss|^2|\Aa|}{\beta}}
            \end{aligned}
        \end{equation}
        wth probability $1-\beta$. Where the last inequality uses the trival bounds where $\essinfp\leq \frac{1}{2}$, and $|\Ss|, |\Aa|\geq 1$. Thus, when $n =  \frac{2\cdot96^2\tmin^2}{\essinfp\varepsilon^2}$, the policy $\widehat\pi^*$ satisfies:
        \begin{equation}
		\begin{aligned}
			0\leq g_{\calP}^* - g_{\calP}^{\widehat\pi^*}\leq& 96\tmin\sqrt{\frac{2}{n\essinfp}\log\frac{2|\Ss|^2|\Aa|}{\beta}}\\
			\linftynorm{\frac{V_{\hatP}^*}{\sqrt{n}} - g_{\calP}^*}\leq& 48\tmin\sqrt{\frac{2}{n\essinfp}\log\frac{2|\Ss|^2|\Aa|}{\beta}}
		\end{aligned}
	\end{equation}
	simultaneously with probability $1-\beta$. The sample complexity of achieving an $\varepsilon$-error in either optimal policy or value value estimation is:
	\[
	n = \frac{c\cdot\tmin^2}{\essinfp\varepsilon^2}\log\frac{2|\Ss|^2|\Aa|}{\beta}
	\]
	where $c = 2\cdot 96^2$ and $c = 2\cdot 48^2$ repectively. Recall the minorization time is equivalent to mixing time, the total sample used is:
        \[
        N = |\Ss||\Aa|n = O\bracket{\frac{|\Ss||\Aa|\tmin^2}{\essinfp \varepsilon^2}\log \frac{|\Ss|^2|\Aa|}{\beta}} = \widetilde O\bracket{\frac{|\Ss||\Aa|\tmix^2}{\essinfp\varepsilon^2}}.
        \]
        Proved.
\end{proof}
\begin{remark}
	Notice the error guarantee relies on the relative error between $\widehat p_{s,a}$ and $p_{s,a}$ is less or equal thant $\frac{1}{2}$, hence $P(\Omega_{n, \frac{1}{2}}^c)< \beta$, thus, the lowerbound of $n$ is $\widetilde\Omega\bracket{1/\essinfp}$.
\end{remark}

\subsection{Anchored DR-AMDP Approach under KL Uncertainty Set}
This section analyzes the anchored algorithm and establishes a sample complexity upper bound for the anchored DR-AMDP Algorithm \ref{alg:anchored_amdp}. Key to our analysis is the insight that the anchored DR-AMDP can be identified with a DR-DMDP with discount factor $\gamma = 1-\xi$ where $\xi$ is the anchoring parameter.

Fix $s_0\in \Ss$, recall that
\[
\underline{\calP}_{s,a} = \set{(1-\xi)p+ \xi \mathbf{1}e_{s_0}^\top, p\in\calP_{s,a}}\quad\text{ and }\quad \underline{\calP} = \bigtimes _{(s,a)\times \Ss\times \Aa}\underline{\calP}_{s,a}.
\]

We consider the anchored DR average Bellman equation
\begin{equation}
\label{equ:anchored_dr_average_bellman_equation}
    v(s) = \max_{a\in \Aa} \set{r(s,a) + \Gamma_{\underline\calP_{s,a}}(v)} - g(s) \quad \forall s\in \Ss. 
\end{equation}

\begin{lemma}
    \label{lem:anchored_amdp_bellman_equation}
    Assume that $\calP$ is uniformly ergodic. Let $V_{\calP}^*$ be the solution to the DR Bellman equation with discounted parameter $\gamma = 1- \xi$:
    \[
    V_\calP^*(s) = \max_{a\in \Aa} \set{r(s,a) + \bracket{1-\xi}\Gamma_{\calP_{s,a}}(V_\calP^*)}\quad\forall  s\in \Ss. 
    \]
    Then, $(g, v) = \bracket{\xi V_{\calP}^*(s_0), V_{\calP}^*}$
    is a solution pair to the anchored DR average Bellman equation \eqref{equ:anchored_dr_average_bellman_equation}. Moreover, for all solution pairs $(g',v')$ to \eqref{equ:anchored_dr_average_bellman_equation}, $g' \equiv \xi V_{\calP}^*(s_0)$. 
\end{lemma}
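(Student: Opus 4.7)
The plan is to exploit the fact that the anchoring operation is a deterministic affine perturbation of the uncertainty set, so the DR functional decomposes cleanly. Specifically, for any $V:\Ss\to\R$ and any $(s,a)$,
\[
\Gamma_{\underline{\calP}_{s,a}}(V) \;=\; \inf_{p\in\calP_{s,a}}\innerprod{(1-\xi)p + \xi e_{s_0}}{V} \;=\; (1-\xi)\,\Gamma_{\calP_{s,a}}(V) + \xi V(s_0),
\]
since $\xi V(s_0)$ is a constant offset independent of $p$. This identity is the only structural fact needed to connect the discounted and anchored average-reward Bellman equations.

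With this identity in hand, verifying that $(g,v) = (\xi V_{\calP}^*(s_0),\, V_{\calP}^*)$ solves \eqref{equ:anchored_dr_average_bellman_equation} is a one-line substitution. Plugging in,
\[
\max_{a\in\Aa}\set{r(s,a) + \Gamma_{\underline{\calP}_{s,a}}(V_{\calP}^*)} - g
\;=\; \max_{a\in\Aa}\set{r(s,a) + (1-\xi)\Gamma_{\calP_{s,a}}(V_{\calP}^*)} + \xi V_{\calP}^*(s_0) - \xi V_{\calP}^*(s_0),
\]
which equals $V_{\calP}^*(s)$ by the DR discounted Bellman equation with $\gamma = 1-\xi$. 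This establishes existence of the claimed solution pair.

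For the uniqueness claim on $g'$, the strategy is to invoke Proposition \ref{prop:dr_average_bellman_equation_under_uniformly_ergodic}, which says that whenever the uncertainty set is uniformly ergodic with a uniformly bounded minorization time, every solution of the average-reward DR Bellman equation has the same constant $g'\equiv g_{\underline{\calP}}^*$. The verification of this hypothesis for $\underline{\calP}$ is immediate rather than obstructive: every $\underline{Q}\in\underline{\calP}$ and every $\pi\in\Pi$ satisfies $\underline{Q}_\pi(s,s_0)\geq \xi$ for all $s$, so $\underline{Q}_\pi$ is $(1,\xi)$-Doeblin with $\psi=e_{s_0}$, giving $\tmin(\underline{Q}_\pi)\leq 1/\xi$ uniformly. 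Applying the proposition yields $g'\equiv g_{\underline{\calP}}^*$, and matching against the explicit solution already constructed gives $g_{\underline{\calP}}^* = \xi V_{\calP}^*(s_0)$.

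The only mildly delicate step is the decomposition identity for $\Gamma_{\underline{\calP}_{s,a}}$, which must be justified by noting that the infimum over $\underline{\calP}_{s,a}$ is taken over the set $\{(1-\xi)p + \xi e_{s_0}: p\in\calP_{s,a}\}$ and that linearity of $\innerprod{\cd}{V}$ lets the constant part pull outside the infimum; aside from this, the proof is essentially a substitution argument combined with a direct citation of the anchored uncertainty set's built-in Doeblin structure.
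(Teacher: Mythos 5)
Your proof is correct and follows the same skeleton as the paper's: verify the candidate pair by substitution using the decomposition $\Gamma_{\underline{\calP}_{s,a}}(V)=(1-\xi)\Gamma_{\calP_{s,a}}(V)+\xi V(s_0)$, then obtain uniqueness of $g'$ by invoking Proposition \ref{prop:dr_average_bellman_equation_under_uniformly_ergodic} once $\underline{\calP}$ is shown to be uniformly ergodic with uniformly bounded minorization time. The one place you diverge is the ergodicity verification. The paper takes the $(m_\pi,p_\pi)$-Doeblin condition of each $Q_\pi$ (via Lemma \ref{lem:mp_doeblin_condition_is_achievable}) and propagates it through the anchoring, showing $\underline{Q}_\pi^{m_\pi}\geq(1-\xi)^{m_\pi}Q_\pi^{m_\pi}$ and hence that $\underline{Q}_\pi$ is $(m_\pi,(1-\xi)^{m_\pi}p_\pi)$-Doeblin; this keeps the minorization time of the anchored kernel tied to $\tmin(Q_\pi)$, which is the bound the sample-complexity analysis elsewhere actually uses. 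You instead observe that the anchor alone forces $\underline{Q}_\pi(s,s_0)\geq\xi$ for every $s$, so $\underline{Q}_\pi$ is $(1,\xi)$-Doeblin with minorization measure $e_{s_0}$ and $\tmin(\underline{Q}_\pi)\leq 1/\xi$ uniformly. Your argument is more elementary, does not need the hypothesis that $\calP$ itself is uniformly ergodic for this step, and fully suffices to license the proposition for the purposes of this lemma; the trade-off is only that the resulting minorization bound ($1/\xi=\sqrt{n}$ under the algorithm's choice of $\xi$) is not the $\tmin$-dependent one, but that is irrelevant here since the lemma only requires some uniform bound. Both routes are valid.
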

\begin{proof}
    As $\calP$ is uniformly ergodic, for all $Q\in\calP$, $Q$ is uniformly ergodic. Thus, for $\underline Q\in\underline\calP$, by Lemma \ref{lem:mp_doeblin_condition_is_achievable}, for any $\pi\in\Pi$ there exists an $(m_\pi, p_\pi)$ pair such that $\frac{m_\pi}{p_\pi} = \tmin(Q_\pi)$. As for all $\underline Q\in\underline{\calP}$, $\pi\in\Pi$, and $(s_0,s_{m_\pi})\in\Ss\times \Ss$:
	\begin{equation}
		\begin{aligned}
			{\underline {Q}}_\pi^{m_\pi}(s_0, s_{m_\pi}) =& \bracket{(1-\xi)Q_\pi + \xi\mathbf 1 e_{s_0}^\top}^{m_\pi}(s_0, s_{m_\pi})\\
			\geq& (1-\xi)^{m_\pi} Q^{m_\pi}_{\pi}(s_0, s_{m_\pi})
		\end{aligned}
	\end{equation}
	Since $Q_\pi$ satisfies $(m_\pi, p_\pi)$-Doeblin condition, it follows that ${\underline Q}$ satifies $(m_\pi, (1-\xi)^{m_\pi}p_\pi)$-Doeblin condition. Consequently, $\underline Q$ is uniformly ergodic, which further implies that $\underline {\calP}$ is uniformly ergodic. Given the uniform ergodicity of $\underline{\calP}$, by Proposition \ref{prop:dr_average_bellman_equation_under_uniformly_ergodic}, if $(g, v)$ is a pair of the solutions to the anchored DR average Bellman equation:
	\[
	v(s) = \max_{a\in \Aa} \set{r(s,a) + \Gamma_{\underline{\calP}_{s,a}}(v)} - g(s)\quad\forall s\in \Ss
	\]
	Then $g = g_{\underline{\calP}}^*$ is unique. Next, we show $(g, v) = (\xi V^*_{\calP}(s_0), V^*_{\calP})$ is the solution to the anchored DR average Bellman equation:
	\begin{equation}
        \begin{aligned}
            &\max_{a\in\Aa}\set{r(s,a) +\Gamma_{\underline{\calP}}(v)} - g(s)\\
            =& \max_{a\in\Aa}\set{r(s,a) +\Gamma_{\underline{\calP}}(V_\calP^*)} - \xi V_\calP^*(s_0)\\
            =& \max_{a\in\Aa}\set{r(s,a) + \inf_{p\in \calP_{s,a}}\innerprod{((1-\xi)p + \xi\mathbf 1 e_{s_0}^\top)}{V_{\calP}^*}} - \xi V_\calP^*(s_0)\\
            =&\max_{a\in \Aa}\set{r(s,a) + (1-\xi)\inf_{p\in \calP_{s,a}}\innerprod{p}{V_{\calP}^*} + \xi V_{\calP}^*(s_0)} - \xi V_\calP^*(s_0)\\
            =&\max_{a\in \Aa}\set{r(s,a) + (1-\xi)\Gamma_{\calP_{s,a}}(V_\calP^*)}\\
            =& V_{\calP}^*(s)
        \end{aligned}
    \end{equation}
    Thus, we show $(\xi V_\calP^*(s_0), V_\calP^*)$ is a pair of solution to Equation \eqref{equ:anchored_dr_average_bellman_equation}, combine Lemma \ref{prop:dr_average_bellman_equation_under_uniformly_ergodic}, we know $g \equiv \xi V_{\calP}^*(s_0)$ is unique, where $V_\calP^*$ is the optimal value function of DR discounted Bellman operator \eqref{equ:dr_discounted_optimal_bellman_operator} with parameter $1-\xi$.
\end{proof}

The above Lemma \ref{lem:anchored_amdp_bellman_equation} holds for any uncertainty set $\calP$, hence $(g, v) = (\xi V_{\calP}^*(s_0), V_{\calP}^*)$ and $(g, v) = (\xi V_{\hatP}^*(s_0), V_{\hatP}^*)$ are the solutions to the anchored DR average Bellman equation \eqref{equ:dr_average_optimal_bellman_equation} and empirical anchored DR average Bellman equation \eqref{equ:empirical_dr_average_optimal_bellman_equation} respectively:
\begin{align}
	v(s) =& \max_{a\in \Aa} \set{r(s,a) + \Gamma_{\underline\calP_{s,a}}(v)} - g(s) \quad \forall s\in \Ss \label{equ:dr_average_optimal_bellman_equation}\\
	v(s) =& \max_{a\in \Aa} \set{r(s,a) + \Gamma_{\underline{\hatP}_{s,a}}(v)} - g(s) \quad \forall s\in \Ss \label{equ:empirical_dr_average_optimal_bellman_equation}
\end{align}
Similarly, the equivalent also holds for the DR policy equations:
\begin{lemma}
\label{lem:anchored_amdp_policy_bellman_equation}
	If $\calP$ is uniformly ergodic, let $V_{\calP}^\pi$ be the solution to the DR discounted policy equation with $\gamma = 1-\xi$:
	\[
	V_{\calP}^\pi = \sum_{a\in\Aa}\pi(a|s)\bracket{r(s,a) + (1-\xi)\Gamma_{\calP_{s,a}}(V_{\calP}^\pi)}\quad\forall s\in\Ss
	\]
	Then, 
	\begin{equation}
	\label{equ:policy_solution_pair}
	(g,v) = (\xi V_{\calP}^\pi(s_0), V_{\calP}^\pi)
	\end{equation}
	is a solution pair of the anchored DR average policy Bellman equation:
	\begin{equation}
	\label{equ:anchored_dr_policy_bellman_equation}
		v(s) = \sum_{a\in\Aa}\pi(a|s)\bracket{r(s,a) + \Gamma_{\underline{\calP}_{s,a}}(v)} - g(s)\quad\forall s\in \Ss.
	\end{equation}
        Moreover, for all solution pairs $(g',v')$ to \eqref{equ:anchored_dr_policy_bellman_equation}, $g' \equiv \xi V_{\calP}^\pi(s_0)$. 
\end{lemma}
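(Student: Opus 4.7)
The plan is to mirror the structure of Lemma~\ref{lem:anchored_amdp_bellman_equation}, substituting the $\max_{a\in\Aa}$ with the policy-weighted sum $\sum_{a\in\Aa}\pi(a|s)$. The argument decomposes into three components: (i) establishing the uniform ergodicity of $\underline{\calP}$, (ii) verifying directly that the proposed pair in \eqref{equ:policy_solution_pair} satisfies \eqref{equ:anchored_dr_policy_bellman_equation}, and (iii) establishing uniqueness of the gain $g$.

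For step (i), the argument from the proof of Lemma~\ref{lem:anchored_amdp_bellman_equation} applies verbatim: for any $\underline Q\in\underline{\calP}$ and $\pi\in\Pi$, if $Q_\pi$ satisfies the $(m_\pi, p_\pi)$-Doeblin condition, then $\underline Q_\pi = (1-\xi)Q_\pi + \xi\mathbf{1}e_{s_0}^\top$ satisfies the $(m_\pi, (1-\xi)^{m_\pi} p_\pi)$-Doeblin condition, so $\tmin(\underline Q_\pi)<\infty$ uniformly over $\underline{\calP}\times\Pi$. For step (ii), the key identity is that for any $p\in\calP_{s,a}$, writing $\underline p = (1-\xi)p + \xi\mathbf{1}e_{s_0}^\top$, one has $\innerprod{\underline p}{V_{\calP}^\pi} = (1-\xi)\innerprod{p}{V_{\calP}^\pi} + \xi V_{\calP}^\pi(s_0)$. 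Since the additive $\xi V_{\calP}^\pi(s_0)$ term is constant in $p$, infimizing over $\calP_{s,a}$ gives $\Gamma_{\underline{\calP}_{s,a}}(V_{\calP}^\pi) = (1-\xi)\Gamma_{\calP_{s,a}}(V_{\calP}^\pi) + \xi V_{\calP}^\pi(s_0)$. Plugging this into the right-hand side of \eqref{equ:anchored_dr_policy_bellman_equation} with $g\equiv \xi V_{\calP}^\pi(s_0)$ and invoking the DR discounted policy Bellman equation $V_{\calP}^\pi(s) = \sum_{a}\pi(a|s)[r(s,a) + (1-\xi)\Gamma_{\calP_{s,a}}(V_{\calP}^\pi)]$, the anchoring terms telescope and the right-hand side reduces to $V_{\calP}^\pi(s)$.

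For step (iii), I would reduce the policy Bellman equation to a one-action optimal Bellman equation by restricting the action space at each $s\in\Ss$ to the singleton $\set{\pi(s)}$, which is valid since $\pi\in\Pi$ is stationary, Markov, and deterministic. The resulting restricted DR-AMDP inherits uniform ergodicity with uniformly bounded minorization time from step (i), since each $(m_\pi,(1-\xi)^{m_\pi}p_\pi)$-Doeblin parameter is controlled by $\maxm$ and $\xi$. Proposition~\ref{prop:dr_average_bellman_equation_under_uniformly_ergodic} then applies and forces any solution pair $(g',v')$ of \eqref{equ:anchored_dr_policy_bellman_equation} to have $g'$ constant in $s$ and equal to $g_{\underline{\calP}}^\pi$; by the existence established in step (ii), this constant must coincide with $\xi V_{\calP}^\pi(s_0)$. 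I expect step (iii) to be the main subtlety, since it requires verifying that the hypotheses of Proposition~\ref{prop:dr_average_bellman_equation_under_uniformly_ergodic} carry over to the restricted single-action DR-AMDP, but this follows directly from step (i). Steps (i) and (ii) are routine and closely parallel the previous lemma's proof.
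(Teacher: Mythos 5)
Your proposal is correct, and steps (i) and (ii) coincide exactly with what the paper does: the paper likewise verifies the candidate pair by expanding $\Gamma_{\underline{\calP}_{s,a}}(V_{\calP}^\pi) = (1-\xi)\Gamma_{\calP_{s,a}}(V_{\calP}^\pi) + \xi V_{\calP}^\pi(s_0)$ and cancelling the anchoring term against $g(s)=\xi V_{\calP}^\pi(s_0)$ using $\sum_a\pi(a|s)=1$. The only genuine divergence is step (iii). The paper dispatches uniqueness of the gain in one line by citing the policy-evaluation uniqueness result (Theorem 6 of \citet{wang_robust_2023}) for $g_{\underline{\calP}}^\pi$, whereas you reduce the policy equation to the optimal Bellman equation of the single-action MDP with $\Aa_s=\set{\pi(s)}$ and invoke Proposition~\ref{prop:dr_average_bellman_equation_under_uniformly_ergodic}. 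Your reduction is legitimate: since $\pi$ is deterministic, the policy-weighted sum is a degenerate max, the restricted uncertainty set $\bigtimes_s\calP_{s,\pi(s)}$ remains SA-rectangular, and the $(m_\pi,(1-\xi)^{m_\pi}p_\pi)$-Doeblin bound from step (i) gives the uniformly bounded minorization time that the proposition requires. What your route buys is self-containment — it reuses a result the paper already restates and proves in its appendix rather than importing a separate external theorem — at the cost of a slightly longer argument; the conclusions are identical.
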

\begin{proof}
	The proof is similar with Lemma \ref{lem:anchored_amdp_bellman_equation}. As $g=g_{\underline\calP}^\pi$ is unique, we only need to show \eqref{equ:policy_solution_pair} is a solution pair:
	\begin{equation}
		\begin{aligned}
			&\sum_{a\in\Aa}\pi(a|s)\bracket{r(s,a) + \Gamma_{\underline{\calP}_{s,a}}(v)} - g(s)\\
			=&\sum_{a\in\Aa}\pi(a|s)\bracket{r(s,a) + \Gamma_{\underline{\calP}_{s,a}}(V_{\calP}^\pi)} - \xi V_{\calP}^\pi(s_0)\\
			=&\sum_{a\in\Aa}\pi(a|s)\bracket{r(s,a) + (1-\xi)\Gamma_{\calP_{s,a}}(V_{\calP}^\pi)} + \sum_{a\in\Aa}\pi(a|s)\xi V_{\calP}^\pi(s_0) - \xi V_{\calP}^\pi(s_0)\\
			=& \sum_{a\in\Aa}\pi(a|s)\bracket{r(s,a) + (1-\xi)\Gamma_{\calP_{s,a}}(V_{\calP}^\pi)}\\
			=& V_{\calP}^\pi
		\end{aligned}
	\end{equation}
	Thus, by Theorem 6 in \citet{wang_robust_2023}, we show $(\xi V_{\calP}^\pi(s_0), V_{\calP}^\pi)$ is a pair of solution to Equation \eqref{equ:anchored_dr_policy_bellman_equation}, where $g=\xi V_\calP^\pi(s_0)$ is unique. 
\end{proof}

With these auxiliary result, we present our proof to the following main result. 
\begin{theorem}[Restatement of Theorem \ref{thm:sample_complexity_for_anchored_amdp}]
\label{thm:sample_complexity_for_anchored_amdp_kl}
    Suppose Algorithm \ref{alg:anchored_amdp} is in force. Then for any:
    \[
    n \geq \frac{32}{\essinfp}\log\frac{2|\Ss|^2|\Aa|}{\beta}
    \]
    The output policy $\widehat\pi^*$ and approximate average value function $g_{\underline{\hatP}}^*$ satisfies:
    \begin{equation}
    \begin{aligned}
    	0\leq g_{\calP}^* - g_{\calP}^{\widehat\pi^*}\leq & 96\tmin\sqrt{\frac{2}{n\essinfp}\log\frac{2|\Ss|^2|\Aa|}{\beta}}\\
    	\linftynorm{g_{\underline{\hatP}}^* - g_{\calP}^*}\leq & 48\tmin\sqrt{\frac{2}{n\essinfp}\log\frac{2|\Ss|^2|\Aa|}{\beta}}
    \end{aligned}
    \end{equation}
    with probability at least $1-\beta$. Hence, the sample complexity of achieving an $\varepsilon$-error in both optimal policy and value estimation is
    \[
    n = \frac{c\cdot\tmin^2}{\essinfp\varepsilon^2}\log\frac{2|\Ss|^2|\Aa|}{\beta}. 
    \]
    where $c = 2\cdot 96^2$ and $c = 2\cdot 48^2$ repectively.
\end{theorem}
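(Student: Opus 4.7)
The plan is to reduce Theorem \ref{thm:sample_complexity_for_anchored_amdp_kl} entirely to Theorem \ref{thm:sample_complexity_for_reduction_to_dmdp_kl}, exploiting the structural correspondence between the anchored DR-AMDP at anchoring strength $\xi = 1/\sqrt{n}$ and the DR-DMDP at discount factor $\gamma = 1 - 1/\sqrt{n}$. By Lemma \ref{lem:anchored_amdp_bellman_equation} applied to both the population uncertainty set $\calP$ and the empirical set $\hatP$, the pairs $(g_{\underline{\calP}}^*, v_{\underline{\calP}}^*) = (\xi V_{\calP}^*(s_0), V_{\calP}^*)$ and $(g_{\underline{\hatP}}^*, v_{\underline{\hatP}}^*) = (\xi V_{\hatP}^*(s_0), V_{\hatP}^*)$ solve the respective anchored DR average Bellman equations, where $V_{\calP}^*, V_{\hatP}^*$ are the DR-DMDP optimal value functions at discount $\gamma = 1 - \xi$. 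In particular, $g_{\underline{\hatP}}^*$ is the constant function equal to $V_{\hatP}^*(s_0)/\sqrt n$.

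Next, I would verify that the policy $\widehat\pi^*$ extracted in Algorithm \ref{alg:anchored_amdp} coincides with the one that Algorithm \ref{alg:reduction_to_dmdp} produces at $\gamma = 1 - 1/\sqrt n$. The key identity is that for any $v\in\R^{\Ss}$ and any $(s,a)$,
\[
\Gamma_{\underline{\hatP}_{s,a}}(v) = (1-\xi)\,\Gamma_{\hatP_{s,a}}(v) + \xi\, v(s_0),
\]
obtained by inserting $(1-\xi)p + \xi\mathbf 1 e_{s_0}^\top$ and taking the infimum over $p\in\hatP_{s,a}$. Since $\xi v_{\underline{\hatP}}^*(s_0)$ is independent of $a$ and $(1-\xi) > 0$, the argmax defining $\widehat\pi^*$ in the anchored algorithm equals that defining the greedy policy for $r(s,a) + (1-\xi)\Gamma_{\hatP_{s,a}}(V_{\hatP}^*)$, which is exactly the policy output of Algorithm \ref{alg:reduction_to_dmdp}.

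With the two identifications in place, the policy suboptimality bound $0 \leq g_\calP^* - g_\calP^{\widehat\pi^*}$ is precisely the one proved in Theorem \ref{thm:sample_complexity_for_reduction_to_dmdp_kl}, since the two algorithms produce the same policy. For the value-estimation bound, since $g_{\underline{\hatP}}^*$ is constant in $s$,
\[
\linftynorm{g_{\underline{\hatP}}^* - g_\calP^*} = \left|\frac{V_{\hatP}^*(s_0)}{\sqrt n} - g_\calP^*\right| \leq \linftynorm{\frac{V_{\hatP}^*}{\sqrt n} - g_\calP^*},
\]
which is controlled by the second inequality of Theorem \ref{thm:sample_complexity_for_reduction_to_dmdp_kl} with the stated constant $48$. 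The sample-complexity consequence then follows by inverting the error bound, identically to the derivation in the reduction case.

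The main conceptual obstacle was already resolved in Lemma \ref{lem:anchored_amdp_bellman_equation}; once the equivalence between anchoring with parameter $\xi$ and discounting with $\gamma = 1-\xi$ is established, no new concentration or stability argument is required here. The only verification that must be carried out carefully is the argmax-invariance above, which hinges on the $a$-independence of the anchoring contribution $\xi v(s_0)$ and on $1-\xi > 0$, both of which are immediate under $\xi = 1/\sqrt n \in (0,1)$ for $n\geq 1$.
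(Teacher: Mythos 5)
Your route is genuinely different from the paper's: you identify the \emph{outputs} of Algorithm \ref{alg:anchored_amdp} with those of Algorithm \ref{alg:reduction_to_dmdp} and then import Theorem \ref{thm:sample_complexity_for_reduction_to_dmdp_kl} wholesale, whereas the paper never argues that the two policies coincide. Instead it uses Proposition \ref{prop:dr_average_bellman_equation_under_uniformly_ergodic} to get $g_{\underline{\hatP}}^{\widehat\pi^*}=g_{\underline{\hatP}}^*$ for \emph{any} solution pair of the empirical anchored optimality equation, invokes the policy-wise identification $g_{\underline{\hatP}}^{\pi}=\xi V_{\hatP}^{\pi}(s_0)$ from Lemma \ref{lem:anchored_amdp_policy_bellman_equation}, and then bounds $\max_{\pi\in\Pi}\linftynorm{g_{\underline{\hatP}}^{\pi}-g_{\calP}^{\pi}}$ by $\xi\linftynorm{V_{\hatP}^{\pi}-V_{\calP}^{\pi}}+\linftynorm{\xi V_{\calP}^{\pi}-g_{\calP}^{\pi}}$ term by term. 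Your argument, when it works, is more economical (no re-derivation of the error decomposition), and your value-estimation bound is fully correct: $g_{\underline{\hatP}}^*$ is pinned down uniquely by Lemma \ref{lem:anchored_amdp_bellman_equation}, so $\linftynorm{g_{\underline{\hatP}}^*-g_{\calP}^*}\le\linftynorm{V_{\hatP}^*/\sqrt{n}-g_{\calP}^*}$ follows immediately.

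The gap is in the policy-coincidence step. Lemma \ref{lem:anchored_amdp_bellman_equation} asserts that $(\xi V_{\hatP}^*(s_0),V_{\hatP}^*)$ is \emph{a} solution of the empirical anchored optimality equation and that the $g$-component is unique; it says nothing about uniqueness of the $v$-component, yet Algorithm \ref{alg:anchored_amdp} only returns \emph{some} solution pair $(g_{\underline{\hatP}}^*,v_{\underline{\hatP}}^*)$. Your argmax argument correctly drops the $a$-independent term $\xi v_{\underline{\hatP}}^*(s_0)$, but then silently replaces $\Gamma_{\hatP_{s,a}}(v_{\underline{\hatP}}^*)$ by $\Gamma_{\hatP_{s,a}}(V_{\hatP}^*)$, which requires $v_{\underline{\hatP}}^*=V_{\hatP}^*$ up to an additive constant. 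This is in fact true and fillable: using $\Gamma_{\underline{\hatP}_{s,a}}(v)=(1-\xi)\Gamma_{\hatP_{s,a}}(v)+\xi v(s_0)$ and the uniqueness of $g$, any solution $v$ satisfies $v=\hatT^*_{1-\xi}(v)+ce$ with $c=\xi v(s_0)-g_{\underline{\hatP}}^*$, so $v+(c/\xi)e$ is a fixed point of the $\gamma$-contraction $\hatT^*_{1-\xi}$ and hence equals $V_{\hatP}^*$; since $\Gamma$ is additive under constant shifts, the greedy policies agree. Without this (or a fallback to the paper's use of Proposition \ref{prop:dr_average_bellman_equation_under_uniformly_ergodic}, which sidesteps the issue entirely), the transfer of the policy bound from Theorem \ref{thm:sample_complexity_for_reduction_to_dmdp_kl} is not justified.
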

\begin{proof}
	For policy evaluation, consider policy $\widehat\pi^*$ returned by Algorithm \ref{alg:anchored_amdp}, by Lemma \ref{prop:dr_average_bellman_equation_under_uniformly_ergodic}, we know $\widehat\pi^*$ is an optimal policy for the anchored empirical uncertainty set $\underline{\hatP}$:
	\[
	g_{\underline{\hatP}}^{\widehat\pi^*} = g_{\underline{\hatP}}^*
	\]
	further
	\begin{equation}
		\begin{aligned}
			g_{\calP}^* - g_{\calP}^{\widehat{\pi}^*} & = \max_{\pi\in \Pi}g_{\calP}^\pi - g_{\calP}^{\widehat{\pi}^*} \\
                    & = \max_{\pi\in \Pi}g_{\calP}^\pi - \max_{\pi\in \Pi}g_{\underline{\hatP}}^\pi + g_{\underline{\hatP}}^{\widehat{\pi}^*} - g_{\calP}^{\widehat{\pi}^*}\\
                    & \leq 2\max_{\pi\in \Pi}\linftynorm{g_{\underline{\hatP}}^\pi - g_{\calP}^\pi}
		\end{aligned}
	\end{equation}
	For average value function evaluation $g_{\underline{\hatP}}^*$:
	\begin{equation}
		\begin{aligned}
			\linftynorm{g_{\underline{\hatP}}^* - g_{\calP}^*}= & \linftynorm{\max_{\pi\in\Pi}g_{\underline{\hatP}}^\pi - \max_{\pi\in\Pi}g_{\calP}^\pi}\\
			\leq& \max_{\pi\in\Pi}\linftynorm{g_{\underline{\hatP}}^\pi - g_{\calP}^\pi}
		\end{aligned}
	\end{equation}
	Then, we analysis $\linftynorm{g_{\underline{\hatP}}^\pi - g_{\calP}^\pi}$, by Lemma \ref{lem:anchored_amdp_policy_bellman_equation}, we know $g_{\underline{\hatP}}^\pi = \xi V_{\hatP}^\pi(s_0)$, then:
	\begin{equation}
		\begin{aligned}
			\linftynorm{g_{\underline{\hatP}}^\pi - g_{\calP}^\pi} =& \linftynorm{\xi V_{\hatP}^\pi(s_0) - \xi V_{\calP}^\pi(s_0) + \xi V_{\calP}^\pi(s_0) - g_{\calP}^\pi}\\
			\leq& \linftynorm{\xi V_{\hatP}^\pi(s_0) - \xi V_{\calP}^\pi(s_0)} + \linftynorm{\xi V_{\calP}^\pi(s_0) - g_{\calP}^\pi}
		\end{aligned}
	\end{equation}
	Since for all $Q\in\calP$, by Proposition \ref{prop:bounded_radon_nikydom_derivative_for_kl_uncertainty_set}, we know $p_{s,a}\ll q_{s,a}$ for all $(s,a)\in\Ss\times \Aa$. Then by Lemma \ref{lem:uniformly_ergodic_for_uncertainty_set}, for all $Q\in\calP$, $Q$ is also uniformly ergodic, and $\spannorm{g_{\calP}^\pi} = 0$. Thus:
	\begin{equation}
		\begin{aligned}
			\linftynorm{g_{\underline{\hatP}}^\pi - g_{\calP}^\pi} \leq & \xi \linftynorm{V_{\hatP}^\pi - V_{\calP}^\pi} + \linftynorm{\xi V_{\calP}^\pi(s_0) - g_{\calP}^\pi(s_0)}\\
			\leq & \xi\linftynorm{V_{\hatP}^\pi - V_{\calP}^\pi} + \linftynorm{\xi V_{\calP}^\pi - g_{\calP}^\pi}.
		\end{aligned}
	\end{equation}
	With the choice $\xi = \frac{1}{\sqrt{n}}$, by Lemma \ref{lem:robust_value_function_bound_by_bellman} and Lemma \ref{lem:empirical_operator_error}, when $n\geq \frac{32}{\essinfp}\log\frac{2|\Ss|^2|\Aa|}{\beta}$, with probability $1-\beta$, for all $\pi\in\Pi$:
	\begin{equation}
		\begin{aligned}
			\linftynorm{g_{\underline{\hatP}}^\pi - g_{\calP}^\pi} \leq&\frac{18\tmin}{\sqrt{n}} + 18\tmin\sqrt{\frac{8}{n\essinfp}\log\frac{2|\Ss|^2|\Aa|}{\beta}}\\
			=& \frac{18\tmin}{\sqrt{n}}\bracket{1+\sqrt{\frac{8}{\essinfp}\log\frac{2|\Ss|^2|\Aa|}{\beta}}}\\
			\leq& 24\tmin\sqrt{\frac{8}{n\essinfp}\log\frac{2|\Ss|^2|\Aa|}{\beta}}\\
			\leq& 48\tmin\sqrt{\frac{2}{n\essinfp}\log\frac{2|\Ss|^2|\Aa|}{\beta}}
		\end{aligned}
	\end{equation}
	Thus, combine the results:
	\begin{equation}
			g_{\calP}^* - g_{\calP}^{\widehat\pi^*}\leq 2\max_{\pi\in\Pi}\linftynorm{g_{\hatP}^\pi - g_{\calP}^\pi} \leq 96\tmin\sqrt{\frac{2}{n\essinfp}\log\frac{2|\Ss|^2|\Aa|}{\beta}}
	\end{equation}
	with probability $1-\beta$. And when:
	\[
	n = \frac{2\cdot 96^2\tmin^2}{\essinfp\varepsilon^2}\log\frac{2|\Ss|^2|\Aa|}{\beta}
	\]
	we get:
	\[
	0\leq g_{\calP}^* - g_{\calP}^{\widehat\pi^*}\leq \varepsilon
	\]
	with probability $1-\beta$, $\widehat \pi^*$ is an $\varepsilon$-optimal policy.\\	
	Simultaneously,
	\begin{equation}
		\begin{aligned}
			\linftynorm{g_{\underline{\hatP}}^* - g_{\calP}^*} \leq \max_{\pi\in\Pi}\linftynorm{g_{\underline{\hatP}}^\pi - g_{\calP}^\pi}\leq 48\tmin\sqrt{\frac{2}{n\essinfp}\log\frac{2|\Ss|^2|\Aa|}{\beta}}
		\end{aligned}
	\end{equation}
	with probability $1-\beta$. And when:
	\[
	n = \frac{2\cdot 48^*\tmin^2}{\essinfp\varepsilon^2}\log\frac{2|\Ss|^2|\Aa|}{\beta}
	\]
	we get:
	\[
	\linftynorm{g_{\underline{\hatP}}^* - g_{\calP}^*}\leq \varepsilon
	\]
    with probability $1-\beta$, $g_{\underline{\hatP}}^*$ is an $\varepsilon$-optimal value function.
\end{proof}

\section{Uniform Ergodicity of the \texorpdfstring{$f_k$}{fk} Uncertainty Set}
\label{sec:app:uniform_ergodic_properties_fk_case}
In this section, we prove the technique results similar in $f_k$-divergence constraints given the Assumption \ref{ass:bounded_minorization_time} and \ref{ass:limited_adversarial_power}. Like what we have in KL-diverence, we first bound the Radon-Nikodym derivative between perturbed kernel $q_{s,a}$ and nominal $p_{s,a}$. 

\begin{lemma}
    \label{lem:bounded_radon_nikydom_derivative_for_fk_uncertainty_set}
    Suppose $\delta\leq \frac{1}{\max\set{8,4k}\maxm^2}\essinfp$, then for all $q_{s,a}\in \calP_{s,a}$, the Radon-Nikodym derivative of $q_{s,a}$ satisfies:
    \[
    \linftynormess{\frac{q_{s,a}}{p_{s,a}}}{p_{s,a}}\geq 1-\frac{1}{2\maxm}
    \]
    for all $(s,a)\in \Ss\times \Aa$
\end{lemma}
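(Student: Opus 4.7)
The plan is to mirror the structure of the KL-divergence proof in Proposition~\ref{prop:bounded_radon_nikydom_derivative_for_kl_uncertainty_set}, replacing the logarithmic Taylor estimate with a uniform polynomial lower bound tailored to $f_k$. The argument proceeds by contradiction: assume some $(s,a)$ and $s'\in\mathrm{supp}(p_{s,a})$ satisfy $q_{s,a}(s') < (1-1/(2\maxm))\,p_{s,a}(s')$, and derive a lower bound on $\delta$ violating Assumption~\ref{ass:limited_adversarial_power}.

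The first step is a two-point reduction. For fixed $s'$, split
\[
D_{f_k}(q_{s,a}\|p_{s,a}) = p_{s,a}(s')\, f_k\bracket{\frac{q_{s,a}(s')}{p_{s,a}(s')}} + \sum_{s\neq s'} p_{s,a}(s)\, f_k\bracket{\frac{q_{s,a}(s)}{p_{s,a}(s)}},
\]
and apply Jensen's inequality to the convex $f_k$ on the residual sum. Writing $y = p_{s,a}(s')$ and $x = q_{s,a}(s')$, this yields
\[
\delta \;\geq\; y\, f_k(x/y) + (1-y)\, f_k\bracket{\frac{1-x}{1-y}} \;\geq\; y\, f_k(x/y),
\]
where the last step uses $f_k \geq 0$.

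The second and crucial step is the uniform quadratic bound $f_k(t) \geq (1-t)^2/\max\{2,k\}$ on $[0,1]$, which I would prove by cases. For $k \geq 2$, a direct factorization gives
\[
f_k(t) - \frac{(1-t)^2}{k} \;=\; \frac{t\bracket{t^{k-1} + (k-2) - (k-1)t}}{k(k-1)},
\]
and the bracketed polynomial $h(t)$ vanishes at $t=1$ with derivative $(k-1)(t^{k-2}-1)\leq 0$ on $(0,1)$, so $h(t) \geq 0$ there. For $k \in (1,2)$, set $g(t) := f_k(t) - (1-t)^2/2$; then $g(1) = g'(1) = 0$ and $g''(t) = t^{k-2} - 1 > 0$ on $(0,1)$, so convexity plus the vanishing first derivative at $t = 1$ forces $g \geq 0$.

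Combining the two steps with the strict monotonicity of $f_k$ on $[0,1]$ (since $f_k'(t) = (t^{k-1}-1)/(k-1) < 0$ on $[0,1)$), the contradiction hypothesis $x/y < 1 - 1/(2\maxm)$ yields
\[
\delta \;\geq\; y\, f_k(x/y) \;>\; y\, f_k\bracket{1 - \tfrac{1}{2\maxm}} \;\geq\; \frac{y}{4\maxm^2\max\{2,k\}} \;=\; \frac{y}{\max\{8,4k\}\,\maxm^2} \;\geq\; \frac{\essinfp}{\max\{8,4k\}\,\maxm^2},
\]
contradicting Assumption~\ref{ass:limited_adversarial_power}. The main obstacle is the second step: the sharp constant $\max\{2,k\}$ is what dictates the $\max\{8,4k\}$ scaling in the hypothesis, and its proof genuinely requires separating $k \geq 2$ from $k \in (1,2)$ because $f_k^{(3)}(1) = k - 2$ changes sign at $k = 2$, making the naive bound $f_k(t) \geq (1-t)^2/2$ fail for $k > 2$ near $t = 1$. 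Once this polynomial estimate is in hand, the remainder of the argument is a direct translation of the KL template.
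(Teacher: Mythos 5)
Your proposal is correct and follows essentially the same route as the paper's proof: a contradiction argument that lower-bounds $D_{f_k}(q_{s,a}\|p_{s,a})$ by the single term $p_{s,a}(s')f_k(q_{s,a}(s')/p_{s,a}(s'))$ and then invokes the quadratic estimate $f_k(t)\geq (1-t)^2/\max\{2,k\}$ on $[0,1]$, proved by the identical factorization for $k\geq 2$ and a separate argument for $k\in(1,2)$. Your treatment of the $k\in(1,2)$ subcase via convexity of $g(t)=f_k(t)-(1-t)^2/2$ together with $g(1)=g'(1)=0$ is a mild simplification of the paper's monotonicity analysis of the ratio $f_k(t)/(t-1)^2$, but the overall structure and constants coincide.
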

\begin{proof}
    We prove the Lemma by contradiction. For any $(s,a)\in \Ss\times \Aa$, consider $q_{s,a}\in \calP_{s,a}$, suppose there exists $s'\in \Ss$, such that
    \[
    r := \frac{q_{s,a}}{p_{s,a}}(s') < 1-\frac{1}{2\maxm}
    \]
    Then for any $q_{s,a}\in \calP_{s,a}$, we have:
    \begin{equation}
    \label{equ:fk_contradiction}
        \begin{aligned}
            D_{f_k}(q_{s,a}\|p_{s,a}) &=\innerprod{p_{s,a}}{f_k\bracket{\frac{q_{s,a}}{p_{s,a}}}}\\
            &\geq f_k\bracket{\frac{q_{s,a}}{p_{s,a}}(s')}p_{s,a}(s')\\
            &\geq f_k\bracket{r}\essinfp
        \end{aligned}
    \end{equation}
    Define the helper function $g_k(t) := \frac{1}{k}(t-1)^2$, when $k \geq 2$, we have:
    \begin{equation}
    	\begin{aligned}
    		f_k(t) - g_k(t) =& \frac{t^k - kt + k - 1}{k(k-1)} - \frac{1}{k}(t-1)^2\\
        =&\frac{t^k - kk + k - 1 - (k-1)(t^2 - 2t + 1)}{k(k+1)}\\
        =&\frac{t^k - kt + k - 1 - (k-1)t^2 + 2(k-1)t - (k-1)}{k(k-1)}\\
        =& \frac{t^k - (k-1)t^2 + (k-2)t}{k(k-1)}
    	\end{aligned}
    \end{equation}
    The nominator is 
    \[
    t \cdot (t^{k-1} - (k-1)t + (k-2)),
    \]
    let 
    \[p_k(t) := t^{k-1} - (k-1)t + (k-2).\]
    Then:
    \begin{align}
        \frac{dp_k(t)}{dt} &= (k-1)t^{k-2} - (k-1)\leq 0 \quad \text{on}\quad t\in [0,1]\\
        \frac{dp_k(t)}{dt}\bigg|_{t=1} &= 0\quad \text{ and }\quad \frac{dp_k(t)}{dt}\leq 0
    \end{align}
    We conclude that $p_{k}(t)\geq 0$ on $t\in[0,1]$, and thus:
    \[
    f_k(t) - g_k(t) = \frac{t p_k(t)}{k(k-1)}\geq 0
    \]
    Hence, when $t = r < 1-\frac{1}{2\maxm}$:
    \begin{equation}
    	\begin{aligned}
    		f_k(r)\geq g_k(r) >& g\bracket{1 - \frac{1}{2\maxm}} = \frac{1}{k}\cdot \frac{1}{4\maxm^2} = \frac{1}{4k\maxm^2} \geq \frac{\delta}{\essinfp}\\
    		\Rightarrow& f_k(r) \essinfp > \delta
    	\end{aligned}
    \end{equation}
    However, the above inequality contradict to \eqref{equ:fk_contradiction} where $f_k(r)\essinfp \leq \delta$.\\
    For the case when $k\in(1,2)$, consider the function:
    \begin{equation}
        h_k(t) = \frac{f_k(t)}{(t-1)^2} = \frac{t^k - kt + k -1}{k(k-1)(t-1)^2}
    \end{equation}
    It is easy to see that $\lim_{t\to 1^-}h_k(t) = \frac{1}{2}$, and $h_k(0) = \frac{1}{k}$. Hence $h_k(0)\geq\lim_{t\to 1^-}h_k(t)$. Then, the derivative:
    \begin{equation}
        \begin{aligned}
            \frac{dh_k(t)}{dt} =& \frac{(kt^{k-1} - k)k(k-1)(t-1)^2 - (t^k - kt + k - 1)2k(k-1)(t-1)}{k^2(k-1)^2(t-1)^4}\\
            =& \frac{(k-2)t^k - kt^{k-1}+kt-k+2}{k(k-1)(t-1)^3}
        \end{aligned}
    \end{equation}
    Denote the nominator as: 
    \[
    q_k(t) = (k-2)t^k - kt^{k-1}+kt-k+2,
    \]
    then we have:
    \begin{equation}
        \begin{aligned}
            \frac{dq_k(t)}{dt} =& k(k-2)t^{k-1} - k(k-1)t^{k-2} + k\\
            \frac{d^2q_k(t)}{dt^2} =& k(k-1)(k-2)t^{k-2} - k(k-1)(k-2)t^{k-3}\\
            =& k(k-1)(k-2)t^{k-3}(t-1)
        \end{aligned}
    \end{equation}
    When $1<k<2$, the second order derivative $d_t^{2}q_k(t) > 0$ on $t\in (0,1)$. $d q_k(t)$ is monotone increasing, and as $d_tq_k(t)|_{t=1}=0$, we have $d_tq_k(t)\leq 0$ on $t\in(0,1]$, which sequentially implies $q_k(t)$ is monotone decreasing, 
    \[
    q_k(t) \geq q_k(1) =0 \quad \text{ on } t\in(0,1]
    \]
    And further $d_th_k(t)\leq 0$, $h_k(t)$ is monotone decreasing from $\frac{1}{k}$ to $\frac{1}{2}$. Thus, for $1<k<2$:
    \begin{equation}
    	\frac{f_{k}(t)}{(t-1)^2}\geq \frac{1}{2}\Rightarrow f_k(t)\geq \frac{1}{2}(t-1)^2
    \end{equation}
    Combine the previous results, we have, when $t=r < 1-\frac{1}{2\maxm}$
    \[
    f_k(r) \geq \frac{1}{2}(r-1)^2 > \frac{1}{2\cdot 4\maxm^2}\geq \frac{\delta}{\essinfp}
    \]
    which contradict to 
    \[
    \delta \geq D_{f_k}(q_{s,a}||p_{s,a})\geq f_k(r)\essinfp.
    \]
    Consequently, we prove, for all $k\in (1,\infty)$ when $\delta \leq \frac{1}{\cdot \max\set{8, 4k}\maxm^2}\essinfp$, the Radon-Nikodym derivative of between any $q_{s,a}\in\calP_{s,a}$ and $p_{s,a}$ satisfies:
    \[
    \linftynormess{\frac{q_{s,a}}{p_{s,a}}}{p_{s,a}}\geq1-\frac{1}{2\maxm}.
    \]
    Proved.
\end{proof}

Further, similiar with Proposition \ref{prop:uniform_bound_minorization_time_kl} we have when $\delta \leq \frac{1}{4\max\set{2,k}\maxm^2}\essinfp$, $\calP$ is uniformly ergodic with:
\begin{equation}
    \sup_{Q\in \calP}\max_{\pi\in\Pi}\tmin(Q_\pi) \leq 2 \tmin
\end{equation}
The idea is as same as the KL-divergence case, as under Assumption \ref{ass:limited_adversarial_power}, the Radon-Nikodym derivative is uniform bounded by $1-\frac{1}{2\maxm}$ over $\calP$, thus, we have for any $Q\in\calP$, $Q_\pi^{m_\pi}(s,s')\geq \frac{p_\pi}{2}P_\pi^{m_\pi}(s,s')$ for all $(s,s')\in\Ss\times\Ss$.
\begin{proposition}[Restatement of Proposition \ref{prop:uniform_bound_minorization_time}]
\label{prop:uniform_bounded_minorization_time_fk}
	Suppose $P$ is uniformly ergodic, and $\delta\leq \frac{1}{\max\set{8, 4k}\maxm^2}\essinfp$, then $\calP = \calP(D_{f_k}, \delta)$ is uniformly ergodic and for all $Q\in\calP$, and $\pi\in\Pi$:
	\begin{equation}
		\tmin(Q_\pi)\leq 2\tmin
	\end{equation}
	where $\tmin$ is from Assumption \ref{ass:bounded_minorization_time}.
\end{proposition}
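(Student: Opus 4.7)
The plan is to mirror the argument used in the KL case (Proposition \ref{prop:uniform_bound_minorization_time_kl}), with the sole substitution being that the uniform lower bound on the Radon-Nikodym derivative now comes from Lemma \ref{lem:bounded_radon_nikydom_derivative_for_fk_uncertainty_set} instead of Proposition \ref{prop:bounded_radon_nikydom_derivative_for_kl_uncertainty_set}. Since $\delta\leq \frac{1}{\max\{8,4k\}\maxm^2}\essinfp$, Lemma \ref{lem:bounded_radon_nikydom_derivative_for_fk_uncertainty_set} delivers, for all $q_{s,a}\in\calP_{s,a}$ and all $(s,a)\in\Ss\times\Aa$, the bound $\linftynormess{q_{s,a}/p_{s,a}}{p_{s,a}}\geq 1-\frac{1}{2\maxm}$. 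This is precisely the input that drove the KL argument.

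Next, I would fix any $\pi\in\Pi$ and invoke Lemma \ref{lem:mp_doeblin_condition_is_achievable} to pick an $(m_\pi,p_\pi)$-pair with $m_\pi/p_\pi = \tmin(P_\pi)$ and $m_\pi\leq \maxm$, so that $P_\pi^{m_\pi}(s_0,\cdot)\geq p_\pi\psi(\cdot)$ for some $\psi\in\Delta(\Ss)$. For any $Q\in\calP$ and any pair of states $(s_0,s_{m_\pi})\in\Ss\times\Ss$, expand the $m_\pi$-step kernel $Q_\pi^{m_\pi}(s_0,s_{m_\pi})$ as a sum over all intermediate trajectories, and apply the pointwise bound $q_{s,a}(s')\geq (1-\frac{1}{2\maxm})p_{s,a}(s')$ at each of the $m_\pi$ transitions. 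This gives
\[
Q_\pi^{m_\pi}(s_0,s_{m_\pi})\;\geq\;\Bigl(1-\tfrac{1}{2\maxm}\Bigr)^{m_\pi}P_\pi^{m_\pi}(s_0,s_{m_\pi})\;\geq\;\tfrac{1}{2}P_\pi^{m_\pi}(s_0,s_{m_\pi})\;\geq\;\tfrac{p_\pi}{2}\psi(s_{m_\pi}),
\]
where the second inequality uses $m_\pi\leq \maxm$ together with $(1-\tfrac{1}{2m})^{m}\geq\tfrac{1}{2}$ for $m\geq 1$.

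Consequently, $Q_\pi$ satisfies an $(m_\pi,p_\pi/2)$-Doeblin condition with the same minorization measure $\psi$, so $\tmin(Q_\pi)\leq 2m_\pi/p_\pi = 2\tmin(P_\pi)\leq 2\tmin$. Since $Q\in\calP$ and $\pi\in\Pi$ were arbitrary, this bound is uniform, yielding uniform ergodicity of $\calP$ and completing the proof. There is no real obstacle beyond plugging the $f_k$-divergence RN bound into the KL template; the only care needed is in verifying that the constant $\max\{8,4k\}$ in the hypothesis on $\delta$ is exactly what Lemma \ref{lem:bounded_radon_nikydom_derivative_for_fk_uncertainty_set} requires to secure the same $1-\frac{1}{2\maxm}$ lower bound across both the $k\geq 2$ and $1<k<2$ regimes, and both cases are already handled there.
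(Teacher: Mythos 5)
Your proposal is correct and follows essentially the same route as the paper: invoke Lemma \ref{lem:mp_doeblin_condition_is_achievable} for an $(m_\pi,p_\pi)$ pair with $m_\pi\leq\maxm$, use Lemma \ref{lem:bounded_radon_nikydom_derivative_for_fk_uncertainty_set} for the pointwise bound $q_{s,a}\geq(1-\tfrac{1}{2\maxm})p_{s,a}$ on the support, and propagate it through the $m_\pi$-step trajectory expansion to obtain the $(m_\pi,p_\pi/2)$-Doeblin condition exactly as in the KL case. The only cosmetic difference is that the paper abbreviates the trajectory computation by referring back to the KL display, whereas you write it out; the content is identical.
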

\begin{proof}
		By Lemma \ref{lem:mp_doeblin_condition_is_achievable}, since $P$ is uniformly ergodic, then there exists an $(m_\pi, p_\pi)$ pair, such that:
    	\[
    	\frac{m_\pi}{p_\pi} = \tmin(P_\pi)\quad\text{and}\quad m_\pi\leq \maxm
    	\] 
        For all $Q\in\calP$, by Lemma \ref{lem:bounded_radon_nikydom_derivative_for_fk_uncertainty_set}, we have for all $(s, a)\in \Ss\times \Aa$, 
        \[
        \linftynormess{\frac{q_{s,a}}{p_{s,a}}}{p_{s,a}}\geq 1-\frac{1}{2\maxm}
        \]
        Then, for all state pairs $(s_0,s_{m_\pi})\in \Ss\times \Ss$, consider $Q_\pi^{m_\pi}$, we have:
        \begin{equation}
            \begin{aligned}
                Q_\pi^{m_\pi}(s_0, s_{m_\pi}) \geq& \sum_{s_1, s_2, \cdots, s_{m_\pi-1}}q_{s_0, \pi(s_0)}(s_1)q_{s_1, \pi(s_1)}(s_2)\cdots q_{s_{\maxm-1}, \pi(s_{m_\pi-1})}(s_{m_\pi})\\
                \geq& \frac{p_\pi}{2}\psi(s_{m_\pi}).
            \end{aligned}
        \end{equation}
        The proof of the above inequality is the same as in \eqref{equ:uniform_bound_minorization_time_kl}. This implies that for every policy $\pi\in\Pi$, the perturbed kernel $Q_\pi$ maintains a $(m_\pi, \frac{p_\pi}{2})$-Doeblin condition. Furthermore, the minorization time satisfies:
        \[
        \tmin(Q_\pi)\leq \frac{m_\pi}{\frac{p_\pi}{2}}\leq 2\tmin(P_\pi) \leq 2\tmin,
        \]
        where $\tmin=\max_{\pi\in\Pi}\tmin(P_\pi)$ by Assumption \ref{ass:bounded_minorization_time}. Thus $\tmin(Q_\pi)$ is uniformly bounded over $Q\in\calP$ and $\pi\in\Pi$: 
        \[
        \sup_{Q\in\calP}\max_{\pi\in\Pi}\tmin(Q_\pi) < 2\tmin < \infty,
        \]
        $\calP$ is uniformly ergodic.
\end{proof}

We furthr establish uniform ergodicity properties for both the empirical transition kernel $\widehat P$ and its empirical uncertainty set $\hatP$, serving as a probabilistic counterpart to Theorem \ref{thm:uniform_doeblin_condition_over_all_sets}. Whle not directly impacting our sample complexity results, this analysis reveals that when the robustness parameter satisfies:
\[
\delta\leq \frac{1}{\max\set{16, 8k}\maxm^2}\essinfp,
\]
both $\hatP$ and $\hatP$ maintain uniform ergodicity with high probability.

\begin{corollary}
\label{cor:uniform_doeblin_condition_over_all_sets_fk}
	Under Assumption \ref{ass:bounded_minorization_time} and $\delta\leq\frac{1}{\max\set{16, 8k}\maxm^2}\essinfp$, when the smaple size satisfies:
	\[
	n\geq \frac{32\maxm^2}{\essinfp}\log\frac{2|\Ss|^2|\Aa|}{\beta}
	\]
	then:
	\begin{itemize}
		\item $\calP$ is uniformly ergodic, for any $Q\in\calP$ and $\pi\in\Pi$, the minorization time of $Q_\pi$:			
			\[
			\tmin(Q_\pi)\leq 2\tmin
			\]
			with probability $1$.
		\item $\widehat P$ is uniformly ergodic and for any $\pi\in\Pi$, the minorization time of $\widehat P_\pi$:
			\[
			\tmin(\widehat P_\pi)\leq 2\tmin 
			\]
			with probability $1-\beta$.
		\item $\hatP$ is uniformly ergodic, for any $\widehat Q\in\hatP$ and $\pi\in\Pi$, the minorization time of $\widehat Q_\pi$:
			\[
			\tmin(\widehat Q_\pi)\leq 4\tmin
			\]
			with probability $1-\beta$.
	\end{itemize}
\end{corollary}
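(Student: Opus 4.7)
The plan is to follow the exact three-component template used for the KL case in Theorem~\ref{thm:uniform_doeblin_condition_over_all_sets}, substituting the $f_k$ counterparts in the appropriate places. The combination works because Lemma~\ref{lem:doeblin_condition_empirical_transition_kernel}, which controls the closeness of $\widehat P$ to $P$, is purely a concentration statement about empirical distributions and is divergence-agnostic, so it transfers verbatim. The divergence-specific work has already been carried out in Lemma~\ref{lem:bounded_radon_nikydom_derivative_for_fk_uncertainty_set} and Proposition~\ref{prop:uniform_bounded_minorization_time_fk}.

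First I would note that the first bullet is an immediate restatement of Proposition~\ref{prop:uniform_bounded_minorization_time_fk}: since $\delta\leq \frac{1}{\max\{16,8k\}\maxm^2}\essinfp \leq \frac{1}{\max\{8,4k\}\maxm^2}\essinfp$, the hypothesis of that proposition is satisfied, so for every $Q\in\calP$ and $\pi\in\Pi$ we have $\tmin(Q_\pi)\leq 2\tmin$ deterministically. Next, for the second bullet, I would invoke Lemma~\ref{lem:doeblin_condition_empirical_transition_kernel} verbatim: the required sample size $n\geq 32\maxm^2\essinfp^{-1}\log(2|\Ss|^2|\Aa|/\beta)$ ensures, via Bernstein's inequality (Lemma~\ref{lem:bernstein_inequality_n_and_d}), that we sit on the event $\Omega_{n,\frac{1}{2\maxm}}$ with probability at least $1-\beta$; on this event $\widehat{p}_{s,a}(s')\geq (1-\frac{1}{2\maxm})p_{s,a}(s')$ for all $s'\in\mathrm{supp}(p_{s,a})$, which yields $\tmin(\widehat P_\pi)\leq 2\tmin$ by the same computation as in the KL case.

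For the third bullet, the argument parallels Corollary~\ref{cor:kl_uniform_doeblin_condition_empirical_uncertainty_set}. On the same good event $\Omega_{n,\frac{1}{2\maxm}}$, the empirical minimum support
\[
\widehat{\mathfrak{p}}_\wedge := \min_{(s,a)}\min_{s'\in\mathrm{supp}(p_{s,a})}\widehat p_{s,a}(s')
\]
satisfies $\widehat{\mathfrak{p}}_\wedge \geq (1-\frac{1}{2\maxm})\essinfp \geq \frac{1}{2}\essinfp$. The hypothesis $\delta\leq \frac{1}{\max\{16,8k\}\maxm^2}\essinfp$ therefore gives $\delta \leq \frac{1}{\max\{8,4k\}\maxm^2}\widehat{\mathfrak{p}}_\wedge$, which is exactly the Assumption~\ref{ass:limited_adversarial_power} requirement relative to the empirical kernel. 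I can now apply Proposition~\ref{prop:uniform_bounded_minorization_time_fk} with $\widehat P$ in the role of the nominal kernel: since (from bullet~2) $\widehat P_\pi$ is $(m_\pi,p_\pi/2)$-Doeblin with $m_\pi/p_\pi=\tmin(P_\pi)$, the proposition yields, for every $\widehat Q\in\hatP$, that $\widehat Q_\pi$ is $(m_\pi,p_\pi/4)$-Doeblin and hence
\[
\tmin(\widehat Q_\pi)\leq \frac{4m_\pi}{p_\pi}\leq 4\tmin(P_\pi)\leq 4\tmin.
\]
All three statements hold on the single event $\Omega_{n,\frac{1}{2\maxm}}$, so no union bound across bullets is needed.

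The only mild subtlety, rather than an obstacle, is verifying that Proposition~\ref{prop:uniform_bounded_minorization_time_fk} may legitimately be reapplied with $\widehat P$ as ``nominal.'' This requires that $\widehat P$ itself be uniformly ergodic (supplied by bullet~2 on the same event) and that the absolute-continuity structure underlying the $f_k$ argument of Lemma~\ref{lem:bounded_radon_nikydom_derivative_for_fk_uncertainty_set} be preserved for perturbations of $\widehat p_{s,a}$. Both are automatic: the lemma's proof uses only $\essinfp$ as a lower bound on the nominal support, which now becomes $\widehat{\mathfrak{p}}_\wedge$, and the adjusted $\delta$-condition is exactly what compensates.
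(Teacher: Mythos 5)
Your proposal is correct and follows essentially the same route as the paper: the first bullet is Proposition~\ref{prop:uniform_bounded_minorization_time_fk} applied directly, the second is Lemma~\ref{lem:doeblin_condition_empirical_transition_kernel} on the event $\Omega_{n,\frac{1}{2\maxm}}$, and the third re-runs the $f_k$ Radon--Nikodym argument with $\widehat{\mathfrak{p}}_\wedge\geq\frac{1}{2}\essinfp$ replacing $\essinfp$, exactly as the paper does. Your explicit remark on why the proposition may be reapplied with $\widehat P$ as the nominal kernel (threading the same $(m_\pi,p_\pi/2)$ Doeblin pair so that $m_\pi\leq\maxm$ still controls the chaining) is a point the paper leaves implicit, and is a welcome clarification.
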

\begin{proof}
	Since $P$ is uniformly ergodic, by Lemma \ref{lem:mp_doeblin_condition_is_achievable}, for any $\pi\in\Pi$, there exists an $(m_\pi, p_\pi)$ pair such that $\frac{m_\pi}{p_\pi}=\tmin(P_\pi)$.\\
	First, for $\calP$, by Lemma \ref{lem:bounded_radon_nikydom_derivative_for_fk_uncertainty_set}, for all $Q\in\calP$, we have:
	\[
	\linftynormess{\frac{q_{s,a}}{p_{s,a}}}{p_{s,a}}\geq 1-\frac{1}{2\maxm}
	\]
	and there exists an $\psi\in \Delta(\Ss)$, for any $(s_0,s_{m_\pi})\in\Ss\times \Aa$: 
	\begin{equation}
	\begin{aligned}
		Q_\pi^{m_{\pi}}(s_0, s_{m_\pi}) \geq& \bracket{1 - \frac{1}{2\maxm}}^{m_\pi}P_\pi^{m_\pi}(s_0, s_{m_\pi})\\
		\geq & \frac{p_\pi}{2}\psi(s')
	\end{aligned}
	\end{equation}
	which implies $Q_\pi$ satisifes $(m_\pi, \frac{p_\pi}{2})$-Doeblin condition with probability $1$. And further, we have
	\[
	\tmin(Q_\pi) \leq \frac{m_\pi}{\frac{p_\pi}{2}}=2\tmin
	\]
	And $\calP$ is uniformly ergodic.\\
	Second, for empirical kernel $\widehat P$, by Lemma \ref{lem:doeblin_condition_empirical_transition_kernel}, we have when
	\[
	n\geq \frac{32\maxm^2}{\essinfp}\log\frac{2|\Ss|^2|\Aa|}{\beta}
	\]
	empiricla kernel $\widehat P_\pi$ satisifes $(m_{\pi},\frac{p_\pi}{2})$-Doeblin condition, and $\widehat P$ is uniformly ergodic with probability $1-\beta$.\\
	Third, since when $n\geq\frac{32\maxm^2}{\essinfp}\log\frac{2|\Ss|^2|\Aa|}{\beta}$,
	\[
	P\bracket{\Omega_{n,\frac{1}{2\maxm}}}\geq 1-\beta.
	\]
	On $\Omega_{n,\frac{1}{2\maxm}}$, $\widehat{\mathfrak p}_\wedge \geq \bracket{1-\frac{1}{2\maxm}}\essinfp\geq \frac{1}{2}\essinfp$. By Lemma \ref{lem:bounded_radon_nikydom_derivative_for_fk_uncertainty_set}
	\[
	\delta\leq \frac{1}{8\max\set{2, k}\maxm^2}\essinfp \leq\frac{1}{4\max\set{2, k}\maxm^2}\widehat{\mathfrak p}_\wedge.
	\]
	 It implies fo all $\widehat Q\in\hatP$, $\widehat Q_{\pi}$ satisifes $(m_\pi, \frac{1}{2}\bracket{\frac{p_\pi}{2}})$-Doeblin condition, $\hatP$ is uniformly ergodic and
	 \[
	 \sup_{\widehat Q\in\hatP}\max_{\pi\in\Pi}\tmin(\widehat Q_\pi)\leq 4\tmin
	 \]
	 with probability $1-\beta$.
\end{proof}

\section{Properties of the Bellman Operator: \texorpdfstring{$f_k$}{fk}-Case}
\label{sec:properties_of_the_bellman_operator_fk_case}

In this section, we target to bound the error between DR discounted Bellman opertaor and empirical DR discounted Bellman operator under $f_k$-divergnce. Similar to the KL-case. We override the notations for the $f_k$-case, and introduce the $f_k$-duality

\begin{lemma}[Lemma 1 of \citet{duchi_learning_2020}]
    For any $(s,a)\in \Ss \times \Aa$, let $\calP_{s,a}$ be the $f_k$-uncertainty set centered at the nominal transition kernel $p_{s,a}$. Then, for any $\delta > 0$, let $k^* = \frac{k}{k-1}$:
    \begin{equation}
        \Gamma_{\calP_{s,a}}(V) = \sup_{\alpha \in \R}\set{\alpha - c_k(\delta) E_{p_{s,a}}\left[\left(\alpha - V(S)\right)_+^{k^*}\right]^{\frac{1}{k^*}}}
    \end{equation}
    where $c_k(\delta) = \bracket{1 + k(k-1)\delta}^{\frac{1}{k}}$, $(\cdot)_+ = \max\set{\cdot, 0}$ and $V:\Ss \to \R$ is the value function.
\end{lemma}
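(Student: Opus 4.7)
The plan is to establish strong duality via Lagrangian arguments and then explicitly eliminate the divergence-constraint multiplier. Since the nominal $p_{s,a}$ itself satisfies $D_{f_k}(p_{s,a}\|p_{s,a}) = 0 < \delta$, Slater's condition holds for this convex program on the finite-dimensional simplex, so standard convex duality (e.g., Theorem 28.2 in Rockafellar) applies without technical subtleties.

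First, I would reparametrize via the likelihood ratio $L(s) := p(s)/p_{s,a}(s)$ so that the primal problem becomes $\inf_L E_{p_{s,a}}[L \cdot V]$ subject to $L \geq 0$, $E_{p_{s,a}}[L] = 1$, and $E_{p_{s,a}}[f_k(L)] \leq \delta$. Introducing multipliers $\alpha \in \R$ for the equality constraint and $\lambda \geq 0$ for the divergence constraint, strong duality yields
\[
\Gamma_{\calP_{s,a}}(V) = \sup_{\alpha \in \R,\, \lambda \geq 0}\left\{\alpha - \lambda\delta - \lambda E_{p_{s,a}}\left[f_k^*\!\left(\tfrac{\alpha - V(S)}{\lambda}\right)\right]\right\},
\]
where $f_k^*(u) := \sup_{t \geq 0}\{ut - f_k(t)\}$ is the one-sided convex conjugate. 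A direct first-order calculation, solving $f_k'(t) = (t^{k-1}-1)/(k-1) = u$, yields $f_k^*(u) = k^{-1}\bigl((1+(k-1)u)_+^{k^*} - 1\bigr)$ using the relation $k^* = k/(k-1)$.

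The key step is then a reparametrization $\alpha' := \alpha + \lambda/(k-1)$, under which $1 + (k-1)(\alpha - V)/\lambda$ collapses to $(k-1)(\alpha' - V)/\lambda$. Substituting back and collecting constants (using $\delta + \frac{1}{k(k-1)} = \frac{1 + k(k-1)\delta}{k(k-1)}$), the dual objective becomes
\[
\alpha' \;-\; \lambda\cdot\frac{1 + k(k-1)\delta}{k(k-1)} \;-\; \frac{(k-1)^{k^*}}{k\, \lambda^{k^*-1}}\, E_{p_{s,a}}\!\left[(\alpha' - V(S))_+^{k^*}\right].
\]
For fixed $\alpha'$, this has the form $\alpha' - a\lambda - b\lambda^{1-k^*}$ with $a,b \geq 0$, whose $\lambda$-maximum admits the closed form $\alpha' - C(k)\, a^{1/k^*} b^{1/k}$ by elementary calculus. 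Plugging in $a$ and $b$, the exponents assemble cleanly via $k^*-1 = 1/(k-1)$ and $1/k^* = (k-1)/k$, and after simplification the coefficient collapses to exactly $c_k(\delta) = (1 + k(k-1)\delta)^{1/k}$, yielding the claimed dual representation.

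The main obstacle is not the duality itself, which is routine, but the careful bookkeeping of exponents in the last step: one must verify that $(k-1)^{k^*}$, the $\lambda$-power $k^*-1$, and the factor $(1 + k(k-1)\delta)$ recombine so that the dimensional constant in front of $\|(\alpha' - V)_+\|_{k^*, p_{s,a}}$ is exactly $c_k(\delta)$. A minor additional check is to confirm that the optimal $\lambda^*$ is nonnegative whenever $E_{p_{s,a}}[(\alpha' - V)_+^{k^*}] > 0$ (and to handle the degenerate case $\alpha' \leq \min V$ separately, in which case the inner expectation vanishes and the supremum is attained at $\alpha' = \min V$).
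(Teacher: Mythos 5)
Your derivation is sound and, in fact, reconstructs essentially the argument behind the result the paper imports by citation: the paper gives no proof of this lemma, deferring to Lemma 1 of Duchi and Namkoong (2020), whose proof is exactly this Lagrangian/conjugate computation (strong duality via Slater with $L\equiv 1$, the closed form $f_k^*(u)=k^{-1}\bigl((1+(k-1)u)_+^{k^*}-1\bigr)$, the shift $\alpha'=\alpha+\lambda/(k-1)$, and elimination of $\lambda$). I verified the bookkeeping: the constants do collapse to $c_k(\delta)=(1+k(k-1)\delta)^{1/k}$ exactly as you claim. One slip to fix in your write-up: the maximum over $\lambda>0$ of $\alpha'-a\lambda-b\lambda^{1-k^*}$ is $\alpha'-k(k^*-1)^{1/k^*}\,a^{1/k}\,b^{1/k^*}$ (since $1-1/k^*=1/k$), i.e.\ the exponents on $a$ and $b$ are transposed relative to what you wrote; with the correct exponents $b^{1/k^*}$ produces the required $E_{p_{s,a}}[(\alpha'-V)_+^{k^*}]^{1/k^*}$ and the prefactor reduces to $1$, so the final identity stands. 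Your handling of the degenerate case $\alpha'\le \essinf V$ (where $b=0$ and $\lambda^*=0$) is also consistent with the stated formula.
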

we denote the $f_k$-dual functional under the nominal transition kernel $p_{s,a}$:
\begin{equation}
	\label{equ:fk_dual_functional}
	f(p_{s,a}, V, \alpha) := \alpha - c_k(\delta) \innerprod{p_{s,a}}{(\alpha - V)^{k^*}_+}^{\frac{1}{k^*}}
\end{equation}
then
\[
\Gamma_{\calP_{s,a}}(V) = \sup_{\alpha\in\R }f(p_{s,a}, V, \alpha)
\]
At the same time, we follow the auxiliary measures and function used in KL-case:
\begin{equation}
\begin{aligned}
	p_{s,a}(t) & =t\widehat{p}_{s,a} + (1-t) p\\
    \Delta p_{s,a} & =\widehat{p}_{s,a} - p_{s,a} \\
    g_{s,a}(t, \alpha) & =f\left(p_{s,a}(t), V, \alpha\right).
\end{aligned}	
\end{equation}
When $d<1$, $p_{s,a}(t)\sim p_{s,a}$ holds for all $(s,a)\in\Ss\times\Aa$ and $t\in[0,1]$ on $\Omega_{n,d}$.

\begin{lemma}
    \label{lem:fk_divergence_worst_case_measure}
    For any $\delta$, the supremum of $f(p_{s,a}, V, \alpha)$ is achieved at $\alpha^*\geq \essinf_{p_{s,a}}V$. If $\alpha^*> \essinf_{p_{s,a}}V$, then 
    \[
    c_k(\delta) = \frac{\innerprod{p_{s,a}}{(\alpha^* - V)_+^{k^*}}^{1- \frac{1}{k^*}}}{\innerprod{p_{s,a}}{(\alpha^* - V)_+^{k^*-1}}}
    \]
    where $p_{s,a}^*$ defined as below:
    \[
    p^*_{s,a}(\cdot) := \frac{\innerprod{p_{s,a}}{(\alpha - V)^{k^*-1}_+\mathbbm 1\set{\cdot}}}{\innerprod{p_{s,a}}{(\alpha - V)_+^{k^* - 1}}}
    \]
    is a worst-case measure. When $\alpha^* = \essinf_{p_{s,a}}V$, the worst-case measure is given as:
    \[
    p_{s,a}^*(\cdot) = \frac{\innerprod{p_{s,a}}{\mathbbm 1\set{U\cap \cdot}}}{\innerprod{p_{s,a}}{\mathbbm 1\set{U}}}
    \]
    where $U = \set{s': V(s') = \essinf_{p_{s,a}} V}$.
\end{lemma}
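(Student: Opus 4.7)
The plan proceeds in three stages corresponding to the lemma's three claims. First, I would establish that any maximizer $\alpha^*$ satisfies $\alpha^* \geq \essinf_{p_{s,a}} V$ and that such a maximizer exists. For $\alpha \leq \essinf_{p_{s,a}} V$, the positive part $(\alpha - V(s))_+$ vanishes for $p_{s,a}$-a.e.\ $s$, so $f(p_{s,a}, V, \alpha) = \alpha$ is strictly increasing, and any maximizer must lie in $[\essinf_{p_{s,a}} V, \infty)$. Existence follows from continuity and coercivity: as $\alpha \to \infty$, $\innerprod{p_{s,a}}{(\alpha - V)_+^{k^*}}^{1/k^*} \sim \alpha$, so $f(p_{s,a}, V, \alpha) \sim \alpha(1 - c_k(\delta)) \to -\infty$ since $c_k(\delta) = (1 + k(k-1)\delta)^{1/k} > 1$ for $\delta > 0$.

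For the interior case $\alpha^* > \essinf_{p_{s,a}} V$, the function $f$ is differentiable in $\alpha$, and
\[
\partial_\alpha f = 1 - c_k(\delta)\, \innerprod{p_{s,a}}{(\alpha - V)_+^{k^*}}^{1/k^* - 1}\, \innerprod{p_{s,a}}{(\alpha - V)_+^{k^* - 1}}.
\]
Setting this to zero at $\alpha^*$ yields the stated formula for $c_k(\delta)$. Writing $M := \innerprod{p_{s,a}}{(\alpha^* - V)_+^{k^*}}$ and $N := \innerprod{p_{s,a}}{(\alpha^* - V)_+^{k^* - 1}}$, the FOC becomes $M^{1/k} = c_k(\delta) N$ (via $1 - 1/k^* = 1/k$). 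To verify that $p^*_{s,a}$ is a worst-case measure, I would check three items: that it is a probability measure (immediate by normalization), that $\innerprod{p^*_{s,a}}{V} = \Gamma_{\calP_{s,a}}(V)$, and that $D_{f_k}(p^*_{s,a}\|p_{s,a}) \leq \delta$. Using the pointwise identity $(\alpha^* - V)_+^{k^* - 1} V = \alpha^*(\alpha^* - V)_+^{k^* - 1} - (\alpha^* - V)_+^{k^*}$ (both sides vanish where $V \geq \alpha^*$) gives $\innerprod{p^*_{s,a}}{V} = \alpha^* - M/N = \alpha^* - c_k(\delta) M^{1/k^*} = f(p_{s,a}, V, \alpha^*) = \Gamma_{\calP_{s,a}}(V)$ by strong duality. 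For feasibility, the identity $k(k^* - 1) = k^*$ gives $(p^*_{s,a}/p_{s,a})^k = (\alpha^* - V)_+^{k^*}/N^k$, hence
\[
D_{f_k}(p^*_{s,a}\|p_{s,a}) = \frac{1}{k(k-1)}\left[\frac{M}{N^k} - 1\right] = \delta,
\]
using $M/N^k = c_k(\delta)^k = 1 + k(k-1)\delta$ from the FOC.

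For the boundary case $\alpha^* = \essinf_{p_{s,a}} V$, the dual value equals $\essinf_{p_{s,a}} V$, so by strong duality $\Gamma_{\calP_{s,a}}(V) = \essinf_{p_{s,a}} V$. Compactness of $\calP_{s,a}$ implies the infimum is attained at some $q^* \in \calP_{s,a}$ necessarily supported on $U$. Applying Jensen's inequality to the convex $f_k$, the conditional $p^*_{s,a}(\cdot) = p_{s,a}(\cdot \cap U)/p_{s,a}(U)$ minimizes $D_{f_k}(\cdot\|p_{s,a})$ among all probability measures supported on $U$ (equality in Jensen characterizes the constant Radon--Nikodym derivative $1/p_{s,a}(U)$ on $U$). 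Therefore $D_{f_k}(p^*_{s,a}\|p_{s,a}) \leq D_{f_k}(q^*\|p_{s,a}) \leq \delta$, placing $p^*_{s,a}$ in $\calP_{s,a}$, and $\innerprod{p^*_{s,a}}{V} = \essinf_{p_{s,a}} V$ confirms it is a worst-case measure.

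The main obstacle will be the divergence computation in the interior case: carefully tracking the conjugate exponents via the identities $k(k^* - 1) = k^*$ and $1 - 1/k^* = 1/k$ so that the stationarity condition translates exactly into the tight divergence equality $D_{f_k}(p^*_{s,a}\|p_{s,a}) = \delta$. A secondary subtlety is the Jensen-minimality step in the boundary case, which is what allows feasibility to transfer from an abstract optimizer $q^*$ to the specific conditional measure $p^*_{s,a}$.
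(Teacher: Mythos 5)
Your proposal is correct. For the first two claims (monotonicity of $f$ below $\essinf_{p_{s,a}}V$, the first-order condition yielding the formula for $c_k(\delta)$, the identity $\innerprod{p^*_{s,a}}{V}=\alpha^*-M/N=f(p_{s,a},V,\alpha^*)$, and the divergence computation via $k(k^*-1)=k^*$) you follow essentially the same route as the paper; in fact your bookkeeping $M/N^k=c_k(\delta)^k=1+k(k-1)\delta$ is the correct form of a step the paper's displayed computation writes sloppily as $c_k(\delta)$, and your explicit coercivity argument ($c_k(\delta)>1$ forces $f\to-\infty$) supplies an existence justification the paper leaves implicit behind the concavity citation. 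Where you genuinely diverge is the boundary case $\alpha^*=\essinf_{p_{s,a}}V$: the paper perturbs to $\alpha^*+\varepsilon$, uses the one-sided first-order condition $\partial_\alpha f\le 0$ there to extract a lower bound on $c_k(\delta)$ in terms of $\innerprod{p_{s,a}}{\mathbbm 1\set{U}}$, and then directly computes $D_{f_k}(p^*_{s,a}\|p_{s,a})=\frac{1}{k(k-1)}\bigl(\innerprod{p_{s,a}}{\mathbbm 1\set{U}}^{1-k}-1\bigr)$ and compares; you instead invoke strong duality plus compactness to obtain an attaining worst-case $q^*$ supported on $U$, and use Jensen's inequality to show the conditional measure minimizes $D_{f_k}(\cdot\|p_{s,a})$ among measures supported on $U$, so feasibility transfers. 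Both arguments are sound; yours is cleaner and avoids the $\varepsilon\to 0$ limit, while the paper's stays entirely inside the dual formulation and does not need attainment of the primal infimum. (Note only that your Jensen step should be run on the restriction to $U$ with weights $p_{s,a}(s)/p_{s,a}(U)$, the off-$U$ contribution $f_k(0)(1-p_{s,a}(U))$ being common to all measures supported on $U$ --- this is exactly what makes the conditional the minimizer, as you indicate.)
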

\begin{proof}
    A directly consequence for $f(p_{s,a}, V, \alpha)$ is 
    \[
    f(p_{s,a}, V, \alpha)= \alpha \quad\text{when}\quad \alpha \leq \essinf_{p_{s,a}}V
    \]
    Thus, $f$ is monotone increasing as $\alpha < \essinf_{p_{s,a}}V$, thus, the supremum of $f(p_{s,a}, V, \alpha)$ is achieved at $\alpha^*\geq \essinf_{p_{s,a}}V$. The first order derivative of $f(p_{s,a}, V, \alpha)$ with respect to $\alpha$ is given as:
    \begin{equation}
        \frac{\partial f(p_{s,a}, V, \alpha)}{\partial \alpha} = 1 - c_k(\delta)\bracket{\innerprod{p_{s,a}}{(\alpha - V)_+^{k^*}}^{\frac{1}{k^*} - 1} \cdot \innerprod{p_{s,a}}{(\alpha - V)_+^{k^* - 1}}}
    \end{equation}
    By Proposition 1 of \citet{duchi_learning_2020}, the dual form of $f(p_{s,a}, V, \alpha)$ is concave with respect to $\alpha$, and the supremum is achieved at $\alpha^*$ where $\partial_\alpha f(p_{s,a}, V, \alpha)|_{\alpha=  \alpha^*} = 0$. Since:
    \begin{equation}
    	\frac{\partial f(p_{s,a}, V, \alpha)}{\partial \alpha} = 1 - c_k(\delta)\bracket{\innerprod{p_{s,a}}{(\alpha e - V)_+^{k^*}}^{\frac{1}{k^*} - 1} \cdot \innerprod{p_{s,a}}{(\alpha e- V)_+^{k^* - 1}}}.
    \end{equation}
    By the first order condition, we have:
    \begin{equation}
        \begin{aligned}
            0 &= \frac{\partial f(p_{s,a}, V, \alpha)}{\partial \alpha}\bigg|_{\alpha = \alpha^*}\\
             &= 1 - c_k(\delta)\bracket{\innerprod{p_{s,a}}{(\alpha^* e - V)_+^{k^*}}^{\frac{1}{k^*} - 1} \cdot \innerprod{p_{s,a}}{(\alpha^* e- V)_+^{k^* - 1}}}
        \end{aligned}
    \end{equation}
    Whiche implies:
    \begin{equation}
    \label{equ:fk_optimal_multiplier_relation}
        c_k(\delta) = \frac{\innerprod{p_{s,a}}{(\alpha^* e - V)_+^{k^*}}^{1- \frac{1}{k^*}}}{\innerprod{p_{s,a}}{(\alpha^* e - V)_+^{k^*-1}}}
    \end{equation}
    Then, to show $p^*_{s,a}$ is a worst-case measure, it is sufficient to show $p^*_{s,a}[V] = f(p_{s,a}, V, \alpha^*)$ and $D_{f_k}(p_{s,a}^*||p_{s,a}) = \delta$. We have:
    \begin{equation}
        \begin{aligned}
            \innerprod{p_{s,a}^*}{V} &= \frac{\innerprod{p_{s,a}}{(\alpha^* - V)_+^{k^*-1} V}}{\innerprod{p_{s,a}}{(\alpha^* - V)_+^{k^* -1}}}\\
            & = \frac{\innerprod{p_{s,a}}{(\alpha^* - V)^{k^* -1}_+ (V - \alpha^*)}}{\innerprod{p_{s,a}}{(\alpha^* - V)_+^{k^* - 1}}} + \frac{\innerprod{p_{s,a}}{(\alpha^* - V)^{k^* -1}_+\alpha^* }}{\innerprod{p_{s,a}}{(\alpha^* - V)_+^{k^* - 1}}}\\
            & = \alpha^* - \frac{\innerprod{p_{s,a}}{(\alpha^* - V)_+^{k^*}}}{\innerprod{p_{s,a}}{(\alpha^* - V)_+^{k^* - 1}}}\\
            & \stackrel{(i)}{=}\alpha^* - c_k(\delta)\innerprod{p_{s,a}}{(\alpha^* - V)_+^{k^*}}^{\frac{1}{k^*}}\\
            & = f(p_{s,a}, V, \alpha^*)
        \end{aligned}
    \end{equation}
Where $(i)$ is derived by Equation \eqref{equ:fk_optimal_multiplier_relation}. Moreover, by the definition of $f_k$-divergence we have:
\begin{equation}
    \begin{aligned}
        D_{f_k}(p^*_{s,a}\|p_{s,a}) &= \frac{1}{k(k-1)}\innerprod{p_{s,a}}{\bracket{\frac{(\alpha^* - V)_+^{k^* - 1}}{\innerprod{p_{s,a}}{(\alpha^* - V)_+^{k^* - 1}}}}^k - \frac{k(\alpha^* - V)_+^{k^* - 1}}{\innerprod{p_{s,a}}{(\alpha^* - V)_+^{k^* -1 }}} +k - 1}\\
        & = \frac{1}{k(k-1)}\innerprod{p_{s,a}}{\frac{(\alpha^* - V)^{k^*}}{\innerprod{p_{s,a}}{(\alpha^* - V)_+^{k^* - 1}}^k}- \frac{k(\alpha^* - V)_+^{k^* - 1}}{\innerprod{p_{s,a}}{(\alpha^* - V)_+^{k^* -1 }}} +ke - e}\\
        & = \frac{1}{k(k-1)}\bracket{\frac{\innerprod{p_{s,a}}{(\alpha^* - V)_+^{k^*}}}{\innerprod{p_{s,a}}{(\alpha^* - V)_+^{k^*-1}}^{k}} - \frac{k\innerprod{p_{s,a}}{(\alpha^* - V)_+^{k^* -1}}}{\innerprod{p_{s,a}}{(\alpha^* - V)_+^{k^* - 1}}} + k -1}\\
        & = \frac{1}{k(k-1)}\bracket{c_k(\delta) - k + k -1}\\
        & = \delta.
    \end{aligned}
\end{equation}
Here we proved that when $\alpha^* > \essinf_{p_{s,a}}V$:
\[
p^*_{s,a}(\cdot) = \frac{\innerprod{p_{s,a}}{(\alpha^* - V)_+^{k^*-1}\mathbbm 1\set{\cdot}}}{\innerprod{p_{s,a}}{(\alpha^* - V)_+^{k^* - 1}}}.
\]
$p_{s,a}^*$ is a worst-case measure. Further we show when $\alpha^* = \essinf_{p_{s,a}}V$, $p^*_{s,a}$ defined below is a worst-case measure:
\[
p^*_{s,a}(\cdot ) = \frac{\innerprod{p_{s,a}}{\mathbbm 1\set{U\cap \cdot}}}{\innerprod{p_{s,a}}{\mathbbm 1\set{U}}},
\]
where $U = \set{s': V(s') = \essinf_{p_{s,a}} V}$. As $p^*_{s,a}(V) = \essinf_{p_{s,a}}V$, then we only need to show $D_{f_k}(p^*_{s,a}\| p_{s,a}) \leq \delta$. To show this, we divide $V$ into two cases.\\
First, if $V$ is a constant function on $\mathrm{supp}(p_{s,a})$, then $U = \Ss$, then for all $q_{s,a}\in\calP_{s,a}$, $\innerprod{q_{s,a}}{V} = \essinf_{p_{s,a}} V$, and $p_{s,a}^* = p_{s,a}$, $p_{s,a}\in\calP_{s,a}$ is a worst-case measure.\\
Second, if $V$ is not a constnat, then $\Ss\backslash U\neq \emptyset$, observe that, by continuity, there exists an $\epsilon_0 > 0$, such that for all $0 < \epsilon \leq \epsilon_0$, 
\[
\essinf_{p_{s,a}} V \leq \alpha^* + \epsilon \leq \essinf_{s \sim p_{s,a}|_{\Ss \setminus U}} V(s)
\]
At the same time, as $f(p_{s,a}, V, \alpha^*) = \sup_{\alpha}f(p_{s,a}, V, \alpha)$, then 
\[
\frac{\partial f(p_{s,a}, V, \alpha)}{\partial \alpha} \bigg|_{\alpha^* + \varepsilon} \leq 0
\]
which implies:
\begin{equation}
    \begin{aligned}
        0 &\geq 1 - c_k(\delta)\bracket{\innerprod{p_{s,a}}{(\alpha^*+ \varepsilon - V)_+^{k^*}}^{\frac{1}{k^*} - 1} \cdot \innerprod{p_{s,a}}{(\alpha^* + \varepsilon- V)_+^{k^* - 1}}}\\
        c_k(\delta) & \geq \frac{\innerprod{p_{s,a}}{(\varepsilon \mathbbm 1\set{U})^{k^*}}^{1-\frac{1}{k^*}}}{\innerprod{p_{s,a}}{(\varepsilon \mathbbm 1\set{U})^{k^* - 1}}}= \frac{1}{\innerprod{p_{s,a}}{\mathbbm 1\set{U}}^{\frac{1}{k^*}}}
    \end{aligned}
\end{equation}
The above inequality holds for all $\varepsilon \to 0$, thus, by $k^* = \frac{k}{k-1}$, we concludes:
\begin{equation}
    \label{equ:fk_divergence_case1}
c_k(\delta) \geq \frac{1}{\innerprod{p_{s,a}}{\mathbbm 1\set{U}}^{k - 1}}
\end{equation}
Then, we can compute the $f_k$-divergence as:
\begin{equation}
    \begin{aligned}
        D_{f_k}(p^*_{s,a}\| p_{s,a}) &= \frac{1}{k(k-1)}\innerprod{p_{s,a}}{\bracket{\frac{\mathbbm 1\set{U}}{\innerprod{p_{s,a}}{\mathbbm 1\set{U}}}}^k - \frac{k\mathbbm 1\set{U}}{\innerprod{p_{s,a}}{\mathbbm 1\set{U}}} + k - 1}\\
        & = \frac{1}{k(k-1)}\bracket{\frac{1}{\innerprod{p_{s,a}}{\mathbbm 1\set{U}}^{k-1}} - k + k -1}\\
        & \stackrel{(i)}{\leq} \frac{1}{k(k-1)}\bracket{c_k(\delta) - 1}\\
        & = \delta.
    \end{aligned}
\end{equation}
Where $(i)$ follows by \eqref{equ:fk_divergence_case1}. Thus, we proved the Lemma.
\end{proof}

\begin{lemma}
    \label{lem:fk_divergence_multiplier_lower_bound}
    If $\delta \leq \frac{1}{2k} \essinfp$, then $\alpha^* \geq \linftynormess{V}{p_{s,a}}$.
\end{lemma}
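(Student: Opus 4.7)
The plan is to exploit the strict concavity of $\alpha \mapsto f(p_{s,a}, V, \alpha)$ (Proposition~1 of \citet{duchi_learning_2020}): to conclude $\alpha^* \geq \linftynormess{V}{p_{s,a}}$, it suffices to show $\partial_\alpha f > 0$ throughout the open interval $(\essinf_{p_{s,a}} V,\, \linftynormess{V}{p_{s,a}})$, on which the derivative formula is well-defined. In our application value functions are non-negative, so $\linftynormess{V}{p_{s,a}} = \max_{s\in \mathrm{supp}(p_{s,a})} V(s)$, and this maximum is attained by some $s^\star$ with $p_{s,a}(s^\star) \geq \essinfp$.

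Writing out the derivative,
\[
\frac{\partial f}{\partial \alpha} = 1 - c_k(\delta)\,\frac{\innerprod{p_{s,a}}{(\alpha e - V)_+^{k^*-1}}}{\innerprod{p_{s,a}}{(\alpha e - V)_+^{k^*}}^{1-1/k^*}},
\]
I would apply H\"older's inequality with conjugate exponents $k^*/(k^*-1)$ and $k^*$, exploiting the identity $(\alpha - V)_+^{k^*-1} = (\alpha - V)_+^{k^*-1}\mathbbm 1\set{V<\alpha}$, to obtain
\[
\innerprod{p_{s,a}}{(\alpha e - V)_+^{k^*-1}} \leq \innerprod{p_{s,a}}{(\alpha e - V)_+^{k^*}}^{1-1/k^*}\;p_{s,a}(V<\alpha)^{1/k^*}.
\]
This reduces the derivative lower bound to $1 - c_k(\delta)\,p_{s,a}(V<\alpha)^{1/k^*}$. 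For any $\alpha < \linftynormess{V}{p_{s,a}}$, the state $s^\star$ does not lie in $\set{V<\alpha}$, which yields $p_{s,a}(V<\alpha) \leq 1 - p_{s,a}(s^\star) \leq 1 - \essinfp$.

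Positivity of the lower bound reduces, after raising to the $k$-th power and using $k/k^* = k-1$, to verifying $(1 + k(k-1)\delta)(1-\essinfp)^{k-1} < 1$. Bernoulli's inequality---equivalently, the convexity of $x \mapsto (1-x)^{-(k-1)}$ for $k>1$---yields $(1-\essinfp)^{-(k-1)} \geq 1 + (k-1)\essinfp$, while the hypothesis $\delta \leq \essinfp/(2k)$ gives $1 + k(k-1)\delta \leq 1 + (k-1)\essinfp/2 < 1 + (k-1)\essinfp$, closing the chain. The main technical obstacle is orchestrating the H\"older step so that the conjugate factor is exactly $p_{s,a}(V<\alpha)^{1/k^*}$, which is what allows the threshold $1 - \essinfp$ to enter directly; the rest is elementary algebra with Bernoulli's inequality.
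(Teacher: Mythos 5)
Your proof is correct, but it takes a genuinely different route from the paper's. The paper argues by contradiction through the worst-case-measure characterization (its Lemma on the $f_k$ worst-case measure): if the maximizer sat at the left endpoint $\alpha^*=\essinf_{p_{s,a}}V$, the induced worst-case measure would be the conditional law on the argmin set $U=\{s':V(s')=\essinf_{p_{s,a}}V\}$, whose divergence equals $\tfrac{1}{k(k-1)}\bigl(\innerprod{p_{s,a}}{\mathbbm 1\set{U}}^{-(k-1)}-1\bigr)\geq \tfrac{1}{k}\essinfp>\delta$, contradicting feasibility. You instead work directly with the first-order condition: the H\"older step $\innerprod{p_{s,a}}{(\alpha-V)_+^{k^*-1}}\leq \innerprod{p_{s,a}}{(\alpha-V)_+^{k^*}}^{1-1/k^*}\,p_{s,a}(V<\alpha)^{1/k^*}$ gives $\partial_\alpha f\geq 1-c_k(\delta)(1-\essinfp)^{1/k^*}>0$ on all of $(\essinf_{p_{s,a}}V,\linftynormess{V}{p_{s,a}})$, and concavity then pushes the maximizer past the right endpoint; your Bernoulli computation closing the chain under $\delta\leq\essinfp/(2k)$ checks out. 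What your approach buys is coverage of the \emph{entire} interior interval rather than only the boundary point: the paper's contradiction, as written, uses the divergence formula valid only when $\alpha^*=\essinf_{p_{s,a}}V$ (for interior $\alpha^*$ the worst-case divergence is exactly $\delta$, yielding no contradiction), so your monotonicity argument is arguably the more airtight of the two. Two small caveats you should make explicit: the degenerate case where $V$ is essentially constant (your interval is empty; one falls back on $f(p_{s,a},V,\alpha)=\alpha$ for $\alpha\leq\essinf_{p_{s,a}}V$, as the paper does), and the standing non-negativity of $V$, which you do flag and which is needed for $\linftynormess{V}{p_{s,a}}$ to coincide with $\esssup_{p_{s,a}}V$.
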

\begin{proof}
	First, if $V$ is essentially the constant, then $\spannorm{V} = 0$, and
	\[
	f(p_{s,a}, V, \alpha) = \alpha \quad\text{when}\quad \alpha \leq \linftynormess{V}{p_{s,a}}
	\]
	And hence $\alpha^*\geq \linftynormess{V}{p_{s,a}}$.\\
	When $V$ is not essentially the constant, let $U = \set{s': V(s')=\essinf_{p_{s,a}}V}$, and $\Ss\backslash U\neq \emptyset$, hence:
	\[
	\innerprod{p_{s,a}}{\mathbbm 1\set{U}} \leq 1-\essinfp.
	\]
    Recall Lemma \ref{lem:fk_divergence_worst_case_measure}, the worst-case measure is given as:
    \[
    p_{s,a}^*(\cdot) = \frac{\innerprod{p_{s,a}}{(\alpha^* - V)_+^{k^*-1}\mathbbm 1\set{\cdot}}}{\innerprod{p_{s,a}}{(\alpha^* - V)_+^{k^* - 1}}}
    \]
    Hence:
    \begin{equation}
        \begin{aligned}
            \delta \geq D_{f_k}(p^*_{s,a}||p_{s,a}) &= \frac{1}{k(k-1)}\bracket{\frac{1}{p_{s,a}[\mathbbm 1\set{U}]^{k-1}} - 1}\\
            &\geq \frac{1}{k(k-1)}\bracket{\frac{1}{(1-\essinfp)^{k-1}}-1}\\
            &\stackrel{(i)}{\geq} \frac{1}{k}\essinfp
        \end{aligned}
    \end{equation}
    Inequality $(i)$ is derived by $\frac{1}{(1-x)^{c}}- 1\geq \frac{x}{c}$ when $c> 0$. However, the above result contradict to the assumption where $\delta\leq \frac{1}{2k}\essinfp$. Thus, we conclude $\alpha^* > \linftynormess{V}{p_{s,a}}$.
\end{proof}

As $\alpha^* > \linftynormess{V}{p_{s,a}}$:
\[
(\alpha^* - V)_+ = \alpha^* - V > 0
\]
holds. Then, we show the following Lemma:
\begin{lemma}
    \label{lem:fk_divergence_difference_upperbound}
    Let $p_1,p_2,p\in\Delta(\Ss)$ s.t. $p_1, p_2\ll p$. Define $\Delta:=p_1 - p_2$. Then, for any $V:\Ss\ra\R$ and $k^*>1$:
    \begin{equation}
        \sup_{\alpha > \linftynormess{V}{p}}\frac{1}{k^*}\left|\frac{\innerprod{\Delta}{(\alpha-V)^{k^*}}}{\innerprod{p}{(\alpha -V)^{k^*-1}}}\right|\leq \linftynorm{\frac{2\Delta}{p}}\cdot \spannorm{V}
    \end{equation}
\end{lemma}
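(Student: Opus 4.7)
The plan is to leverage the cancellation identity $\innerprod{\Delta}{c} = 0$ for any constant $c$ (since $p_1, p_2 \in \Delta(\Ss)$ are both probability measures), combined with a pointwise mean value bound on $(\alpha - V)^{k^*}$. Throughout, write $W := \alpha - V$.

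First I would observe that for $\alpha > \linftynormess{V}{p} \ge \esssup_p V$, the function $W$ is strictly positive on $\supp(p)$, so $a := \essinf_p W = \alpha - \esssup_p V > 0$ and $W \ge a$ on $\supp(p)$. Then I use $\innerprod{\Delta}{a^{k^*}} = 0$ to rewrite
$$\innerprod{\Delta}{W^{k^*}} = \innerprod{\Delta}{W^{k^*} - a^{k^*}},$$
where now the integrand $W^{k^*} - a^{k^*}$ is non-negative on $\supp(p)$. This sign-preserving choice of constant is the central trick.

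Next, since $k^* > 1$ and $W \ge a$, the mean value theorem gives, pointwise on $\supp(p)$,
$$0 \le W^{k^*} - a^{k^*} = k^* \xi^{k^*-1}(W - a) \le k^* W^{k^*-1}(W - a) \le k^* \spannorm{V}\, W^{k^*-1},$$
where $\xi \in [a, W]$ and $W - a = \esssup_p V - V \le \spannorm{V}$. Combining with H\"older,
$$\left|\innerprod{\Delta}{W^{k^*}}\right| = \left|\innerprod{p}{\tfrac{\Delta}{p}(W^{k^*} - a^{k^*})}\right| \le \linftynormess{\tfrac{\Delta}{p}}{p}\, \innerprod{p}{W^{k^*} - a^{k^*}} \le \linftynormess{\tfrac{\Delta}{p}}{p}\cdot k^* \spannorm{V}\, \innerprod{p}{W^{k^*-1}}.$$
Dividing by $k^* \innerprod{p}{W^{k^*-1}} > 0$ and taking the supremum over $\alpha$ yields
$$\sup_{\alpha > \linftynormess{V}{p}}\frac{1}{k^*}\left|\frac{\innerprod{\Delta}{(\alpha-V)^{k^*}}}{\innerprod{p}{(\alpha -V)^{k^*-1}}}\right| \le \linftynormess{\tfrac{\Delta}{p}}{p}\cdot \spannorm{V} \le \linftynorm{\tfrac{2\Delta}{p}}\spannorm{V}.$$

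The main subtlety, if any, is recognizing that the right constant to subtract is $a^{k^*}$: any constant cancels against $\Delta$, but this specific one preserves the sign of $W^{k^*} - a^{k^*}$, which is what allows the H\"older step to absorb the mean value estimate cleanly back into the denominator $\innerprod{p}{W^{k^*-1}}$. Unlike the KL analogue in Lemma~\ref{lem:empirical_measure_difference_to_reference_measure}, no two-regime split in $\alpha$ is needed here because the numerator and denominator already scale compatibly in $\alpha$, so the bound is uniform; in fact the argument gives the constant $1$ in front of $\spannorm{V}\linftynormess{\Delta/p}{p}$, so the factor of $2$ in the stated bound is slack.
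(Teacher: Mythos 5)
Your proof is correct, and it takes a genuinely different route from the paper's. The paper first shifts $V$ to $V' = V - \essinf_p V$ and then splits the range of $\alpha$ into two regimes: for $\alpha \geq 2k^*\spannorm{V}$ it expands $(1 - V'/\alpha)^{k^*}$ to second order via Taylor's theorem, uses $\innerprod{\Delta}{1}=0$ to kill the leading term, and bounds the denominator below by $\alpha^{k^*-1}/2$ (this is where the factor $2$ enters); for $\spannorm{V} \leq \alpha \leq 2k^*\spannorm{V}$ it peels off one factor of $\linftynorm{\alpha - V'}$ and bounds the remaining ratio directly. Your argument instead subtracts the constant $a^{k^*}$ with $a = \essinf_p(\alpha - V) > 0$ — chosen so that $\Delta$ annihilates it \emph{and} the integrand $W^{k^*} - a^{k^*}$ stays nonnegative on $\supp(p)$ — and then a single application of the mean value theorem ($\xi^{k^*-1} \leq W^{k^*-1}$ since $k^*>1$, and $W - a = \esssup_p V - V \leq \spannorm{V}$ pointwise) together with H\"older absorbs everything back into the denominator $\innerprod{p}{W^{k^*-1}}$. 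This is uniform in $\alpha$, avoids the two-regime case analysis entirely, and yields the sharper constant $1$ in place of $2$ (so the stated bound holds with slack, as you note). The only points worth making explicit are that $\alpha > \linftynormess{V}{p} \geq \esssup_p V$ guarantees $W > 0$ on $\supp(p)$ so that all powers and the denominator are well defined and positive, and that $\Delta$ vanishes off $\supp(p)$ because $p_1, p_2 \ll p$, so only values of $W$ on $\supp(p)$ enter either inner product; you address both. Your approach is cleaner and slightly stronger than the paper's.
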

\begin{proof}
    Notice that for all $c\in \R$:
    \begin{equation}
        \sup_{\alpha \geq \linftynormess{V}{p}}\frac{1}{k^*}\left|\frac{\innerprod{\Delta}{(\alpha - V)^{k^*}}}{\innerprod{p}{(\alpha - V)^{k^* - 1}}}\right| = \sup_{\alpha \geq \linftynormess{V}{p}}\frac{1}{k^*}\left|\frac{\innerprod{\Delta}{((\alpha -c) - (V - c))^{k^*}}}{\innerprod{p}{((\alpha - c)- (V - c))^{k^* - 1}}}\right|
    \end{equation}
    let $c = \essinf_{p} V$ and $V' = V - c$, then $\linftynormess{V'}{p} = \spannorm{V}$, and we have:
    \begin{equation}
        \begin{aligned}
            \sup_{\alpha \geq \linftynormess{V}{p}}\frac{1}{k^*}\left|\frac{\innerprod{\Delta}{(\alpha - V)^{k^*}}}{\innerprod{p}{(\alpha - V)^{k^* - 1}}}\right|&=\sup_{\alpha \geq \linftynorm{V}}\frac{1}{k^*}\left|\frac{\innerprod{\Delta}{((\alpha -c) - V')^{k^*}}}{\innerprod {p}{((\alpha -c) - V')^{k^* -1}}}\right|\\
            &=\sup_{\alpha \geq \spannorm{V}}\frac{1}{k^*}\left|\frac{\innerprod{\Delta}{(\alpha - V')^{k^*}}}{\innerprod{p}{(\alpha - V')^{k^* -1}}}\right|
        \end{aligned}
    \end{equation}
First, consider the case where $\alpha > 2k^* \spannorm{V}$, then, for any $s\in \Ss$, and $V'(s)> 0$, we have:
\begin{equation}
    \begin{aligned}
        \frac{\alpha}{V'(s)}\geq 2k^*&\geq \frac{1}{1-2^{-\frac{1}{k^*-1}}}\\
        (\alpha - V')^{k^*-1}&\geq\frac{\alpha^{k^*-1}}{2}
    \end{aligned}
\end{equation}
And when $V'(s)=0$, $(\alpha - V'(s))^{k^*-1}\geq \frac{\alpha^{k^*-1}}{2}$ holds trivially.\\
Further, with tayler expansion, there exists a $\xi$ where $\xi(s)\in\bracket{0, \frac{\spannorm{V}}{\alpha}}$, such that:
\[
\bracket{1 - \frac{V'(s)}{\alpha}}^{k^*} = 1 - k^*\frac{V'(s)}{\alpha} + \frac{k^*(k^*-1)}{2}(1-\xi(s))^{k^*-2}\frac{V'(s)^2}{\alpha^2}
\]
Then, we can derive that:
\begin{equation}
    \begin{aligned}
        &\sup_{\alpha \geq 2k^*\spannorm{V}} \frac{1}{k^*}\left|\frac{\innerprod{\Delta}{(\alpha - V')^{k^*}}}{\innerprod{p_{s,a(\tau)}}{(\alpha - V')}^{k^* - 1}}\right|\\
        \leq& \sup_{\alpha \geq 2k^*\spannorm{V}}\frac{1}{k^*}\cdot\left|\frac{\innerprod{\Delta}{\alpha^{k^*}\bracket{1 - k^*\frac{V'}{\alpha} + \frac{k^*(k^*-1)}{2}(1 - \xi)^{k^* -2}\frac{V'^2}{\alpha^2}}}}{\innerprod{p}{\frac{\alpha^{k^* - 1}}{2}}}\right|\\
        \stackrel{(i)}{=} & \sup_{\alpha \geq 2k^*\spannorm{V}}\frac{1}{k^*}\cdot\left|\frac{\innerprod{\Delta}{\alpha^{k^*}\bracket{- k^*\frac{V'}{\alpha} + \frac{k^*(k^*-1)}{2}(1 - \xi)^{k^* -2}\frac{V'^2}{\alpha^2}}}}{\innerprod{p}{\frac{\alpha^{k^* - 1}}{2}}}\right|\\
        \leq & \frac{1}{k^*} \linftynormess{\frac{\Delta}{p}}{p} \cdot \sup_{\alpha \geq 2k^*\spannorm{V}}\linftynormess{\frac{2\bracket{- k^*V'\alpha^{k^*-1} + \frac{k^*(k-1)}{2}(1 - \xi)^{k^* -2}V^{\prime 2}\alpha^{k^*-2}}}{\alpha^{k^* - 1}}}{p}\\
        \leq & \frac{2}{k^*}\linftynormess{\frac{\Delta}{p}}{p}\cdot \sup_{\alpha \geq 2k^*\spannorm{V}}\linftynormess{\frac{k^*V'\alpha^{k^*-1}\bracket{1- \frac{k^*-1}{2}(1-\xi)^{k^*-2}\frac{V'}{\alpha}}}{\alpha^{k^* - 1}}}{p}\\
        \stackrel{(ii)}{\leq} & \linftynormess{\frac{2\Delta}{p}}{p}\cdot \sup_{\alpha \geq 2k^*\spannorm{V}}\linftynormess{V'\bracket{1-\frac{k^*-1}{4k^*}(1-\frac{1}{2k^*})^{k^*-2}}}{p}\\
        \stackrel{(iii)}{\leq} & \linftynormess{\frac{2\Delta}{p}}{p}\cdot \spannorm{V}
    \end{aligned}
\end{equation}
The equality $(i)$ follows from the property that $\innerprod{\Delta}{c}=0$ for any constant function $c$, since the difference between two measurres $\Delta$ annihilates constants. The inequality $(ii)$ is obtained by applying the condition $\alpha \geq 2k^* \spannorm{V'}$, which ensures sufficient regularization. Finally, $(iii)$ emerges from the fundamental constraint  $k^*>1$ in our parameter condition.\\
Second, we consider the case where $\spannorm{V}\leq\alpha\leq 2k^* \spannorm{V}$, actually, the the above result holds trivally:
\begin{equation}
    \begin{aligned}
        &\sup_{\spannorm{V}\leq\alpha\leq 2k^*\spannorm{V}} \frac{1}{k^*}\left|\frac{\innerprod{\Delta}{(\alpha-V')^{k^*}}}{\innerprod{p}{(\alpha-V)^{k^*-1}}}\right|\\
        \leq & \sup_{\spannorm{V}\leq\alpha\leq 2k^*\spannorm{V}}\frac{1}{k^*}\linftynorm{\alpha - V'}\left|\frac{\innerprod{\Delta}{(\alpha -V')^{k^*-1}}}{\innerprod{p}{(\alpha - V')^{k^*-1}}}\right|\\
        \leq & \linftynormess{\frac{2\Delta}{p}}{p}\cdot \spannorm{V}
    \end{aligned}
\end{equation}
Combine the previous two cases, we derived the result:
\[
\sup_{\alpha > \linftynormess{V}{p}}\frac{1}{k^*}\left|\frac{\innerprod{\Delta}{(\alpha - V)^{k^*}}}{\innerprod{p}{(\alpha - V)^{k^* - 1}}}\right|\leq \linftynormess{\frac{2\Delta}{p}}{p}\cdot \spannorm{V}
\]
Proved
\end{proof}
To establish Lemma \ref{lem:fk_dual_functional_error} bounding the dual functional difference, we build on Lemma \ref{lem:fk_divergence_difference_upperbound}. While this result is analogous to Lemma \ref{lem:empirical_operator_error} for the KL-divergence case, the analysis for $f_k$-divergence requires first applying the Envelope Theorem to characterize the variational behavior of the dual optimization problem.

\begin{lemma}[Envelope Theorem, Corollary 3 of \citet{milgrom2002envelope}]
    \label{lem:envelope_theorem}
    Denote $V$ as:
    \[
    V(t) = \sup_{x\in X}f(x, t)
    \]
    Where $X$ is a convex set in a linear space and $f:X\times [0,1]\to \R$ is a concave function. Also suppose that $t_0$, and that there is some $x^*\in X^*(t_0)$ such that $d_t f(x^*, t_0)$ exists. Then $V$ is differentiable at $t_0$ and
    \[
    \frac{d V(t_0)}{dt} = \frac{\partial f(x^*, t)}{\partial t}
    \]
\end{lemma}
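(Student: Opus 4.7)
The plan is to establish the result by sandwiching the one-sided derivatives of $V$ between two copies of $\partial_t f(x^*, t_0)$, exploiting both the pointwise lower bound from the optimizer $x^*$ and the concavity of $V$. Throughout, write $x^*\in X^*(t_0)$ for a maximizer at $t_0$, so that $V(t_0)=f(x^*,t_0)$ and, for every admissible $t$, $V(t)\ge f(x^*,t)$.

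First I would derive the ``envelope inequality'' for the one-sided difference quotients. For $t>t_0$,
\[
\frac{V(t)-V(t_0)}{t-t_0} \;\ge\; \frac{f(x^*,t)-f(x^*,t_0)}{t-t_0},
\]
while for $t<t_0$ the same rearrangement, after flipping the sign of the denominator, gives
\[
\frac{V(t)-V(t_0)}{t-t_0} \;\le\; \frac{f(x^*,t)-f(x^*,t_0)}{t-t_0}.
\]
Sending $t\to t_0^\pm$ and using the hypothesized existence of $\partial_t f(x^*,t_0)$ yields
\[
D^+V(t_0)\;\ge\;\partial_t f(x^*,t_0)\;\ge\;D^-V(t_0),
\]
where $D^\pm V(t_0)$ denote the upper right / lower left Dini-type one-sided derivatives.

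Next I would invoke concavity. Since $X$ is convex and $f$ is concave (jointly in $(x,t)$), the value function $V(t)=\sup_{x\in X} f(x,t)$ is concave in $t$ as the supremum of a concave family in $t$ obtained by partial maximization of a jointly concave function over a convex set. For any concave function on an interval, the right derivative $D^+V(t_0)$ and left derivative $D^-V(t_0)$ exist and satisfy $D^+V(t_0)\le D^-V(t_0)$. Combining with the sandwich from the previous step,
\[
\partial_t f(x^*,t_0)\;\le\; D^+V(t_0)\;\le\;D^-V(t_0)\;\le\;\partial_t f(x^*,t_0),
\]
which forces equality throughout. Hence $V$ is differentiable at $t_0$ with $V'(t_0)=\partial_t f(x^*,t_0)$, as claimed.

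The only subtle point, and the step I expect to require the most care, is the concavity of $V$: it relies on reading the hypothesis ``$f:X\times[0,1]\to\R$ is concave'' as \emph{jointly} concave, which is what makes partial maximization preserve concavity. If one weakened this to separate concavity in each argument, concavity of $V$ could fail and the sandwich argument would collapse. Assuming joint concavity (as in Milgrom--Segal's statement), the rest is the clean two-line envelope-plus-concavity argument above; no further regularity beyond the existence of $\partial_t f(x^*,t_0)$ is needed.
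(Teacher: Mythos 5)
Your proof is correct. The paper itself does not prove this lemma—it is imported verbatim as Corollary 3 of \citet{milgrom2002envelope}—and your argument is precisely the standard proof of that corollary: the envelope inequality $V(t)\ge f(x^*,t)$ with equality at $t_0$ sandwiches the one-sided difference quotients, while joint concavity of $f$ makes $V$ concave so that its one-sided derivatives exist and satisfy $D^+V(t_0)\le D^-V(t_0)$, forcing equality with $\partial_t f(x^*,t_0)$. The only caveats are cosmetic: $t_0$ must be interior to $[0,1]$ for two-sided differentiability to make sense (the lemma's ``suppose that $t_0$'' is evidently a truncated ``$t_0\in(0,1)$''), and your reading of the hypothesis as \emph{joint} concavity is indeed the one intended by the cited source.
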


\begin{lemma}
    \label{lem:fk_dual_functional_error}
    Let $p_{s,a}$ be the nominal transition kernel, and $\widehat p_{s,a}$ be the empirical transition kernel, when $\delta\leq \frac{1}{\max\set{8, 4k}\maxm^2}\essinfp$, then the below inequality holds
    \[
    \left|\sup_{\alpha\in \R}f(\widehat p_{s,a}, V, \alpha) - \sup_{\alpha\in \R}f(p_{s,a}, V, \alpha)\right|\leq 4d\cdot\spannorm{V}
    \]
    on $\Omega_{n,d}$, when $d\leq \frac{1}{2}$
\end{lemma}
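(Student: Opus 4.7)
The plan is to mirror the KL-case argument in Lemma \ref{lem:kl_dual_differenece_error} but, since the $f_k$ dual functional is no longer log-exp, replace the direct parameter derivative by the Envelope Theorem and route the bound through Lemma \ref{lem:fk_divergence_difference_upperbound}. Introduce $G(t) := \sup_{\alpha\in\R} g_{s,a}(t,\alpha) = \sup_{\alpha\in\R} f(p_{s,a}(t), V, \alpha)$, so the object to bound is $|G(1)-G(0)|$. The trivial case $\spannorm{V}=0$ gives $G(t)\equiv \essinfp V$ (independent of $t$), so we may assume $\spannorm{V} > 0$.

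Next, establish that the Envelope Theorem (Lemma \ref{lem:envelope_theorem}) applies. The map $\alpha\mapsto f(p_{s,a}(t), V, \alpha)$ is concave (Proposition 1 of \citet{duchi_learning_2020}), and by Lemma \ref{lem:fk_divergence_multiplier_lower_bound} the maximizer $\alpha^*(t)$ exists and satisfies $\alpha^*(t)\geq \linftynormess{V}{p_{s,a}(t)}$; here we invoke that on $\Omega_{n,d}$ with $d\leq 1/2$, $p_{s,a}(t)(s')\geq \tfrac{1}{2}p_{s,a}(s')$ on $\mathrm{supp}(p_{s,a})$, which together with the hypothesis $\delta\leq \tfrac{1}{4k\maxm^2}\essinfp \leq \tfrac{1}{2k}\cdot\tfrac{\essinfp}{2}$ lets Lemma \ref{lem:fk_divergence_multiplier_lower_bound} be applied to $p_{s,a}(t)$. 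Hence $(\alpha^*(t)-V)_+ = \alpha^*(t)-V$ throughout.

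Compute $\partial_t f(p_{s,a}(t),V,\alpha) = -\tfrac{c_k(\delta)}{k^*}\innerprod{p_{s,a}(t)}{(\alpha-V)_+^{k^*}}^{1/k^*-1}\innerprod{\Delta p_{s,a}}{(\alpha-V)_+^{k^*}}$. At $\alpha=\alpha^*(t)$, use the first-order optimality identity \eqref{equ:fk_optimal_multiplier_relation} to cancel $c_k(\delta)$, yielding
\[
G'(t) \;=\; -\frac{1}{k^*}\cdot\frac{\innerprod{\Delta p_{s,a}}{(\alpha^*(t)-V)^{k^*}}}{\innerprod{p_{s,a}(t)}{(\alpha^*(t)-V)^{k^*-1}}}.
\]
Since $\alpha^*(t)\geq \linftynormess{V}{p_{s,a}(t)}$, the supremum in Lemma \ref{lem:fk_divergence_difference_upperbound} (with $p=p_{s,a}(t)$ and $\Delta=\Delta p_{s,a}$) dominates $|G'(t)|$, giving $|G'(t)|\leq \linftynormess{2\Delta p_{s,a}/p_{s,a}(t)}{p_{s,a}(t)}\cdot\spannorm{V}$.

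Finally, convert the Radon-Nikodym derivative back to $p_{s,a}$: on $\Omega_{n,d}$ with $d\leq 1/2$, the bound $p_{s,a}(t)\geq \tfrac{1}{2}p_{s,a}$ on $\mathrm{supp}(p_{s,a})$ yields $|\Delta p_{s,a}/p_{s,a}(t)|\leq 2\linftynormess{\Delta p_{s,a}/p_{s,a}}{p_{s,a}}\leq 2d$, so $|G'(t)|\leq 4d\cdot\spannorm{V}$ uniformly in $t\in[0,1]$. Integrating gives $|G(1)-G(0)|\leq 4d\cdot\spannorm{V}$, which is the claim. The main subtlety is the sign/support bookkeeping that turns the clipped power $(\alpha-V)_+^{k^*}$ into the unclipped $(\alpha-V)^{k^*}$ needed to invoke Lemma \ref{lem:fk_divergence_difference_upperbound}; this is exactly what Lemma \ref{lem:fk_divergence_multiplier_lower_bound} and the uniform lower bound on $p_{s,a}(t)$ under $\Omega_{n,d}$ are designed to supply.
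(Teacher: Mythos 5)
Your proof is correct and follows essentially the same route as the paper's: the Envelope Theorem applied to $G(t)=\sup_{\alpha} f(p_{s,a}(t),V,\alpha)$, cancellation of $c_k(\delta)$ via the first-order optimality condition from Lemma \ref{lem:fk_divergence_worst_case_measure}, the bound of Lemma \ref{lem:fk_divergence_difference_upperbound}, and the $\Omega_{n,d}$ lower bound $p_{s,a}(t)\ge\tfrac12 p_{s,a}$ to convert the likelihood ratio back to $p_{s,a}$. The only differences are cosmetic (you integrate $G'$ over $[0,1]$ where the paper invokes the mean value theorem), and your explicit check that Lemma \ref{lem:fk_divergence_multiplier_lower_bound} applies to the interpolated measure $p_{s,a}(t)$ (so that the first-order condition and the unclipped powers are legitimate for all $t$) is a detail the paper's proof leaves implicit.
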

\begin{proof}
    Since
    \[
    \delta\leq \frac{1}{\max\set{8, 4k}\maxm^2}\essinfp\leq \frac{1}{2k},
    \]
    by Lemma \ref{lem:fk_divergence_multiplier_lower_bound}, we have $\alpha^* > \linftynormess{V}{p_{s,a}}$, thus, we only need to consider the case where $\alpha\geq \linftynormess{V}{p_{s,a}}$, then recall
    \begin{equation}
        \begin{aligned}
            g_{s,a}(t, \alpha) = &f(p_{s,a}(t), V, \alpha)\\
            =& \alpha - c_k(\delta)\innerprod{p_{s,a}(t)}{(\alpha - V)^{k^*}}^{\frac{1}{k^*}}
        \end{aligned}
    \end{equation}
    is concave with respect to $\alpha$, then denote $G(t)$ and $\alpha^*(t)$ we have:
    \begin{equation}
        \begin{aligned}
            G(t) :=& \sup_{\alpha\geq \linftynormess{V}{p_{s,a}}} g_{s,a}(t, \alpha)\\
            =& g_{s,a}(t, \alpha^*(t))
        \end{aligned}
    \end{equation}
    Combine previous results, we have: 
    \begin{equation}
            |\sup_{\alpha\in \R}f(\widehat p_{s,a}, V, \alpha) - \sup_{\alpha\in \R}f(p_{s,a}, V, \alpha)|=\left|G(1) - G(0)\right| =\left|\frac{dG(t)}{dt}\bigg|_{t = \tau}\right|
    \end{equation}
    Then, by Lemma \ref{lem:envelope_theorem}, we have, as $G(t)$ is differentiable on $[0,1]$, then:
    \[
    \frac{dG(t)}{dt} = \frac{\partial g_{s,a}(t, \alpha)}{\partial t}\bigg|_{\alpha = \alpha^*(t)}
    \]
    Then we have:
    \begin{equation}
        \begin{aligned}
            \left|\sup_{\alpha\in \R}f(\widehat p_{s,a}, V, \alpha) - \sup_{\alpha\in \R}f(p_{s,a}, V, \alpha)\right| =& \left|\frac{d G}{dt}\bigg|_{t = \tau}\right|\\
            =& \left|\bracket{\frac{\partial g_{s,a}(t, \alpha)}{\partial t}\bigg|_{\alpha = \alpha^*(t)}}\bigg|_{t = \tau}\right|\\
            = & \frac{c_k(\delta)}{k^*}\left|\frac{\innerprod{\bracket{\widehat p_{s,a} - p_{s,a}}}{(\alpha^*(t) - V)^{k^*}}}{\innerprod{p_{s,a}(t)}{(\alpha - V)^{k^*}}^{1-\frac{1}{k^*}}}\bigg|_{t = \tau}\right|
        \end{aligned}
    \end{equation}
    Since $\alpha^*(t)$ is the optimal multiplier for $\sup_{\alpha\geq \linftynormess{V}{p_{s,a}}} g_{s,a}(\tau, \alpha)$, Using Lemma \ref{lem:fk_divergence_worst_case_measure}, we have, for all $t\in[0,1]$:
    \[
    c_k(\delta)= \frac{\innerprod{p_{s,a}(\tau)}{(\alpha^*(\tau) - V)^{k^*}}^{1-\frac{1}{k^*}}}{\innerprod{p_{s,a}(\tau)}{(\alpha^*(\tau) - V)^{k^* - 1}}}.
    \]
    Thus:
    \begin{equation}
        \begin{aligned}
            |\sup_{\alpha\in \R}f(\widehat p_{s,a}, V, \alpha) - \sup_{\alpha\in \R}f(p_{s,a}, V, \alpha)| =& \frac{1}{k^*}\left|\frac{\innerprod{(\widehat p_{s,a} - p_{s,a})}{(\alpha^*(\tau) - V)^{k^*}}}{\innerprod{p_{s,a}(\tau)}{(\alpha^*(\tau) - V)^{k^* - 1}}}\right|\\
            \leq& \sup\limits_{\alpha \geq \linftynormess{V}{p_{s,a}}}\frac{1}{k^*}\left|\frac{\innerprod{(\widehat p_{s,a} - p_{s,a})}{(\alpha - V)^{k^*}}}{\innerprod{p_{s,a}(\tau)}{(\alpha - V)^{k^* - 1}}}\right|.
        \end{aligned}
    \end{equation}
    Using the fact $\alpha^*\geq \linftynormess{V}{p_{s,a}}$ by the formula where $k^* = \frac{k}{k-1}$, we know $k>1$, then, apply Lemma \ref{lem:fk_divergence_difference_upperbound}, 
    \[
        \sup_{\alpha \geq \linftynormess{V}{p}}\frac{1}{k^*}\left|\frac{\innerprod{(\widehat p_{s,a} - p_{s,a})}{(\alpha - V)^{k^*}}}{\innerprod{p_{s,a}(\tau)}{(\alpha - V)^{k^* - 1}}}\right|\leq \linftynormess{\frac{2\bracket{\widehat p_{s,a} - p_{s,a}}}{p_{s,a}(\tau)}}{p_{s,a}}\cdot \spannorm{V}
    \]
    Further, on the events set $\Omega_{n,d}$, we have, when $d\leq \frac{1}{2}$:
    \[
    \linftynormess{\frac{\widehat p_{s,a} - p_{s,a}}{p_{s,a}(\tau)}}{p_{s,a}}\leq \linftynormess{\frac{d\cdot p_{s,a}}{\tau p_{s,a} + (1-\tau)p_{s,a}(1-d)}}{p_{s,a}}\leq \frac{d}{1-d}\leq 2d.
    \]
    We derived the result:
    \[
    \sup_{\alpha}\left|f(\widehat p_{s,a}, V, \alpha, \alpha) - f(p_{s,a}, V, \alpha)\right|\leq 4d\cdot \spannorm{V}.
    \]
    Proved.
\end{proof}

\begin{lemma}
\label{lem:empirical_operator_error_fk}
    When $n\geq 32\essinfp^{-1}\log(2|\Ss|^2|\Aa|/\beta)$, then for any $\pi\in\Pi$, the $l_\infty$-error of the empirical DR Bellmam operator $\hatT^\pi_\gamma$ and the DR Bellman operator $\hatT^\pi_\gamma$ can be bounded as:
    \[
    \linftynorm{\hatT^\pi_\gamma (V_P^\pi) - \T^\pi_\gamma (V_P^\pi)} \leq 12\tmin(P_\pi)\sqrt{\frac{8}{n\essinfp}\log\frac{2|\Ss|^2|\Aa|}{\beta}}
    \]
    with probability at least $1-\beta$, where $P_\pi$ is the transition kernel induced by controlled transition kernel $P$ and policy $\pi$.
\end{lemma}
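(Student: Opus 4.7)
\noindent\textbf{Proof Proposal for Lemma \ref{lem:empirical_operator_error_fk}.} The plan is to mirror the strategy of Lemma \ref{lem:empirical_operator_error} from the KL case, with the $f_k$ dual functional estimate of Lemma \ref{lem:fk_dual_functional_error} replacing the KL counterpart Lemma \ref{lem:kl_dual_differenece_error}. First I would use Bernstein's inequality (Lemma \ref{lem:bernstein_inequality_n_and_d}) to show that when $n\geq 32\essinfp^{-1}\log(2|\Ss|^2|\Aa|/\beta)$, the choice
\[
d := \sqrt{\frac{8}{n\essinfp}\log\frac{2|\Ss|^2|\Aa|}{\beta}}
\]
satisfies $d\leq 1/2$ and $P(\Omega_{n,d})\geq 1-\beta$, so the hypothesis $d\leq 1/2$ required by Lemma \ref{lem:fk_dual_functional_error} holds on this event.

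Second, I would perform the standard reduction that turns the operator error into a pointwise functional error. Writing out \eqref{equ:dr_discounted_policy_bellman_operator} and \eqref{equ:empirical_dr_discounted_policy_bellman_operator}, the reward terms cancel and the policy weights sum to one, so
\[
\linftynorm{\hatT^\pi_\gamma(V_P^\pi)-\T^\pi_\gamma(V_P^\pi)}\leq \gamma \max_{(s,a)\in\Ss\times\Aa}\bigl|\Gamma_{\hatP_{s,a}}(V_P^\pi)-\Gamma_{\calP_{s,a}}(V_P^\pi)\bigr|.
\]
Expressing each $\Gamma$ via the $f_k$ dual functional $f$ from \eqref{equ:fk_dual_functional}, Assumption \ref{ass:limited_adversarial_power} and Lemma \ref{lem:fk_divergence_multiplier_lower_bound} guarantee that the relevant suprema are achieved in the regime $\alpha\geq \linftynormess{V_P^\pi}{p_{s,a}}$, where Lemma \ref{lem:fk_dual_functional_error} applies and yields
\[
\bigl|\Gamma_{\hatP_{s,a}}(V_P^\pi)-\Gamma_{\calP_{s,a}}(V_P^\pi)\bigr|\leq 4d\cdot\spannorm{V_P^\pi}
\]
on $\Omega_{n,d}$.

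Third, I would bound the span of $V_P^\pi$ using the ergodicity structure of the nominal MDP. By Lemma \ref{lem:mp_doeblin_condition_is_achievable}, for every $\pi\in\Pi$ there exists an $(m_\pi,p_\pi)$-Doeblin pair with $m_\pi/p_\pi=\tmin(P_\pi)$, and then Proposition \ref{prop:value_function_in_span_seminorm} gives $\spannorm{V_P^\pi}\leq 3m_\pi/p_\pi=3\tmin(P_\pi)$. Combining with the preceding estimate and $\gamma\leq 1$ yields
\[
\linftynorm{\hatT^\pi_\gamma(V_P^\pi)-\T^\pi_\gamma(V_P^\pi)}\leq 12\,\tmin(P_\pi)\sqrt{\frac{8}{n\essinfp}\log\frac{2|\Ss|^2|\Aa|}{\beta}}
\]
on $\Omega_{n,d}$, which closes the argument.

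The only nontrivial subtlety, beyond bookkeeping, is verifying that the optimal dual multiplier $\alpha^*$ lies in the regime required by Lemma \ref{lem:fk_divergence_difference_upperbound}, since that lemma's $(\alpha-V)^{k^*}$ expansion is only valid when $\alpha\geq\linftynormess{V}{p_{s,a}}$. This is precisely why Assumption \ref{ass:limited_adversarial_power} is invoked: it enforces $\delta\leq \essinfp/(2k)$ (up to constants), which via Lemma \ref{lem:fk_divergence_multiplier_lower_bound} rules out the degenerate boundary case $\alpha^*=\essinf_{p_{s,a}}V_P^\pi$. Everything else is a direct transcription of the KL argument with the constant $3$ replaced by $4$, accounting for the final coefficient $12$ in place of $9$.
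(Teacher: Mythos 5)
Your proposal is correct and follows essentially the same route as the paper's proof: Bernstein's inequality to place the event $\Omega_{n,d}$ with $d=\sqrt{8\log(2|\Ss|^2|\Aa|/\beta)/(n\essinfp)}\le 1/2$, reduction of the operator error to $\max_{(s,a)}|\Gamma_{\hatP_{s,a}}(V_P^\pi)-\Gamma_{\calP_{s,a}}(V_P^\pi)|$, the $4d\cdot\spannorm{V_P^\pi}$ bound from Lemma \ref{lem:fk_dual_functional_error}, and the span bound $\spannorm{V_P^\pi}\le 3\tmin(P_\pi)$ via Lemma \ref{lem:mp_doeblin_condition_is_achievable} and Proposition \ref{prop:value_function_in_span_seminorm}. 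Your remark on verifying $\alpha^*\ge\linftynormess{V_P^\pi}{p_{s,a}}$ via Assumption \ref{ass:limited_adversarial_power} and Lemma \ref{lem:fk_divergence_multiplier_lower_bound} is exactly the subtlety the paper handles inside Lemma \ref{lem:fk_dual_functional_error}.
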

\begin{proof}
    By Union bound and Bernstein's inequality, we have, when:
    \[
    n\geq \frac{32}{\essinfp}\log \frac{2|\Ss|^2|\Aa|}{\beta}
    \]
    the relative error of $\widehat p_{s,a}(s')$ could be bounded as:
    \begin{equation}
        \begin{aligned}
            \left|\frac{\widehat p_{s,a}(s') - p_{s,a}(s')}{p_{s,a}(s')}\right|\leq& \frac{1}{n\essinfp}\log\frac{2|\Ss|^2|\Aa|}{\beta} + \sqrt{\frac{2}{n\essinfp}\log\frac{2|\Ss|^2|\Aa|}{\beta}}\\
            \leq& \sqrt{\frac{8}{n\essinfp}\log\frac{2|\Ss|^2|\Aa|}{\beta}}\\
            \leq & \frac{1}{2}.
        \end{aligned}
    \end{equation}
    Thus, let $d = \sqrt{\frac{8}{n\essinfp}\log\frac{2|\Ss|^2|\Aa|}{\beta}}\leq \frac{1}{2}$, we have:
    \[
    P(\Omega_{n, d})\geq 1-\beta.
    \]
    Then, on $\Omega_{n, d}$, by Lemma \ref{lem:fk_dual_functional_error}, we could bound the error as:
    \begin{equation}
        \begin{aligned}
            \linftynorm{\hatT_\gamma^\pi (V_P^\pi) - \T_\gamma^\pi (V_P^\pi)} =& \max_{s\in \Ss}\left|\sum_{a\in \Aa}\pi(a|s)\bracket{r(s,a) + \gamma \Gamma_{\hatP_{s,a}}(V_P^\pi)} - \sum_{a\in \Aa}\pi(a|s)\bracket{r(s,a) + \gamma \Gamma_{\calP_{s,a}}(V_P^\pi)}\right|\\
            \leq& \max_{(s,a)\in \Ss\times \Aa}\left|\gamma \Gamma_{\hatP_{s,a}}(V_P^\pi) - \gamma\Gamma_{\calP_{s,a}}(V_P^\pi)\right|\\
            \leq& \gamma\max_{(s,a)\in \Ss\times \Aa}\left|\sup_{\alpha\in \R}f(\widehat p_{s,a}, V_P^\pi, \alpha) - \sup_{\alpha\in \R}f(p_{s,a}, V_P^\pi, \alpha)\right|\\
            \leq& \gamma\max_{(s,a)\in \Ss\times \Aa}\sup_{\alpha\in \R}\left|f(\widehat p_{s,a}, V_P^\pi, \alpha) - f(p_{s,a}, V_P^\pi, \alpha)\right|\\
            \stackrel{(i)}{\leq}& \max_{s,a\in \Ss\times \Aa}4d\cdot\spannorm{V_P^\pi}
        \end{aligned}
    \end{equation}
    The inequality $(i)$ is derived by Lemma \ref{lem:fk_divergence_difference_upperbound}. Since Lemma \ref{lem:mp_doeblin_condition_is_achievable}, there exists an $(m_\pi, p_\pi)$ pair such that $m_\pi/p_\pi = \tmin(P_\pi)$, combine Proposition \ref{prop:value_function_in_span_seminorm} , when $P_\pi$ is $(m_\pi, p_\pi)$-Doeblin, $\spannorm{V_P^\pi}\leq 3m_\pi/p_\pi$, we have:
    when
    \[
    n \geq  \frac{32}{\essinfp}\log\frac{2|\Ss|^2|\Aa|}{\beta},
    \]
    then
    \begin{equation}
    	\begin{aligned}
    		\linftynorm{\hatT_\gamma^\pi (V_P^\pi) - \T_\gamma^\pi (V_P^\pi)} \leq&4d\cdot \spannorm{V_P^\pi}\\
    		\leq& 12\tmin(P_\pi)\sqrt{\frac{8}{n\essinfp}\log\frac{2|\Ss|^2|\Aa|}{\beta}}
    	\end{aligned}
    \end{equation}
    with probability $1-\beta$.
\end{proof}

\section{Sample Complexity Analysis: \texorpdfstring{$f_k$}{fk} Uncertainty Set}
\label{sec:sample_complexity_analysis_of_the_algorithms_fk_case}

We proceed with the analysis of the Algorithm\ref{alg:dr_dmdps}, \ref{alg:reduction_to_dmdp} and \ref{alg:anchored_amdp} in the $f_k$-divergence case.

\subsection{DR-DMDP under \texorpdfstring{$f_k$}{fk} Uncertainty Set}
Building upon these analytical foundations, we derive the following sample complexity bound for DR-MDPs with $f_k$-divergence uncertainty sets:
\begin{theorem}[Restatement of Theorem \ref{thm:sample_complexity_for_reduction_to_dmdp}]
    \label{thm:sample_complexity_for_dmdp_fk}
    Suppose Assumptions \ref{ass:bounded_minorization_time}, and \ref{ass:limited_adversarial_power} are in force. Then for any $n\geq 32\essinfp^{-1}\log(2|\Ss|^2|\Aa|/\beta)$, the policy $\widehat \pi^*$ and value function $V_{\hatP}^*$ returned by Algorithm \ref{alg:dr_dmdps} satisfies:
    \begin{equation}
    \begin{aligned}
    	0\leq V_\calP^* - V_\calP^{\widehat\pi^*}\leq& \frac{96\tmin}{1-\gamma}\sqrt{\frac{2}{n\essinfp}\log\frac{2|\Ss|^2|\Aa|}{\beta}}\\
    	\linftynorm{V_{\hatP}^* - V_{\calP}^*}\leq& \frac{48\tmin}{1-\gamma}\sqrt{\frac{2}{n\essinfp}\log\frac{2|\Ss|^2|\Aa|}{\beta}}
    \end{aligned}
    \end{equation}
    with probability $1-\beta$. Consequently, the sample complexity to achieve $\varepsilon$-optimal policy and value function with probability $1-\beta$ is:
    \[
    n\leq \frac{c\cdot \tmin^2}{(1-\gamma)^2\varepsilon^2}\log\frac{2|\Ss|^2|\Aa|}{\beta}
    \] 
    where $c = 2\cdot 96^2$, and $c=2\cdot 48^2$ repectively.
\end{theorem}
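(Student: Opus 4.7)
The plan is to mirror the proof of the KL-case (Theorem~\ref{thm:sample_complexity_for_dmdp_kl}) essentially verbatim, swapping in the $f_k$-divergence analogs of the three supporting ingredients: the uniform minorization-time bound, the dual-functional concentration, and the per-policy Bellman operator error bound. All three $f_k$-analogs have been established in Sections~\ref{sec:app:uniform_ergodic_properties_fk_case} and~\ref{sec:properties_of_the_bellman_operator_fk_case}, so the argument becomes a straightforward assembly with slightly worse numerical constants.

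Concretely, I would begin by applying Lemma~\ref{lem:value_function_decomposition} to reduce the sub-optimality of $\widehat{\pi}^*$ to a uniform value-approximation error:
\[
0 \leq V_{\calP}^* - V_{\calP}^{\widehat{\pi}^*} \leq 2\max_{\pi\in\Pi}\|V_{\hatP}^\pi - V_{\calP}^\pi\|_\infty.
\]
Next, since both $\T_\gamma^\pi$ and $\hatT_\gamma^\pi$ are $\gamma$-contractions in $\|\cdot\|_\infty$, Lemma~\ref{lem:robust_value_function_bound_by_bellman} converts the value-approximation error into a Bellman-operator discrepancy evaluated at $V_{\calP}^\pi$:
\[
\|V_{\hatP}^\pi - V_{\calP}^\pi\|_\infty \leq \frac{1}{1-\gamma}\|\hatT_\gamma^\pi V_{\calP}^\pi - \T_\gamma^\pi V_{\calP}^\pi\|_\infty.
\]
By SA-rectangularity and compactness of $\calP$, the worst-case kernel is attained, so $V_\calP^\pi = V_{Q^*}^\pi$ for some $Q^*\in\calP$; this permits bounding the right-hand side above by $\sup_{Q\in\calP}\|\hatT_\gamma^\pi V_Q^\pi - \T_\gamma^\pi V_Q^\pi\|_\infty$.

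For the concentration step, Lemma~\ref{lem:empirical_operator_error_fk} provides, on the good event $\Omega_{n,d}$ with $d = \sqrt{8(n\essinfp)^{-1}\log(2|\Ss|^2|\Aa|/\beta)}\leq \tfrac12$ (guaranteed by the sample-size condition), a uniform-in-$V$ bound of the form $\|\hatT_\gamma^\pi V - \T_\gamma^\pi V\|_\infty \leq 4d\cdot\spannorm{V}$. Combined with the span bound $\spannorm{V_Q^\pi}\leq 3\tmin(Q_\pi)$ from Proposition~\ref{prop:value_function_in_span_seminorm} and the uniform minorization-time control $\tmin(Q_\pi)\leq 2\tmin$ from Proposition~\ref{prop:uniform_bounded_minorization_time_fk} (which is precisely where Assumption~\ref{ass:limited_adversarial_power} enters to keep all $Q\in\calP$ uniformly ergodic), this yields, with probability at least $1-\beta$ and simultaneously for all $\pi\in\Pi$,
\[
\|V_{\hatP}^\pi - V_{\calP}^\pi\|_\infty \;\leq\; \frac{24\tmin}{1-\gamma}\sqrt{\frac{8}{n\essinfp}\log\frac{2|\Ss|^2|\Aa|}{\beta}} \;=\; \frac{48\tmin}{1-\gamma}\sqrt{\frac{2}{n\essinfp}\log\frac{2|\Ss|^2|\Aa|}{\beta}}.
\]

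Plugging this into the decomposition gives the policy-gap bound with constant $96$, and the value-estimation bound $\|V_{\hatP}^* - V_{\calP}^*\|_\infty \leq \max_{\pi\in\Pi}\|V_{\hatP}^\pi - V_{\calP}^\pi\|_\infty$ gives the second inequality with constant $48$. The sample-complexity statement then follows by inverting the error bound for a target accuracy $\varepsilon$. I do not expect any genuine obstacle here: the entire proof is a transcription of the KL argument, with the only substantive change being the factor $12$ (rather than $9$) in the per-policy operator-error lemma, inherited from the $f_k$-duality calculation in Lemma~\ref{lem:fk_dual_functional_error}; this factor propagates into the slightly larger constants $96$ and $48$ in the theorem statement.
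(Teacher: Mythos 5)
Your proposal is correct and follows essentially the same route as the paper's own proof: Lemma~\ref{lem:value_function_decomposition} for the policy gap, Lemma~\ref{lem:robust_value_function_bound_by_bellman} for the contraction step, the pass to $\sup_{Q\in\calP}$, and then Lemma~\ref{lem:empirical_operator_error_fk} combined with $\spannorm{V_Q^\pi}\le 3\tmin(Q_\pi)$ and $\tmin(Q_\pi)\le 2\tmin$ from Proposition~\ref{prop:uniform_bounded_minorization_time_fk}, with the constants $96$ and $48$ arising exactly as you trace them. The only cosmetic difference is that you cite the uniform-in-$V$ bound $4d\cdot\spannorm{V}$ under the label of Lemma~\ref{lem:empirical_operator_error_fk} when it is really the content of Lemma~\ref{lem:fk_dual_functional_error}, and you make explicit (via attainment of a worst-case stationary kernel under SA-rectangularity) a step the paper leaves implicit.
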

\begin{proof}
For any $\pi$, as $V_\calP^\pi$ is the solution to $V_{\calP}^\pi = \T_\gamma^\pi(V_{\calP})$, by Lemma \ref{lem:empirical_operator_error_fk} when:
\[
n\geq \frac{32}{\essinfp}\log\frac{2|\Ss|^2|\Aa|}{\beta},
\]
with probability $1-\beta$, we have:
\begin{equation}
    \begin{aligned}
        \linftynorm{\hatT_{\gamma}^\pi (V_{\calP}^{\pi}) - \T_{\gamma}^\pi (V_{\calP}^\pi)} &\leq \sup\limits_{Q\in \calP} \linftynorm{\hatT_{\gamma}^\pi (V_{Q}^{\pi}) - \T_{\gamma}^\pi (V_{Q}^\pi)}\\
        &\leq 12\sup_{Q\in\calP}\tmin(Q_\pi)\sqrt{\frac{8}{n\essinfp}\log\frac{2|\Ss|^2|\Aa|}{\beta}}\\
        &\leq 24\tmin\sqrt{\frac{8}{n\essinfp}\log\frac{2|\Ss|^2|\Aa|}{\beta}}.
    \end{aligned}
\end{equation}
Since the above result holds for all $\pi \in\Pi$, then:
\begin{equation}
    \begin{aligned}
        \max_{\pi\in\Pi}\linftynorm{V_{\hatP}^\pi - V_{\calP}^\pi} \leq &\frac{1}{1-\gamma} \max_{\pi\in\Pi}\linftynorm{\hatT_\gamma^\pi\left(V_\calP^\pi\right) - \T_\gamma^\pi\left(V_\calP^\pi\right)}\\
        \leq& \frac{48\tmin}{1-\gamma}\sqrt{\frac{2}{n\essinfp}\log\frac{2|\Ss|^2|\Aa|}{\beta}}.
    \end{aligned}
\end{equation}
By Lemma \ref{lem:value_function_decomposition}, as $\widehat\pi^* = \arg\max_{\pi\in\Pi}V_{\hatP}^\pi$, we have:
\[
V_{\calP}^* - V_{\calP}^{\widehat \pi^*}\leq 2\max_{\pi\in\Pi}\linftynorm{V_{\hatP}^\pi - V_{\calP}^\pi}.
\]
Thus, when:
\[
n\geq \frac{32}{\essinfp	}\log\frac{2|\Ss|^2|\Aa|}{\beta}.
\]
we have:
\[
V_{\calP}^* - V_{\calP}^{\widehat \pi^*}\leq 96\tmin\sqrt{\frac{2}{n\essinfp}\log\frac{2|\Ss|^2|\Aa|}{\beta}}
\]
with probability $1-\beta$.\\
At the same time, when $n\geq 32\essinfp^{-1}\log(2|\Ss|^2|\Aa|/\beta)$, we have:
\[
\linftynorm{V_{\hatP}^* - V_{\calP}^*}\leq \max_{\pi\in\Pi} \linftynorm{V_{\hatP}^\pi - V_{\calP}^\pi}\leq \frac{48\tmin}{1-\gamma}\sqrt{\frac{2}{n\essinfp}\log\frac{2|\Ss|^2|\Aa|}{\beta}}
\]
with probability $1-\beta$ concurrently.
\end{proof}

\subsection{Reduction to DR-DMDP Approach under \texorpdfstring{$f_k$}{fk} Uncertainty Set}

\begin{theorem}[Restatement of Theorem \ref{thm:sample_complexity_for_reduction_to_dmdp}]
\label{thm:sample_complexity_for_reduction_to_dmdp_fk}
	    Suppose Assumptions \ref{ass:bounded_minorization_time}, \ref{ass:limited_adversarial_power} are in force. Then, for any 
	    \[n\geq \frac{32}{\essinfp}\log\frac{2|\Ss|^2|\Aa|}{\beta}\]
	    The output policy $\widehat \pi^*$ and $\frac{V_{\hatP}^*}{\sqrt{n}}$ returned by Algorithm \ref{alg:reduction_to_dmdp} satisifes:
    \begin{equation}
    	\begin{aligned}
    		0\leq g_\calP^* - g_\calP^{\widehat\pi^*}\leq& 120\tmin\sqrt{\frac{2}{n\essinfp}\log\frac{8|\Ss|^2|\Aa|}{\beta}}\\
    		\linftynorm{\frac{V_{\hatP}^*}{\sqrt{n}} - g_{\calP}^*}\leq& 60\tmin\sqrt{\frac{2}{n\essinfp}\log\frac{2|\Ss|^2|\Aa|}{\beta}}
    	\end{aligned}
    \end{equation}
    with probability $1-\beta$, and the sample complexity to achieve $\varepsilon$-optimal policy and value function with probability $1-\beta$ is:
    \[
    n = O\bracket{\frac{\tmin^2}{\essinfp \varepsilon^2}\log\frac{2|\Ss|^2|\Aa|}{\beta}}.
    \]
\end{theorem}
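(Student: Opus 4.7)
The plan is to mirror the proof of Theorem \ref{thm:sample_complexity_for_reduction_to_dmdp_kl}, replacing each KL-specific ingredient with its $f_k$ counterpart. I first set $\gamma = 1 - 1/\sqrt n$, so that the value functions produced by $\mathrm{DR\text{-}DMDP}(\gamma,n,D_{f_k})$ inside Algorithm \ref{alg:reduction_to_dmdp} solve $V_\calP^\pi = \T_\gamma^\pi(V_\calP^\pi)$ and $V_{\hatP}^\pi = \hatT_\gamma^\pi(V_{\hatP}^\pi)$ with the $f_k$-divergence uncertainty functional. Using the identity $g_\calP^* = \max_\pi g_\calP^\pi$ and $V_{\hatP}^* = \max_\pi V_{\hatP}^\pi$ together with the max-of-max inequality, the same chain of triangle-inequality arguments used in the KL proof yields
\begin{equation}
0 \leq g_\calP^* - g_\calP^{\widehat\pi^*} \leq 2\max_{\pi\in\Pi}\linftynorm{g_\calP^\pi - \tfrac{1}{\sqrt n}V_\calP^\pi} + 2\max_{\pi\in\Pi}\tfrac{1}{\sqrt n}\linftynorm{V_{\hatP}^\pi - V_\calP^\pi},
\end{equation}
and the same upper bound with a factor $1$ instead of $2$ for $\linftynorm{V_{\hatP}^*/\sqrt n - g_\calP^*}$. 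Thus the task reduces to controlling the two maxima on the right.

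The first maximum is the discounted-to-average approximation error. I would first argue, exactly as in the KL proof via an $\varepsilon$-optimizer sandwich, that
\[
\linftynorm{g_\calP^\pi - (1-\gamma)V_\calP^\pi} \leq \sup_{Q\in\calP}\linftynorm{g_Q^\pi - (1-\gamma)V_Q^\pi}.
\]
Then I apply Lemma \ref{lem:approximate_average_value_function_via_discounted_value_function} to each $Q\in\calP$ to bound this by $9(1-\gamma)\sup_{Q\in\calP}\tmin(Q_\pi)$. The uniform Doeblin control is now provided by Proposition \ref{prop:uniform_bounded_minorization_time_fk} under Assumption \ref{ass:limited_adversarial_power}, which guarantees $\tmin(Q_\pi)\leq 2\tmin$ for every $Q\in\calP$ and $\pi\in\Pi$. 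Substituting $\gamma = 1-1/\sqrt n$ yields $\max_\pi\|g_\calP^\pi - V_\calP^\pi/\sqrt n\|_\infty \leq 18\tmin/\sqrt n$ deterministically, and this step is entirely analogous to the KL case since the bound only uses the minorization-time guarantee on $\calP$.

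The second maximum is the empirical Bellman-operator error. By Lemma \ref{lem:robust_value_function_bound_by_bellman},
\[
\linftynorm{V_{\hatP}^\pi - V_\calP^\pi} \leq \tfrac{1}{1-\gamma}\linftynorm{\hatT_\gamma^\pi(V_\calP^\pi) - \T_\gamma^\pi(V_\calP^\pi)}.
\]
Here I invoke Lemma \ref{lem:empirical_operator_error_fk} (the $f_k$-analogue of Lemma \ref{lem:empirical_operator_error}), which, once $n \geq 32\essinfp^{-1}\log(2|\Ss|^2|\Aa|/\beta)$, gives the operator error bound $12\tmin(P_\pi)\sqrt{8/(n\essinfp)\cdot\log(2|\Ss|^2|\Aa|/\beta)}$ uniformly in $\pi$ with probability $1-\beta$. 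Using Assumption \ref{ass:bounded_minorization_time} to upper bound $\tmin(P_\pi)$ by $\tmin$ and plugging $\gamma = 1-1/\sqrt n$ produces $\max_\pi(1/\sqrt n)\|V_{\hatP}^\pi - V_\calP^\pi\|_\infty \leq 24\tmin\sqrt{8/(n\essinfp)\cdot\log(2|\Ss|^2|\Aa|/\beta)}$.

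Combining the two bounds and absorbing the deterministic $18\tmin/\sqrt n$ term into the stochastic term (using $\essinfp\leq 1/2$ and $|\Ss|,|\Aa|\geq 1$ as in the KL proof) gives the stated inequalities with constants $120$ and $60$, respectively, and the sample-complexity conclusion follows by inverting for $\varepsilon$. The main obstacle is bookkeeping: the constants in Lemma \ref{lem:empirical_operator_error_fk} ($12$) are slightly larger than in the KL case ($9$), so the constants $96\mapsto 120$ and $48\mapsto 60$ need to be tracked carefully, but no new conceptual ingredient beyond the $f_k$ versions of the operator-concentration and minorization-time lemmas is required.
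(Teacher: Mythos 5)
Your proposal follows the paper's own proof essentially step for step: the same two-term decomposition into a discounted-to-average bias term and an empirical Bellman-operator term, the same use of Lemma \ref{lem:approximate_average_value_function_via_discounted_value_function} with the uniform minorization bound of Proposition \ref{prop:uniform_bounded_minorization_time_fk} for the first term, and the same use of Lemmas \ref{lem:robust_value_function_bound_by_bellman} and \ref{lem:empirical_operator_error_fk} for the second.

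One step needs tightening. Lemma \ref{lem:robust_value_function_bound_by_bellman} requires you to control $\linftynorm{\hatT_\gamma^\pi(V_{\calP}^\pi) - \T_\gamma^\pi(V_{\calP}^\pi)}$, i.e.\ the operator discrepancy evaluated at the \emph{robust} value function $V_{\calP}^\pi$, whereas Lemma \ref{lem:empirical_operator_error_fk} as stated bounds the discrepancy at $V_P^\pi$, the value function of the nominal kernel, via $\spannorm{V_P^\pi}\leq 3\tmin(P_\pi)$. These are different functions, so invoking the lemma and then using $\tmin(P_\pi)\leq\tmin$ does not directly apply. The paper closes this by writing $\linftynorm{\hatT_\gamma^\pi(V_{\calP}^\pi) - \T_\gamma^\pi(V_{\calP}^\pi)}\leq\sup_{Q\in\calP}\linftynorm{\hatT_\gamma^\pi(V_Q^\pi) - \T_\gamma^\pi(V_Q^\pi)}$ (using that under SA-rectangularity $V_{\calP}^\pi$ coincides with $V_{Q^*}^\pi$ for a worst-case $Q^*\in\calP$) and then controlling $\spannorm{V_Q^\pi}$ through $\tmin(Q_\pi)\leq 2\tmin$ from Proposition \ref{prop:uniform_bounded_minorization_time_fk} — which is exactly where the extra factor of $2$ in the constant comes from. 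You already invoke that proposition for the bias term, so the fix is one line, but as written your operator-error step substitutes the nominal minorization time where the uniform one over $\calP$ is required.
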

\begin{proof}
	Similar with The proof of Theorem \ref{thm:sample_complexity_for_reduction_to_dmdp_kl} we have:
	\begin{equation}
		\begin{aligned}
			0\leq g_{\calP}^* - g_{\calP}^{\widehat\pi^*}\leq& 2\max_{\pi\in\Pi}\linftynorm{g_{\calP}^\pi- \frac{V_{\calP}^\pi}{\sqrt{n}}} + 2\max_{\pi\in\Pi}\frac{1}{\sqrt{n}}\linftynorm{V_{\hatP}^\pi - V_{\calP}^\pi}\\
			\linftynorm{\frac{V_{\hatP}^*}{\sqrt{n}} - g_{\calP}^*}\leq& \max_{\pi\in\Pi}\linftynorm{g_{\calP}^\pi- \frac{V_{\calP}^\pi}{\sqrt{n}}} + \max_{\pi\in\Pi}\frac{1}{\sqrt{n}}\linftynorm{V_{\hatP}^\pi - V_{\calP}^\pi}.
		\end{aligned}
	\end{equation}
	As, with Proposition \ref{prop:uniform_bound_minorization_time_kl}, we have $\calP$ is uniformly ergodic, and for all $Q\in\calP$ and $\pi\in\Pi$:
	\[
	\tmin(Q_\pi)\leq 2\tmin
	\]
	Thus:
	\[
	\linftynorm{g_{\calP}^\pi - \frac{V_{\calP}^\pi}{\sqrt{n}}}\leq \frac{18\tmin}{\sqrt{n}}
	\]
	Since by Lemma \ref{lem:empirical_operator_error_fk}, we have, when $n\geq \frac{32}{\essinfp}\log\frac{2|\Ss|^2|\Aa|}{\beta}$:
	\begin{equation}
		\begin{aligned}
			\max_{\pi\in\Pi}\frac{1}{\sqrt{n}}\linftynorm{V_{\hatP}^\pi - V_{\calP}^\pi}\leq& \sup_{Q\in\calP}\max_{\pi\in\Pi}24\tmin(Q_\pi)\sqrt{\frac{8}{n\essinfp}\log\frac{2|\Ss|^2|\Aa|}{\beta}}\\
			\leq& 48\tmin\sqrt{\frac{8}{n\essinfp}\log\frac{2|\Ss|^2|\Aa|}{\beta}}
		\end{aligned}
	\end{equation}
	with probability $1-\beta$. Combine the intermediate results, we have:
	\begin{equation}
		\begin{aligned}
			0\leq g_{\calP}^* - g_{\calP}^{\widehat\pi^*}\leq & \frac{36\tmin}{\sqrt{n}} + 48\tmin\sqrt{\frac{8}{n\essinfp}\log\frac{2|\Ss|^2|\Aa|}{\beta}}\\
			= & \frac{36\tmin}{\sqrt{n}}\bracket{1+\sqrt{\frac{128}{9\essinfp}\log\frac{2|\Ss|^2|\Aa|}{\beta}}}\\
			\stackrel{(i)}{\leq} & 120\tmin\sqrt{\frac{2}{n\essinfp}\log\frac{2|\Ss|^2|\Aa|}{\beta}}\\
			\linftynorm{\frac{V_{\hatP}^*}{\sqrt{n}} - g_{\calP}^*}\stackrel{(ii)}{\leq} & 60\tmin\sqrt{\frac{2}{n\essinfp}\log\frac{2|\Ss|^2|\Aa|}{\beta}}
		\end{aligned}
	\end{equation}
	with probability $1-\beta$, where $(i)$ and $(ii)$ are derived simply by the trival parameter bound $\essinfp\leq \frac{1}{2}$, $\Ss, \Aa\geq 1$, and $\beta < 1$. The sample complexity required for policy evaluation on $\widehat\pi^*$ and value evaluation on $\frac{V_{\hatP}^*}{\sqrt{n}}$ in achieving $\varepsilon$-optimality are:
	\[
	n= \frac{c\cdot\tmin^2}{\essinfp\varepsilon^2}\log\frac{2|\Ss|^2|\Aa|}{\beta},
	\]
	where $c = 2\cdot 120^2$ and $c= 2\cdot 60^2$ respectively.
\end{proof}

\subsection{Anchored DR-AMDP Approach under \texorpdfstring{$f_k$}{fk} Uncertainty Set}
\begin{theorem}[Restatement of Theorem \ref{thm:sample_complexity_for_anchored_amdp}]
\label{thm:sample_complexity_for_anchored_amdp_fk}
	Suppose Assumption \ref{ass:bounded_minorization_time}, and \ref{ass:limited_adversarial_power} are in force. Then for any
	\[
	n\geq \frac{32}{\essinfp}\log\frac{2|\Ss|^2|\Aa|}{\beta}.
	\]
	The policy $\widehat\pi^*$ and $g_{\underline{\hatP}}^*$ returned by Algorithm \ref{alg:anchored_amdp} satisifes:
	\begin{equation}
		\begin{aligned}
			0\leq g_{\calP}^* - g_{\calP}^{\widehat\pi^*} \leq& 120 \tmin\sqrt{\frac{2}{n\essinfp}\log\frac{2|\Ss|^2|\Aa|}{\beta}}\\
			\linftynorm{g_{\underline{\hatP}}^* - g_{\calP}^*}\leq & 60\tmin\sqrt{\frac{2}{n\essinfp}\log\frac{2|\Ss|^2|\Aa|}{\beta}}.
		\end{aligned}
	\end{equation}
	with probability $1-\beta$, and the sample complexity to achieve $\varepsilon$-optimal policy and value function with probability $1-\beta$ is:
	\[
	n = O\bracket{\frac{\tmin^2}{\essinfp \varepsilon^2}\log\frac{2|\Ss|^2|\Aa|}{\beta}}.
	\]
\end{theorem}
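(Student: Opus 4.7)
The plan is to mirror the KL-divergence argument of Theorem~\ref{thm:sample_complexity_for_anchored_amdp_kl}, substituting the $f_k$-specific Bellman operator estimate (Lemma~\ref{lem:empirical_operator_error_fk}) for the KL estimate (Lemma~\ref{lem:empirical_operator_error}). The key observation is that Lemma~\ref{lem:anchored_amdp_policy_bellman_equation} and Lemma~\ref{lem:anchored_amdp_bellman_equation} apply to \emph{any} uniformly ergodic uncertainty set, and Proposition~\ref{prop:uniform_bounded_minorization_time_fk} together with Corollary~\ref{cor:uniform_doeblin_condition_over_all_sets_fk} guarantees that in the $f_k$ case both $\calP$ and $\hatP$ are uniformly ergodic with minorization times controlled by $\tmin$. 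Hence, for each $\pi\in\Pi$, the anchored average-reward policy Bellman solution pairs satisfy $g_{\underline{\calP}}^\pi = \xi V_{\calP}^\pi(s_0)$ and $g_{\underline{\hatP}}^\pi = \xi V_{\hatP}^\pi(s_0)$, where $V_{\calP}^\pi$, $V_{\hatP}^\pi$ are DR discounted policy value functions with $\gamma = 1-\xi = 1 - 1/\sqrt n$.

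Since $\widehat\pi^*$ is optimal for the anchored empirical model, the standard ``swap'' argument gives
\[
0\leq g_{\calP}^* - g_{\calP}^{\widehat\pi^*}\leq 2\max_{\pi\in\Pi}\linftynorm{g_{\underline{\hatP}}^\pi - g_{\calP}^\pi},\qquad \linftynorm{g_{\underline{\hatP}}^* - g_{\calP}^*}\leq \max_{\pi\in\Pi}\linftynorm{g_{\underline{\hatP}}^\pi - g_{\calP}^\pi}.
\]
Because $g_{\calP}^\pi$ is constant in $s$ (uniform ergodicity of $\calP$, obtained from Lemma~\ref{lem:bounded_radon_nikydom_derivative_for_fk_uncertainty_set} and Lemma~\ref{lem:uniformly_ergodic_for_uncertainty_set}), I then decompose
\[
\linftynorm{g_{\underline{\hatP}}^\pi - g_{\calP}^\pi}\leq \xi\linftynorm{V_{\hatP}^\pi - V_{\calP}^\pi} + \linftynorm{\xi V_{\calP}^\pi - g_{\calP}^\pi}.
\]

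Each summand is controlled separately. The discounted-to-average approximation term is bounded by Lemma~\ref{lem:approximate_average_value_function_via_discounted_value_function} and the uniform minorization bound $\tmin(Q_\pi)\leq 2\tmin$, giving $\linftynorm{\xi V_{\calP}^\pi - g_{\calP}^\pi}\leq 18\xi\tmin = 18\tmin/\sqrt n$. The statistical term is handled via Lemma~\ref{lem:robust_value_function_bound_by_bellman} followed by Lemma~\ref{lem:empirical_operator_error_fk}: whenever $n\geq 32\essinfp^{-1}\log(2|\Ss|^2|\Aa|/\beta)$, with probability at least $1-\beta$ we have $\linftynorm{V_{\hatP}^\pi - V_{\calP}^\pi}\leq (1-\gamma)^{-1}\cdot 24\tmin\sqrt{8/(n\essinfp)\log(2|\Ss|^2|\Aa|/\beta)}$, and multiplying by $\xi = 1-\gamma = 1/\sqrt n$ absorbs the $(1-\gamma)^{-1}$ factor.

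Combining the two bounds and collecting constants as in the KL case yields
\[
\max_{\pi\in\Pi}\linftynorm{g_{\underline{\hatP}}^\pi - g_{\calP}^\pi}\leq \frac{18\tmin}{\sqrt n}\Bigl(1 + \sqrt{\tfrac{128}{9\essinfp}\log\tfrac{2|\Ss|^2|\Aa|}{\beta}}\Bigr)\leq 60\tmin\sqrt{\tfrac{2}{n\essinfp}\log\tfrac{2|\Ss|^2|\Aa|}{\beta}},
\]
using $\essinfp\leq 1/2$ as in Theorem~\ref{thm:sample_complexity_for_reduction_to_dmdp_fk}, which then propagates (doubled for the policy gap) to the claimed constants 120 and 60. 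There is no genuine obstacle beyond bookkeeping: the only place where the $f_k$ structure enters non-trivially is Lemma~\ref{lem:empirical_operator_error_fk}, which has already been established, so the proof is essentially a transcription of Theorem~\ref{thm:sample_complexity_for_anchored_amdp_kl} with the updated constant $12$ (vs.\ $9$) in the per-operator error bound.
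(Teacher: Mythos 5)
Your proposal is correct and follows essentially the same route as the paper's proof: the same swap argument reducing both errors to $\max_{\pi\in\Pi}\linftynorm{g_{\underline{\hatP}}^\pi - g_{\calP}^\pi}$, the same decomposition via $g_{\underline{\hatP}}^\pi = \xi V_{\hatP}^\pi(s_0)$ into a discounted-to-average bias term ($18\tmin/\sqrt n$ via Lemma~\ref{lem:approximate_average_value_function_via_discounted_value_function}) plus a statistical term ($24\tmin\sqrt{8/(n\essinfp)\log(2|\Ss|^2|\Aa|/\beta)}$ via Lemmas~\ref{lem:robust_value_function_bound_by_bellman} and~\ref{lem:empirical_operator_error_fk}), and the same constant bookkeeping yielding $60$ and $120$. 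No discrepancies of substance.
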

\begin{proof}
	Similarily with what we have in Theorem \ref{thm:sample_complexity_for_anchored_amdp_kl}, $\widehat \pi^*$ is an optimal policy for the anchored empirical uncertainty set $\underline{\hatP}$:
	\[
	g_{\underline{\hatP}}^* = g_{\underline{\hatP}}^{\widehat\pi^*}
	\]
	so:
	\[
	g_{\calP}^* - g_{\calP}^{\widehat\pi^*} \leq 2\max_{\pi\in\Pi}\linftynorm{g_{\underline{\hatP}}^\pi - g_{\calP}^\pi}.
	\]
	In addition:
	\begin{equation}
		\linftynorm{g_{\underline{\hatP}}^* - g_{\calP}^*}\leq \max_{\pi\in\Pi}\linftynorm{g_{\underline{\hatP}}^\pi - g_{\calP}^\pi}
	\end{equation}
	Since $g_{\underline{\hatP}}^\pi = \xi V_{\hatP}^\pi(s_0)$, for any $\pi \in\Pi$
	\begin{equation}
		\begin{aligned}
			\linftynorm{g_{\underline{\hatP}}^\pi - g_{\calP}^\pi} =& \linftynorm{\xi V_{\hatP}^\pi(s_0) - \xi V_{\calP}^\pi(s_0) + \xi V_{\calP}^\pi(s_0) - g_{\calP}^\pi}\\
			\stackrel{(i)}{\leq}& \linftynorm{\xi V_{\hatP}^\pi - \xi V_{\calP}^\pi} + \linftynorm{\xi V_{\calP}^\pi - g_{\calP}^\pi}\\
			\leq& \xi\linftynorm{V_{\hatP}^\pi - V_{\calP}^\pi} + \linftynorm{\xi V_{\calP}^\pi - g_{\calP}^\pi}
		\end{aligned}
	\end{equation}
	where $(i)$ relies to $\calP$ is uniformly ergodic, thus, $g_{\calP}^\pi(s) = g_{\calP}^\pi$ for all $s\in \Ss$, $g_{\calP}^\pi\equiv c$ for some constant $c$.\\ 
	Then, we have:
	\begin{equation}
		\begin{aligned}
			0\leq g_{\calP}^* - g_{\calP}^{\widehat\pi^*}\leq& 2\xi\max_{\pi\in\Pi}\linftynorm{V_{\hatP}^\pi - V_{\calP}^\pi} + 2\max_{\pi\in\Pi}\linftynorm{\xi V_{\calP}^\pi - g_{\calP}^\pi}\\
			\linftynorm{g_{\underline{\hatP}}^* - g_{\calP}^*}\leq& \xi\max_{\pi\in\Pi}\linftynorm{V_{\hatP}^\pi - V_{\calP}^\pi} + \max_{\pi\in\Pi}\linftynorm{\xi V_{\calP}^\pi - g_{\calP}^\pi}
		\end{aligned}
	\end{equation}
	With the choice of $\xi = \frac{1}{\sqrt{n}}$, by Lemma \ref{lem:robust_value_function_bound_by_bellman} and Lemma \ref{lem:empirical_operator_error_fk}, when $n\geq \frac{32}{\essinfp}\log\frac{2|\Ss|^2|\Aa|}{\beta}$, then for all $\pi\in\Pi$:
	\begin{equation}
		\begin{aligned}
			\linftynorm{g_{\underline{\hatP}}^\pi - g_{\calP}^\pi}\leq& 18\frac{\tmin}{\sqrt{n}} + 24\tmin\sqrt{\frac{8}{n\essinfp}\log\frac{2|\Ss|^2|\Aa|}{\beta}}\\
			\leq& \frac{18\tmin}{\sqrt{n}}\bracket{1+\sqrt{\frac{128}{9\essinfp}}\log\frac{2|\Ss|^2|\Aa|}{\beta}}\\
			\leq& 60\tmin\sqrt{\frac{2}{n\essinfp}\log\frac{2|\Ss|^2|\Aa|}{\beta}}.
		\end{aligned}
	\end{equation}
	with probability $1-\beta$. We concludes, when:
	\[
	n\geq \frac{32}{\essinfp}\log\frac{2|\Ss|^2|\Aa|}{\beta},
	\]
	then with probability $1-\beta$
	\begin{equation}
		\begin{aligned}
			0\leq g_{\calP}^* - g_{\calP}^{\widehat\pi^*} \leq& 120 \tmin\sqrt{\frac{2}{n\essinfp}\log\frac{2|\Ss|^2|\Aa|}{\beta}}\\
			\linftynorm{g_{\underline{\hatP}}^* - g_{\calP}^*}\leq & 60\tmin\sqrt{\frac{2}{n\essinfp}\log\frac{2|\Ss|^2|\Aa|}{\beta}}.
		\end{aligned}
	\end{equation}
	holds. And the sample complexity required to achieve $\varepsilon$-optimality for both policy and average value function is:
	\[
	n = O\bracket{\frac{\tmin^2}{\essinfp\varepsilon^2}\log\frac{|\Ss|^2|\Aa|}{\beta}} = O\bracket{\frac{\tmix^2}{\essinfp\varepsilon^2}\log\frac{|\Ss|^2|\Aa|}{\beta}}.
	\]
	where $\tmix := \max_{\pi\in\Pi}\tmix(P_\pi)<\infty$, since $\tmix$ is equivalent to $\tmin$ up to a constant, proved.
\end{proof}

\section{Additional Experiment}
\label{sec:app:additional_experiment}

To further expand on our results in Section \ref{sec:dr_amdp_algorithms_and_sample_complexity_upper_bound}, we provide additional experiments on baseline comparison and large-scale MDPs.
First, we include comparisons with DR RVI Q-learning \citep{wang_model-free_2023} on Hard MDP Instance \ref{def:hard_mdp_family} as baseline. Table \ref{tab:baseline_comparison} and Figure \ref{fig:baseline_comparison} shows the error performance for DR RVI Q-learning and the two algorithms for $\essinfp =0.9$ and $\delta=0.1$. They demonstrated that the error levels of our two algorithms are comparable and significantly outperform the previous baseline.

\begin{table}[htbp]
\caption{Performance comparison with DR RVI Q-learning.}
\label{tab:baseline_comparison}
\centering
\resizebox{\textwidth}{!}
{
\begin{tabular}{ccccccccccc}
\toprule
Sample & 10 & 32 & 100 & 316 & 1000 & 3162 & 10000 & 31622 & 100000 \\
\midrule
DR RVI Q-learning \citep{wang_model-free_2023} & 1.84e-1 & 7.36e-2 & 7.47e-2 & 6.20e-2 & 5.52e-2 & 4.46e-2 & 3.60e-2 & 3.08e-2 & 2.61e-2 \\
Reduction to DMDP & 1.21e-1 & 5.95e-2 & 5.35e-2 & 2.74e-2 & 1.39e-2 & 6.65e-3 & 3.49e-3 & 3.21e-3 & 1.17e-3 \\
Anchored DR-AMDP & 1.67e-1 & 6.52e-2 & 6.26e-2 &  2.90e-2 & 1.16e-2 & 7.51e-3 & 3.27e-3 & 2.37e-3 & 1.23e-3\\
\bottomrule
\end{tabular}
}
\end{table}

\begin{figure}[ht]
	\centering
	\includegraphics[width=0.6\textwidth]{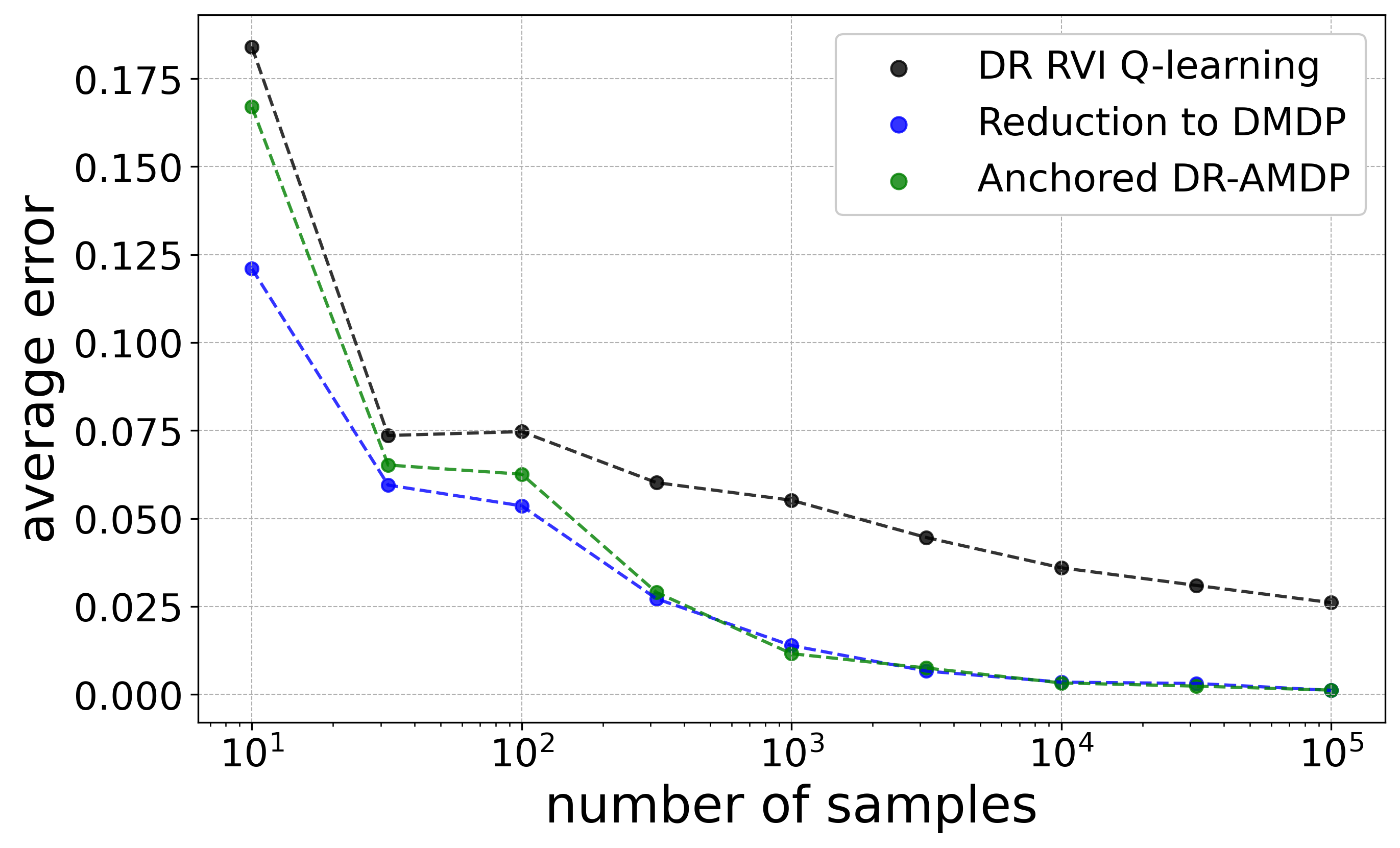}
	\caption{KL-divergence case for Algorithm \ref{alg:reduction_to_dmdp} and \ref{alg:anchored_amdp} }
	\label{fig:baseline_comparison}
\end{figure}

Further, to demonstrate the capability of our framework in solving large-scale MDPs, we consider a large-scale MDP with $20$ states and $30$ actions, as in \citet{wang_model-free_2023}. The nominal transition distribution is specified by $p_{s,a} \sim \cN(1, \sigma_{s,a})$, where $\sigma_{s,a} \sim \textbf{Uniform}[0, 100]$, followed by normalization. We then choose the uncertainty size $\delta = 0.4$ to introduce stronger perturbations, following the setting in \citet{wang_model-free_2023} under the KL-divergence. Note that although $\delta = 0.4$ violates Assumption~\ref{ass:limited_adversarial_power}, the slope of the linear regression of our results on the logarithmic scale remains very close to $-1/2$, further supporting our theoretical guarantees. This observation empirically validates the theoretical results established in our theorems.

Further, to demonstrate the capability of our framework in solving large-scale MDP instances, we consider a large-scale MDP with $20$ states and $30$ actions, as in \citet{wang_model-free_2023}. Specifically, Algorithm~\ref{alg:reduction_to_dmdp} and Algorithm~\ref{alg:anchored_amdp} are evaluated on two distinct large-scale instances, respectively, to verify their effectiveness. The nominal transition distribution is specified by $p_{s,a} \sim \cN(1, \sigma_{s,a})$, where $\sigma_{s,a} \sim \textbf{Uniform}[0, 100]$, followed by normalization. We then choose the uncertainty size $\delta = 0.4$ to introduce stronger perturbations, following the setting in \citet{wang_model-free_2023} under the KL-divergence. Note that although $\delta = 0.4$ violates Assumption~\ref{ass:limited_adversarial_power}, the slope of the linear regression of our results on the logarithmic scale remains very close to $-1/2$, further supporting our theoretical guarantees. This observation empirically validates the theoretical results established in our theorems.

\begin{table}[ht]
\caption{Performance on large-scale MDP across different sample sizes.}
\label{tab:large_scale_mdp}
\centering
\resizebox{\textwidth}{!}{%
\begin{tabular}{cccccccccc}
\toprule
Sample & 10 & 32 & 100 & 316 & 1000 & 3162 & 10000 & 31622 & 100000 \\
\midrule
Reduction to DMDP & 1.04e-1 & 6.54e-2 & 3.07e-2 & 2.46e-2 & 8.28e-3 & 4.82e-3 & 4.66e-3 & 3.12e-3 & 1.12e-3 \\
Anchored DR-AMDP & 6.55e-2 & 3.72e-2 & 2.69e-2 & 4.79e-3 & 3.73e-3 & 1.15e-3 & 1.33e-3 & 9.42e-4 & 5.61e-4 \\
\bottomrule
\end{tabular}
}
\end{table}
\begin{figure}
	\centering
    \begin{subfigure}[b]{0.45\textwidth}
        \centering
        \includegraphics[width=\linewidth]{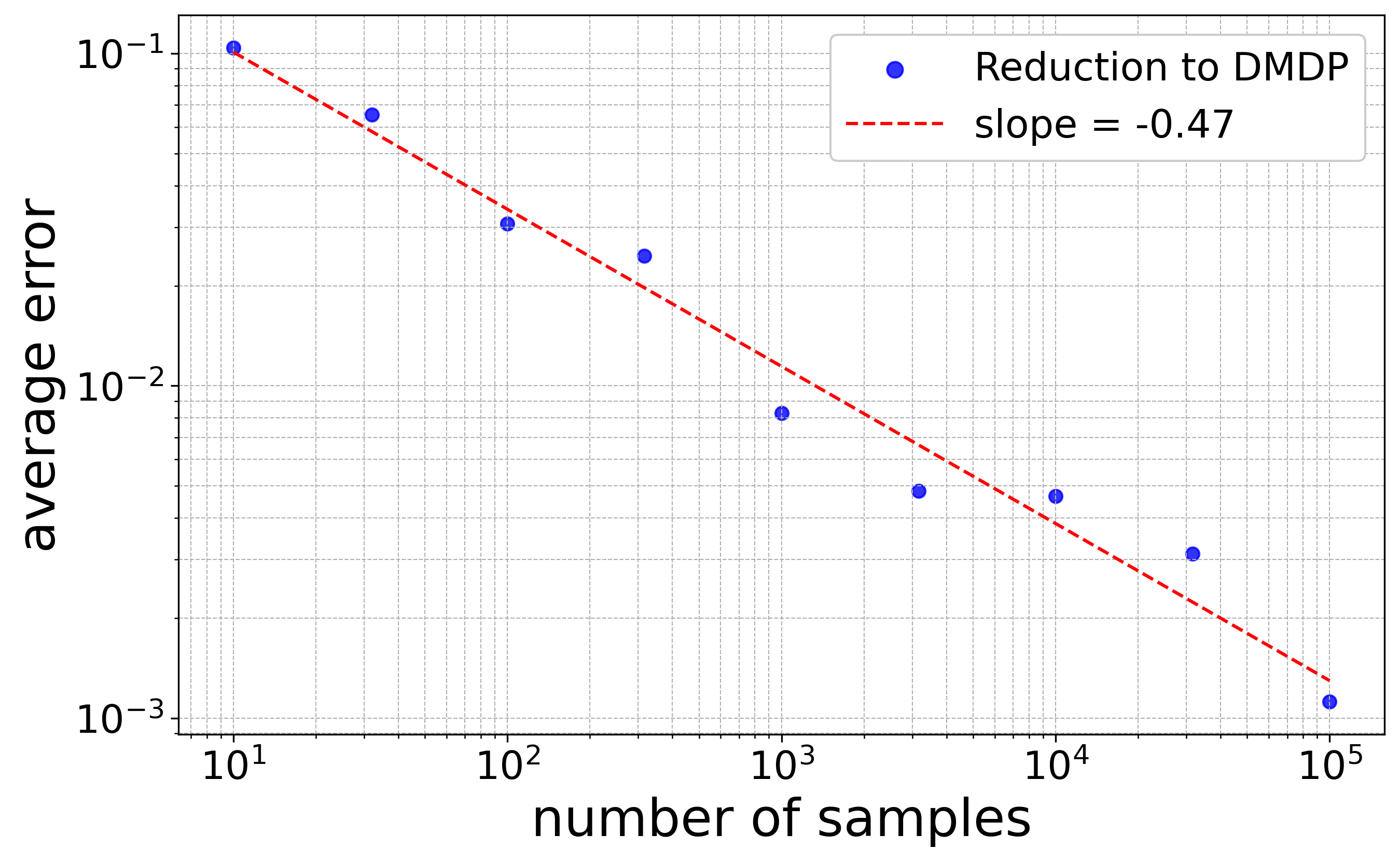}
        \caption{Algorithm \ref{alg:reduction_to_dmdp} for large-scale MDP.}
        \label{fig:large_scale_mdp_dmdp}
    \end{subfigure}
    \hfill
    \begin{subfigure}[b]{0.45\textwidth}
        \centering
        \includegraphics[width=\linewidth]{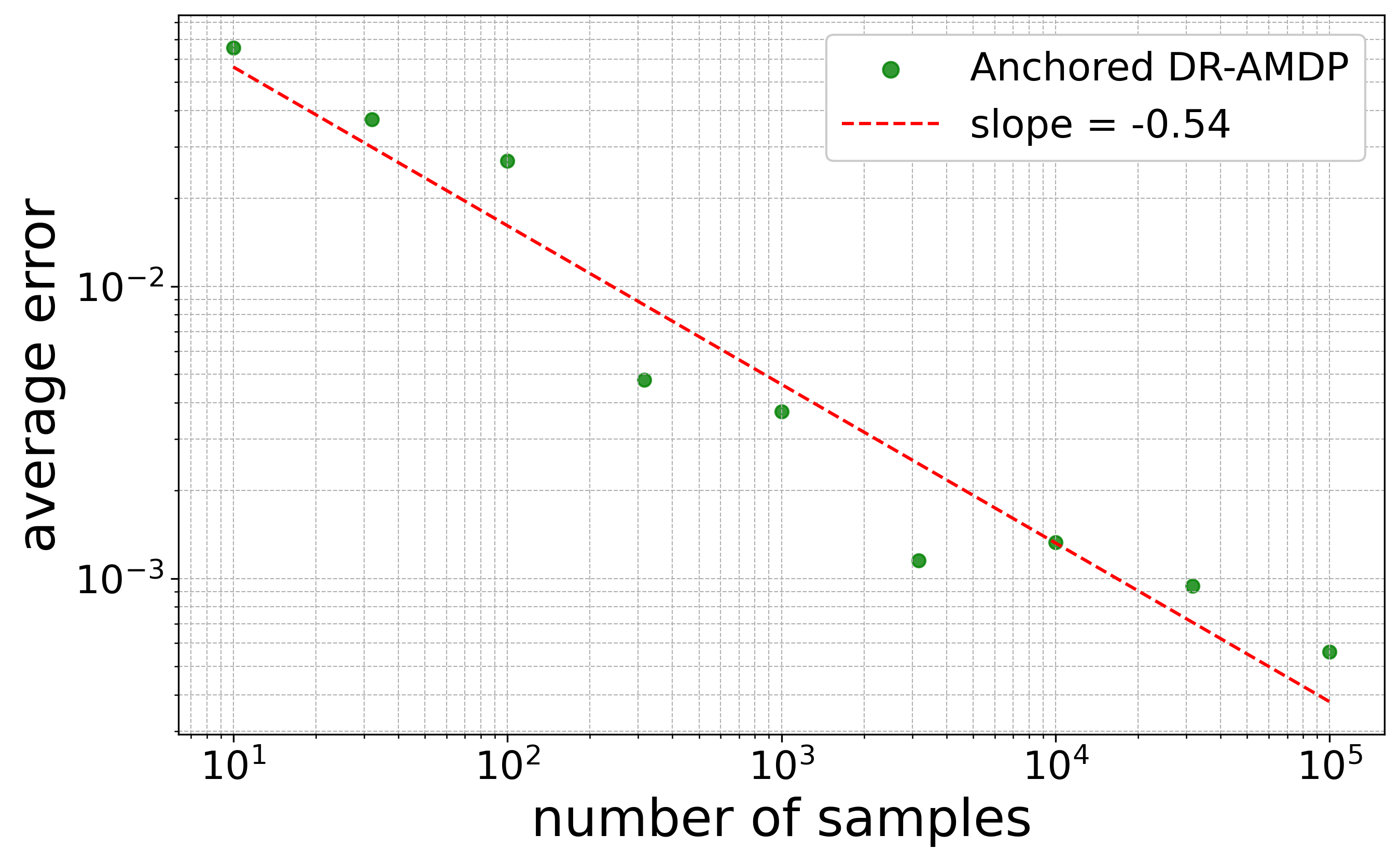}
        \caption{Algorithm \ref{alg:anchored_amdp} for large-scale MDP.}
        \label{fig:large_scale_mdp_anchored}
    \end{subfigure}
\end{figure}

\end{document}